\newcommand{\neurIPS}[1]{\iftoggle{neurIPSformat}{#1}{}}
\newcommand{\arxiv}[1]{\iftoggle{neurIPSformat}{}{#1}} 
\def\tdotoggle{0} 
\title{When is Agnostic Reinforcement Learning \\ Statistically Tractable?\thanks{Authors are listed in alphabetical order of their last names.}}
\newcommand\numberthis{\addtocounter{equation}{1}\tag{\theequation}} 
\DeclarePairedDelimiter{\abs}{\lvert}{\rvert} 
\DeclarePairedDelimiter{\brk}{[}{]}
\DeclarePairedDelimiter{\crl}{\{}{\}}
\DeclarePairedDelimiter{\prn}{(}{)}
\DeclarePairedDelimiter{\floor}{\lfloor}{\rfloor}
\let\Pr\undefined
\let\P\undefined 
\DeclareMathOperator{\P}{P}
\DeclareMathOperator{\Pr}{\bbP}
\newcommand{\ind}[1]{\mathbbm{1}\crl*{#1}}    
\newcommand{\eps}{\epsilon}
\newcommand{\ldef}{\vcentcolon=}
\newcommand{\wt}[1]{\widetilde{#1}}
\newcommand{\wh}[1]{\widehat{#1}}
\def\ddefloop#1{\ifx\ddefloop#1\else\ddef{#1}\expandafter\ddefloop\fi}
\def\ddef#1{\expandafter\def\csname bb#1\endcsname{\ensuremath{\mathbb{#1}}}}
\def\ddefloop#1{\ifx\ddefloop#1\else\ddef{#1}\expandafter\ddefloop\fi}
\def\ddef#1{\expandafter\def\csname b#1\endcsname{\ensuremath{\mathbf{#1}}}}
\def\ddef#1{\expandafter\def\csname c#1\endcsname{\ensuremath{\mathcal{#1}}}}
\def\ddef#1{\expandafter\def\csname h#1\endcsname{\ensuremath{\widehat{#1}}}}
\def\ddef#1{\expandafter\def\csname hc#1\endcsname{\ensuremath{\widehat{\mathcal{#1}}}}}
\def\ddef#1{\expandafter\def\csname t#1\endcsname{\ensuremath{\widetilde{#1}}}}
\def\ddef#1{\expandafter\def\csname tc#1\endcsname{\ensuremath{\widetilde{\mathcal{#1}}}}}
\newcommand{\KL}[2]{\mathrm{KL}{\prn*{#1 \| #2}}}
\newcommand{\kl}[2]{\mathrm{kl}{\prn*{#1 \| #2}}}
\newcommand{\proman}[1]{\prn*{\romannumeral #1}}
\newcommand{\overleq}[1]{\overset{ #1}{\leq{}}}
\newcommand{\overgeq}[1]{\overset{#1}{\geq{}}}
\newcommand{\overeq}[1]{\overset{#1}{=}}
\newtheorem{assumption}{Assumption}
\newtheorem{corollary}{Corollary}
\newtheorem{proposition}{Proposition}
{\newtheorem{lemma}{Lemma}} 
\newtheorem{theorem}{Theorem}}
\newtheorem{theorem*}{Theorem}
\newtheorem{definition}{Definition}
\xpatchcmd{\proof}{\itshape}{\normalfont\proofnameformat}{}{}
\newcommand{\proofnameformat}{\bfseries}
\newcommand{\pref}[1]{\prettyref{#1}}
\newcommand{\savehyperref}[2]{\texorpdfstring{\hyperref[#1]{#2}}{#2}}
\newcommand{\algcomment}[1]{\textcolor{blue!70!black}{\footnotesize{\texttt{\textbf{//
          #1}}}}}
\newcommand{\algcommentbig}[1]{\textcolor{blue!70!black}{\footnotesize{\texttt{\textbf{/*
          #1~*/}}}}}
\newcommand{\pistar}{{\pi^\star}} 
\newcommand{\pihat}{{\wh{\pi}}} 
\newcommand{\Piell}{\Pi^{(\ell)}}
\newcommand{\dimRL}{\mathfrak{C}}
\newcommand{\non}{n_\mathsf{on}}
\newcommand{\ngen}{n_\mathsf{gen}}
\newcommand{\cMsto}{ \cM^{\mathrm{sto}} }
\newcommand{\cMdet}{ \cM^{\mathrm{det}} }
\newcommand{\cMdetP}{ \cM^{\mathrm{detP}} }
\newcommand{\alg}{\bbA}
\newcommand{\poly}{\mathrm{poly}}
\newcommand{\unif}{\mathrm{Uniform}}
\newcommand{\ber}{\mathrm{Ber}}
\newcommand{\bit}{\mathrm{bit}}
\newcommand{\idx}{\mathrm{idx}}
\newcommand{\jof}[1]{j[#1]}
\newcommand{\jpof}[1]{j'[#1]}
\newcommand{\jgof}[1]{j_g[#1]}
\newcommand{\jbof}[1]{j_b[#1]}
\newcommand{\Jrel}{\cJ_\mathrm{rel}}
\newcommand{\st}{\eta}
\newcommand{\Tmax}{{T_{\mathrm{max}}}}
\newcommand{\En}{\bbE}
\newcommand{\Compname}{spanning capacity}
\newcommand{\sunflower}{sunflower}
\newcommand{\coreset}{\sunflower}
\newcommand{\cons}{\rightsquigarrow}  
\newcommand{\coloneqq}{:=}   
\newcommand{\plus}{+}
\newcommand{\datacollector}{\mathsf{DataCollector}}  
\newcommand{\evaluate}{\mathsf{Evaluate}}
\newcommand{\estreach}{\mathsf{EstReachability}}
\renewcommand{\epsilon}{\varepsilon}
\renewcommand{\eps}{\varepsilon}
\newcommand{\Picb}{\Pi_{\mathrm{cb}}} 
\newcommand{\Pitab}{\Pi_{\mathrm{tab}}} 
\newcommand{\Pismall}{\Pi_{\mathrm{small}}}  
\newcommand{\Pisingleton}{\Pi_{\mathrm{sing}}}
\newcommand{\PiLton}{\Pi_{\ell\mathrm{-ton}}} 
\newcommand{\Pioneactive}{\Pi_{\mathrm{1-act}}} 
\newcommand{\PiJactive}{\Pi_{j\mathrm{-act}}} 
\newcommand{\Piactive}{\Pi_{\mathrm{act}}}
\newcommand{\mS}{\mathcal{S}} 
\newcommand{\mA}{\mathcal{A}} 
\newcommand{\mD}{\mathcal{D}}
\newcommand{\EE}{\mathbb{E}}
\newcommand{\argmax}{\mathop{\arg\max}}
\newcommand{\SHPalg}[1]{\mathcal{S}_{#1}^{\mathrm{rch}}} 
\newcommand{\SRemalg}[1]{\mathcal{S}_{#1}^{\mathrm{rem}}} 
\newcommand{\SHP}[1]{\mathcal{S}_{#1}^{\mathrm{rch}}} 
\newcommand{\SRem}[1]{\mathcal{S}_{#1}^{\mathrm{rem}}}
\newcommand{\event}[3]{\mathfrak{T}\prn{#1 \rightarrow #2 ; \neg #3}} 
\newcommand{\bard}[3]{\bar{d}^{#1}(#2; \neg#3)}
\newcommand{\Picore}{\Pi_{\mathrm{core}}}
\newcommand{\Stab}{S^{\mathrm{tab}}}
\newcommand{\setall}[3]{ \mathfrak{T}_{#1}\prn{#2 \rightarrow #3}}
\newcommand{\MRPsign}{\mathfrak{M}}
\newcommand{\eluder}{\mathrm{dim}_\mathsf{E}}
\newcommand{\starr}{\mathrm{dim}_\mathsf{S}}
\newcommand{\thres}{\mathrm{dim}_\mathsf{T}}
\newcommand{\natarajan}{\mathrm{Ndim}}
\newcommand{\pseudo}{\mathrm{Pdim}}
\newcommand{\oracle}{\mathsf{O}_\mathrm{exp}} 
\newcommand{\reachablestates}{\cS^\mathrm{rch}} 
\providecommand{\tdotoggle}{1}
\newcommand{\mytodo}[1]{\ifnum\tdotoggle=1{#1}\fi}
\newcommand{\tableoftodos}{\ifnum\tdotoggle=1 \listoftodos[Comments/To Do's] \fi}
\newcommand{\gene}[1]{\mytodo{\todo[linecolor=Ggreen,backgroundcolor=Ggreen!25,bordercolor=Ggreen]{Gene: #1}}}
\newcommand{\ayush}[1]{\mytodo{\todo[linecolor=blue,backgroundcolor=blue!25,bordercolor=blue]{Ayush: #1}}}
\definecolor{Gred}{RGB}{219, 50, 54}
\definecolor{Ggreen}{RGB}{60, 186, 84}
\definecolor{Gblue}{RGB}{72, 133, 237}
\definecolor{Gyellow}{RGB}{247, 178, 16}
\definecolor{ToCgreen}{RGB}{0, 128, 0}
\definecolor{myGold}{RGB}{231,141,20}
\definecolor{myBlue}{rgb}{0.19,0.41,.65}
\definecolor{myPurple}{RGB}{175,0,124} 
\author{
  Zeyu Jia$^{1}$\\
  \and
  Gene Li$^{2}$ \\
  \and
  Alexander Rakhlin$^{1}$ \\
  \and
  Ayush Sekhari$^{1}$ \\
  \and
  Nathan Srebro$^{2}$ \\ 
  \and 
  $^{1}$MIT, $^{2}$TTIC 
} 
\date{} 
\begin{document} 

\maketitle 

\begin{abstract}

We study the problem of agnostic PAC reinforcement learning (RL): given a policy class $\Pi$, how many rounds of interaction with an unknown MDP (with a potentially large state and action space) are required to learn an $\epsilon$-suboptimal policy with respect to \(\Pi\)? Towards that end, we introduce a new complexity measure, called the \emph{\Compname{}}, that depends solely on the set \(\Pi\) and is independent of the MDP dynamics. With a generative model, we show that for any policy class $\Pi$, bounded \Compname{} 
 characterizes PAC learnability. However, for online RL, the situation is more subtle. We show there exists a policy class $\Pi$ with a bounded \Compname{} that requires a superpolynomial number of samples to learn. This reveals a surprising separation for agnostic learnability between generative access and online access models (as well as between deterministic/stochastic MDPs under online access). On the positive side, we identify an additional \textit{sunflower} structure, which in conjunction with bounded \Compname{} enables statistically efficient online RL via a new algorithm called POPLER, which takes inspiration from classical importance sampling methods as well as techniques for reachable-state 
  identification and policy evaluation in reward-free exploration.
  \end{abstract} 
\section{Introduction}
Reinforcement Learning (RL) has emerged as a powerful paradigm for solving complex decision-making problems, demonstrating impressive empirical successes in a wide array of challenging tasks, from achieving superhuman performance in the game of Go  \citep{silver2017mastering} to solving intricate robotic manipulation tasks \citep{lillicrap2015continuous, akkaya2019solving, ji2023dribblebot}. Many practical domains in RL often involve rich observations such as images, text, or audio \citep{mnih2015human, li2016deep, ouyang2022training}. Since these state spaces can be vast and complex, traditional tabular RL approaches \citep{kearns2002near, brafman2002r, azar2017minimax, jin2018q} cannot scale. This has led to a need to develop provable and efficient approaches for RL that utilize \emph{function approximation} to \gledit{generalize observational data to unknown states/actions.}   


The goal of this paper is to study the sample complexity of policy-based RL, which is arguably the simplest setting for RL with function approximation \citep{kearns1999approximate, kakade2003sample}. In policy-based RL, an abstract function class $\Pi$ of \emph{policies} (mappings from states to actions) is given to the learner. For example, $\Pi$ can be the set of all the policies represented by a certain deep neural network architecture. The objective of the learner is to interact with an unknown MDP to find a policy $\wh{\pi}$ that competes with the best policy in $\Pi$, i.e., for some prespecified $\eps$, the policy $\wh{\pi}$ satisfies 
\begin{align*} 
V^{\widehat{\pi}} \geq \max_{\pi \in \Pi} V^\pi - \epsilon,  \numberthis \label{eq:agnostic_RL_guarantee}
\end{align*} 
where \(V^\pi\) denotes the value of policy \(\pi\) on the underlying MDP. We henceforth call Eq.~\eqref{eq:agnostic_RL_guarantee} the ``agnostic PAC reinforcement learning'' objective. Our paper addresses the following question:
\begin{center}
\textit{What structural assumptions on $\Pi$ enable \\ statistically efficient agnostic PAC reinforcement learning?} 
\end{center}

Characterizing (agnostic) learnability for various problem settings is perhaps the most fundamental question in statistical learning theory. 
For the simpler setting of supervised learning (which is RL with binary actions, horizon 1, and binary rewards), the story is complete: a hypothesis class \(\Pi\) is agnostically learnable if and only if its $\mathrm{VC}$ dimension is bounded \citep{vapnik1971uniform, vapnik1974theory, blumer1989learnability, ehrenfeucht1989general}, and the ERM algorithm---which returns the hypothesis with the smallest training loss---is statistically optimal (up to log factors). However, RL (with \(H > 1\)) is significantly more challenging, and we are still far from a rigorous understanding of when agnostic RL is statistically tractable, or what algorithms to use in large-scale RL problems.

While significant effort has been invested over the past decade in both theory and practice to develop algorithms that utilize function approximation, existing theoretical guarantees require additional assumptions on the MDP. The most commonly adopted assumption is  \emph{realizability}: the learner can precisely model the value function or the dynamics of the underlying MDP \citep[see, e.g.,][]{russo2013eluder, jiang2017contextual, sun2019model, wang2020reinforcement, du2021bilinear, jin2021bellman, foster2021statistical}.
Unfortunately, realizability is a fragile assumption that rarely holds in practice. Moreover, even mild misspecification can cause catastrophic breakdown of theoretical guarantees \citep{du2019good, lattimore2020learning}. Furthermore, in various applications, the optimal policy \(\pi^\star \coloneqq \argmax_{\pi \in \Pi} V^\pi\) may have a succinct representation, but the optimal value function \(V^\star\) can be highly complex, rendering accurate approximation of dynamics/value functions infeasible without substantial domain knowledge \citep{dong2020expressivity}. Thus, we desire algorithms for agnostic RL that can work with \emph{no modeling assumptions on the underlying MDP}. On the other hand, it is also well known without any assumptions on \(\Pi\), when \(\Pi\) is large and the MDP has a large state and action space, agnostic RL may be intractable with sample complexity scaling exponentially in the horizon \citep{agarwal2019reinforcement}. Thus, some structural assumptions on $\Pi$ are needed, and towards that end, the goal of our paper is to understand what assumptions are sufficient or necessary for statistically efficient agnostic RL, and to develop provable algorithms for learning. Our main contributions are: 

\neurIPS{\vspace{-\topsep}
\begin{enumerate}[label=\(\bullet\), leftmargin=5mm, itemsep=0.2mm]}
\arxiv{\begin{enumerate}[label=\(\bullet\)]}
    \item We introduce a new complexity measure called the \textit{\Compname{}}, which solely depends on the policy class $\Pi$ and is independent of the underlying MDP. We illustrate the \Compname{} with examples, and show why it is a natural complexity measure for agnostic PAC RL (\pref{sec:C_pi}).
    \item We show that the \Compname{} is both necessary and sufficient for agnostic PAC RL with a generative model, with upper and lower bounds matching up to $\log \abs{\Pi}$ and $\poly(H)$ factors (\pref{sec:generative}). Thus, bounded \Compname{} characterizes agnostic PAC learnability in RL with a generative model. 
    \item Moving to the online setting, we first show that the bounded \Compname{} by itself is \emph{insufficient} for agnostic PAC RL by proving a superpolynomial lower bound on the sample complexity required to learn a specific $\Pi$, thus demonstrating a separation between generative and online interaction models for agnostic PAC RL (\pref{sec:online-lower-bound}).
    \item Given the previous lower bound, we propose an additional property of the policy class called the \emph{sunflower} property, that allows for efficient exploration and is satisfied by many policy classes of interest. We provide a new agnostic PAC RL algorithm called $\mathsf{POPLER}$ that is statistically efficient whenever the given policy class has both bounded \Compname{} and the sunflower property (\pref{sec:upper-bound}). $\mathsf{POPLER}$ leverages importance sampling as well as reachable state identification techniques to estimate the values of policies. Our algorithm and analysis utilize a new tool called the \emph{policy-specific Markov reward process}, which may be of independent interest.
\end{enumerate}\neurIPS{\vspace{-\topsep}}
\section{Setup and Motivation} 

We begin by introducing our setup for reinforcement learning (RL), the relevant notation, and the goal of agnostic RL. 

\subsection{RL Preliminaries} 
We consider reinforcement learning in an episodic Markov decision process (MDP) with horizon \(H\). 
\paragraph{Markov Decision Processes.} Denote the MDP as $M = \mathrm{MDP}(\cS, \cA, P, R, H, \mu)$, which consists of a state space $\cS$, action space $\cA$, horizon $H$, probability transition kernel $P: \cS \times \cA \to \Delta(\cS)$, reward function $R: \cS \times \cA\to \Delta([0,1])$, and initial distribution $\mu \in \Delta(\cS)$. For ease of exposition, we assume that $\cS$ and $\cA$ are finite (but possibly large) with cardinality $S$ and $A$ respectively. We assume a layered state space, i.e., $\cS = \cS_1 \cup \cS_2 \cup \dots \cup \cS_H$ where $\cS_i \cap \cS_j = \emptyset$ for all \(i \neq j\). 
Thus, given a state $s\in \cS$, it can be inferred which $\cS_h$, or layer in the MDP, it belongs to. We denote a trajectory $\tau = (s_1, a_1, r_1, \dots, s_H, a_H, r_H)$, where at each step $h\in [H]$, an action $a_h \in \cA$ is played, a reward $r_h$ is drawn independently from the distribution $R(s_h, a_h)$, and each subsequent state $s_{h+1}$ is drawn from $P(\cdot|s_h, a_h)$.  Lastly, we assume that the cumulative reward of any trajectory is bounded by 1. 

\paragraph{Markov Reward Process (MRP).}  MRPs are key technical tools used by our main algorithm.  An MRP  \(\MRPsign = \mathrm{MRP}(\cS, P, R, H, s_\top, s_\bot)\) is defined over the state space \(\cS\) with start state \(s_\top\) and end state \(s_\bot\),  for trajectory length \(H + 2\). Without loss of generality, we assume that \(\crl{s_\top, s_\bot} \in \cS\). The transition kernel is denoted by \(P: \cS \times \cS \to [0,1] \), such that for any $s \in \cS$, $\sum_{s'} P_{s \to s'} = 1$. The reward kernel is denoted \(R: \cS \times \cS \to \Delta([0,1])\). Throughout, we use the notation $\rightarrow$ to signify that the transitions and rewards are defined along the edges of the MRP. At an intuitive level, an MRP is an MDP with singleton action space i.e.~the actions have no effect. For technical reasons, that will become more clear from the analysis, the state space in the MRPs is not layered.

\paragraph{Policy-based Reinforcement Learning.} We assume that the learner is given a policy class $\Pi \subseteq \cA^\cS$.\footnote{Throughout the paper, we assume that the policy classes $\Pi$ under consideration consist of deterministic policies. Extending our work to stochastic policy classes is an interesting direction for future research.} For any policy $\pi \in \cA^\cS$, we denote $\pi(s)$ as the action that $\pi$ takes when presented a state $s$. We use $\En^\pi[\cdot]$ and $\Pr^\pi[\cdot]$ to denote the expectation and probability under the process of a trajectory drawn from the MDP $M$ by policy $\pi$. Additionally, for any \(h, h' \leq H\), we say that a partial trajectory $\tau =$ $(s_{h}, a_h, s_{h+1}, a_{h+1}, \dots, s_{h'}, a_{h'})$ is consistent with \(\pi\) if for all \(h \leq i \leq h'\), we have $\pi(s_i) = a_i$. We use the notation \(\pi \cons \tau\) to denote that \(\tau\) is consistent with \(\pi\). 


The state-value function (also called \textit{\(V\)-function}) and state-action-value function (also called \textit{\(Q\)-function}) are defined such that for any \(\pi\), and $s, a$, \begin{align*}
    V^\pi_h(s) =  \bbE^{\pi}\brk*{\sum_{h' = h}^H R(s_{h'}, a_{h'}) ~|~ s_{h} = s }, ~~ Q^\pi_h(s,a) = \bbE^{\pi}\brk*{ \sum_{h' = h}^H R(s_{h'}, a_{h'}) ~|~ s_{h} = s, a_{h} = a }. 
\end{align*}

Furthermore, whenever clear from the context, we denote $V^\pi \coloneqq \En_{s_1 \sim \mu} V^\pi_1(s_1)$. Finally, for any policy $\pi \in \cA^\cS$, we also define the \emph{occupancy measure} as $d^\pi_h(s,a) \coloneqq \bbP^{\pi}[s_h=s, a_h =a]$ and $d^\pi_h(s) \coloneqq \bbP^{\pi}[s_h = s]$.

\paragraph{Models of Interaction.} We consider two standard models of interaction in the RL literature:  
\neurIPS{\vspace{-\topsep}
\begin{enumerate}[label=\(\bullet\), leftmargin=5mm, itemsep=0.2mm]}
\arxiv{\begin{enumerate}[label=\(\bullet\)]}
    \item \textbf{Generative Model.} The learner has access to a simulator which it can query for any $(s,a)$, and observe a sample $(s', r)$ drawn as $s' \sim P(\cdot|s,a)$ and $r \sim R(s,a)$.\footnote{Within the generative model, one can further distinguish between a more restrictive ``local'' access model (also called the ``reset'' model), where the learner can query $(s,a)$ for any $s \in \cS$ that it has seen already, or ``global'' access, where the learner can query for any $(s,a)$ without restriction. For the generative model, our upper bounds hold in the local access model, while our lower bounds hold for the global access model.} 
    \item \textbf{Online Interaction Model.} The learner can submit a (potentially non-Markovian) policy $\wt{\pi}$ and receive back a trajectory sampled  by running $\wt{\pi}$ on the MDP. Since online access can be simulated via generative access, learning under online access is only more challenging than learning under generative access (up to a factor of $H$). Adhering to commonly used terminology, we will refer to RL under the online interaction model as ``online RL''. 
\end{enumerate}\neurIPS{\vspace{-\topsep}}
We define $\cMsto$ as the set of all (stochastic and deterministic) MDPs of horizon $H$ over the state space $\cS$ and action space $\cA$. Additionally, we define $\cMdetP \subset \cMsto$ and $\cMdet \subset \cMdetP$  to denote the set of all MDPs with deterministic transitions but stochastic rewards, and of all MDPs with both deterministic transitions and deterministic rewards, respectively.

\subsection{Agnostic PAC RL}\label{sec:agnostic-pac-rl}
Our goal is to understand the sample complexity of agnostic PAC RL, i.e.,~the number of interactions required to find a policy that can compete with the best policy within the given class \(\Pi\) for the underlying MDP. An algorithm \(\alg\) is an \((\epsilon, \delta)\)-PAC RL algorithm for an MDP \(M\), if after interacting with \(M\) (either in the generative model or online RL), \(\alg\) returns a policy $\wh{\pi}$ that satisfies the guarantee\footnote{Our results are agnostic in the sense that we do not make the assumption that the optimal policy for the underlying MDP is in \(\Pi\), but instead, only wish to complete with the best policy in \(\Pi\). Additionally, we do not assume that the learner has a value function class or a model class that captures the optimal value functions or dynamics.}
\begin{align*}
V^{\wh{\pi}} \ge \max_{\pi \in \Pi} V^\pi - \eps,
\end{align*} with probability at least \(1 - \delta\). For a policy class $\Pi$ and a MDP class $\cM$, we say that $\alg$ has sample complexity $\non^{\alg}( \Pi, \cM; \eps, \delta)$ (resp.~$\ngen^{\alg}(\Pi, \cM; \eps, \delta)$) if for every MDP $M \in \cM$, $\alg$ is an \((\epsilon, \delta)\)-PAC RL algorithm and collects at most $\non^\alg(\Pi, \cM; \eps, \delta)$ trajectories in the online interaction model (resp.~generative model) in order to return \(\widehat{\pi}\).

We define the \emph{minimax sample complexity} for agnostically learning $\Pi$ over $\cM$ as the minimum sample complexity of any $(\eps, \delta)$-PAC RL algorithm, i.e.~
\begin{align*}
    \non(\Pi, \cM; \eps, \delta) \coloneqq \min_{\alg} \non^\alg(\Pi, \cM; \eps, \delta), \quad\text{and}\quad \ngen(\Pi, \cM; \eps, \delta) \coloneqq \min_{\alg} \ngen^{\alg}( \Pi, \cM; \eps, \delta).
\end{align*}
\gledit{For brevity, when $\cM = \cMsto$, we will drop the dependence on $\cM$ in our notation, e.g., we will write $\non(\Pi; \eps, \delta)$ and $\ngen(\Pi; \eps, \delta)$ to denote $\non(\Pi, \cM; \eps, \delta)$ and $\ngen(\Pi, \cM; \eps, \delta)$ respectively.} 

\paragraph{Known Results in Agnostic RL.} We first note the following classical result which shows that agnostic PAC RL is statistically intractable in the worst case. 
\begin{proposition}[No Free Lunch Theorem for RL; \citet{krishnamurthy2016pac}]\label{prop:nfl-rl} There exists a policy class $\Pi$ for which the minimax sample complexity under a generative model is at least $\ngen(\Pi; \eps, \delta) = \Omega(\min\crl{A^H, \abs{\Pi}, SA} / \eps^2)$. 
\end{proposition}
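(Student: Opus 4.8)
I would prove the bound by embedding a $K$-armed best-arm-identification problem, with $K := \min\{A^H,\abs{\Pi},SA\}$, into agnostic PAC RL. The hard instances all share one fixed, fully known, deterministic transition structure and differ only in which single reward slot carries a mean slightly above $1/2$; because transitions are deterministic they are free to learn, so the only useful feedback is stochastic rewards, and pinning down the biased slot among $K$ equally plausible candidates provably costs $\Omega(K/\eps^2)$ generative queries.

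\textbf{The hard family.} Take the layered MDP whose layers form an $A$-ary tree, each state labeled by the action prefix reaching it (so transitions are deterministic and known); the $A^{H-1}$ last-layer states crossed with the $A$ actions give $A^H$ reward ``slots''. Fix $K\le\min\{A^H,SA\}$, select $K$ of the slots, and let $\Pi$ be the $K$ open-loop policies that target them (so $\abs{\Pi}=K$, making $\min\{A^H,\abs{\Pi},SA\}=K$), each specifying a length-$(H-1)$ path and a final action. In the reference instance $M_0$ every selected slot pays $\ber(1/2)$ (all other rewards $0$, so the return is $\le 1$), and in instance $M_a$ ($a\in[K]$) the $a$-th slot instead pays $\ber(1/2+2\eps)$. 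Then $V^{\pi_a}=1/2+2\eps$ in $M_a$ while every other policy, and every policy in $M_0$, has value $1/2$; hence in $M_a$ every $\eps$-optimal policy must reach slot $a$ with probability at least $1/2$, so an $(\eps,\delta)$-PAC learner must do so with probability $\ge 1-\delta$. This construction uses $\Theta(A^{H-1})$ states and horizon $H$, so it fits within $\cS,\cA,H$ precisely when $K\le A^H$; since here $SA=\Theta(A^H)$, the regime where $SA$ is the active term (wide, shallow MDPs) is witnessed separately by a depth-one contextual-bandit variant with $\Theta(S)$ initial states and policy class $[A]^{\Theta(S)}$, where the same style of argument gives $\Omega(SA/\eps^2)$.

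\textbf{The lower bound.} Fix any algorithm making $T$ generative queries and run it on $M_0$. Let $N_a$ be the expected number of its queries landing on the $a$-th slot; distinct slots are distinct state--action pairs, so $\sum_{a\in[K]} N_a \le T$, and since ``the returned policy reaches slot $a$ with probability $\ge 1/2$'' are near-disjoint events, the reference probabilities of these events over $a\in[K]$ sum to at most $2$. A pigeonhole step on the sum of these two quantities produces an index $a^\star$ with $N_{a^\star}\lesssim T/K$ and reference success probability $\lesssim 1/K$. Because $M_0$ and $M_{a^\star}$ differ only through queries on slot $a^\star$, the KL divergence between the laws of the entire interaction transcript under the two instances is, by the divergence chain rule, at most $N_{a^\star}\cdot\kl{\ber(1/2+2\eps)}{\ber(1/2)}=O(\eps^2 T/K)$, and Pinsker bounds the total variation between the transcripts --- hence between the outputs --- by $O(\eps\sqrt{T/K})$. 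So under $M_{a^\star}$ the learner succeeds with probability at most $O(1/K)+O(\eps\sqrt{T/K})$, which must be $\ge 1-\delta$; for $\delta$ bounded away from $1$ and $K$ not too small this forces $T=\Omega(K/\eps^2)=\Omega(\min\{A^H,\abs{\Pi},SA\}/\eps^2)$.

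\textbf{Main obstacle.} The delicate step is the change of measure in the \emph{generative} model: the learner is adaptive and, under global access, may query any $(s,a)$, so one has to set up the transcript probability space and apply the divergence-decomposition lemma so that exactly the queries hitting the planted slot contribute and their contributions add over the random number of such queries --- the same device behind multi-armed-bandit lower bounds, now with generative queries in place of arm pulls. The rest is bookkeeping: choosing the tree's shape (and, in the $SA$-dominated regime, the contextual-bandit layout) so the number of planted slots is $\Theta(\min\{A^H,\abs{\Pi},SA\})$ without exceeding $S$ states, $A$ actions, or horizon $H$, and keeping the numerical gap ($2\eps$ versus $\eps$) so that returning an $\eps$-optimal policy genuinely certifies identification of the biased slot.
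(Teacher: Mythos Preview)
The paper does not prove this proposition; it is stated as a known result with a citation to \citet{krishnamurthy2016pac}. Your approach is correct and standard: embed a $K$-armed best-arm identification instance into a deterministic-transition MDP and lower-bound via a change of measure. This is exactly the reduction the paper itself uses for its own \pref{thm:generative_lower_bound}, except that there the bandit lower bound is invoked as a black box (citing \citet{mannor2004sample}) rather than reproved via the KL/Pinsker argument as you do. Your more explicit treatment is sound; the only extra work beyond the paper's black-box route is the case split (the $A$-ary tree versus the contextual-bandit layout) needed to witness whichever of $A^H$, $\abs{\Pi}$, $SA$ is the active minimum, which you have correctly identified.
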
 

Since online RL is only harder than learning with a generative model, the lower bound in \pref{prop:nfl-rl} extends to online RL. \pref{prop:nfl-rl} is the analogue of the classical \emph{No Free Lunch} results in statistical learning theory \citep{shalev2014understanding}; 
it indicates that without placing further assumptions on the MDP or the policy class \(\Pi\) (e.g., by introducing additional structure or constraining the state/action space sizes, policy class size, or the horizon), sample efficient agnostic PAC RL is not possible. 

Indeed, an almost matching upper bound of $\non(\Pi; \eps, \delta) = \wt{\cO} \prn{ \min\crl{A^H, \abs{\Pi}, HSA} /\eps^2}$
is quite easy to obtain. The $\abs{\Pi}/\eps^2$ guarantee can simply be obtained by iterating over $\pi \in \Pi$, collecting  $\wt{\cO}(1/\eps^2)$ trajectories per policy, and  then picking the policy with highest empirical value. The $HSA/\eps^2$ guarantee can be obtained by running known algorithms for tabular RL \citep{zhang2021reinforcement}. Finally, the $A^H/\eps^2$ guarantee is achieved by the classical importance sampling (IS) algorithm \citep{kearns1999approximate, agarwal2019reinforcement}. Since importance sampling will be an important technique that we repeatedly use and build upon in this paper, we give a formal description of the algorithm below:
\neurIPS{\vspace{-\topsep}
\begin{enumerate}[label=\(\bullet\), leftmargin=5mm, itemsep=0.2mm]
    \item  Collect $n = \cO(A^H \log \abs{\Pi}/\eps^2)$ trajectories by executing actions $(a_1, \dots, a_H)\sim \unif(\cA^H)$. 
    
    \item Return $\pihat = \argmax_{\pi \in \Pi} \wh{v}^\pi_{\mathrm{IS}}$, where \(\wh{v}^\pi_{\mathrm{IS}} \coloneqq \tfrac{A^H}{n} \sum_{i=1}^n \ind{\pi \cons \tau^{(i)}} \prn{\sum_{h=1}^H r_h^{(i)}}\).
\end{enumerate}\vspace{-\topsep}}

\arxiv{
\begin{center}
\noindent\fbox{%
    \parbox{0.90\textwidth}{~%
$\mathsf{ImportanceSampling}$:
\begin{enumerate}[label=\(\bullet\)]
    \item  Collect $n = \cO(A^H \log \abs{\Pi}/\eps^2)$ trajectories by executing $(a_1, \dots, a_H)\sim \unif(\cA^H)$. 
    
    \item Return $\pihat = \argmax_{\pi \in \Pi} \wh{v}^\pi_{\mathrm{IS}}$, where \(\wh{v}^\pi_{\mathrm{IS}} \coloneqq \tfrac{A^H}{n} \sum_{i=1}^n \ind{\pi \cons \tau^{(i)}} \prn{\sum_{h=1}^H r_h^{(i)}}\).
\end{enumerate}
    }%
}
\end{center}
}
For every $\pi \in \Pi$, the quantity $\wh{v}^\pi_{\mathrm{IS}}$ is an unbiased estimate of $V^\pi$ with variance $A^H$; the sample complexity result follows by standard concentration guarantees \citep[see, e.g.,][]{agarwal2019reinforcement}.

\paragraph{Towards Structural Assumptions for Statistically Efficient Agnostic PAC RL.} Of course, No Free Lunch results do not necessarily spell doom---for example, in supervised learning, various structural assumptions have been studied that enable statistically efficient learning. Furthermore, there has been a substantial effort in developing complexity measures like VC dimension, fat-shattering dimension, covering numbers, etc.~ that characterize agnostic PAC learnability under different scenarios \citep{shalev2014understanding}. 
In this paper, we consider the agnostic reinforcement learning setting, and explore whether there exists a complexity measure that characterizes learnability for every policy class $\Pi$. 
Formally, can we establish a complexity measure  $\dimRL$ (a function that maps policy classes to real numbers), such that for any \(\Pi\), the minimax sample complexity  satisfies
\begin{align*}
    \non(\Pi; \eps, \delta) = \wt{\Theta}\prn*{\poly\prn*{\dimRL(\Pi), H, \eps^{-1}, \log \delta^{-1} } }, 
\end{align*} 
where \(\dimRL(\Pi)\) denotes the complexity of \(\Pi\). \gledit{We mainly focus on finite (but large) policy classes and assume that the \(\log \abs{\Pi}\) factors in our upper bounds are mild. In \pref{app:infinite_policy_classes}, we discuss how our results can be extended to infinite policy classes.}

\paragraph{Is \pref{prop:nfl-rl} Tight for Every $\Pi$?} 
In light of \pref{prop:nfl-rl}, one obvious candidate is $\overline{ \dimRL}(\Pi) = \min\crl{A^H, \abs{\Pi}, SA}$. While \(\overline{ \dimRL}(\Pi)\) is definitely sufficient to upper bound the minimax sample complexity for any policy class \(\Pi\) up to log factors, a priori it is not clear if it is also necessary for every policy class \(\Pi\). In fact, our next proposition implies that  $\overline{ \dimRL}(\Pi)$ is indeed not the right measure of complexity by giving an example of a policy class for which $\overline{ \dimRL}(\Pi) \coloneqq \min\crl{A^H, \abs{\Pi}, SA}$ is exponentially larger than the minimax sample complexity for agnostic learning for that policy class, even when \(\epsilon\) is constant. 



\begin{proposition}\label{prop:singleton-minimax}
Let $H \in \bbN$, \(K \in \bbN\), $\cS_h = \crl*{s_{(i,h)} : i \in [K]}$ for all $h\in [H]$, and $\cA = \crl{0,1}$. Consider the singleton policy class:
    $\Pisingleton \coloneqq \crl*{\pi_{(i', h')}: i' \in [K], h' \in [H]}$, 
where \(\pi_{(i', h')}\) takes the action \(1\) on state \(s_{(i', h')}\), and \(0\) everywhere else. Then $\min\crl{A^H, \abs{\Pisingleton}, SA} = 2^H$ but $\non(\Pisingleton; \eps,\delta) \le \widetilde{\cO}(H^3 \cdot \log(1/\delta)/\eps^2)$.
\end{proposition}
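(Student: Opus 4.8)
The plan is to reduce the entire problem to the single \emph{constant policy} $\pi_0 \in \cA^\cS$ that plays action $0$ at every state. Although $\pi_0 \notin \Pisingleton$, this is legitimate: the agnostic objective only requires $V^{\wh\pi} \ge \max_{\pi\in\Pisingleton}V^\pi - \eps$, with no constraint that $\wh\pi \in \Pisingleton$ (and if one insisted on $\wh{\pi}\in\Pisingleton$, any $\pi_{(i,h)}$ with tiny occupancy of $s_{(i,h)}$ would serve just as well). The key structural fact is that $\pi_{(i,h)}$ agrees with $\pi_0$ on every state except $s_{(i,h)}$, and $s_{(i,h)}\in\cS_h$ can only be visited at step $h$. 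Coupling the two executions, the law of $(s_1,a_1,\dots,s_h)$ is identical under $\pi_0$ and $\pi_{(i,h)}$, and conditioned on $\{s_h\neq s_{(i,h)}\}$ the full trajectories have the same law; conditioning on the step-$h$ state therefore yields
\[
V^{\pi_{(i,h)}} \;=\; V^{\pi_0} \;+\; d^{\pi_0}_h(s_{(i,h)})\cdot\bigl(B_{i,h}-C_{i,h}\bigr),
\]
where $B_{i,h}:=\En^{\pi_{(i,h)}}[\textup{return}\mid s_h=s_{(i,h)}]\in[0,1]$ and $C_{i,h}:=\En^{\pi_0}[\textup{return}\mid s_h=s_{(i,h)}]\in[0,1]$. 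In particular $\abs{V^{\pi_{(i,h)}}-V^{\pi_0}}\le d^{\pi_0}_h(s_{(i,h)})$, so any $\pi_{(i,h)}$ with $d^{\pi_0}_h(s_{(i,h)})<\eps$ is $\eps$-dominated by $\pi_0$; and since $\sum_{i\in[K]}d^{\pi_0}_h(s_{(i,h)})=1$ for each $h\in[H]$, at most $H/\eps$ pairs $(i,h)$ fail to be $\eps$-dominated.

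The algorithm has two phases. In Phase~1, run $\pi_0$ for $\wt{\cO}(\log(\abs{\Pisingleton}/\delta)/\eps^2)$ episodes, form empirical occupancies $\wh d_h(s_{(i,h)})$ and a mean-return estimate $\wh V^{\pi_0}$, and set $\mathcal G := \{(i,h) : \wh d_h(s_{(i,h)}) \ge \eps/2\}$. A Chernoff/Bernstein argument gives $\abs{\mathcal G}\le 2H/\eps$ (the empirical occupancies at each layer sum to $1$) and, with high probability, $\{(i,h):d^{\pi_0}_h(s_{(i,h)})\ge\eps\}\subseteq\mathcal G\subseteq\{(i,h):d^{\pi_0}_h(s_{(i,h)})\ge\eps/4\}$. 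In Phase~2, for each $(i,h)\in\mathcal G$ run $\pi_{(i,h)}$ enough times to estimate $V^{\pi_{(i,h)}}$ to accuracy $\eps$, and output the member of $\{\pi_0\}\cup\{\pi_{(i,h)}:(i,h)\in\mathcal G\}$ with the largest estimate. Correctness is by cases on $\pi^\star:=\argmax_{\pi\in\Pisingleton}V^\pi$: if $\pi^\star=\pi_{(i^\star,h^\star)}$ with $(i^\star,h^\star)\in\mathcal G$, its value is estimated to within $\eps$; otherwise $d^{\pi_0}_{h^\star}(s_{(i^\star,h^\star)})<\eps$, so $V^{\pi^\star}\le V^{\pi_0}+\eps$ and $\pi_0$ is in the comparison with an accurate estimate. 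Either way the output is $\cO(\eps)$-optimal, and rescaling $\eps$ by a constant finishes.

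It remains to bound the number of episodes. Giving each candidate $\wt{\cO}(1/\eps^2)$ fresh rollouts already yields a total of $\wt{\cO}(H/\eps^3)$ --- a polynomial bound, which is the qualitative point. To obtain the stated $\wt{\cO}(H^3\log(1/\delta)/\eps^2)$ (in fact $\wt{\cO}(H/\eps^2)$), I would instead estimate $V^{\pi_{(i,h)}}$ through the displayed decomposition: take $\wh C_{i,h}$ from the Phase-1 rollouts that happen to visit $s_{(i,h)}$, and $\wh B_{i,h}$ from $\wt{\cO}\bigl(d^{\pi_0}_h(s_{(i,h)})/\eps^2 + 1/d^{\pi_0}_h(s_{(i,h)})\bigr)$ rollouts of $\pi_{(i,h)}$ --- of which only a $d^{\pi_0}_h(s_{(i,h)})$-fraction is informative, but since the factor $d^{\pi_0}_h(s_{(i,h)})$ multiplies $B_{i,h}-C_{i,h}$ we only need the latter to accuracy $\eps/d^{\pi_0}_h(s_{(i,h)})$. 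Summing over $\mathcal G$ and using $\sum_{(i,h)}d^{\pi_0}_h(s_{(i,h)})=H$ makes the Phase-2 budget $\wt{\cO}(H/\eps^2)$. The only genuinely non-mechanical ingredient is the structural identity together with the choice of $\pi_0$ as the reference policy; everything after is routine concentration plus a union bound over the $\le\abs{\Pisingleton}$ relevant policies --- and the $\log\abs{\Pisingleton}$ can be traded for $\log(1/\delta)$ by union-bounding only over the $\wt{\cO}(1/\eps^2)$ states actually observed in Phase~1 plus the (high-probability) event that no unobserved state has occupancy $\ge\eps$. I expect the fussiest part to be this variance-aware Phase-2 accounting --- tracking how conditioning on $\{s_h=s_{(i,h)}\}$ interacts with the $\eps/d^{\pi_0}_h(s_{(i,h)})$ target accuracy --- but it is standard.
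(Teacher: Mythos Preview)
Your approach is correct but takes a genuinely different route from the paper. The paper proves the proposition via a direct modification of importance sampling: it samples trajectories from $\unif(\Picore)$ with $\Picore=\{\pi_0\}\cup\{\pi_h:h\in[H]\}$ (where $\pi_h$ plays $1$ on all of layer $h$ and $0$ elsewhere), and uses the IS estimator $\wh v^\pi_{\mathrm{IS}}$ to evaluate every $\pi\in\Pisingleton$ simultaneously. Because any trajectory consistent with a singleton policy is consistent with exactly one member of $\Picore$, the IS weight is always $H{+}1$, so Hoeffding plus a union bound gives $\poly(H)/\eps^2$ samples. There is no phase structure and no explicit occupancy estimation.

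Your argument instead fixes the single reference policy $\pi_0$, uses the one-line decomposition $V^{\pi_{(i,h)}}-V^{\pi_0}=d^{\pi_0}_h(s_{(i,h)})(B_{i,h}-C_{i,h})$, and reduces to identifying the $O(H/\eps)$ high-occupancy states and evaluating only those candidates. This is more work to set up (two phases, variance-aware allocation, handling the $\log|\Pi|$ vs.\ $\log(1/\delta)$ issue), but it buys you a somewhat sharper $H$-dependence and --- more interestingly --- it is exactly the reachable-state-identification plus targeted evaluation structure of the paper's main algorithm $\mathsf{POPLER}$, specialized to singletons. So while the paper's one-shot IS sketch is the cleaner proof of this specific proposition, your argument is a faithful miniature of the machinery the paper eventually builds for general $(K,D)$-sunflowers.
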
 

The above upper bound on minimax sample complexity holds arbitrarily large values of \(K\), and can be obtained as a corollary of our more general upper bound in \pref{sec:upper-bound}. The key intuition for why $\Pisingleton$ can be learned in $\poly(H)$ samples is that even though the policy class and the state space are large when \(K\) is large, the set of possible trajectories obtained by running any $\pi \in \Pisingleton$ has low complexity. In particular, every trajectory $\tau$ has at most one $a_h = 1$. This observation enables us to employ the straightforward modification of the classical IS algorithm: draw $\poly(H) \cdot \log(1/\delta)/\eps^2$ samples from the uniform distribution over $\Picore = \crl*{\pi_h : h \in [H]}$ where the policy \(\pi_h\) takes the action 1 on every state at layer \(h\) and \(0\) everywhere else. 
The variance of the resulting estimator $\wh{v}^\pi_{\mathrm{IS}}$ is $1/H$, so the sample complexity of this modified variant of IS has only $\poly(H)$ dependence by standard concentration bounds. 

In the sequel, we present a new complexity measure that formalizes this intuition that a policy class \(\Pi\) is efficiently learnable if the set of trajectories induced by policies in \(\Pi\) is small. 

\section{Spanning Capacity}   \label{sec:C_pi}  

The \Compname{} precisely captures the intuition that trajectories obtained by running any $\pi \in \Pi$ have ``low complexity.'' We first define a notion of reachability: in deterministic MDP $M \in \cMdet$, we say $(s,a)$ is \emph{reachable} by $\pi \in \Pi$  if $(s,a)$ lies on the trajectory obtained by running $\pi$ on $M$. Roughly speaking, the \Compname{} measures ``complexity'' of $\Pi$ as the maximum number of state-action pairs which are reachable by some $\pi \in \Pi$ in any \emph{deterministic} MDP. 

\begin{definition}[\Compname{}]\label{def:dimension}
Fix a deterministic MDP $M \in \cMdet$. We define the \emph{cumulative reachability} at layer $h\in[H]$, denoted by $C^\mathsf{reach}_h(\Pi; M)$, as
\begin{align*}
    C^\mathsf{reach}_h(\Pi; M) \coloneqq \abs{ \crl{(s,a) : (s,a)~\text{is  reachable by } \Pi ~\text{at layer \(h\)} } }.
\end{align*}
We define the \emph{\Compname{}} of $\Pi$ as 
\begin{align*}
    \dimRL(\Pi) \coloneqq \max_{h \in [H]} \max_{M \in \cMdet} C^\mathsf{reach}_h(\Pi; M).
\end{align*} 
\end{definition} 

To build intuition, we first look at some simple examples with small \Compname{}: 
\neurIPS{\vspace{-\topsep}
\begin{enumerate}[label=\(\bullet\), leftmargin=5mm, itemsep=0.2mm]}
\arxiv{\begin{enumerate}[label=\(\bullet\)]}
\item \textbf{Contextual Bandits:} Consider the standard formulation of contextual bandits (i.e., RL with $H = 1$). For any policy class \(\Picb\), since $H=1$, the largest deterministic MDP we can construct has a single state $s_1$ and at most $A$ actions available on $s_1$, so $\dimRL(\Picb) \le A$.
    \item \textbf{Tabular MDPs:} Consider tabular RL with the policy class \(\Pitab = \cA^\cS\) consisting of all deterministic policies on the underlying state space. Depending on the relationship between $S,A$ and $H$, we have two possible bounds on $\dimRL(\Pitab) \le \min\crl{A^H, SA}$. If the state space is exponentially large in $H$, then it is possible to construct a full $A$-ary ``tree'' such that every $(s,a)$ pair at layer $H$ is visited, giving us the $A^H$ bound. However, if the state space is small, then the number of $(s,a)$ pairs available at any layer $H$ is trivially bounded by $SA$.
    \item \textbf{Bounded Cardinality Policy Classes:} For any policy class \(\Pi\), we always have that $\dimRL(\Pi) \le \abs{\Pi}$, since in any deterministic MDP, in any layer \(h \in [H]\), each $\pi \in \Pi$ can visit at most one new $(s,a)$ pair. Thus, for policy classes  \(\abs{\Pismall}\) with small cardinality 
 (e.g. \(\abs{\Pismall} = O(\poly(H, A))\),  the spanning capacity is also small; Note that in this case, we allow our sample complexity bounds to depend on \(\abs{\Pismall}\). 
    
    \item \textbf{Singletons:} For the singleton class we have $\dimRL(\Pisingleton) = H+1$, since once we fix a deterministic MDP, there are at most $H$ states where we can split from the trajectory taken by the policy which always plays $a=0$, so the maximum number of $(s,a)$ pairs reachable at layer $h \in [H]$ is $h+1$. Observe that in light of \pref{prop:singleton-minimax}, the \Compname{} for $\Pisingleton$ is ``on the right order'' for characterizing the minimax sample complexity for agnostic PAC RL. 
\end{enumerate}\neurIPS{\vspace{-\topsep}}
Before proceeding, we note that for any policy class \(\Pi\), the \Compname{} is always bounded. 
\begin{proposition}\label{prop:dimension-ub}
For any policy class $\Pi$, we have $\dimRL(\Pi) \le \min\crl{A^H, \abs{\Pi}, SA}$.  
\end{proposition}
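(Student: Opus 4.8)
The plan is to bound the cumulative reachability $C^\mathsf{reach}_h(\Pi; M)$ by each of the three quantities $SA$, $\abs{\Pi}$, and $A^H$ \emph{separately}, uniformly over all layers $h \in [H]$ and all deterministic MDPs $M \in \cMdet$. Since $\dimRL(\Pi) = \max_{h,M} C^\mathsf{reach}_h(\Pi; M)$, establishing all three bounds and taking the minimum yields the claim. These arguments essentially formalize observations already made in the Tabular MDPs and Bounded Cardinality Policy Classes examples following \pref{def:dimension}, so the main task is just to write them carefully. Fix $M \in \cMdet$ and $h \in [H]$ throughout; recall that $M$ has deterministic transitions and (by the convention implicit in \pref{def:dimension}) a deterministic initial state, so that ``the trajectory obtained by running $\pi$ on $M$'' is a well-defined single trajectory for each $\pi$.

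For the $SA$ bound, observe that every pair counted by $C^\mathsf{reach}_h(\Pi; M)$ has its state component in $\cS_h$, and the total number of state-action pairs with state in $\cS_h$ is $\abs{\cS_h}\cdot A \le SA$, regardless of reachability. For the $\abs{\Pi}$ bound, note that running a single $\pi \in \Pi$ on $M$ produces one deterministic trajectory, which passes through exactly one state-action pair at layer $h$; hence the set of pairs reachable by $\Pi$ at layer $h$ is a union of at most $\abs{\Pi}$ singletons, of size at most $\abs{\Pi}$. For the $A^H$ bound, note that in a deterministic MDP any length-$(h-1)$ trajectory prefix is determined by its action sequence $a_{1:h-1} \in \cA^{h-1}$, so at most $A^{h-1}$ distinct states are reachable at layer $h$, and each such state $s$ contributes at most $A$ reachable pairs $(s,a)$ (one per choice of action); thus $C^\mathsf{reach}_h(\Pi; M) \le A^{h-1}\cdot A = A^h \le A^H$.

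There is no real obstacle here; the proof is elementary combinatorics. The only point worth flagging is the convention that a deterministic MDP comes with a deterministic start state, which is what makes the notion of ``the trajectory of $\pi$'' sensible in \pref{def:dimension}; without it, the $\abs{\Pi}$ and $A^H$ bounds would fail, and indeed to prove the proposition one should (and we do, by the stated convention) restrict the maximization in \pref{def:dimension} to deterministic MDPs with a point-mass initial distribution.
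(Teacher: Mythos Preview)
Your proof is correct and matches the paper's approach: the paper does not give a standalone proof of \pref{prop:dimension-ub} but instead embeds exactly these three counting arguments in the Tabular MDPs and Bounded Cardinality Policy Classes examples preceding the proposition. Your write-up formalizes those observations cleanly, including the point that the deterministic start state is what makes each $\pi$ determine a unique trajectory.
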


\pref{prop:dimension-ub} recovers the worst-case upper and lower bound from \pref{sec:agnostic-pac-rl}. However, for many policy classes, \Compname{} is substantially smaller than upper bound of \pref{prop:dimension-ub}. In addition to the examples we provided above, we list several additional policy classes with small \Compname{}. For these policy classes we set the state/action spaces to be $\cS_h = \crl*{s_{(i,h)} : i \in [K]}$ for all $h\in [H]$ and $\cA = \crl{0,1}$, respectively. All proofs are deferred to \pref{app:examples-policy-classes}.

\neurIPS{\vspace{-\topsep}
\begin{enumerate}[label=\(\bullet\), leftmargin=5mm, itemsep=0.2mm]}
\arxiv{\begin{enumerate}[label=\(\bullet\)]}
    \item \textbf{$\boldsymbol{\ell}$-tons}: A natural generalization of singletons. We define 
$\PiLton \coloneqq \crl{\pi_I : I \subset \cS, \abs{I} \le \ell}$, where the policy \(\pi_I\) is defined s.t.~\(\pi_I(s) = \ind{s \in I}\) for any \(s \in \cS\). Here, $\dimRL(\PiLton) = \Theta(H^\ell)$.
    \item \textbf{1-Active Policies}: We define \(\Pioneactive\) to be the class of policies which can take both possible actions on a single state $s_{(1,h)}$ in each layer $h$, but on other states $s_{(i, h)}$ for $i \ne 1$ must take action 0. Formally, \(\Pioneactive \ldef{} \crl{\pi_b \mid b \in \crl{0, 1}^H}\), where for any \(b \in \crl{0, 1}^H\) the policy \(\pi_b\) is defined such that  \( \pi_b(s) = b[h] \) if \(s = s_{(1, h)} \), and $\pi_b(s) = 0$ otherwise. 
    \item \textbf{All-Active Policies}: We define $\PiJactive \coloneqq \crl{\pi_b \mid b \in \crl{0, 1}^H}$, where for any \(b \in \crl{0, 1}^H\) the policy \(\pi_b\) is defined such that  \( \pi_b(s) = b[h] \) if \(s = s_{(j, h)} \), and $\pi_b(s) = 0$ otherwise. We let $\Piactive \coloneqq \bigcup_{j = 1}^K \PiJactive$. Here, $\dimRL(\Piactive) = \Theta(H^2)$. 
\end{enumerate}\neurIPS{\vspace{-\topsep}}
A natural interpretation of the \Compname{} is that it represents the largest ``needle in a haystack'' that can be embedded in a deterministic MDP using the policy class $\Pi$. To see this, let $(M^\star, h^\star)$ be the MDP and layer which witnesses $\dimRL(\Pi)$, and let $\crl{(s_i,a_i)}_{i=1}^{\dimRL(\Pi)}$ be the set of  state-action pairs reachable by \(\Pi\) in $M^\star$ at layer $h^\star$. Then one can hide a reward of 1 on one of these state-action pairs; since every trajectory visits a single $(s_i, a_i)$ at layer $h^\star$, we need at least $\dimRL(\Pi)$ samples in order to discover which state-action pair has the hidden reward. Note that in this agnostic learning setup, we only need to care about the states that are reachable using \(\Pi\), even though the \(h^\star\) layer may have other non-reachable states and actions. 



\subsection{Connection to Coverability} The \Compname{} has another interpretation as the worst-case \emph{coverability}, a structural parameter defined in a recent work by \cite{xie2022role}. 


\begin{definition}[Coverability, \citet{xie2022role}]\label{def:coverability}
For any MDP $M$ and policy class $\Pi$, the coverability coefficient $C^\mathsf{cov}$ is denoted
\neurIPS{\begin{align*}
   C^\mathsf{cov}(\Pi; M) \coloneqq~ \inf_{\mu_1, \dots \mu_H \in \Delta(\cS \times \cA)} \sup_{\pi \in \Pi, h \in [H]} ~ \bigg\lVert \tfrac{d^\pi_h}{\mu_h} \bigg\rVert_\infty =~ \max_{h \in [H]}~ \sum_{s, a} \sup_{\pi \in \Pi} d^\pi_h(s,a).
\end{align*}}
\arxiv{\begin{align*}
   C^\mathsf{cov}(\Pi; M) \coloneqq~ \inf_{\mu_1, \dots \mu_H \in \Delta(\cS \times \cA)} \sup_{\pi \in \Pi, h \in [H]} ~ \bigg\lVert \frac{d^\pi_h}{\mu_h} \bigg\rVert_\infty =~ \max_{h \in [H]}~ \sum_{s, a} \sup_{\pi \in \Pi} d^\pi_h(s,a). \numberthis \label{eq:cov_defn}
\end{align*}} 
\end{definition}  

The last equality is shown in Lemma 3 of \cite{xie2022role}, and it says that the coverability coefficient is equivalent to a notion of cumulative reachability (one can check that their definition coincides with ours for deterministic MDPs). 

Coverage conditions date back to the analysis of the classic Fitted Q-Iteration (FQI) algorithm \citep{munos2007performance, munos2008finite}, and have extensively been studied in offline RL. Various models like tabular MDPs, linear MDPs, low-rank MDPs, and exogenous MDPs satisfy the above coverage condition \citep{antos2008learning, chen2019information, jin2021pessimism, rashidinejad2021bridging, zhan2022offline, xie2022role}, and recently, \citeauthor{xie2022role} showed that \emph{coverability} can be used to prove regret guarantees for online RL, albeit under the additional assumption of value function realizability. 
 

It is straightforward from \pref{def:coverability} that our notion of \Compname{} is worst-case coverability when we maximize over deterministic MDPs, since for any deterministic MDP, $\sup_{\pi \in \Pi} d^\pi_h(s,a) = \ind{(s,a)~\text{is  reachable by } \Pi ~\text{at layer \(h\)}}$. The next lemma shows that our notion of \Compname{} is \emph{exactly} worst-case coverability even when we maximize over the larger class of stochastic MDPs. As a consequence, there always exists a deterministic MDP that witnesses worst-case coverability.

\begin{lemma}\label{lem:coverability} 
For any policy class $\Pi$, we have $\sup_{M \in \cMsto} C^\mathsf{cov}(\Pi; M) =   \dimRL(\Pi)$. 
\end{lemma}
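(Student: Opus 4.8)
The plan is to prove the two inequalities separately; the ``$\geq$'' direction is immediate, and the ``$\leq$'' direction carries all the content. For $\sup_{M \in \cMsto} C^\mathsf{cov}(\Pi; M) \geq \dimRL(\Pi)$, I would take the deterministic MDP $M^\star \in \cMdet$ and layer $h^\star$ that witness $\dimRL(\Pi)$ in \pref{def:dimension}. Since $M^\star$ has deterministic transitions and a fixed start state, running any $\pi$ produces a single trajectory, so $d^\pi_h(s,a) = \ind{(s,a)\text{ reachable by }\pi\text{ at layer }h}$. By the second equality in \pref{def:coverability}, $C^\mathsf{cov}(\Pi; M^\star) \geq \sum_{s,a}\sup_{\pi\in\Pi} d^\pi_{h^\star}(s,a) = C^\mathsf{reach}_{h^\star}(\Pi; M^\star) = \dimRL(\Pi)$, and since $M^\star \in \cMsto$ this direction is done.

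For the upper bound it suffices to show that for every $M = \mathrm{MDP}(\cS,\cA,P,R,H,\mu) \in \cMsto$ and every $h \in [H]$ one has $\sum_{s,a}\sup_{\pi\in\Pi} d^\pi_h(s,a) \leq \dimRL(\Pi)$, since $C^\mathsf{cov}(\Pi;M) = \max_h \sum_{s,a}\sup_{\pi\in\Pi} d^\pi_h(s,a)$. The key step is a derandomization. Define a \emph{random} deterministic MDP $\wt M$ by sampling a start state $\wt s_1 \sim \mu$ and, independently for each $(s,a)$, a successor $\wt s'_{s,a} \sim P(\cdot\mid s,a)$, with rewards chosen arbitrarily (they play no role); every realization of $\wt M$ lies in $\cMdet$. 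The crucial observation is that because $\cS$ is layered, running a fixed deterministic $\pi$ on $\wt M$ visits a distinct state in each layer, so the successors it consults along the way are always evaluated at distinct pairs $(s,a)$ and hence contribute fresh independent randomness; by induction on the layer this shows that the law of the length-$h$ prefix of $\pi$'s trajectory on $\wt M$ coincides with its law on $M$. In particular, for $(s,a)$ at layer $h$, $\Pr_{\wt M}\brk*{(s,a)\text{ reachable by }\pi} = d^\pi_h(s,a)$.

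A first-moment argument then finishes it. Summing over $(s,a)$ and using monotonicity of probability (the sample space of $\wt M$ is finite, so $\crl{\exists \pi \in \Pi : (s,a)\text{ reachable by }\pi \text{ in } \wt M}$ is a genuine event even when $\Pi$ is infinite),
\begin{align*}
\En_{\wt M}\brk*{C^\mathsf{reach}_h(\Pi; \wt M)} &= \sum_{s,a}\Pr_{\wt M}\brk*{\exists\pi\in\Pi: (s,a)\text{ reachable by }\pi} \\
&\geq \sum_{s,a}\sup_{\pi\in\Pi} \Pr_{\wt M}\brk*{(s,a)\text{ reachable by }\pi} = \sum_{s,a}\sup_{\pi\in\Pi} d^\pi_h(s,a).
\end{align*}
On the other hand, every realization of $\wt M$ lies in $\cMdet$, so $C^\mathsf{reach}_h(\Pi;\wt M) \leq \dimRL(\Pi)$ surely, and therefore $\En_{\wt M}\brk*{C^\mathsf{reach}_h(\Pi;\wt M)} \leq \dimRL(\Pi)$. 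Combining the two displays, maximizing over $h$, and then taking the supremum over $M \in \cMsto$ yields $\sup_{M\in\cMsto} C^\mathsf{cov}(\Pi;M) \leq \dimRL(\Pi)$, completing the proof.

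The main obstacle is the coupling in the middle step: one must verify carefully that collapsing each stochastic transition $P(\cdot\mid s,a)$ into a single fresh random successor reproduces the true trajectory law of every deterministic $\pi$. This is exactly where the layered structure is essential --- within one trajectory no $(s,a)$ is revisited, so the per-step successors are drawn at distinct points and the inductive ``fresh randomness'' argument goes through. Everything else (the first-moment inequality, the finiteness remark handling infinite $\Pi$, and the reduction via \pref{def:coverability}) is routine bookkeeping.
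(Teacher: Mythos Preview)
Your proof is correct and takes a genuinely different route from the paper's own argument.

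The paper proves the hard direction by explicitly unrolling the dynamics one layer at a time: starting from $\sum_{s_h,a_h}\sup_{\pi} d^\pi_h(s_h,a_h)$, at each layer it writes out the conditional visitation, pushes the $\sup_\pi$ inside the sum over actions and next states (using that policies are deterministic), and finally replaces each $\sum_{s_k} P(s_k\mid s_{k-1},a_{k-1})[\cdots]$ by $\sup_{s_k}[\cdots]$. This last ``derandomization'' step explicitly constructs the deterministic MDP that witnesses the bound. Your argument instead defines a \emph{random} deterministic MDP $\wt M$ via the natural coupling (sample one successor per $(s,a)$ from $P$), uses the layered structure to show that single-policy visitation probabilities are preserved, and then applies a first-moment inequality: $\En_{\wt M}[C^\mathsf{reach}_h(\Pi;\wt M)] \ge \sum_{s,a}\sup_\pi d^\pi_h(s,a)$ while also being $\le \dimRL(\Pi)$ pointwise. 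Both arguments derandomize, but the paper does so constructively and layer-by-layer, whereas you do it in one shot via the probabilistic method. Your route is arguably cleaner and handles infinite $\Pi$ transparently (since the event $\{\exists\pi\in\Pi:\ldots\}$ lives on a finite sample space); the paper's route has the advantage of being fully explicit about which deterministic MDP achieves the bound.
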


\ascomment{Take another pass here!} 
While \Compname{} is equal to the worst-case coverability, we remark that the two definitions have different origins. The notion of coverability bridges offline and online RL, and was introduced in \cite{xie2022role} to characterize when sample efficient learning is possible in value-based RL, where the learner has access to a realizable value function class. On the other hand, \Compname{} is developed for the much weaker agnostic RL setting, where the learner only has access to a policy class (and does not have access to a realizable value function class). Note that a realizable value function class can be used to construct a policy class that contains the optimal policy, but the converse is not true. Furthermore, note that the above equivalence only holds in a worst-case sense (over MDPs). In fact, as we show in \pref{app:cpi-section-proofs}, coverability alone is not sufficient for sample efficient agnostic PAC RL in the online interaction model.

\section{Generative Model: 
Spanning Capacity is Necessary and Sufficient} \label{sec:generative} 

In this section, we show that for any policy class, the \Compname{} characterizes the minimax sample complexity for agnostic PAC RL under generative model. 

\begin{theorem}[Upper Bound for Generative Model] 
\label{thm:generative_upper_bound}
For any $\Pi$, the minimax sample complexity $(\eps,\delta)$-PAC learning $\Pi$ is at most
$ \ngen(\Pi; \eps, \delta) \le  \cO \prn*{\tfrac{H \cdot \dimRL(\Pi)}{\eps^2} \cdot \log \tfrac{\abs{\Pi}}{\delta}  }.$
\end{theorem}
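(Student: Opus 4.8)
The plan is to adapt the \emph{reusable trajectory-tree} method of \citet{kearns1999approximate}, with the new twist that the \Compname{} controls how large a tree one must build. In the local-access generative model, for $m = \Theta(\eps^{-2}\log(\abs{\Pi}/\delta))$ independent rounds I would construct a tree $T^{(i)}$ by \emph{lazy, policy-guided expansion}: draw a root $s_1\sim\mu$; whenever expansion reaches a state $s$ at layer $h<H$ along a partial trajectory $\tau$ from the root, inspect the surviving policies $\crl{\pi\in\Pi:\pi\cons\tau}$, and for each action in $\crl{\pi(s):\pi\cons\tau}$ issue one generative query at $(s,a)$ to obtain a successor $s'\sim P(\cdot|s,a)$ and reward $r\sim R(s,a)$, which we store on the edge $(s,a)$ (reusing an already-expanded node if $s$ recurs at layer $h$, and adding any newly required edges). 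Having built $T^{(1)},\dots,T^{(m)}$, for each $\pi\in\Pi$ let $R^{(i)}_\pi\in[0,1]$ be the total reward collected by tracing $\pi$ through $T^{(i)}$ (a deterministic walk once the tree is fixed), set $\wh V^\pi=\tfrac1m\sum_i R^{(i)}_\pi$, and return $\wh\pi\in\argmax_{\pi\in\Pi}\wh V^\pi$.

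Correctness is then routine. Because the root is drawn from $\mu$ and every edge carries a fresh draw from the true transition/reward kernels, the trajectory traced by $\pi$ in $T^{(i)}$ has exactly the law of a rollout of $\pi$ in $M$; hence $\En[R^{(i)}_\pi]=V^\pi$ and the $R^{(i)}_\pi$ are i.i.d.\ across $i$. Since $R^{(i)}_\pi\in[0,1]$, Hoeffding's inequality together with a union bound over $\Pi$ gives $\abs{\wh V^\pi-V^\pi}\le\eps/2$ for all $\pi$ simultaneously once $m=\Omega(\eps^{-2}\log(\abs{\Pi}/\delta))$, and the standard ERM comparison $V^{\wh\pi}\ge \wh V^{\wh\pi}-\eps/2 \ge \wh V^{\pi^\star}-\eps/2\ge V^{\pi^\star}-\eps$ yields the PAC guarantee.

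The crux is the sample count: the number of generative queries used to build $T^{(i)}$ equals its number of edges, and I claim at most $\dimRL(\Pi)$ edges are created at each layer $h$, so $\abs{T^{(i)}}\le H\cdot\dimRL(\Pi)$ and the total is $m\cdot H\cdot\dimRL(\Pi)=\cO(\tfrac{H\,\dimRL(\Pi)}{\eps^2}\log\tfrac{\abs{\Pi}}{\delta})$. To see the per-layer bound, reinterpret $T^{(i)}$ as a deterministic MDP $\bar M^{(i)}\in\cMdet$ over $\cS$: keep the sampled edges and rewards, redirect every un-expanded $(s,a)$ to an absorbing sink, and fix the start state to $s_1$; this has at most $S$ states and deterministic transitions and rewards. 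Any edge $(s,a)$ created at layer $h$ has a surviving $\pi$ with $\pi(s)=a$, and the prefix along which $\pi$ reached $s$ is precisely the trajectory of $\pi$ in $\bar M^{(i)}$ (each prefix action is $\pi$'s action and each successor is the stored edge), so $(s,a)$ is reachable by $\Pi$ in $\bar M^{(i)}$ at layer $h$; conversely every layer-$h$ pair reachable by $\Pi$ in $\bar M^{(i)}$ was one we created. Hence the count is $C^\mathsf{reach}_h(\Pi;\bar M^{(i)})\le\dimRL(\Pi)$ by the definition of \Compname{}.

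I expect the only delicate point to be the bookkeeping behind this last step: making lazy expansion well-defined when a state recurs at a layer under prefixes with differing surviving-policy sets (one expands the union of required actions and verifies that the deterministic-MDP reinterpretation captures exactly those edges and no more), and folding the initial distribution $\mu$ into the formal $\cMdet$/reachability definitions (e.g.\ by fixing a canonical start state per tree). Once those are pinned down, the remainder is the standard importance-sampling/ERM analysis.
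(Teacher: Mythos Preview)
Your proposal is correct and matches the paper's approach: both adapt the trajectory-tree method of \citet{kearns1999approximate}, expanding only $(s,a)$ pairs reachable by some $\pi\in\Pi$, then apply Hoeffding plus a union bound over $\Pi$ with $m=\Theta(\eps^{-2}\log(\abs{\Pi}/\delta))$ trees. The paper's writeup is terser on the tree-size bound (it simply asserts each tree has at most $H\cdot\dimRL(\Pi)$ edges, whereas you spell out the deterministic-MDP reinterpretation), and slightly more formal on unbiasedness (it couples to the process that samples an entire deterministic MDP $\wh M$ and uses measurability of $\wh v^\pi$ with respect to the sampled subtree, rather than your direct ``fresh draw along $\pi$'s path'' argument); but these are presentational differences, not substantive ones.
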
 
The proof can be found in \pref{app:proof_generative_upper_bound}, and is a straightforward modification of the classic \emph{trajectory tree method} from \cite{kearns1999approximate}: using generative access, sample $\cO(\log \abs{\Pi}/\eps^2)$ deterministic trajectory trees from the MDP to get unbiased evaluations for every $\pi \in \Pi$; the number of generative queries made is bounded since the size of the maximum deterministic tree is at most $H \cdot \dimRL(\Pi)$. 

\begin{theorem}[Lower Bound for Generative Model] \label{thm:generative_lower_bound}
For any $\Pi$, the minimax sample complexity $(\eps,\delta)$-PAC learning $\Pi$ is at least
$ \ngen(\Pi; \eps, \delta) \ge  \Omega \prn*{\tfrac{\dimRL(\Pi)}{\eps^2} \cdot \log \tfrac{1}{\delta}  }.$
\end{theorem}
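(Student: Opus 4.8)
The plan is to make quantitative the ``needle in a haystack'' construction from \pref{sec:C_pi}, replacing the deterministic planted reward by a slightly-biased Bernoulli reward so that the bound carries the $\eps^{-2}$ and $\log(1/\delta)$ factors; it is in effect a reduction to a best-arm identification lower bound. Write $d \coloneqq \dimRL(\Pi)$ and let $(M^\star, h^\star) \in \cMdet \times [H]$ witness the spanning capacity, so there are $d$ \emph{distinct} layer-$h^\star$ pairs $(s_1,a_1),\dots,(s_d,a_d)$ with $(s_i,a_i)$ on the deterministic trajectory of some $\pi_i \in \Pi$ in $M^\star$. I assume $\eps,\delta$ below, and $d$ above, universal constants (the small-$d$ cases reduce to a two-needle argument, and the bound is vacuous when $d=1$).

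First I would construct $d+1$ MDPs $M_0,\dots,M_d \in \cMdetP \subseteq \cMsto$, all with the transitions of $M^\star$ and all rewards $0$ outside layer $h^\star$: in $M_0$ every layer-$h^\star$ pair has a $\ber(1/2)$ reward, while in $M_i$ the pair $(s_i,a_i)$ has reward $\ber(1/2+4\eps)$ and every other layer-$h^\star$ pair has $\ber(1/2)$. The cumulative reward of any trajectory is just its layer-$h^\star$ reward, so these are valid. For a (randomized, history-dependent) policy $\pi$, write $p_i(\pi) \coloneqq \Pr^\pi[\text{the trajectory visits }(s_i,a_i)\text{ at layer }h^\star]$ under the shared transitions; then $V^\pi = \tfrac12 + 4\eps\, p_i(\pi)$ in $M_i$, and since $p_i(\pi_i) = 1$ we get $\max_{\pi\in\Pi}V^\pi = \tfrac12+4\eps$ in $M_i$. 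Hence any $\eps$-optimal $\wh{\pi}$ for $M_i$ obeys $p_i(\wh{\pi}) \ge 3/4$; and since $\sum_{k=1}^d p_k(\pi)\le 1$ for every $\pi$ (the trajectory occupies exactly one of the $d$ distinct pairs at layer $h^\star$), the events $\cE_i \coloneqq \crl{p_i(\wh{\pi})\ge 3/4}$ are mutually exclusive. So any $(\eps,\delta)$-PAC algorithm $\alg$ run on $M_i$ returns $\wh{\pi}$ with $\Pr_i[\cE_i]\ge 1-\delta$, with the $\cE_i$ disjoint.

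Then I would run the standard change-of-measure argument. Fix $\alg$ making at most $n \coloneqq \ngen(\Pi;\eps,\delta)$ generative queries; let $N_i$ be the number of queries to $(s_i,a_i)$, so $\sum_i N_i \le n$ since these pairs are distinct. Running $\alg$ on the reference $M_0$: by Markov fewer than $d/4$ indices have $\En_0[N_i] > 4n/d$, and since disjointness gives $\sum_i\Pr_0[\cE_i]\le 1$, fewer than $d/4$ have $\Pr_0[\cE_i] > 4/d$; fix an $i$ with $\En_0[N_i] \le 4n/d$ and $\Pr_0[\cE_i]\le 4/d$. Because $M_0$ and $M_i$ agree on transitions and on all rewards except at $(s_i,a_i)$, the divergence decomposition for an adaptive learner interacting with the generative oracle reduces to that coordinate: $\KL{\Pr_0}{\Pr_i} = \En_0[N_i]\cdot\kl{1/2}{1/2+4\eps} \le \tfrac{4n}{d}\cdot\cO(\eps^2)$ (Bernoulli $\mathrm{kl}$). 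The data-processing inequality applied to $\cE_i$ gives $\KL{\Pr_0}{\Pr_i} \ge \kl{\Pr_0[\cE_i]}{\Pr_i[\cE_i]} \ge \kl{4/d}{1-\delta} = \Omega(\log(1/\delta))$, using $\Pr_0[\cE_i]\le 4/d\le 1/2$ and $\Pr_i[\cE_i]\ge 1-\delta$. Chaining the two bounds yields $n = \Omega\prn{d\,\eps^{-2}\log(1/\delta)} = \Omega\prn{\dimRL(\Pi)\,\eps^{-2}\log(1/\delta)}$.

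The steps I expect to be the real work are the two reductions in the middle paragraph: (i) converting ``$\wh{\pi}$ is $\eps$-optimal in $M_i$'' into the purely combinatorial statement ``$\wh{\pi}$ places mass $\ge 3/4$ on needle $i$ at layer $h^\star$,'' which crucially uses determinism of the transitions so the layer-$h^\star$ occupancy of \emph{any} policy is a sub-probability vector over the $d$ needles; and (ii) the divergence decomposition for an adaptive learner querying a generative oracle (as opposed to a plain multi-armed bandit), collapsing $\KL{\Pr_0}{\Pr_i}$ to $\En_0[N_i]$ times a single Bernoulli $\mathrm{kl}$. The residual inequalities ($\kl{1/2}{1/2+x} = \cO(x^2)$ for small $x$, and $\kl{a}{b} = \Omega(\log\tfrac{1}{1-b})$ for $a\le\tfrac12$) are routine.
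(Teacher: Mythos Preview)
Your proof is correct and follows the same construction as the paper: embed a best-arm-identification instance on the $\dimRL(\Pi)$ reachable $(s,a)$ pairs at layer $h^\star$ of a witnessing deterministic MDP, so that an $\eps$-optimal policy must pick the right needle. The only difference is that the paper outsources the final step to the known $K$-armed PAC lower bound of Mannor and Tsitsiklis, whereas you unpack that bound directly via the standard reference-measure/change-of-measure argument; the content is the same.
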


The proof can be found in \pref{app:proof_generative_lower_bound}. Intuitively, given an MDP $M^\star$ which witnesses $\dimRL(\Pi)$, one can embed a bandit instance on the relevant $(s,a)$ pairs spanned by \(\Pi\) in $M^\star$. The lower bound follows by a reduction to the lower bound for $(\eps, \delta)$-PAC learning in multi-armed bandits.

Together, \pref{thm:generative_upper_bound} and \pref{thm:generative_lower_bound} paint a relatively complete picture for the minimax sample complexity of learning any policy class $\Pi$, in the generative model, up to an $H\cdot \log \abs{\Pi}$ factor. 

\paragraph{Deterministic MDPs.} A similar guarantee holds for online RL over deterministic MDPs. 

\begin{corollary}\label{corr:deterministic-mdp} 
Over the class of MDPs with deterministic transitions, the minimax sample complexity of $(\eps,\delta)$-PAC learning any $\Pi$ is
\neurIPS{\begin{align*}
  \Omega \prn*{\tfrac{\dimRL(\Pi)}{\eps^2} \cdot \log \tfrac{1}{\delta}  } \le \non(\Pi; \eps, \delta) \le \cO \prn*{\tfrac{H \cdot \dimRL(\Pi)}{\eps^2} \cdot \log \tfrac{\abs{\Pi}}{\delta} }.
\end{align*}}
\arxiv{\begin{align*}
  \Omega \prn*{\frac{\dimRL(\Pi)}{\eps^2} \cdot \log \frac{1}{\delta}  } \le \non(\Pi, \cMdetP; \eps, \delta) \le \cO \prn*{\frac{H \cdot \dimRL(\Pi)}{\eps^2} \cdot \log \frac{\abs{\Pi}}{\delta} }.
\end{align*}}
\end{corollary}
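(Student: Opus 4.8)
The plan is to obtain \pref{corr:deterministic-mdp} as a direct consequence of the two generative-model results, \pref{thm:generative_upper_bound} and \pref{thm:generative_lower_bound}, exploiting the fact that in an MDP with deterministic transitions an online learner can re-reach any state it has already visited simply by replaying the corresponding action prefix, so online interaction is essentially as powerful as (local) generative access up to lower-order terms. The only step I expect to require real care is the bookkeeping in the upper bound when the initial distribution $\mu$ is non-degenerate; the lower bound will be essentially immediate once one observes that the hard instance from \pref{thm:generative_lower_bound} already has deterministic transitions.

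\textbf{Upper bound.} I would implement the trajectory-tree estimator underlying \pref{thm:generative_upper_bound} in the online model. Assume first that $\mu$ is a point mass at a fixed $s_1$. Since transitions are deterministic, the set of state-action edges reachable from $s_1$ under policies in $\Pi$ forms a deterministic tree; viewing this tree itself as a member of $\cMdet$ (rewards are irrelevant to reachability), \pref{def:dimension} bounds the number of its edges at any layer by $\dimRL(\Pi)$, hence by $H\cdot\dimRL(\Pi)$ edges in total. To build one ``sampled tree'' the learner walks this tree edge-by-edge: each time it replays the (known) action prefix that leads to an uncovered frontier node, plays the next uncovered action, and records one, possibly stochastic, reward sample on that edge; since every length-$H$ episode covers up to $H$ fresh edges, this costs $\cO(H\cdot\dimRL(\Pi))$ online episodes. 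Exactly as in the trajectory-tree method, each sampled tree determines, simultaneously for all $\pi\in\Pi$, an unbiased estimate of $V^\pi$ taking values in $[0,1]$ (the cumulative reward of any trajectory is bounded by $1$), so averaging $\cO(\log(\abs{\Pi}/\delta)/\eps^2)$ independent sampled trees and returning the empirical maximizer yields an $\eps$-suboptimal policy with probability at least $1-\delta$, with total episode count $\cO(H\dimRL(\Pi)\eps^{-2}\log(\abs{\Pi}/\delta))$. For a general $\mu$ one uses that, by \pref{def:dimension} applied at layer $1$, $\mu$ is supported on at most $\dimRL(\Pi)$ states; one draws a fresh $s_1\sim\mu$ at the start of each episode, builds a sampled tree per encountered initial state (discarding initial states of negligible mass, which costs only $\eps$ in value), and otherwise proceeds as above.

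\textbf{Lower bound.} The hard instance constructed in the proof of \pref{thm:generative_lower_bound} is a $\dimRL(\Pi)$-armed bandit embedded, through stochastic rewards, onto the state-action pairs that witness $\dimRL(\Pi)$ in a deterministic MDP $M^\star$; its transitions are deterministic, so it lies in $\cMdetP$. On this instance every trajectory passes through exactly one of the $\dimRL(\Pi)$ candidate pairs at the critical layer, so each online episode amounts to a single, adaptively chosen, arm pull together with reward-independent --- hence ``free'' --- navigation through $M^\star$ (the learner can map out the deterministic structure of $M^\star$ in $\cO(H\dimRL(\Pi))$ initial episodes, which is dominated). Consequently the standard $\Omega(K\eps^{-2}\log(1/\delta))$ lower bound for $(\eps,\delta)$-PAC best-arm identification in $K$-armed bandits applies with $K=\dimRL(\Pi)$, giving $\non(\Pi,\cMdetP;\eps,\delta)\ge\Omega(\dimRL(\Pi)\eps^{-2}\log(1/\delta))$. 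Equivalently, since online access can always be simulated by generative access, the bound of \pref{thm:generative_lower_bound} transfers to the online model; the bandit structure of the instance is precisely what prevents the loss of an extra $H$ factor in this reduction.

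Putting the two directions together yields the claimed sandwich, matching the generative-model bounds of \pref{thm:generative_upper_bound}--\pref{thm:generative_lower_bound} up to the $H\log\abs{\Pi}$ gap. The main obstacle, as noted, is purely in the upper bound: carefully arguing that replaying action prefixes in a deterministic-transition MDP faithfully simulates the generative expansion of the trajectory tree, and handling the case where $\mu$ is supported on several initial states rather than one.
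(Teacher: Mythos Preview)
Your proposal is essentially correct and follows the same approach as the paper: the upper bound simulates the trajectory-tree method by replaying action prefixes (the paper phrases this as ``in every episode of online interaction we are guaranteed to add the new $(s,a)$ pair to the trajectory tree''), and the lower bound observes that the hard instance from \pref{thm:generative_lower_bound} already lies in $\cMdetP$.

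One caveat: your claim that ``by \pref{def:dimension} applied at layer $1$, $\mu$ is supported on at most $\dimRL(\Pi)$ states'' is incorrect---the spanning capacity is a maximum over deterministic MDPs, each of which has a \emph{single} start state, so $C^\mathsf{reach}_1(\Pi;M)\le A$ always and says nothing about the support of a stochastic $\mu$. The paper's proof also glosses over the non-degenerate $\mu$ case (it implicitly treats the start state as fixed so that ``the trajectory tree collected every time is the same fixed deterministic MDP''); if you want to handle general $\mu$ cleanly, the standard move is to prepend a dummy start state with a deterministic transition into a draw from $\mu$, which reduces to the point-mass case at the cost of one extra layer.
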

The upper bound follows because the trajectory tree algorithm for deterministic transitions samples the same tree over and over again (with different stochastic rewards). The lower bound trivially extends because the lower bound of \pref{thm:generative_lower_bound} actually uses an MDP $M \in \cMdetP$ (in fact, the transitions of \(M\) are also known to the learner).

\section{Online RL: Spanning Capacity is Not Sufficient}\label{sec:online-lower-bound} 
Given that fact that \Compname{} characterizes the minimax sample complexity of agnostic PAC RL in the generative model, one might be tempted to 
conjecture that \Compname{} is also the right characterization in online RL. The lower bound is clear since online RL is at least as hard as learning with a generative model, so \pref{thm:generative_lower_bound} already shows that \Compname{} is \emph{necessary}. But is it also sufficient?

In this section, we prove a surprising negative result showing that bounded \Compname{} by itself is not sufficient to characterize the minimax sample complexity in online RL. In particular, we provide an example for which we have a \emph{superpolynomial} (in \(H\)) lower bound on the number of trajectories needed for learning in online RL, that is not captured by any polynomial function of \Compname{}. This implies that, contrary to RL with a generative model, one can not hope for \(\non(\Pi; \eps, \delta) = \wt{\Theta}\prn{\poly\prn{\dimRL(\Pi), H, \eps^{-1}, \log \delta^{-1}}}\) in online RL. 

\neurIPS{\footnote{In the lower bound construction, the optimal policy \(\pistar\) for the underlying MDP belongs to the set \(\Pi\). This shows that realizability of the optimal policy in the policy class also does not help.}} 

\begin{theorem}[Lower Bound for Online RL] 
\label{thm:lower-bound-online} 
Fix any sufficiently large $H$. Let $\eps \in \prn{1/2^{\cO(H)},\cO(1/H)}$ and $\ell \in \crl{2, \dots, H}$ such that $1/\eps^\ell \le 2^H$. There exists a policy class $\Piell$ of size $\cO(1/\eps^\ell)$ with $\dimRL(\Piell) \le \cO(H^{4\ell+2})$ and a family of MDPs $\cM$ with state space $\cS$ of size $2^{\cO(H)}$, binary action space, and horizon $H$ such that: for any $(\eps, 1/8)$-PAC algorithm, 
there exists an MDP $M \in \cM$ for which the algorithm must collect at least 
\neurIPS{$\Omega\prn{\min \crl{\tfrac{1}{\eps^\ell}, 2^{H/3} } }$ online trajectories in expectation.}
\arxiv{\begin{align*}
    \Omega\prn*{\min \crl*{\frac{1}{\eps^\ell}, 2^{H/3} } } \quad \text{online trajectories in expectation.}
\end{align*}}
\end{theorem}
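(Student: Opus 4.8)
The plan is to prove \pref{thm:lower-bound-online} by exhibiting an explicit ``planted combination lock'' family of MDPs together with an $\PiLton$-style policy class, and then applying a standard Bayesian fooling argument (in the spirit of \pref{prop:nfl-rl}) on top of it. Concretely, I would fix a set of secrets $\Theta$ with $|\Theta| = \Theta(1/\eps^\ell)$, each $\theta \in \Theta$ encoding the location of a single ``needle''. I build a reference MDP $M_0$ whose reward is identically $0$, and for each $\theta$ a perturbed MDP $M_\theta \in \cMsto$ that has the \emph{same transition dynamics} as $M_0$ but places a reward of $2\eps$ at the unique terminal configuration reached by a trajectory that ``decodes $\theta$ correctly''. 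The policy class $\Piell$ contains, for each $\theta$, one policy $\pi_\theta$ whose induced trajectory decodes $\theta$, so that $V^{\pi_\theta} = 2\eps$ on $M_\theta$ while $V^{\pi} = 0$ on $M_\theta$ for every $\pi \in \Piell$ with $\pi \neq \pi_\theta$. Hence any $(\eps,1/8)$-PAC algorithm run on $M_\theta$ must output exactly $\pi_\theta$, and the problem reduces to ``find the planted needle among $|\Theta|$ candidates''. The $2^{\cO(H)}$ state space and the constraint $1/\eps^\ell \le 2^H$ come from using a binary-tree backbone of depth $\Theta(H)$, which is what lets us embed $\log_2|\Theta| = \ell\log_2(1/\eps)$ bits of secret.

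The second ingredient is the spanning-capacity bound, for which the central design requirement is that $\Piell$ be ``$\ell$-ton-like'': every $\pi_\theta$ should agree with a fixed base policy $\pi_0$ (``always play $0$'') except inside $\ell$ disjoint ``decision blocks'', each block occupying a $\poly(H)$-sized region of layers/states and having low internal branching complexity. Granting this, $\dimRL(\Piell)$ is controlled exactly as for $\PiLton$: fixing any $M \in \cMdet$ and following $\pi_0$, a policy $\pi_\theta$ can diverge from $\pi_0$'s trajectory only upon first entering a decision block, after which the residual problem is again ``$(\ell-1)$-ton-like'', so the number of reachable state-action pairs at any layer multiplies by a $\poly(H)$ factor per block, yielding $\dimRL(\Piell) \le \cO(H^{4\ell+2})$. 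This counting is a routine recursion once the gadget is pinned down, and the same bookkeeping gives $|\Piell| = \cO(1/\eps^\ell)$.

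The heart of the argument is the online lower bound itself, which must genuinely exploit stochasticity of transitions --- recall that by \pref{corr:deterministic-mdp} this very class is $\poly(H,1/\eps)$-learnable over deterministic-transition MDPs. The gadget must be arranged so that in the online model, \emph{any trajectory that does not exactly decode $\theta$ has the same distribution under $M_\theta$ as under $M_0$}, and hence reveals nothing about $\theta$. Two features enforce this: (i) a ``wrong'' move inside a decision block routes the agent to an absorbing zero-reward subtree, so partial progress is never observed; and (ii) the contexts/transitions the agent encounters inside each block are freshly randomized over a $2^{\Theta(H)}$-sized space, so a failed attempt cannot be reused and the $\ell$ blocks are genuinely entangled (there is no way to make progress on block $i+1$ before decoding blocks $1,\dots,i$, and no partial credit). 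Given this, a complete trajectory decodes at most one secret, so over $n$ trajectories the algorithm ``touches'' at most $n$ secrets; taking a uniform prior on $\Theta$, a trajectory never decodes $\theta$ with probability $\ge 1 - n/|\Theta|$, and conditioned on that event the transcript is distributed as under $M_0$ and is independent of $\theta$, so the probability of outputting $\pi_\theta$ is at most $1/|\Theta|$ on average. Thus the average success probability is at most $(n+1)/|\Theta|$, and for this to reach $7/8$ we need $n = \Omega(|\Theta|) = \Omega(1/\eps^\ell)$ on some $M_\theta$; the standard truncation via Markov's inequality upgrades this to a bound in expectation. The $2^{H/3}$ cap appears because the $2^{\cO(H)}$ state budget lets us embed at most $\approx 2^{H/3}$ distinct needles (the factor $1/3$ being slack consumed by the gadget layers), and since $\pi_\theta \in \Piell$ is optimal for $M_\theta$ the bound holds even under realizability of $\pistar$.

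The main obstacle is reconciling the two opposing demands on the gadget. For the spanning-capacity bound, every $\pi_\theta$ must ``collapse'' onto a $\poly(H)$-complexity set of trajectories in \emph{every} deterministic MDP; for the online lower bound, the specific stochastic $M_\theta$ must instead ``explode'' the decoding into exponentially many mutually indistinguishable branches so that progress cannot be tracked and the $\ell$ blocks cannot be decoupled. Designing a single gadget exhibiting both behaviors, and then carefully controlling (a) the fan-out recursion that produces the $\cO(H^{4\ell+2})$ bound and (b) the coupling certifying that non-decoding trajectories carry zero information about $\theta$ (so the fooling argument is tight), is the technical crux; the remaining pieces --- the PAC-to-search reduction, the value-gap computation, and the final minimax/expectation bookkeeping --- are standard.
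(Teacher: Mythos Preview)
Your high-level plan---plant a needle per secret $\theta$, argue that a non-decoding trajectory is indistinguishable from the reference MDP, and count how many secrets $n$ trajectories can touch---matches the overall shape of the paper's argument. You also correctly isolate the central tension: the class must have small \Compname{} in every deterministic MDP while simultaneously being information-theoretically hard to explore in the specific stochastic family. But the way you propose to resolve this tension does not work, and the resolution is precisely the paper's main technical contribution.

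The concrete gap is your ``$\ell$-ton-like'' policy class. Any class whose members agree with a fixed base policy $\pi_0$ except on $\ell$ designated states (or, more generally, on $\ell$ small ``decision blocks'' whose petals are of $\poly(H)$ size) is a $(K,D)$-\sunflower{} with $K=\poly(H)$ and $D=\poly(\ell,H)$---this is exactly how the paper verifies the sunflower property for $\PiLton$ in \pref{app:examples-policy-classes}. But then \pref{thm:sunflower} gives an \emph{upper bound} of $\poly(\dimRL(\Pi),H,K,D,\log|\Pi|,1/\eps)$ samples in the online model, which is $H^{O(\ell)}\cdot\poly(1/\eps)$ and flatly contradicts the $\Omega(1/\eps^\ell)$ lower bound you are trying to prove whenever $\eps\ll 1/H$. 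In other words, the very structure that makes the $\ell$-ton recursion go through for \Compname{} also makes the class easy online. Your proposal acknowledges that ``designing a single gadget exhibiting both behaviors\dots is the technical crux'' and then stops; but this is not a detail to be filled in---it is the entire difficulty.

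The paper's construction is nothing like an $\ell$-ton. The class $\Piell$ is built from a random binary matrix $B\in\{0,1\}^{N\times 2^{2H}}$ with $N=\Theta(1/\eps^\ell)$ that is shown (via the probabilistic method) to be $(\ell\log d,\ell)$-\emph{block-free}: no $\ell$ columns share more than $O(\ell H)$ common $1$-rows. Each column $j$ indexes a combination lock, and $\pi_i$ plays a nontrivial (and unique) action sequence on lock $j$ iff $B_{ij}=1$. The row sums are $\Theta(\eps\cdot 2^{2H})$, so each $\pi_i$ is ``active'' on exponentially many locks---the petals $\cS_{\pi_i}$ are huge, and the class is emphatically \emph{not} a small sunflower. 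The \Compname{} bound comes instead from the block-free property: in any deterministic MDP, once a trajectory has taken action $1$ on $\ell$ distinct lock-indices, at most $O(\ell H)$ policies remain consistent with it, which caps the further branching. This is a global combinatorial constraint on $B$, not a local per-policy sparsity, and it is what lets \Compname{} stay $\le O(H^{4\ell+2})$ while the sunflower parameters blow up. Second, the MDPs $M_{\pistar,\phi}$ do \emph{not} share transitions with $M_0$: relevant locks are genuine combination locks keyed to $\pistar$, irrelevant locks have uniform transitions, and a random decoder $\phi$ scrambles which physical state is ``good'' versus ``bad'' in each lock. The information-hiding comes from averaging over $\phi$ (a change-of-measure lemma shows $\Pr_{0,\pistar}\approx\Pr_0$ up to $T_{\max}^2/2^H$), not from literally identical transitions; and the lower bound is proved via a stopping-time/KL argument rather than a direct Bayesian count.
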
 
Informally speaking, the lower bound shows that there exists a policy class \(\Pi\) for which \(\non(\Pi; \eps, \delta) = 1/\eps^{\Omega(\log_H \dimRL(\Pi))} \). In order to interpret this theorem, we can instantiate choices of $\epsilon = 1/2^{\sqrt{H}}$ and $\ell = \sqrt{H}$ to show an explicit separation.
\begin{corollary} For any sufficiently large $H$, there exists a policy class $\Pi$ with $\dimRL(\Pi) = 2^{\cO(\sqrt{H} \log H)}$ such that for any $(1/2^{\sqrt{H}}, 1/8)$-PAC algorithm, there exists an MDP for which the algorithm must collect at least $2^{\Omega(H)}$ online trajectories in expectation. 
\end{corollary}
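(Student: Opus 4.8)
The plan is to obtain this corollary as a direct instantiation of \pref{thm:lower-bound-online}, with the specific parameter choices $\eps = 2^{-\sqrt{H}}$ and $\ell = \lfloor \sqrt{H} \rfloor$. First I would verify that these choices satisfy the hypotheses of the theorem for all sufficiently large $H$. The requirement $\eps \in (1/2^{\cO(H)}, \cO(1/H))$ holds because $2^{-\sqrt H} \ge 2^{-H}$ trivially, and $2^{-\sqrt H} \le 1/H$ once $\log_2 H \le \sqrt H$, which is true for large $H$. The requirement $\ell \in \crl*{2,\dots,H}$ holds since $\ell = \lfloor\sqrt H\rfloor \in \crl*{2,\dots,H}$ for $H \ge 4$. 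Finally, the feasibility constraint $1/\eps^\ell \le 2^H$ becomes $2^{\sqrt H \lfloor\sqrt H\rfloor} \le 2^{\sqrt H \cdot \sqrt H} = 2^H$, which holds by definition of the floor.

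With these choices, \pref{thm:lower-bound-online} supplies a policy class $\Pi = \Piell$ with $\dimRL(\Pi) \le \cO(H^{4\ell+2}) = \cO(H^{4\lfloor\sqrt H\rfloor + 2})$. Since $\log_2\prn*{H^{4\lfloor\sqrt H\rfloor+2}} = (4\lfloor\sqrt H\rfloor+2)\log_2 H = \cO(\sqrt H \log H)$, this yields $\dimRL(\Pi) = 2^{\cO(\sqrt H \log H)}$, as required. The theorem also guarantees that every $(\eps,1/8) = (2^{-\sqrt H}, 1/8)$-PAC algorithm must, for some MDP in the associated family, collect at least $\Omega\prn*{\min\crl*{1/\eps^\ell, 2^{H/3}}}$ online trajectories in expectation.

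It then remains only to simplify the minimum. Here $1/\eps^\ell = 2^{\sqrt H \lfloor\sqrt H\rfloor}$, and using $\lfloor\sqrt H\rfloor \ge \sqrt H - 1$ gives $\sqrt H \lfloor\sqrt H\rfloor \ge H - \sqrt H \ge H/2$ for all sufficiently large $H$. Hence $1/\eps^\ell \ge 2^{H/2} \ge 2^{H/3}$, so $\min\crl*{1/\eps^\ell, 2^{H/3}} = 2^{H/3} = 2^{\Omega(H)}$, and the sample complexity lower bound is $2^{\Omega(H)}$ online trajectories in expectation, completing the proof.

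There is no real obstacle here beyond bookkeeping, since all the substance lives in \pref{thm:lower-bound-online}; the corollary is just a clean choice of parameters. The one place that genuinely requires care is using $\ell = \lfloor\sqrt H\rfloor$ rather than the more naive $\lceil\sqrt H\rceil$, so that the constraint $1/\eps^\ell \le 2^H$ is respected, while still keeping $\ell = \Theta(\sqrt H)$ so that both $\dimRL(\Pi) = 2^{\cO(\sqrt H \log H)}$ and the $2^{\Omega(H)}$ lower bound come out as stated.
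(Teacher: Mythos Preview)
Your proposal is correct and follows exactly the approach the paper takes: the corollary is obtained by instantiating \pref{thm:lower-bound-online} with $\eps = 2^{-\sqrt{H}}$ and $\ell \approx \sqrt{H}$. Your use of $\ell = \lfloor \sqrt{H} \rfloor$ (rather than the paper's informal $\ell = \sqrt{H}$) is a sensible way to handle integrality while preserving the constraint $1/\eps^\ell \le 2^H$, and the remaining arithmetic checks are all handled correctly.
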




In conjunction with the results of \pref{sec:generative}, \pref{thm:lower-bound-online} shows that (1) online RL is \emph{strictly harder} than RL with generative access, and (2) online RL for stochastic MDPs is \emph{strictly harder} than online RL for MDPs with deterministic transitions. \neurIPS{We defer the proof of \pref{thm:lower-bound-online} to \pref{app:proof-lower-bound-online}. Our lower bound introduces several technical novelties: the family $\cM$ utilizes a \emph{contextual} variant of the combination lock, and the policy class $\Pi$ is constructed via a careful probabilistic argument such that it is hard to explore despite having small \Compname{}.}

\arxiv{
\begin{figure}[!t]
    \centering
\includegraphics[scale=0.25, trim={2cm 1.5cm 5cm 0.5cm}, clip]{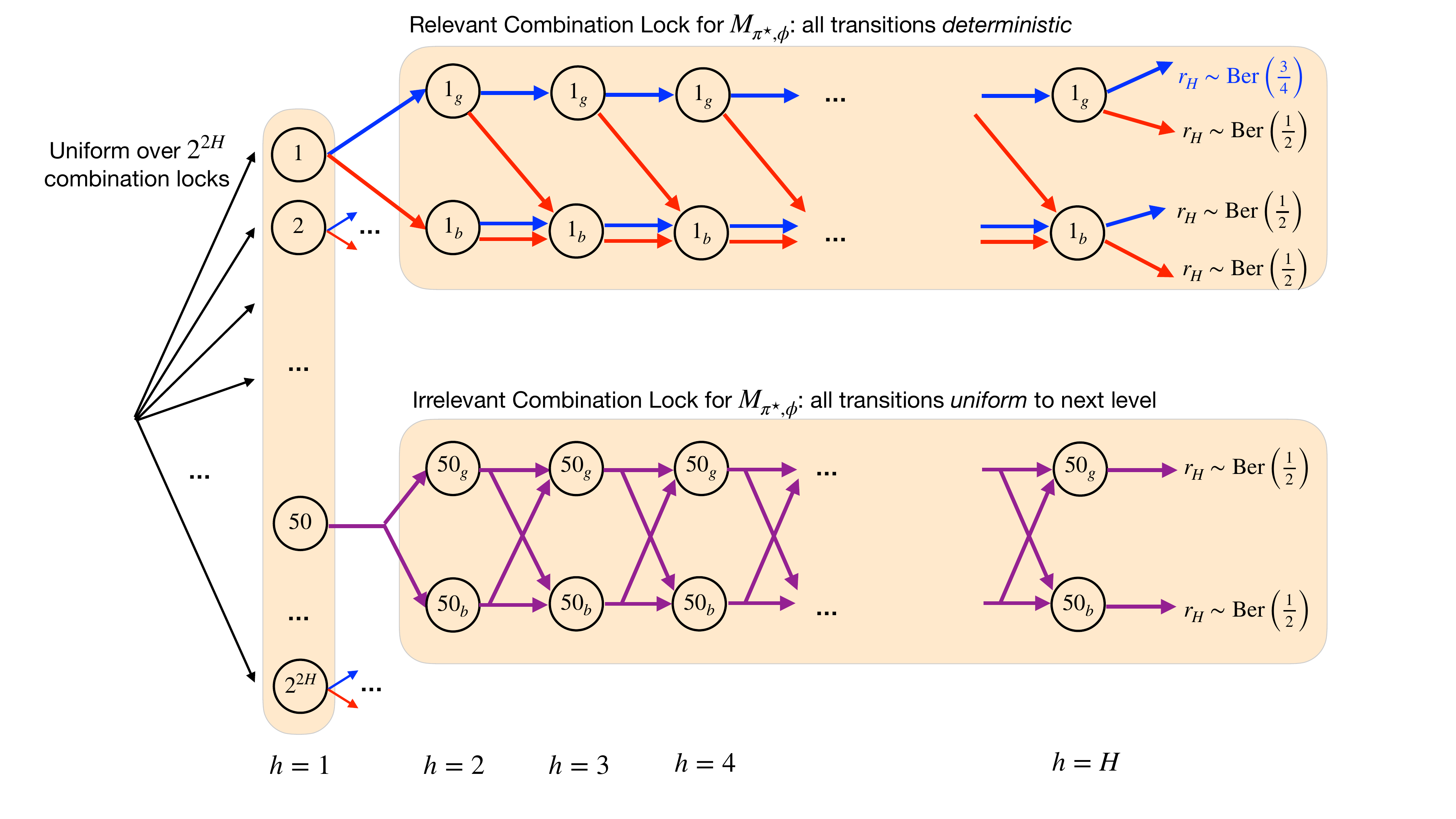}
    \caption{Illustration of the lower bound from \pref{thm:lower-bound-online}. \textcolor{blue}{Blue} arrows represent taking the action $\pistar(s)$, while \textcolor{red}{red} arrows represent taking the action $1- \pistar(s)$. \textcolor{purple}{Purple} arrows denote uniform transition to the states in the next layer, regardless of action. The MDP $M_{\pistar, \phi}$ is a uniform distribution of $2^{2H}$ combination locks of two types. In the \emph{relevant} combination locks (such as Lock 1 in the figure), following $\pistar$ keeps one in the ``good'' chain and gives reward of $\mathrm{Ber}(3/4)$ in the last layer, while deviating from $\pistar$ leads one to the ``bad'' chain and gives reward of $\mathrm{Ber}(1/2)$. In \emph{irrelevant} combination locks (such as Lock 50 in the figure), the next state is uniform regardless of action, and all rewards at the last layer are $\mathrm{Ber}(1/2)$.} 
    \label{fig:lower-bound-idea}
\end{figure}

\paragraph{Proof Sketch for \pref{thm:lower-bound-online}.} We defer the full proof of \pref{thm:lower-bound-online} to \pref{app:proof-lower-bound-online} and sketch the main ideas here.  An illustration of an MDP in the family $\cM$ can be found in \pref{fig:lower-bound-idea}. 

The basic building block for our lower bound is the combination lock, a prototypical construction used in prior works \citep{krishnamurthy2016pac,du2019provably, sekhari2021agnostic}. The MDPs we construct are essentially a uniform distribution over $2^{2H}$  different combination locks. In order to receive positive feedback, the learner must play a sequence of $H$ correct actions in a particular combination lock, figuring which out intuitively requires many revisits to the same lock. However, under online access, it is  unlikely that the learner will get to see the same lock multiple times unless they use an exponential number of samples. Note that under generative access, this is not an issue, since the learner can ``reset'' to any state they like.  

In more detail, each hard MDP $M_{\pistar, \phi} \in \cM$ is parameterized by a policy $\pistar \in \Piell$ (which is optimal for that MDP) and a decoder $\phi: \cS \mapsto \crl{\textsc{Good}, \textsc{Bad}}$. In the MDP $M_{\pistar, \phi}$ there will be a \emph{planted set} of ``relevant'' combination locks on which running $\pistar$ achieves $\ber(3/4)$ reward; on the rest of the combination locks any policy $\pi \in \cA^\cS$ achieves $\ber(1/2)$ reward. Since the planted set is an $\eps$-fraction of the total, the learner must solve an \(\Omega(\epsilon)\)-fraction of the relevant locks in order to find an $O(\eps)$-optimal policy. However, as we have established, the learner will never see the same combination lock multiple times, so their only hope is to try to identify $\pistar$ through alternative means (e.g.~via elimination). 

In the vanilla combination lock, it becomes easy to learn $\pistar$ via trajectory data, since once the learner observes a jump to the ``bad'' chain, they can immediately eliminate many candidate policies. Our construction utilizes a \emph{contextual} variant of the combination lock which minimizes information leakage about $\pistar$ from transition data. This is formalized by the decoder $\phi$, which randomly assigns states to be in the ``good'' chain and the ``bad'' chain. In this way, the learner, upon a single visit to a certain lock, cannot know when they have jumped to the ``bad'' chain (or even whether they are in a planted combination lock or not!) unless they can identify whether the reward at level $H$ in that combination lock is $\ber(1/2)$ instead of $\ber(3/4)$.

The last key to the puzzle is to prove that there exists such a policy class $\Piell$ which satisfies these properties yet still has bounded \Compname{}. We reduce this problem to showing the existence of certain ``block-free'' binary matrices, whose existence is shown using the probabilistic method.} Explicit construction of such ``block-free" binary matrices is left open as a direction for future research.
\section{Statistically Efficient  Agnostic Learning in Online RL} \label{sec:upper-bound} 

The lower bound in \pref{thm:lower-bound-online} suggests that further structural assumptions on \(\Pi\)  are needed for statistically efficient agnostic RL under the online interaction model. Essentially, the lower bound example provided in \pref{thm:lower-bound-online} is hard to agnostically learn because any two distinct policies \(\pi, \pi' \in \Pi\) can differ substantially on a large subset of states (of size at least \(\epsilon \cdot 2^{2H}\)). Thus, we cannot hope to learn ``in parallel'' via a low variance IS strategy that utilizes extrapolation to evaluate all policies $\pi \in \Pi$, as we did for singletons. 

In the sequel, we consider the following \coreset{} property to rule out such problematic scenarios, and show how bounded \Compname{} along with the \coreset{} property enable sample-efficient agnostic RL in the online interaction model. The \coreset{} property only depends on the state space, action space, and policy class, and is independent of the transition dynamics and rewards of the underlying MDP. We first define a petal, a key ingredient of a sunflower.  

\begin{definition}[Petal] 
\label{def:petal_policy}
For a policy set \(\bar{\Pi}\), and states \(\bar{\cS} \subseteq \cS\), a policy \(\pi\) is said to be a \(\bar{\cS}\)-\textit{petal} on \(\bar{\Pi}\) if for all \(h \leq h' \leq H\), and partial trajectories $\tau = (s_h, a_h, \cdots, s_{h'}, a_{h'})$ that are consistent with $\pi$: either \(\tau\) is also consistent with some \(\pi' \in \bar{\Pi}\), or there exists \(i \in (h, h']\) s.t.~$s_i\in \bar{\mS}$. 
\end{definition}  

Informally, \(\pi\) is a \(\bar{\cS}\)-petal on \(\bar{\Pi}\) if any trajectory that can be obtained using \(\pi\) can either also be obtained using a policy in \(\bar{\Pi}\)  or must pass through \(\bar{\cS}\). Thus, any policy that is a \(\bar{\cS}\)-petal on \(\bar{\Pi}\) can only differentiate from \(\bar{\Pi}\) in a structured way. A policy class is said to be a sunflower if it is a union of petals as defined below: 

\begin{definition}[Sunflower]  
\label{def:core_policy} 
    A policy class $\Pi$ is said to be a \((K, D)\)-\coreset{} if there exists a set \(\Picore\) of Markovian policies with $|\Picore|\le K$ such that for every policy $\pi\in \Pi$ there exists a set $\mS_\pi \subseteq \mS$, of size at most \(D\), so that \(\pi\) is an \(S_\pi\)-petal on \(\Picore\). 
\end{definition}



Our next theorem provides a sample complexity bound for Agnostic PAC RL for~policy classes that have \((K, D)\)-sunflower structure. This bound is obtained via a new exploration algorithm called $\mathsf{POPLER}$ that takes as input the set \(\Picore\) and corresponding petals \(\crl{\cS_\pi}_{\pi \in \Pi}\) and leverages importance sampling as well as reachable state identification techniques to simultaneously estimate the value of every policy in \(\Pi\). Algorithm details are deferred to \pref{sec:algorithm_description}.

\begin{theorem} \label{thm:sunflower} 
Let \(\epsilon, \delta > 0\). Suppose the policy class \(\Pi\) satisfies \pref{def:dimension} with \Compname{} \(\dimRL(\Pi)\), and is a \((K, D)\)-\sunflower. Then, for any MDP \(M\), with probability at least \(1 - \delta\), $\mathsf{POPLER}$ (\pref{alg:main}) succeeds in returning a policy \(\wh \pi\) that satisfies  \(V^{\wh \pi} \geq \max_{\pi \in \Pi} V^\pi - \epsilon\), after collecting 
\neurIPS{\begin{align*}
\widetilde{\cO}\prn*{\prn*{\tfrac{1}{\epsilon^2} + \tfrac{HD^6 \dimRL(\Pi)}{\epsilon^4}} \cdot K^2 \log\tfrac{|\Pi|}{\delta}} \quad \text{online trajectories in \(M\).} 
\end{align*}}
\arxiv{\begin{align*}
\widetilde{\cO}\prn*{\prn*{\frac{1}{\epsilon^2} + \frac{HD^6 \dimRL(\Pi)}{\epsilon^4}} \cdot K^2 \log\frac{|\Pi|}{\delta}} \quad \text{online trajectories in \(M\).} 
\end{align*}}
\end{theorem}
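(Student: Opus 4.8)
### Proof Proposal for Theorem~\ref{thm:sunflower}

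\textbf{High-level plan.} The strategy is to reduce the online RL problem to a collection of importance-sampling-based policy evaluations, one ``bundle'' per core policy $\pi' \in \Picore$. For each $\pi' \in \Picore$, I would run $\pi'$ many times and use the resulting trajectories both to (i) identify which states in $\bigcup_{\pi} \cS_\pi$ are actually reachable with non-negligible probability when following $\pi'$ (the reachable-state identification step), and (ii) set up a low-variance importance sampling estimator for $V^{\pi}$ for every $\pi$ whose petal set $\cS_\pi$ is ``attached'' to $\pi'$. The key structural fact from \pref{def:core_policy} is that any trajectory generated by $\pi$ either agrees with some core policy $\pi'$ along its whole length, or it passes through $\cS_\pi$; so to evaluate $\pi$ it suffices to (a) evaluate the part of $\pi$'s trajectory distribution that coincides with core policies, which is cheap because $|\Picore| \le K$, and (b) evaluate the ``deviations'' that route through $\cS_\pi$, of which there are at most $D$ states, each reachable from a core trajectory. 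The spanning capacity $\dimRL(\Pi)$ (equivalently worst-case coverability, by \pref{lem:coverability}) enters in controlling the variance / number of samples needed to simultaneously cover all the relevant reachable states: at any layer $h$, the total reachability mass of $\Pi$ is at most $\dimRL(\Pi)$, so a mixture policy over $\Picore$ together with redirections through the petal states achieves bounded importance weights.

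\textbf{Key steps, in order.} First, I would formalize the \emph{policy-specific Markov reward process} $\MRP{\pi}$ mentioned in the introduction: for a target policy $\pi$ with petal set $\cS_\pi$, collapse the MDP into an MRP whose ``special'' states are $\cS_\pi \cup \{\text{states where $\pi$ first deviates from every core policy}\}$, with $s_\top$ the root and $s_\bot$ an absorbing sink, and where edges carry the appropriate transition/reward weights accumulated along segments consistent with $\pi$ or with a core policy. The value of $\pi$ is then a sum over paths in this MRP, and because $\cS_\pi$ has size $\le D$ and $|\Picore|\le K$, this MRP has a bounded number of special states. Second, I would run the data collection subroutine: for each $\pi' \in \Picore$ collect $\wt{\cO}(K^2 D^6 H \dimRL(\Pi)/\epsilon^4 \cdot \log(|\Pi|/\delta))$ trajectories by executing $\pi'$, and additionally, for each petal state $s$ identified as reachable, collect trajectories that follow $\pi'$ up to $s$ and then follow the candidate continuation — this is where local/generative-style reachability estimation (à la reward-free exploration) is used to estimate $d^{\pi'}_h(s)$ and to decide which $s$ are ``$\beta$-reachable'' for a threshold $\beta \approx \epsilon/(DH)$ or so. Third, with these estimates in hand, build for every $\pi \in \Pi$ an importance sampling estimator $\wh{V}^\pi$ by stitching together: the contribution of trajectories consistent with core policies (reweighted by $1/K$ or by the empirical visitation), plus the contribution through each $s \in \cS_\pi$ (reweighted by $1/\wh{d}(s)$ times the petal-continuation weight). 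Fourth, prove a uniform concentration bound: $|\wh{V}^\pi - V^\pi| \le \epsilon$ for all $\pi \in \Pi$ simultaneously, using a Bernstein/variance argument where the variance of each estimator is bounded by $\poly(K, D, H, \dimRL(\Pi))$ — crucially, states with reachability below $\beta$ contribute at most $\epsilon$ total bias (by a union over $\le D$ petal states and the $\dimRL(\Pi)$ coverability bound), while $\beta$-reachable states have importance weights bounded by $1/\beta$. Finally, return $\wh\pi = \argmax_{\pi \in \Pi} \wh{V}^\pi$ and conclude via the standard $\epsilon$-suboptimality argument.

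\textbf{Main obstacle.} The hard part will be the variance control for the stitched importance sampling estimator, and in particular handling the \emph{interaction} between the petal deviations and the core trajectories. Naively, a trajectory of $\pi$ could leave a core policy's path, pass through a petal state $s \in \cS_\pi$, and then re-align or deviate again; one must argue — using the petal definition recursively — that the ``deviation budget'' is controlled so that the importance weight never blows up beyond $\poly(1/\beta, K)$, and that the relevant reachable petal states can all be identified from $\poly$ many trajectories per core policy. I expect the clean way to do this is to show that within $\MRP{\pi}$, any root-to-sink path decomposes into at most $O(1)$ maximal core-consistent segments interleaved with at most $|\cS_\pi| \le D$ petal-state transitions, so the estimator is a product of at most $D{+}1$ reweighted terms each with bounded second moment, giving variance $\lesssim K^2 (1/\beta)^{?} \dimRL(\Pi)$; tracking the exact powers of $D$ (the theorem claims $D^6$) will require care in how reachability thresholds compound across the $D$ petal states and in the union bound over $|\Pi|$. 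A secondary subtlety is that $\Picore$ consists of Markovian policies but the petal continuations make the effective data-collection policies non-Markovian (``follow $\pi'$ then switch''), which is allowed in the online model but must be bookkept correctly when defining the occupancy measures being estimated.
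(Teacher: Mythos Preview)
Your high-level intuition (core policies cover core-consistent segments; petal states bound the deviations; coverability bounds the total work) is right, but the proposal has two concrete gaps relative to what actually makes the argument go through.

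\textbf{Reachability is with respect to the wrong policy, and the discovery must be iterative.} You propose to estimate $d^{\pi'}_h(s)$ for core policies $\pi'\in\Picore$ and declare a petal state $s$ ``reachable'' when some core policy hits it. This fails: a state $s\in\cS_\pi$ can have $d^{\pi'}(s)$ negligible for \emph{every} $\pi'\in\Picore$ while $d^{\pi}(s)$ is large, because the $\pi$-trajectory to $s$ may pass through earlier petal states $s_1,\dots,s_{k-1}\in\cS_\pi$, with each segment $s_i\to s_{i+1}$ consistent with a \emph{different} core policy. No single core policy stitches these together. The paper therefore (i) estimates reachability of $s$ under the \emph{target} policy $\pi$, via dynamic programming on the partially-built policy-specific MRP $\widehat{\MRPsign}^\pi_{\reachablestates}$; and (ii) once $\widehat d^\pi(\bar s)\ge \epsilon/(6D)$ is certified, uses $\pi$ itself (not a core policy) as the reacher policy in $\datacollector$. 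Crucially this is sequential: you cannot estimate the edge $P^\pi_{s\to s'}$ until you have data starting at $s$, which requires already knowing a policy that reaches $s$; the while-loop grows $\reachablestates$ one state at a time, and the coverability bound $\sum_s \sup_\pi d^\pi(s)\le H\,\dimRL(\Pi)$ caps the number of iterations at $O(HD\,\dimRL(\Pi)/\epsilon)$. Your one-shot ``run each $\pi'$ then collect from reachable petals'' scheme does not discover the deeper petal states.

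\textbf{The stitched IS estimator has the wrong variance structure.} You correctly flag the worry: a direct estimator that is a product of $D{+}1$ reweighted factors (each $\lesssim K$ or $\lesssim 1/\beta$) has variance exponential in $D$, and you leave the exponent as ``$?$''. The paper sidesteps this entirely. It never estimates $V^\pi$ as a single IS statistic; instead it estimates each MRP \emph{edge} $\widehat P^\pi_{s\to s'},\widehat r^\pi_{s\to s'}$ separately by IS from the dataset $\cD_s$ collected at $s$ (\pref{eq:empirical_MRP_dynamics}--\pref{eq:empirical_MRP_rewards}). Because the segment $s\to s'$ avoids $\cS_\pi$, the petal definition guarantees it is consistent with some core policy, so the IS weight is bounded by $K$ (not $1/\beta$). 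Then $\widehat V^\pi$ is obtained by dynamic programming on $\widehat{\MRPsign}^\pi_{\reachablestates}$, and the error is controlled by the simulation lemma (\pref{lem:sim}), which gives an \emph{additive} sum over the $\le D{+}2$ MRP states of per-edge errors. This is what produces polynomial-in-$D$ (and no $1/\beta$ in the variance) rather than exponential dependence; the remaining bias from petal states with $\bar d^\pi(s;\neg\SRem{\pi})\le \epsilon/(4D)$ is at most $D\cdot \epsilon/(4D)=\epsilon/4$.
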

 The proof of \pref{thm:sunflower}, and the corresponding hyperparameters in $\mathsf{POPLER}$ needed to obtain the above bound, can be found in \pref{app:upper_bound_main}. Before diving into the algorithm and proof details, let us highlight several key aspects of the above sample complexity bound: 
\begin{itemize}[label=$\bullet$]
    \item Note that a class \(\Pi\) may be a \((K, D)\)-\coreset{} for many different choices of \(K\) and \(D\). Barring computational issues, one can enumerate over all choices of $\Picore$ and $\crl{\cS_\pi}_{\pi \in \Pi}$, and check if $\Pi$ is a \((K, D)\)-\coreset{} for that choice of $K = \abs{\Picore}$ and $D = \max_{\pi \in \Pi} \abs{\cS_\pi}$. Since our bound in \pref{thm:sunflower} scales with \(K\) and \(D\), we are free to choose \(K\) and \(D\) to minimize the corresponding sample complexity bound. 
    \item In order to get a polynomial sample complexity in \pref{thm:sunflower}, both \(\dimRL(\Pi)\) and \((K, D)\) are required to be \(\poly(H, \log\abs{\Pi})\). All of the policy classes considered in \pref{sec:C_pi} have the sunflower property, with both \(K, D = \poly(H)\), and thus our sample complexity bound extends for all these classes. See \pref{app:examples-policy-classes} for details. 
    \item Notice that for \pref{thm:sunflower} to hold, we need both bounded \Compname{} as well as the sunflower structure on the policy class with bounded \((K, D)\). Thus, one may wonder if we can obtain a similar polynomial sample complexity guarantee in online RL  under weaker assumptions. In \pref{thm:lower-bound-online}, we already showed that bounded \(\dimRL(\Pi)\) alone is not sufficient to obtain polynomial sample complexity in online RL. Likewise, as we show in \pref{app:upper_bound_main}, sunflower property with bounded \((K, D)\) alone is also not sufficient for polynomial sample complexity, and hence both assumptions cannot be individually removed. However, it is an interesting question if there is some other structural assumption that combines both spanning capacity and the sunflower property, and is both sufficient and necessary for agnostic PAC learning in online RL. See \pref{sec:conclusion} for further discussions on this. 
\end{itemize} 

\paragraph{Why Does the Sunflower Property Enable Sample-Efficient Learning?}
Intuitively, the \coreset{} property captures the intuition of simultaneous estimation of all policies \(\pi \in \Pi\) via Importance Sampling (IS), and allows control of both bias and variance. Let \(\pi\) be a \(\cS_\pi\)-petal on \(\Picore\). Any trajectory $\tau \cons \pi$ that avoids \(\cS_\pi\) must be consistent with some policy in \(\Picore\), and will thus be covered by the data collected using \(\pi' \sim \unif(\Picore)\). Thus, using IS with variance scaling with \(K\), one can create a biased estimator for \(V^\pi\), where the bias is \emph{only due} to trajectories that pass through \(\cS_\pi\). There are two cases: if every state in \(\cS_\pi\) has small reachability under \(\pi\) i.e.~$d^\pi(s) \ll \eps$ for every \(s \in \cS_\pi\), then the IS estimate will have a low bias (linear in $\abs{\cS_\pi}$) and thus we can compute \(V^\pi\) up to error at most \(\epsilon \abs{\cS_\pi}\). On the other hand, if $d^\pi(s)$ is large for some $s \in \cS_\pi$, it is possible to explicitly control the bias that arises from trajectories passing through them since there are at most $D$ of them.

\subsection{Algorithm and Proof Ideas}  \label{sec:algorithm_description}

$\mathsf{POPLER}$, described in \pref{alg:main}, takes as input a policy class $\Pi$, as well as sets \(\Picore\) and \(\crl{\cS_{\pi}}_{\pi \in \Pi}\), which can be computed beforehand by enumeration. $\mathsf{POPLER}$ has two phases: a \emph{state identification phase}, where it finds ``petal'' states $s \in \bigcup_{\pi \in \Pi} \cS_\pi$ that are reachable with sufficiently large probability; and an \emph{evaluation phase} where it computes estimates $\wh{V}^\pi$ for every $\pi \in \Pi$. It uses three subroutines $\datacollector$, $\estreach$, and $\evaluate$, whose pseudocodes are stated in \pref{app:algorithm_details}.

The structure of the algorithm is reminiscent of reward-free exploration algorithms in tabular RL e.g.~\citet{jin2020reward}, where we first identify (petal) states that are reachable with probability at least \(\Omega(\epsilon/D)\) and build a policy cover for these states, and then use dynamic programming to estimate the values. However, contrary to the classical tabular RL setting, because the state space can be large, our setting is much more challenging and necessitates technical innovations. In particular, we can no longer enumerate over all petal states and check if they are sufficiently reachable by some policy $\pi \in \Pi$ (since the total number of petal states \(\sum_{\pi \in \Pi} \abs{\cS_\pi}\) could scale linearly in \(\abs{\Pi}\), a factor that we do not want to appear in our sample complexity). Instead, the key observation that we rely on is that if \Compname{} is bounded, then by the equivalence of \Compname{} and worst-case coverability (\pref{lem:coverability}) and due to \pref{eq:cov_defn}, the number of highly reachable (and thus relevant) petal states is also bounded. Thus, we only need to build a policy cover to reach these relevant petal states. Our algorithm does this in a sample-efficient \emph{sequential} manner. For both state identification as well as evaluation, we interleave importance sampling estimates with the construction of a \emph{policy-specific} Markov Reward Processes (MRPs), which are defined for every $\pi \in \Pi$. The challenge is doing all of this ``in parallel'' for every $\pi \in \Pi$ through extensive sample reuse to avoid a blowup of $\abs{\Pi}$ or $S$ in the sample complexity. 

\paragraph{Key Tool: Policy-Specific Markov Reward Process.} We elaborate on the MRP construction, which is the key technical tool used in both the identification and evaluation phases of the algorithm. To build intuition, let us consider a fixed policy $\pi \in \Pi$ with petal states $\cS_\pi$ and define a population version of policy-specific MRP. In particular, let \(\cS_\pi^+ = \cS \cup \crl{s_\bot, s_\top}\), and associated with $\pi$ define $\MRPsign^\pi = \mathrm{MRP}(\cS^+_{\pi}, P^\pi, R^\pi, H, s_\top, s_\bot)$  which essentially compresses the transition and reward information in the original MDP relevant to the policy \(\pi\). 
For any states \(s \in \cS_\pi \cup \crl{s_\top}\) and $s' \in \cS_\pi \cup \crl{s_\bot}$ residing in different layers $h < h'$ in the underlying MDP \footnote{For the simplicity of analysis, we slightly abuse the notation and assume that all trajectories in the underlying MDP start at \(s_\top\) at time step \(0\) and terminate at \(s_\bot\) at time step \(H+1\), and do not observe \(s_\bot\) and \(s_\top\) in between from time steps \(h = 1, \dots, H\). However, recall that \(s_\bot\) and \(s_\top\) are not part of the original layered state space \(\cS\) for the MDP. }, we define: 
\begin{itemize}[label=$\bullet$]
    \item \textbf{Transition $\bm{P_{s\to s'}^\pi}$} as:
    \begin{align*} 
        P_{s\to s'}^\pi \coloneqq \bbP^\pi \brk*{\substack{ \text{$\tau_{h:h'}$ goes from \(s\) to \(s'\)} \\ \text{without passing through any other $s'' \in \cS_\pi$} } \mid \text{$\tau_h = s$}}.
    \end{align*} 

    \item \textbf{Rewards $\bm{R_{s\to s'}^\pi}$} as:
       \begin{align*}
        R_{s\to s'}^\pi \coloneqq \En^\pi \brk*{ R(\tau_{h:h'})\ind{\substack{ \text{$\tau_{h,h'}$ goes from \(s\) to \(s'\)} \\ \text{without passing through any other $s'' \in \cS_\pi$ } }} \mid \text{$\tau_h = s$}}.
    \end{align*}
\end{itemize}
where $\tau_{h:h'}$ denotes a partial trajectory from layer $h$ to $h'$, and $R(\tau_{h:h'})$ denotes the cumulative rewards from layer $h$ to $h'$ along the partial trajectory $\tau_{h:h'}$.  Furthermore, 
\(P_{s_\bot \to s_\bot}^\pi = 1\) and \(R_{s_\bot \to s_\bot}^\pi = 0\).


The key technical benefit of using policy-specific MRPs is that the value $V^\pi$ for the policy \(\pi\) in the original MDP is identical to the value of policy-specific MRP \(\MRPsign^\pi\) (starting from \(s_\bot\)). Thus, if one knew the transitions and rewards in $\MRPsign^\pi$, one could calculate the value of the policy \(\pi\) via dynamic programming on $\MRPsign^\pi$. Of course, we do not know these quantities, so we must estimate them by interacting with the original MDP. A naive approach is to simply run $\pi$ many times to get estimates for each transition and reward---but since we want to estimate $V^\pi$ for every $\pi \in \Pi$ simultaneously, this approach would incur an $\abs{\Pi}$ dependency in the sample complexity. Instead, our algorithm uses importance sampling to estimate transitions and rewards in the corresponding $\MRPsign^\pi$ for many policies simultaneously. 

We next describe the key algorithmic ideas, as well as the empirical estimation of policy-specific MRPs, in the two phases of \pref{alg:main}. 

\begin{algorithm}[!t] 
    \caption{{\bf{P}}olicy {\bf{OP}}timization by {\bf{L}}earning $\boldsymbol\eps$-{\bf{R}}eachable States ($\mathsf{POPLER}$)}  \label{alg:main} 
    \begin{algorithmic}[1]  
            \Require Policy class \(\Pi\), Sets \(\Picore\) and \(\crl{\cS_\pi}_{\pi \in \Pi}\),  Parameters \(K, D, n_1, n_2, \epsilon, \delta\). 
            \State Define an additional start state \(s_\top\) (at \(h = 0\)) and end state \(s_\bot\) (at \(h = H+1\)). 
           \State Initialize $\reachablestates = \crl{s_\top}$, $\cT \leftarrow \crl{(s_\top, \mathrm{Null})}$, and for every \(\pi \in \Pi\), define \(\cS_\pi^+ \ldef{} \cS_\pi \cup \crl{s_\top, s_\bot}\). 
           \label{line:initialization}
        \State $\mD_\top \leftarrow \datacollector(s_\top, \mathrm{Null}, \Picore, n_1)$
\\
\algcommentbig{Identification of Petal States that are Reachable with \(\Omega(\epsilon/D)\) Probability}  
             \While{$\mathrm{Terminate}=\mathrm{False}$}  \label{line:while_loop}     \hfill 
             \State Set \(\mathrm{Terminate}=\mathrm{True}\).
     \For{$\pi\in \Pi$} \label{line:while_loop_start}            
                \State Compute the set of already explored reachable states \( \SHPalg{\pi} = \cS^+_\pi \cap \reachablestates \), and the remaining states \(\SRemalg{\pi} = \cS_{\pi} \setminus (\SHPalg{\pi}\cup \crl{s_\bot})\). \label{line:S_sets_comp} 
                 \State Estimate the policy-specific MRP $\widehat \MRPsign^\pi_{\reachablestates}$ according to \eqref{eq:emp-transitions} and \eqref{eq:emp-rewards}. \label{line:est-policy-specific-mrp}
            \For{$\bar{s}\in\SRemalg{\pi}$} \label{line:state_identification} 
                \State Estimate probability of reaching \(\bar{s}\) under \(\pi\) as $\widehat{d}^\pi (\bar{s})\leftarrow \estreach(\mS_\pi^+, \widehat \MRPsign^\pi_{\reachablestates}, \bar{s})$. \label{line:DP_solver_search}  \label{line:DP}
                \If{$\widehat{d}^\pi(\bar{s})\ge \nicefrac{\epsilon}{6D}$}   \label{line:test_and_add}
                    \State Update $\reachablestates \gets \reachablestates \cup \crl{\bar{s}}$, $\cT\leftarrow \cT \cup \crl{(\bar{s}, \pi)}$ and set  \(\mathrm{Terminate}=\mathrm{False}\). 
                    \label{line:add_s} 
                    \State Collect dataset $\mD_{\bar{s}}\leftarrow \datacollector(\bar{s}, \pi, \Picore, n_2)$. \label{line:fresh_dataset}
                \EndIf 
            \EndFor 
        \EndFor \label{line:while_loop_end}
  \EndWhile
    \\
        \algcommentbig{Policy Evaluation and Optimization} 
        \For{$\pi\in \Pi$} 
            \State $\hV^\pi\leftarrow \evaluate(\Picore, \reachablestates, \{\mD_s\}_{s \in \reachablestates}, \pi)$. \label{line:evaluate}
        \EndFor
        \State \textbf{Return} $\widehat{\pi} \in \argmax_\pi \hV^\pi$. \label{line:return}
    \end{algorithmic}
\end{algorithm} 

\subsubsection*{State Identification Phase} 
The goal of the state identification phase is to discover all such petal states that are reachable with probability  $\Omega(\eps/D)$. The algorithm proceeds in a loop and sequentially grows the set $\cT$, which contains tuples of the form \((s, \pi_s)\), where $s \in \bigcup_{\pi \in \Pi} \cS_\pi$ is a sufficiently reachable petal state (for some policy) and \(\pi_s\) denotes a policy that reaches \(s\) with probability \(\Omega(\eps/D)\). We also denote $\reachablestates \coloneqq \crl*{s: (s, \pi_s) \in \cT}$ to denote the set of reachable states in \(\cT\). Initially, $\cT$ only contains a dummy start state $s_\top$ and a null policy. We will collect data using the $\datacollector$ subroutine that: for a given $(s, \pi_s) \in \cT$, first run $\pi_s$ to reach state $s$, and if we succeed in reaching $s$, restart exploration by sampling a policy from $\unif(\Pi_\mathrm{exp})$. Note that $\datacollector$ will be sample-efficient for any \((s, \pi_s)\) since \(\Omega(\epsilon/D)\) fraction of the trajectories obtained via \(\pi_s\) are guaranteed to reach $s$ (by definition of \(\pi_s\) and construction of set $\cT$). Initially, we run $\datacollector$ using $\unif(\Pi_\mathrm{exp})$ from the start, where we slightly abuse the notation and assume that all trajectories in the MDP start at the dummy state $s_\top$ at time step \(h = 0\). 

In every loop, the algorithm attempts to find a new petal state $\bar{s}$ for some $\pi \in \Pi$ that is guaranteed to be $\Omega(\eps/D)$-reachable by $\pi$. This is accomplished by constructing a (estimated and partial) version of the policy-specific MRP using the datasets collected up until that loop (\pref{line:est-policy-specific-mrp}). In particular given a policy \(\pi\) and a set \( \SHPalg{\pi} = \cS^+_\pi \cap \reachablestates\),  we construct $\widehat \MRPsign^\pi_{\reachablestates} = \mathrm{MRP}(\cS^+_{\pi}, \wh P^\pi, \wh R^\pi, H, s_\top, s_\bot)$ which essentially compresses our empirical knowledge of the original MDP relevant to the policy \(\pi\). In particular, for any states \(s \in \cS_\pi \cup \crl{s_\top}\) and $s' \in \cS_\pi \cup \crl{s_\bot}$ residing in different layers $h < h'$ in the underlying MDP, we define: 
\begin{itemize}[label=$\bullet$]
    \item \textbf{Transition $\bm{\wh P_{s \to s'}^\pi}$} as:
    \begin{align*} 
        \widehat{P}_{s\to s'}^\pi = 
	    \frac{1}{|\mD_s|}\sum_{\tau\in \mD_s}\frac{\ind{\pi\cons\tau_{h:h'}}}{\tfrac{1}{|\Picore|} \sum_{\pi'\in \Picore}\ind{\pi' \cons \tau_{h:h'}}}\ind{ \substack{ \text{$\tau_{h:h'}$ goes from \(s\) to \(s'\)} \\ \text{without passing through any other \(s'' \in \cS_\pi\)}}}.
	\numberthis\label{eq:emp-transitions}
    \end{align*}
    \item \textbf{Transition $\bm{\wh R_{s \to s'}^\pi}$} as: 
    \begin{align*} 
        \widehat{R}_{s\to s'}^\pi = \frac{1}{|\mD_s|}\sum_{\tau\in \mD_s}\frac{R(\tau_{h:h'}) \cdot \ind{\pi\cons\tau_{h:h'}}}{\tfrac{1}{|\Picore|}\sum_{\pi'\in \Picore}\ind{\pi' \cons \tau_{h:h'}}}\ind{ \substack{ \text{$\tau_{h:h'}$ goes from \(s\) to \(s'\)} \\ \text{without passing through any other \(s'' \in \cS_\pi\)}}}. \numberthis\label{eq:emp-rewards}
    \end{align*}   
\end{itemize}

Clearly, the above definition implies that $\widehat{P}_{s\to s'}^\pi = 0$ and  $\widehat{R}_{s\to s'}^\pi = 0$ for any $s \notin \SHPalg{\pi}$ since \(\cD_s\) would be empty corresponding to these unexplored states. Furthermore, 
\(P_{s_\bot \to s_\bot}^\pi = 1\) and \(R_{s_\bot \to s_\bot}^\pi = 0\). 


Note that since the set $\reachablestates$ is changing in each iteration of the loop as the algorithm collects more data, the policy-specific MRP $\widehat \MRPsign^\pi_{\reachablestates}$ also changes in every iteration of the loop --- in particular, more and more transitions/rewards are assigned nonzero values due to new states being added to $\reachablestates$. More details on policy-specific MRPs is given in \pref{app:algorithm_details}. 

The key advantage of constructing the empirical versions of policy-specific MRPs is that they allow us to explore and find new leaf states in \(\cS_\pi\) which are reachable with probability at least \(\Omega(\nicefrac{\epsilon}{6D})\). In particular, using standard dynamic programming (subroutine $\estreach$), we can check whether a candidate petal $\bar{s}$ is reachable with decent probability by $\pi$ (lines \ref{line:DP}-\ref{line:test_and_add}); If it is, then we add $(\bar{s}, \pi)$ to the set $\cT$ and collect a fresh dataset using $\datacollector$ (lines \ref{line:add_s}-\ref{line:fresh_dataset}). Crucially, the importance sampling technique enables us to be sample efficient, since the same dataset $\cD_s$ can be used to evaluate transitions/rewards in Eqs.~\eqref{eq:emp-transitions} and \eqref{eq:emp-rewards} for multiple $\pi \in \Pi$ for which $s$ is a petal state. Furthermore, the number of such datasets we collect must be bounded---each $(s, \pi_s) \in \cT$ contributes $\Omega(\eps/D)$  to cumulative reachability, but since cumulative reachability is bounded from above by $\dimRL(\Pi)$ (\pref{lem:coverability}), we know that $\abs{\cT} \le \cO(D \cdot \dimRL(\Pi)/\eps)$.


\subsubsection*{Evaluation Phase} 
Next, $\mathsf{POPLER}$ moves to the evaluation phase. Using the collected data, it executes the $\evaluate$ subroutine for every $\pi \in \Pi$ to get estimates $\wh{V}^\pi$ (\pref{line:evaluate}) corresponding to \(V^\pi\). For a given \(\pi \in \Pi\), the $\evaluate$ subroutine also constructs an empirical policy-specific MRP $\widehat \MRPsign^\pi_{\reachablestates}$ for every \(\pi \in \Pi\) and computes the value of \(\pi\) via dynamic programming on $\widehat\MRPsign^\pi_{\reachablestates}$. While the returned estimate $\wh{V}^\pi$ is biased, in the complete proof, we will show that the bias is negligible since it is now only due to the states in the petal $\cS_\pi$ which are \emph{not} $\Omega(\eps/D)$-reachable. Thus, we can guarantee that $\wh{V}^\pi$ closely estimates \(V^\pi\) for every $\pi \in \Pi$, and therefore $\mathsf{POPLER}$ returns a near-optimal policy.

\section{Conclusion and Discussion}\label{sec:conclusion}  
In this paper, we investigated when agnostic RL is statistically tractable in large state and action spaces, and introduced \Compname{} as a natural measure of complexity that only depends on the policy class, and is independent of the MDP rewards and transitions. We first showed that bounded \Compname{} is both necessary and sufficient for agnostic PAC RL under the generative access model. However, we also provided a negative result showing that bounded \Compname{} does not suffice for online RL, thus showing a surprising separation between agnostic RL with a generative model and online interaction. We then provided an additional structural assumption, called the sunflower property, that allows for statistically efficient learning in online RL. Our sample complexity bound for online RL is obtained using a novel exploration algorithm called $\mathsf{POPLER}$ that relies on certain policy-specific Markov Reward Processes to guide exploration, and takes inspiration from the classical importance sampling method and reward-free exploration algorithms for Tabular MDPs. Our results pave the way for several future lines of inquiry, discussed below. 
\begin{enumerate}[label=\(\bullet\)] 
\item \textit{Tight Characterization of Agnostic Online RL:} Perhaps the most interesting direction is exploring complexity measures to tightly characterize the minimax sample complexity for online RL (c.f.~the fundamental theorem of statistical learning). On the upper bound side, \pref{thm:sunflower} shows that bounded \Compname{} along with an additional sunflower property is sufficient for online RL. On the lower bound side, while we know that bounded \Compname{} is necessary (due to  \pref{thm:generative_lower_bound}), we do not know if the sunflower property is also necessary or whether it can be relaxed. Resolving this is an exciting next step. 
\item \textit{Instance-Dependent Complexity Measures:} Our primary focus in this paper was to understand the minimax sample complexity for agnostic PAC RL, i.e.~the worst-case bound on the number of samples needed to learn any stochastic MDP. However, real-life MDPs are not necessarily worst-case; thus, \gledit{proving instance-dependent bounds which depend on structural properties of the underlying (and unknown) MDP instance and the policy class is a fascinating future research direction.}
The right instance-dependent sample complexity is not known even for the generative model. Since \Compname{} characterizes the minimax complexity under the generative model (Theorems \ref{thm:generative_upper_bound} and \ref{thm:generative_lower_bound}), and \Compname{} is the worst-case coverability coefficient over the set of all stochastic MDPs (\pref{lem:coverability}), one might conjecture that coverability coefficient (of the underlying MDP and policy class) characterizes the instance-dependent complexity under the generative model. We leave this as open question for future research. 

\item \textit{Agnostic RL with Stronger Feedback:}  In order to overcome the limitation of reward based RL, prior works for large state and action spaces have explored other forms of feedback, including learning via noisy comparison signals \citep{pacchiano2021dueling, sekhari2023contextual}, active queries to an expert \citep{ross2014reinforcement, sekhari2023selective}, noisy \(Q^\star\) feedback \citep{golowich2022can}, etc. Understanding when stronger feedback models can improve the statistical complexity of policy-based RL is largely open. In  \pref{app:expert_feedback}, we ask whether observing the optimal value function \(\crl{Q^\star(s, a)}_{a \in \cA}\) on visited states can help. Surprisingly, the answer depends on realizability of an optimal policy \(\pistar \in \Pi\). In particular,  
\begin{enumerate} 
\item \textit{Realizable Setting:} When \(\pistar \in \Pi\), then \(Q^\star\) feedback can be utilized to achieve an \(O\prn*{\poly(\log\abs{\Pi}, H, \nicefrac{1}{\epsilon})}\) sample complexity bound. Note that this strictly improves the sample complexity bounds in this paper for reward-based RL that have additional dependence on complexity measures like \Compname{}. 
\item \textit{Non-Realizable Setting:} When \(\pistar \notin \Pi\), as we show in \pref{app:expert_feedback}, one can not hope to learn with less than \(\Omega\prn*{\dimRL(\Pi)} \) samples in the worst case, even if the learner observes \(\crl{Q^\star(s, a)}_{a \in \cA}\) on the visited states. Furthermore, this lower bound holds even if \(\Pi\) contains a policy that obtains the same value as \(\pistar\) (but may not be optimal on all states). 
\end{enumerate}
Understanding the role of \(\pistar\)-realizability and exploring the benefits of other feedback models in agnostic RL are interesting future research directions. 

%
%
%

\item \textit{Other Directions:} Other future research directions include sharpening the sample complexity bound in \pref{thm:sunflower}, extending \(\mathrm{POPLER}\) for regret minimization, and developing computationally efficient algorithms. On the computational side, $\mathsf{POPLER}$ runs in time that scales polynomially with $\abs{\Pi}$ as well as $S,A,H$, which can be prohibitive for large-scale RL problems, and thus exploring end-to-end computationally efficient or oracle efficient algorithms would be interesting. 
\end{enumerate} 


\arxiv{
\subsection*{Acknowledgements}
We thank Pritish Kamath, Jason D.~Lee, Wen Sun, and Cong Ma for helpful discussions. GL and NS are partially supported by National Science Foundation. Part of this work was completed while GL was visiting Princeton University. AR acknowledges support from  the ONR through award N00014-20-1-2336, 
ARO through award W911NF-21-1-0328, and from the DOE through award DE-SC0022199.
\ascomment{Nati, please add acknowledgements / grant support.} 
}

\newpage 
\neurIPS{\bibliographystyle{plainnat}} 
\bibliography{paper.bbl} 

\clearpage 

\appendix

\renewcommand{\contentsname}{Contents of Appendix}
\tableofcontents 
\addtocontents{toc}{\protect\setcounter{tocdepth}{3}} 

\clearpage 

\section{Detailed Comparison to Related Works}\label{apdx:related-works}
Reinforcement Learning (RL) has seen substantial progress over the past few years, with several different directions of work being pursued for efficiently solving RL problems that occur in practice. The classical approach to solving an RL problem is to model it as a tabular MDP. A long line of work \citep{sutton2018reinforcement, agarwal2019reinforcement, kearns2002near, brafman2002r, auer2008near, azar2017minimax, gheshlaghi2013minimax, jin2018q} has studied provably sample-efficient learning algorithms that can find the optimal policy in tabular RL. Unfortunately, the sample complexity of such algorithms unavoidably scales with the size of the state/action spaces, so they fail to be efficient in practical RL problems with large state/action spaces. In order to develop algorithms for the more practical large state/action RL settings, various assumptions have been considered in the prior works. In the following, we provide a detailed comparison of our setup and assumptions with the existing literature.

%


\paragraph{RL with Function Approximation.} A popular paradigm for  developing algorithms for MDPs with large state/action spaces is to use function approximation to either model the MDP dynamics or optimal value functions. Over the last decade, there has been a long line of work \citep{jiang2017contextual, dann2018oracle, sun2019model, du2019provably, wang2020reinforcement, du2021bilinear, foster2021statistical, jin2021bellman, zhong2022gec, foster2023tight} in understanding structural conditions on the function class and the underlying MDP that enable statistically efficient RL. However, all of these works rely crucially on the realizability assumption, namely that the true model / value function belong to the chosen class. 
Unfortunately, such an assumption is too strong to hold in practice.  Furthermore, the prior works using function approximation make additional assumptions like Bellman Completeness that are difficult to verify for the underlying task. 

In our work, we study the problem of agnostic RL to sidestep these challenges. In particular, instead of modeling the value/dynamics, the learner now models ``good policies" for the underlying task, and the learning objective is to find a policy that can perform as well as the best in the chosen policy class. We note that while a realizable value class/dynamics class \(\cF\) can be converted into a realizable policy class \(\Pi_{\cF}\) by choosing the greedy policies for each value function/dynamics, the converse is not true. Thus, our agnostic RL objective relies on a strictly weaker modeling assumption. 

\paragraph{Connections to Decision-Estimation Coefficient (DEC).} The seminal work of  \citet{foster2021statistical} provides a unified complexity measure called Decision-Estimation Coefficient (DEC) that characterizes the complexity of model-based RL. Given the generality of the E2D algorithm of \citet{foster2021statistical}, one may be wondering if our results can be recovered using their framework via a model-based approach. In particular, can we recover the sample complexity bound in Theorems \ref{thm:generative_upper_bound} or \ref{thm:sunflower} by considering the model class \(\cMdet\) or $\cMsto$ along with the decision set $\Pi$. To the best of our knowledge, the framework of \citet{foster2021statistical} do not directly recover our results, however, the complexity measures are closely related. Note that one can upper-bound the DEC by the coverability coefficient \cite{xie2022role}; furthermore we show that $\dimRL(\Pi)$ is worst-case coverability (\pref{lem:coverability}) so it follows that DEC is upper-bounded by $\dimRL(\Pi)$. However, the algorithm in \cite{foster2021statistical} achieves regret bounds which scale with $\mathrm{DEC} \cdot \log \abs{\cMsto}$, which can be vacuous in the large state-space setting since $\log \abs{\cMsto} \propto \abs{\cS}$; In contrast, our upper bounds have no explicit dependence on $\abs{\cS}$. 

\paragraph{RL with Rich Observations.} Various settings have been studied where the dynamics are determined by a simple latent state space, but instead of observing the latent states directly, the learner receives rich observations corresponding to the underlying latent states. These include the Block MDP \citep{krishnamurthy2016pac,du2019provably, misra2020kinematic, mhammedi2023representation}, Low-Rank MDPs \citep{uehara2021representation, huang2023reinforcement}, Exogenous MDPs \citep{efroni2021provably, xie2022role, efroni2022sample}, etc. However, all of these prior works assume that the learner is given a realizable decoder class (consisting of functions that map observations to latent states) that contains the true decoder for the underlying MDP. Additionally, they require strong assumptions on the underlying latent state space dynamics, e.g.~it is tabular or low-rank, in order to make learning tractable. Thus, their guarantees are not agnostic. In fact, given a realizable decoder class and additional structure on the latent state dynamics, one can construct a policy class that contains the optimal policy for the MDP, but the converse is not true. Thus, our agnostic RL setting is strictly more general. 

\paragraph{Relation to Exponential Lower Bounds for RL with Function Approximation.}
Recently, many statistical lower bounds have been developed in RL with function approximation under only realizability. A line of work including \citep{ruosong2020statisticallimits, zanette2021exponential, weisz2021exponential, foster2021offline} showed that the sample complexity scales exponentially in the horizon \(H\) for learning the optimal policy for RL problems where only the optimal value function \(Q^\star\) is linear w.r.t.~the given features. Similarly, \citet{du2019good} showed that one may need exponentially in \(H\) even if the optimal policy is linear w.r.t.~the true features. These lower bounds can be extended to our agnostic RL setting, giving similar exponential in \(H\) lower bounds for agnostic RL, thus supplementing the well-known lower bounds \citep{krishnamurthy2016pac} which show that agnostic RL is intractable without additional structural assumptions on the policy class. However, to recall, the focus of this paper is to propose assumptions, like \pref{def:dimension} or \ref{def:core_policy}, that circumvent these lower bounds and allow for sample efficient agnostic RL. 

\paragraph{Importance Sampling for RL.} 
Various importance sampling based estimators 
\citep{xie2019towards, jiang2016doubly, gottesman2019combining, yin2020asymptotically, thomas2016data, nachum2019dualdice} have been developed in RL literature to provide reliable off-policy evaluation in offline RL. However, these methods also require realizable value function approximation and  rely on additional assumptions on the off-policy/offline data, in particular, that the offline data covers the state/action space that is explored by the comparator policy. We note that this line of work does not directly overlap with our current approach but provides a valuable tool for dealing with off-policy data.

\paragraph{Agnostic RL in Low-Rank MDPs.} \citet{sekhari2021agnostic} explored agnostic PAC RL in low-rank MDPs, and showed that one can perform agnostic learning w.r.t.~any policy class for MDPs that have a small rank. While their guarantees are similar to ours, i.e., they compete with the best policy in the given class and do not assume access to a realizable dynamics / value-function class, the key objectives of the two works are complementary. \citet{sekhari2021agnostic}  explore assumptions on the underlying MDP dynamics which suffice for agnostic learning for any given policy class, whereas we ask what assumptions on the given policy class suffice for agnostic learning for any underlying dynamics. Exploring the benefits of structure in both the policy class and the underlying MDP in agnostic RL is an interesting direction for future research. 

\paragraph{Policy Gradient Methods.}  
A significant body of work in RL, in both theory  \citep{agarwal2021theory, abbasi2019politex, bhandari2019global, liu2020improved, agarwal2020pc, zhan2021policy, xiao2022convergence} and practice \citep{kakade2001natural, kakade2002approximately, levine2013guided, schulman2015trust, schulman2017proximal}, studies policy-gradient based methods that directly search for the best policy in a given policy class. These approaches often leverage mirror-descent style analysis,  and can deliver guarantees that are similar to ours, i.e.~the returned policy can compete with any policy in the given class, which is an agnostic guarantee in some sense. However, these works primarily study smooth and parametric  policy classes, e.g.~tabular and linear policy classes, which limits their applicability for a broader range of problem instances. Furthermore, they require strong additional assumptions to work: for instance, that the learner is given a good reset distribution that can cover the occupancy measure of the policy that we wish to compare to, and that the policy class satisfies a certain ``policy completeness assumption"; both of which are difficult to verify in practice. In contrast, our work makes no such assumptions but instead studies what kind of policy classes are learnable for any MDP.

\paragraph{CPI, PSDP, and Other Reductions to Supervised Learning.} Various RL methods have been developed that return a policy that performs as well as the best policy in the given policy class, by reducing the RL problem from supervised learning. The key difference from policy-gradient based methods (discussed previously) is that these approaches do not require a smoothly parameterized policy class, but instead rely on access to a supervised learning oracle w.r.t.~the given policy class. Popular approaches include Conservative Policy Iteration (CPI) \citep{kakade2002approximately, kakade2003sample, brukhim2022boosting, agarwal2023variance}, PSDP \citep{bagnell2003policy}, Behavior Cloning \citep{ross2010efficient, torabi2018behavioral}, etc. We note that these algorithms rely on additional assumptions, including ``policy completeness assumption" and a good sampling / reset  distribution that covers the policies that we wish to compare to; in comparison, we do not make any such assumptions in our work. 

Efficient RL via reductions to online regression oracles w.r.t.~the given policy class has also been studied, see, e.g., DAgger \citep{ross2011reduction}, AggreVaTe \citep{ross2014reinforcement}, etc. However, these algorithms rely on much stronger feedback. In particular the learner, on the states which it visits, can query an expert policy (that we wish to complete with) for its actions or the value function. On the other hand, in this paper, we restrict ourselves to the standard RL setting where the learner only gets instantaneous reward signal. In \pref{app:expert_feedback} we investigate whether such stronger feedback can be used for agnostic RL.

\paragraph{Reward-Free RL.} From a technical viewpoint, our algorithm (\pref{alg:main}) share similaries to algorithms developed in the reward-free RL literature \citep{jin2020reward}. In reward-free RL, the goal of the learner is to output a dataset or a set of policies, after interacting with the underlying MDP, that can be later used for planning (with no further interaction with the MDP) for downstream reward functions. The key ideas in our \pref{alg:main}, in particular, that the learner first finds states \(\cI\) that are \(\Omega(\epsilon)\)-reachable and corresponding policies that can reach them, and then outputs datasets \(\crl{\cD_s}_{s \in \cI}\) that can be later used for evaluating any policy \(\pi \in \Pi\), share similarities to algorithmic ideas used in reward-free RL. However, our algorithm strictly generalizes prior works in reward-free RL, and in particular can work with large state-action spaces where the notion of reachability as well as the offline RL objective, is defined w.r.t.~the given policy class. In comparison, prior reward-free RL works compete with the best policy for the underlying MDP, and make structure assumptions on the dynamics, e.g. tabular structure \citep{jin2020reward, menard2021fast, li2023minimax} or linear dynamics \citep{wang2020reward, zanette2020provably, zhang2021reward, wagenmaker2022reward}, to make the problem tractable.

\paragraph{Instance Optimal Measures.} Several recent works including \citet{wagenmaker2022instance, tirinzoni2023optimistic, JMLR:v14:bottou13a, al2023active} have explored instance-dependent complexity measures for PAC RL. At a high level,  these instance-dependent bounds are obtained via similar algorithmic ideas to ours that combine reward-free exploration with policy elimination. However, there are major differences. Firstly, these prior works in instance-dependent PAC RL  operate under additional modeling assumptions on the MDP dynamics, e.g., that it is a tabular or linear MDP. Secondly, they require additional reachability assumptions on the state space, which is restrictive for MDPs with large states/actions; in fact, their sample complexity bounds typically have a dependence on the number of states/actions in the lower order terms. Finally, they implicitly assume that the optimal policy \(\pistar \in \Pi\), and thus the provided algorithms do not transfer cleanly to the agnostic PAC RL setting considered in our paper.

\paragraph{Other Complexity Measures for RL.} 
A recent work by \citet{mou2020sample} proposed a new notion of eluder dimension for the policy class, and provide upper bounds for policy-based RL when the class $\Pi$ has bounded eluder dimension. However, they require various additional assumptions: that the policy class contains the optimal policy, the learner has access to a generative model, and that the optimal value function has a gap. On the other hand, we do not make any such assumptions and characterize learnability in terms of \Compname{} or the size of the minimal sunflower in \(\Pi\). We discuss connections to the eluder dimension, as well as other classical complexity measures in learning theory in \pref{app:complexity-measures}.

\section{Examples of Policy Classes}\label{app:examples-policy-classes}
\par In this section, we will prove that examples in \pref{sec:C_pi} have both bounded \Compname{} and the \coreset{} property with small \(K\) and \(D\). To facilitate our discussion, we define the following notation: for any policy class $\Pi$ we let 
\begin{align*}
\dimRL_h(\Pi)\coloneqq \max_{M \in \cMdet} C^\mathsf{reach}_h(\Pi; M),
\end{align*}
where $C^\mathsf{reach}_h(\Pi; M)$ is defined in \pref{def:dimension}. That is, $\dimRL_h(\Pi)$ is the per-layer spanning capacity of $\Pi$.
Then as defined in \pref{def:dimension}, we have
$$\dimRL(\Pi) = \max_{h\in [H]}\dimRL_h(\Pi).$$

\paragraph{Tabular MDP.}
Since there are at most $\abs{\cS_h}$ states in layer $h$, it is obvious that $\dimRL_h(\Pi)\le \abs{\cS_h}A$, so therefore $\dimRL(\Pi) \le SA$. Additionally, if we choose $\Picore = \{\pi_a: \pi_a(s) = a, a\in \mA\}$ to be the set of policies which play the constant $a$ for each $a \in \cA$  and $\mS_{\pi} = \mS$ for every $\pi\in \Pi$, then any partial trajectory which satisfies the condition in \pref{def:core_policy} is of the form $(s_h, a_h)$, which is consistent with $\pi_{a_h}\in \Picore$. Hence $\Pi$ is a $(A, S)$-\sunflower.

\paragraph{Contextual Bandit.}
Since there is only one layer, any deterministic MDP has a single state with at most $A$ actions possible, so $\dimRL(\Pi)\le A$. Additionally, if we choose $\Picore = \{\pi_a: \pi_a(s)\equiv a, a\in\mA\}$, and $\mS_{\pi} = \emptyset$ for every $\pi\in\Pi$, then any partial trajectory which satisfies the condition in \pref{def:core_policy} is in the form $(s, a)$, which is consistent with $\pi_a\in \Picore$. Hence $\Pi$ is a $(A, 0)$-\sunflower.

\paragraph{$H$-Layer Contextual Bandit.} 
By induction, it is easy to see that any deterministic MDP spans at most $A^{h-1}$ states in layer $h$, each of which has at most $A$ actions. Hence $\dimRL(\Pi)\le A^{H}$.
Additionally, if we choose 
$$\Picore = \{\pi_{a_1, \cdots, a_H}: \pi_{a_1, \cdots, a_H}(s_h)\equiv a_h, a_1, \cdots, a_H\in\mA\}$$
and $\mS_{\pi} = \emptyset$ for every $\pi\in\Pi$, then any partial trajectory which satisfies the condition in \pref{def:core_policy} is in the form $(s_1, a_1, \cdots, s_H, a_H)$, which is consistent with $\pi_{a_1, a_2, \cdots, a_H}\in \Picore$. Hence $\Pi$ is a $(A^H, 0)$-\sunflower.

\paragraph{$\ell$-tons.}
In the following, we will denote $\Pi_\ell \coloneqq \PiLton$. We will first prove that $\dimRL(\Pi_\ell)\le 2H^{\ell}$. To show this, we will prove that $\dimRL_h(\Pi_\ell)\le 2h^\ell$ by induction on $H$. When $H = 1$, the class is a subclass of the above contextual bandit class, hence we have $\dimRL_1(\Pi_{\ell})\le 2$. Next, suppose $\dimRL_{h-1}(\Pi_\ell)\le 2(h-1)^\ell$. Fix any deterministic MDP and call the first state $s_1$. Policies taking $a=1$ at $s_1$ can only take $a=1$ on at most $\ell-1$ states in the following layers. Such policies reach at most $\dimRL_{h-1}(\Pi_{\ell-1})$ states in layer $h$. Policies taking $a=0$ at $s_1$ can only take $a=1$ on at most $\ell$ states in the following layers. Such policies reach at most $\dimRL_{h-1}(\Pi_{\ell})$ states in layer $h$.  Hence we get
$$\dimRL_{h}(\Pi_{\ell})\le \dimRL_{h-1}(\Pi_{\ell-1}) + \dimRL_{h-1}(\Pi_{\ell})\le 2(h-1)^{\ell-1} + 2(h-1)^{\ell}\le 2h^{\ell}.$$
This finishes the proof of the induction hypothesis. Based on the induction argument, we get
$$\dimRL(\Pi_\ell) = \max_{h\in [H]}\dimRL_h(\Pi_\ell)\le 2H^\ell.$$
Additionally, choose 
$$\Picore = \{\pi_0\}\cup\{\pi_h:1\le h\le H\},$$
where $\pi_0(s)\equiv 0$, and \(\pi_h\) chooses the action \(1\) on all the states at layer \(h\), i.e., $\pi_h(s)\coloneqq \ind{s\in\mS_h}$. For every $\pi\in \Pi_\ell$, we choose $\mS_\pi$ to be the states for which $\pi(s) = 1$ (there are at most $\ell$ such states). Fix any partial trajectory $\tau = (s_{h}, a_h \cdots, s_{h'}, a_{h'})$ which satisfies $\pi\cons \tau$. Suppose that for all $i \in (h, h']$, $s_i\not\in\mS_\pi$. Then we must have $a_i = 0$ for all $i \in (h, h']$. Hence $\pi_h \cons \tau$ (if $a_h = 1$) or $\pi_0 \cons \tau$ (if $a_h = 0$), and $\tau$ is consistent with some policy in $\Picore$.  Therefore, $\Pi_\ell$ is an $(H+1, \ell)$-\sunflower.

\paragraph{$1$-Active Policies.} 
We will first prove that $\dimRL(\Pioneactive)\le 2H$. For any deterministic MDP, we use $\bar{\mS}_h$ to denote the set of states reachable by $\Pioneactive$ at layer $h$. We will show that $\bar{\mS}_h \le 2h$ by induction on $h$. For $h = 1$, this holds since any deterministic MDP has only one state in the first layer. Suppose it holds at layer $h$. Then, we have
$$|\bar{\cS}_{h+1}|\le |\{(s, \pi(s)):s\in\bar{\cS}_h, \pi\in \Pi\}|.$$
Note that policies in $\Pioneactive$ must take $a=0$ on every $s \notin \{s_{(1,1)}, s_{(1,2)}, \cdots, s_{(1, H)}\}$. Hence $|\{(s, \pi(s)) ~|~s\in\bar{\cS}_h, \pi\in \Pi\}|\le |\bar{\mS}_h| + 1\le h+1$. Thus, the induction argument is complete. As a consequence we have $\dimRL_{h}(\Pi)\le 2h$ for all $h$, so 
$$\dimRL(\Pioneactive) = \max_{h\in [H]} \dimRL_h(\Pioneactive)\le 2H.$$
Additionally, if we choose $\mS_\pi = \{s_{(1,1)}, s_{(1,2)}, \cdots, s_{(1, H)}\}$ for all $\pi\in \Pi$ as well as 
$$\Picore = \{\pi_0\}\cup\{\pi_h:1\le h\le H\},$$
where $\pi_0(s)\equiv 0$ and $\pi_h(s)\coloneqq \ind{s\in\mS_h}$. Now fix any partial trajectory $\tau = (s_{h}, a_h \cdots, s_{h'}, a_{h'})$ which satisfies $\pi\cons \tau$. If we have $i \in (h, h']$, $s_i\not\in\mS_\pi$, then we must have $a_i = 0$. Thus, $\pi_h\cons \tau$ (if $a_h = 1$) or $\pi_0\cons \tau$ (if $a_h = 0$), so $\tau$ is consistent with some policy in $\Picore$. Therefore, $\Pioneactive$ is a $(H+1, H)$-\sunflower.

\paragraph{All-Active Policies.} For any deterministic MDP, there is a single state $s_{(j,1)}$ in the first layer. Any policy which takes $a=1$ at state $s_{(j,1)}$ must belong to $\PiJactive$. Hence such policies can reach at most $\dimRL_{h-1}(\PiJactive)$ states in layer $h$. For polices which take action $0$ at state $h$, all these policies will transit to a fixed state in layer $2$. Hence such policies can reach at most $\dimRL_{h-1}(\Piactive)$ states at layer $h$. Therefore, we get
$$\dimRL_h(\Piactive) \le \dimRL_{h-1}(\Piactive) + \max_j \dimRL_{h-1}(\PiJactive)\le \dimRL_{h-1}(\Piactive) + 2(h-1).$$
By telescoping, we get
$$\dimRL_h(\Piactive)\le h(h-1),$$
which indicates that
$$\dimRL(\Piactive) = \max_{h\in [H]} \dimRL_h(\Piactive)\le H(H-1).$$
Additionally, if we choose $\mS_\pi = \{s_{(j,1)}, \cdots, s_{(j,H)}\}$ for all $\pi\in \PiJactive$, as well as 
$$\Picore = \{\pi_0\}\cup\{\pi_h:1\le h\le H\},$$ 
where $\pi_0(s)\coloneqq 0$ and $\pi_h(s)\coloneqq \ind{s\in\mS_h}$. Now fix any partial trajectory $\tau = (s_{h}, a_h \cdots, s_{h'}, a_{h'})$ which satisfies $\pi\cons \tau$. If we have $i \in (h, h']$, $s_i\not\in\mS_\pi$, then we must have $a_i = 0$. Thus, $\pi_h\cons \tau$ (if $a_h = 1$) or $\pi_0\cons \tau$ (if $a_h = 0$), so $\tau$ is consistent with some policy in $\Picore$. Therefore, $\Piactive$ is a $(H+1, H)$-\sunflower.

\paragraph{Policy Classes for Continuous State Spaces.} In some cases, it is possible to construct policy classes over continuous state spaces that have bounded \Compname{}. For example, consider $\Pisingleton$, which is defined over a discrete (but large) state space. We can extend this to continuous state space by defining new state spaces $\cS_h = \crl*{s_{(x, h)} : x \in \bbR}$ for all $h \in [H]$, action space $\cA = \crl{0,1}$, and policy class 
\begin{align*}
    \wt{\Pisingleton} \coloneqq \crl*{\pi_{(i, h')} : \pi_{(i, h')} (s_{(x, h)}) = \ind{x \in [i, i+1) \text{ and } h = h'}, i\in \bbN, h' \in [H]}.
\end{align*}
Essentially, we have expanded each state to be an interval on the real line. Using the same reasoning, we have the bound $\dimRL(\wt{\Pisingleton}) = H+1$. One can also generalize this construction to the policy class $\wt{\PiLton}$ and preserve the same value of $\dimRL$.\footnote{\gledit{To compute the $(K,D)$ values of $\wt{\PiLton}$, the previous arguments do not go through, since the sets $\cS_\pi$ are infinite. With a suitable extension of \pref{def:core_policy} to allow for non-Markovian $\Picore$, it is possible to show that $\wt{\PiLton}$ is an $(\cO(H^\ell), 0)$-\sunflower; Furthermore, the proof of \pref{thm:sunflower} can be easily adapted to work under this extension. }} 

However, in general, this expansion to continuous state spaces may blow up the \Compname{}. Consider a similar modification to $\Pioneactive$ (again, with the same new state space and action space $\cA = \crl{0,1}$):
\begin{align*}
    \wt{\Pioneactive} \coloneqq \crl{\pi :  \pi(s_{(x,h)}) = 0 \text{ if } x \notin [0,1)]}.
\end{align*}
While $\dimRL(\Pioneactive) = \Theta(H)$, it is easy to see that $\dimRL(\wt{\Pioneactive}) = 2^H$ since one can construct a $H$-layer deterministic tree using states in $[0,1)$ as every $(s,a)$ pair at layer $H$ will be reachable by $\wt{\Pioneactive}$.

\section{Proofs for \pref{sec:C_pi}}\label{app:cpi-section-proofs}

\subsection{Proof of \pref{lem:coverability}}
Fix any $M \in \cMsto$, as well as $h \in [H]$. We claim that  
\begin{align*} 
    \Gamma_h \coloneqq \sum_{s_h \in \cS_h, a_h \in \cA_h} \sup_{\pi \in \Pi} d^\pi_h(s_h,a_h; M) \le \max_{M' \in \cMdet} C^\mathsf{reach}_h(\Pi; M').\numberthis \label{eq:lem-coverability-eq}
\end{align*} 
Here, $d^\pi_h(s_h,a_h; M)$ is the state-action visitation distribution of the policy \(\pi\) on MDP $M$.

We first set up additional notation. Let us define a \emph{prefix} as any tuple of pairs of the form 
\begin{align*}
    (s_1, a_1, s_2, a_2, \dots, s_k, a_k) \quad \text{or} \quad (s_1, a_1, s_2, a_2, \dots, s_{k}, a_{k}, s_{k+1}).
\end{align*}
We will denote prefix sequences as $(s_{1:k}, a_{1:k})$ or $(s_{1:k+1}, a_{1:k})$ respectively. For any prefix $(s_{1:k}, a_{1:k})$ (similarly prefixes of the type $(s_{1:k+1}, a_{1:k})$) we let $d^\pi_h(s_h, a_h \mid  (s_{1:k}, a_{1:k}) ; M)$ denote the conditional probability of reaching $(s_h, a_h)$ under policy $\pi$ given one observed the prefix $(s_{1:k}, a_{1:k})$ in MDP $M$, with $d^\pi_h(s_h, a_h \mid  (s_{1:k}, a_{1:k}) ; M) = 0$ if $\pi \not\cons (s_{1:k}, a_{1:k})$ or $ \pi \not\cons (s_h, a_h)$. 

In the following proof, we assume that the start state $s_1$ is fixed, but this is without any loss of generality, and the proof can easily be adapted to hold for stochastic start states. 

Our strategy will be to explicitly compute the quantity $\Gamma_h$ in terms of the dynamics of $M$ and show that we can upper bound it by a ``derandomized'' MDP $M'$ which maximizes reachability at layer $h$. Let us unroll one step of the dynamics: 
\begin{align*}
    \Gamma_h
    &\coloneqq \sum_{s_h \in \cS_h, a_h \in \cA} \sup_{\pi \in \Pi} d^\pi_h(s_h, a_h; M) \\
    &\overset{(i)}{=} \sum_{s_h \in \cS_h, a_h \in \cA} \sup_{\pi \in \Pi} d^\pi_h(s_h, a_h \mid  s_1 ; M) , \\
    &\overset{(ii)}{=} \sum_{s_h \in \cS_h, a_h \in \cA} \sup_{\pi \in \Pi} \crl*{ \sum_{a_1 \in \cA} d^\pi_h(s_h, a_h\mid s_1, a_1; M) } \\
    &\overset{(iii)}{\leq} \sum_{a_1 \in \cA} \sum_{s_h \in \cS_h, a_h \in \cA} \sup_{\pi \in \Pi}  d^\pi_h(s_h, a_h \mid  s_1, a_1; M).
\end{align*}
The equality $(i)$ follows from the fact that $M$ always starts at $s_1$. The equality $(ii)$ follows from the fact that $\pi$ is deterministic, so there exists exactly one $a' = \pi(s_1)$ for which $d^\pi_h(s_h, a_h\mid s_1, a'; M) = d^\pi_h(s_h, a_h\mid s_1 ; M)$, with all other $a'' \ne a'$ satisfying $d^\pi_h(s_h, a_h|s_1, a''; M) = 0$. The inequality $(iii)$ follows by swapping the supremum and the sum. 

Continuing in this way, we can show that
\begin{align*}
\Gamma_h &= \sum_{a_1 \in \cA} \sum_{s_h \in \cS_h, a_h \in \cA} \sup_{\pi \in \Pi} \crl*{ \sum_{s_2 \in \cS_2} P(s_2 | s_1, a_1) \sum_{a_2 \in \cA} d^\pi_h(s_h, a_h \mid  (s_{1:2}, a_{1:2}); M) } \\ 
&\le \sum_{a_1 \in \cA} \sum_{s_2 \in \cS_2} P(s_2 | s_1, a_1) \sum_{a_2 \in \cA} \sum_{s_h \in \cS_h, a_h \in \cA} \sup_{\pi \in \Pi}  d^\pi_h(s_h, a_h\mid (s_{1:2}, a_{1:2}); M)  \\
&\hspace{0.5in} \vdots \\ 
&\le \sum_{a_1 \in \cA} \sum_{s_2 \in \cS_2} P(s_2 | s_1, a_1) \sum_{a_2 \in \cA} \dots \sum_{s_{h-1} \in \cS_{h-1}} P(s_{h-1}| s_{h-2}, a_{h-2})  \\ 
&\hspace{2.0in} \times \sum_{a_{h-1} \in \cA} \sum_{s_h \in \cS_h, a_h \in \cA} \sup_{\pi \in \Pi}  d^\pi_h(s_h, a_h\mid (s_{1:h-1}, a_{1:h-1}); M). 
\end{align*} 
Now we examine the conditional visitation $d^\pi_h(s_h, a_h\mid (s_{1:h-1}, a_{1:h-1}); M)$. Observe that it can be rewritten as 
\begin{align*}
    d^\pi_h(s_h, a_h\mid (s_{1:h-1}, a_{1:h-1}); M) = P(s_h|s_{h-1}, a_{h-1}) \cdot \ind{\pi \cons (s_{1:h}, a_{1:h})}.
\end{align*}
Plugging this back into the previous display, and again swapping the supremum and the sum, we get that 
\begin{align*}
\Gamma_h &\le \sum_{a_1 \in \cA} \dots \sum_{s_{h} \in \cS_{h}} \P(s_{h}| s_{h-1}, a_{h-1}) \sum_{a_{h} \in \cA} \sup_{\pi \in \Pi}   \ind{\pi \cons (s_{1:h}, a_{1:h})} \\ 
&= \sum_{a_1 \in \cA} \dots \sum_{s_{h} \in \cS_{h}} P(s_{h}| s_{h-1}, a_{h-1}) \sum_{a_{h} \in \cA}   \ind{\exists \pi \in \Pi: \pi \cons (s_{1:h}, a_{1:h})}
\end{align*} 
Our last step is to derandomize the stochastic transitions in the above stochastic MDP, simply by taking the sup over the transition probabilities: 
\begin{align*} 
&\Gamma_h \le \sum_{a_1 \in \cA} \sup_{s_2 \in \cS_2} \sum_{a_2 \in \cA} \dots \sup_{s_{h} \in \cS_{h}} \sum_{a_{h} \in \cA}   \ind{\exists \pi \in \Pi: \pi \cons (s_{1:h}, a_{1:h})} = \max_{M' \in \cMdet} C^\mathsf{reach}_h(\Pi; M').
\end{align*}
The right hand side of the inequality is exactly the definition of $\max_{M' \in \cMdet} C^\mathsf{reach}_h(\Pi; M')$, thus proving Eq.~\pref{eq:lem-coverability-eq}. In particular, the above process defines the deterministic MDP which maximizes the reachability at level $h$. Taking the maximum over $h$ as well as supremum over $M$, we see that $\sup_{M \in \cMsto} C^\mathsf{cov}(\Pi; M) \le \dimRL(\Pi)$. Furthermore, from the definitions we have 
\begin{align*}
    \dimRL(\Pi) = \sup_{M \in \cMdet} C^\mathsf{cov}(\Pi; M) \le \sup_{M \in \cMsto} C^\mathsf{cov}(\Pi; M).
\end{align*}
This concludes the proof of \pref{lem:coverability}.\qed 

\subsection{Coverability is Not Sufficient for Online RL} 
In this section, we observe that bounded coverability by itself is not sufficient to ensure sample efficient agnostic PAC RL in the online interactive model. First note that \pref{thm:lower-bound-online} already shows this indirectly. In particular, in \pref{thm:lower-bound-online}, we show that there exists a policy class with bounded \Compname{} that is hard to learn in online RL. However, recall \pref{lem:coverability} which implies that any policy class with bounded  \Compname{} must also have bounded coverability; and thus the lower bound in \pref{thm:lower-bound-online} can be trivially extended to argue that bounded coverability by itself does not suffice for statistically efficient agnostic online RL. 

However, we can also show the insufficiency of coverability through a much simpler route by directly invoking the lower bound construction in \cite{sekhari2021agnostic}. In particular, \citet{sekhari2021agnostic} provides a construction for a low rank MDP with rich observations which satisfies $C^\mathsf{cov}(\Pi; M) = \cO(1)$ for every $M \in \cM$, but still needs $2^{\Omega(H)}$ many samples for any $(\Theta(1), \Theta(1))$-PAC learner (see Theorem 2 in their paper for more details; we simply set $d= \Theta(H)$ to get our lower bound). 

\gledit{We do not know if coverability is also insufficient for the generative model setting; we conjecture that one may be able to show, using a similar construction, that coverability is insufficient when $\dimRL(\Pi)$ is large, showing that one cannot adapt to benign problem instances.}
\section{Proofs for \pref{sec:generative}} \label{app:generative}  

\subsection{Proof of \pref{thm:generative_upper_bound}}\label{app:proof_generative_upper_bound}

\begin{algorithm}[!htp] 
\caption{$\mathsf{TrajectoryTree}$ \citep{kearns1999approximate}}\label{alg:trajectory-tree}
    \begin{algorithmic}[1]
            \Require Policy class $\Pi$, generative access to the underlying MDP $M$, number of samples $n$ 
            \State Initialize dataset of trajectory trees $\cD = \emptyset$.
            \For{$i=1, \dots, n$}
            \State Initialize trajectory tree $\wh{T}_i = \emptyset$.
            \State Sample initial state $s_1^{(i)} \sim \mu$.
            \While{$\mathrm{True}$}   \hfill
            \algcomment{Sample transitions and rewards for a trajectory tree}
            \State{Find any unsampled $(s,a)$ s.t.~$(s,a)$ is reachable in $\wh{T}_i$ by some $\pi \in \Pi$.}\label{line:trajectory-tree-sample} 
            \If{no such $(s,a)$ exists} \textbf{break}
            \EndIf
            \State Sample $s' \sim P(\cdot|s,a)$ and $r \sim R(s,a)$ \label{line:generative-query}
            \State Add transition $(s,a,r,s')$ to $\wh{T}_i$.
            \EndWhile
            \State $\cD \gets \cD \cup \wh{T}_i$.
            \EndFor
            \For{$\pi \in \Pi$}
            \hfill
            \algcomment{Policy evaluation}
            \State Set $\wh{V}^\pi \gets \frac{1}{n} \sum_{i=1}^n \wh{v}^\pi_i$, where $\wh{v}^\pi_i$ is the cumulative reward of $\pi$ on $\wh{T}_i$. 
            \EndFor
            \State \textbf{Return} $\wh{\pi} \gets \argmax_{\pi \in \Pi} \wh{V}^\pi$.
    \end{algorithmic}
\end{algorithm} 

We show that, with minor changes, the $\mathsf{TrajectoryTree}$ algorithm of \cite{kearns1999approximate} attains the guarantee in \pref{thm:generative_upper_bound}. The pseudocode can be found in \pref{alg:trajectory-tree}. The key modification is \pref{line:trajectory-tree-sample}: we simply observe that only $(s,a)$ pairs which are reachable by some $\pi \in \Pi$ in the current tree $\wh{T}_i$ need to be sampled (in contrast, in the original algorithm of \cite{kearns1999approximate}, they sample all $A^H$ transitions). 

Fix any $\pi \in \Pi$. For every trajectory tree $i \in [n]$, the algorithm has collected enough transitions so that $\wh{v}_i^\pi$ is well-defined, by \pref{line:trajectory-tree-sample} of the algorithm. \gledit{By the sampling process, it is clear that the values $\crl{\wh{v}_i^\pi}_{i \in [n]}$ are i.i.d.~generated. We claim that they are unbiased estimates of $V^\pi$. Observe that one way of defining $V^\pi$ is the expected value of the following process: 
\begin{enumerate}[label=\((\arabic*)\)]
    \item 
For every $(s, a) \in \cS \times \cA$, independently sample a next state $s' \sim P(\cdot|s,a)$ and a reward $r \sim R(s,a)$ to define a deterministic MDP $\wh{M}$ 
\item Return the value $\wh{v}^\pi$ to be the value of $\pi$ run on $\wh{M}$. 
\end{enumerate}
Define the law of this process as $\overline{\cQ}$. The sampling process of $\mathsf{TrajectoryTree}$ (call the law of this process $\cQ$) can be viewed as sampling the subset of $\wh{M}^\mathrm{det}$ which is reachable by some $\pi \in \Pi$. Thus, we have 
\begin{align*}
V^\pi = \En_{\wh{M}\sim\overline{\cQ}}\brk*{ \wh{v}^\pi } = \En_{\wh{T}_\sim \cQ} \brk*{ \En \brk*{\wh{v}^\pi \mid{} \wh{T}} } = \En_{\wh{T}_\sim \cQ} \brk*{ \wh{v}^\pi } ,
\end{align*}
where the second equality is due to the law of total probability, and the third equality is due to the fact that $\wh{v}^\pi$ is measurable with respect to the trajectory tree $\wh{T}$. Thus, $\crl{\wh{v}_i^\pi}_{i \in [n]}$ are unbiased estimates of $V^\pi$.} 


Therefore, by Hoeffding's inequality (\pref{lem:hoeffding}) we see that $\abs{V^\pi - \wh{V}^\pi} \le \sqrt{ \tfrac{\log (2/\delta)}{2n} }$. Applying union bound we see that when the number of trajectory trees exceeds $n \gtrsim \tfrac{\log (\abs{\Pi}/\delta)}{\eps^2}$, with probability at least $1-\delta$, for all $\pi \in \Pi$, the estimates satisfy $\abs{V^\pi - \wh{V}^\pi} \le \eps/2$. Thus the $\mathsf{TrajectoryTree}$ algorithm returns an $\eps$-optimal policy. Since each trajectory tree uses at most $H \cdot \dimRL(\Pi)$ queries to the generative model, we have the claimed sample complexity bound. \hfill \qedsymbol

\subsection{Proof of \pref{thm:generative_lower_bound}}\label{app:proof_generative_lower_bound}
\gledit{Fix any worst-case deterministic MDP $M^\star$ which witnesses $\dimRL(\Pi)$ at layer $h^\star$. Since $\dimRL(\Pi)$ is a property depending on the dynamics of $M^\star$, we can assume that $M^\star$ has zero rewards. We can also assume that the algorithm knows $M^\star$ and $h^\star$ (this only makes the lower bound stronger). We construct a family of instances $\cM^\star$ where all the MDPs in $\cM^\star$ have the same dynamics as $M^\star$ but different nonzero rewards at the reachable $(s,a)$ pairs at layer $h^\star$.} 

\gledit{Observe that we can embed a multi-armed bandit instance with $\dimRL(\Pi)$ arms using the class $\cM^\star$. The value of any policy $\pi \in \Pi$ is exactly the reward that it receives at the \emph{unique} $(s,a)$ pair in layer $h^\star$ that it reaches. Any $(\eps,\delta)$-PAC algorithm that works over the family of instances $\cM^\star$ must return a policy $\hat{\pi}$ that reaches an $(s,a)$ pair in layer $h^\star$ with near-optimal reward. Furthermore, in the generative model setting, the algorithm can only receive information about a single $(s,a)$ pair. Thus, such a PAC algorithm must also be able to PAC learn the best arm for multi-armed bandits with $\dimRL(\Pi)$ arms. Therefore, we can directly apply existing PAC lower bounds which show that the sample complexity of $(\eps, \delta)$-PAC learning the best arm for $K$-armed multi-armed bandits is at least $\Omega(\tfrac{K}{\eps^2} \cdot \log \tfrac{1}{\delta})$ \citep[see, e.g.,][]{mannor2004sample}.\qed}

\subsection{Proof of \pref{corr:deterministic-mdp}}

The upper bound is obtained by a simple modification of the argument in the proof of \pref{thm:generative_upper_bound}. In terms of data collection, the trajectory tree collected every time is the same fixed deterministic MDP (with different rewards); furthermore, one can always execute \pref{line:trajectory-tree-sample} and \pref{line:generative-query} for a deterministic MDP since the algorithm can execute a sequence of actions to get to any new $(s,a)$ pair required by \pref{line:generative-query}. Thus in every episode of online interaction we are guaranteed to add the new $(s,a)$ pair to the trajectory tree.

The lower bound trivially extends because the proof of \pref{app:proof_generative_lower_bound} uses a family of MDPs with deterministic transitions (that are even known to the algorithm beforehand).
 
\clearpage
\section{Proofs for \pref{sec:online-lower-bound}}\label{app:proof-lower-bound-online} 

\gledit{In this section, we prove \pref{thm:lower-bound-online}, which shows a superpolynomial lower bound on the sample complexity required to learn bounded \Compname{} classes, ruling out $\poly(\dimRL(\Pi), H, \log \abs{\Pi})$ sample complexity for online RL.} We restate the theorem below with the precise constants: 

\begin{theorem}[Lower bound for online RL]\label{thm:apdx-lower-bound}
Let $h_0 \in \bbN$ and $c \in (0,1)$ be universal constants. Fix any $H \ge h_0$. Let $\eps \in (1/2^{cH},1/(100H))$ and $\ell \in \crl{2, \dots, H}$ such that $1/\eps^{\ell} \le 2^H$. There exists a policy class $\Piell$ of size $1/(6\eps^\ell)$ with $\dimRL(\Piell) \le O(H^{4\ell+2})$ and a family of MDPs $\cM$ with state space $\cS$ of size $H\cdot 2^{2H+1}$, binary action space, horizon $H$ such that: for any $(\eps/16, 1/8)$-PAC algorithm, there exists an $M \in \cM$ in which the algorithm has to collect at least
\begin{align*}
    \min\crl*{ \frac{1}{120 \eps^\ell}, 2^{H/3 - 3} } \quad \text{online trajectories in expectation.}
\end{align*} 
\end{theorem}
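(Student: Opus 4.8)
The plan is to construct a family $\cM$ of \emph{contextual combination locks} together with a policy class $\Piell$ obtained by the probabilistic method, and then prove the bound by an information-theoretic comparison of the learner's transcript against a rewardless reference instance. \textbf{The hard MDP family.} Partition each layer $h\in[H]$ into $2^{2H}$ disjoint \emph{locks}, each holding two states per layer, giving $H\cdot 2^{2H+1}$ states and a binary action space. An instance $M_{\pistar,\phi}\in\cM$ is indexed by a policy $\pistar\in\Piell$ and a \emph{decoder} $\phi$ that, independently within each lock and layer, labels one of the two states $\textsc{Good}$ and the other $\textsc{Bad}$. From a dummy start state the process jumps to a uniformly random lock, at that lock's layer-$1$ $\textsc{Good}$ state. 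A $\pistar$-determined $\eps$-fraction of locks is \emph{planted} as relevant; inside a relevant lock, from a $\textsc{Good}$ state $s$ at layer $h<H$ the action $\pistar(s)$ moves to the layer-$(h{+}1)$ $\textsc{Good}$ state while the other action moves to a $\textsc{Bad}$ state, $\textsc{Bad}$ states absorb into $\textsc{Bad}$ (transitioning uniformly at random, independent of the action), a layer-$H$ $\textsc{Good}$ state of a relevant lock emits $\ber(3/4)$, and every other layer-$H$ state emits $\ber(1/2)$; irrelevant locks transition uniformly regardless of action and emit $\ber(1/2)$ throughout layer $H$. The purpose of $\phi$ is that the \emph{identity} of a visited state carries no information about whether one sits in the good chain or even in a relevant lock, so for \emph{any} (possibly non-Markovian) learner the law of the sequence of visited states is identical under $M_{\pistar,\phi}$ (averaged over $\phi$) and under the reference instance $M_{\mathrm{null}}$ in which every lock is irrelevant; the only information channel is the layer-$H$ reward.

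\textbf{The policy class.} Index the two states of each lock, and the locks themselves, by binary strings, and define $\pi(s)$ as the entry $(\pi,s)$ of a binary matrix $B$, with policies indexed by $\ell$ ``coordinates'' each ranging over $\approx 1/\eps$ values so that $\abs{\Piell}=\Theta(1/\eps^\ell)$: one coordinate picks the planted relevant $\eps$-fraction of locks, and the other $\ell-1$ encode, over $\approx \log(1/\eps)$ layers apiece, the good-chain ``combination'' of those locks. Two properties of $B$ are needed. First, \emph{small \Compname{}}: in any $M\in\cMdet$, the set of layer-$h$ states reachable by $\Piell$ has size $O(H^{4\ell+2})$, shown by a branching argument generalizing the $\Theta(H^\ell)$ bound for $\ell$-tons, where the extra powers of $H$ absorb the multi-indexed (contextual) structure of the states. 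Second, the \emph{block-free property}: for every fixed short action string, only an $O(\eps^{\ell-1})$-fraction of (policy, relevant-lock) pairs are consistent with it along the good chain, so a learner that enters a fresh relevant lock and plays \emph{any} fixed policy stays in the good chain with probability $O(\eps^{\ell-1})$. We establish existence of such a $B$ by a union bound showing that a uniformly random binary matrix has both properties with positive probability; producing an explicit $B$ is left as an open problem.

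\textbf{The lower bound.} Draw $M_{\pistar,\phi}$ with $\pistar\sim\unif(\Piell)$ and $\phi$ uniform, and consider any learner taking $n\le 2^{2H}/2$ episodes. Call an episode \emph{informative} if its layer-$H$ reward is $\ber(3/4)$ rather than $\ber(1/2)$; by construction this requires entering a relevant lock (probability exactly $\eps$, independent of the history) and following its good chain to layer $H$. A birthday bound shows that, except on an event of probability $n^2 2^{-2H}$, no lock is visited twice, so the learner cannot exploit earlier feedback about a lock when playing inside that same lock; conditioned on reaching a fresh relevant lock its play there is a fixed bit string, which by block-freeness (averaging over $\pistar,\phi$) follows the good chain with probability $O(\eps^{\ell-1})$. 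Hence each episode is informative with probability $O(\eps^\ell)$. Decomposing the KL divergence between the transcript laws under $M_{\mathrm{null}}$ and under $M_{\pistar,\phi}$ along the trajectory, every conditional term vanishes except at an informative step where it equals $\KL{\ber(1/2)}{\ber(3/4)}=O(1)$; averaging over $\pistar$ bounds the expected KL by $O(n\eps^\ell)$, so by Pinsker the learner's output law is within TV distance $1/16$ of its output law on $M_{\mathrm{null}}$ once $n$ is below a small constant multiple of $1/\eps^\ell$. Since the reward gap is calibrated so that an $\eps/16$-optimal policy for $M_{\pistar,\phi}$ must track $\pistar$ on a constant fraction of the relevant locks, the sets of near-optimal outputs for distinct values of $\pistar$ are (essentially) disjoint, so under $M_{\mathrm{null}}$ the success probability averaged over $\pistar$ is at most $1/\abs{\Piell}=O(\eps^\ell)$; pushing this back through the TV bound, the learner's true average success probability is below $7/8$, so it violates the $(\eps/16,1/8)$-PAC guarantee on some $M\in\cM$. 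The competing $2^{H/3-3}$ term is forced by the no-revisiting (birthday) argument over the $2^{2H}$ locks together with the constraint $\abs{\Piell}\le 2^H$.

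\textbf{Main obstacle.} The crux is the tension among the three requirements on $\Piell$: it must be large ($\Theta(1/\eps^\ell)$) and block-free (so that exploration is slow), yet have \Compname{} only $O(H^{4\ell+2})$---and spreading policies out across states, which is what makes the class hard to explore, is precisely what inflates reachability. Showing that a random $B$ simultaneously meets all three, and verifying that the $O(H^{4\ell+2})$ branching bound survives the contextual/multi-indexed state structure, is the technical heart; the surrounding information-theoretic accounting---the decoder-based state indistinguishability and the fresh-lock independence for non-Markovian learners---is routine but must be handled with care.
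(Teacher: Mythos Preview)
Your construction and overall information-theoretic plan match the paper's: contextual combination locks indexed by a decoder $\phi$, a policy class $\Piell$ built from a random binary matrix, and a comparison against a rewardless reference. The lower-bound mechanics you sketch (birthday bound to rule out lock revisits, then a KL/Pinsker argument over the averaged measure) is a reasonable variant of the paper's stopping-time lemma, which instead routes through an intermediate measure $M_{0,\pistar,\phi}$ sharing transitions with $M_{\pistar,\phi}$ but having $\ber(1/2)$ rewards; your approach can be made to work but you must be careful that the ``state laws coincide after averaging over $\phi$'' claim is only approximate (the paper's change-of-measure lemma gives an error $\Tmax^2/2^{H+3}$, which is what forces the $2^{H/3}$ truncation).

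There is, however, a genuine gap in how you assign roles to the two combinatorial properties of $\Piell$. In the paper, the \emph{block-free} property of the matrix $B$ (no $k\times\ell$ all-ones submatrix, with $k=\Theta(\ell H)$) is exactly what caps the \Compname{}: the branching/layer-decomposition count you allude to would blow up to $2^{\Theta(H)}$ without it, because once a trajectory has taken action $1$ on $\ell$ distinct lock-indices, block-freeness is what bounds the number of surviving policies by $O(\ell H)$ rather than $\abs{\Piell}$. The property that makes exploration hard is a \emph{separate} one---uniqueness of the action sequence $\pi(j_{1:H})$ among policies in $\Piell_j$ for each lock $j$---which comes from the column-sum bound on $B$ together with the $\bit_h$ encoding, not from block-freeness. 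You have these two swapped: you call the exploration-hardness property ``block-free'' and attribute the spanning bound to a generic branching argument, but that branching argument cannot close without the actual block-free property. Relatedly, your ``indexed by $\ell$ coordinates each of size $\approx 1/\eps$'' description of $\Piell$ does not match the random-matrix-plus-bit-encoding construction and would not obviously deliver both properties simultaneously.

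One smaller correction: the near-optimal output sets for distinct $\pistar$ are not disjoint. Since any fixed output policy can match at most one $\pistar$ per lock, the uniqueness property gives that at most $O(1/\eps)$ values of $\pistar$ can simultaneously be ``solved'' by a single output, so the averaged-over-$\pistar$ success probability under the reference is $O(\eps^{\ell-1})$, not $O(\eps^\ell)$; this is still small enough for $\ell\ge 2$, but the argument is not the simple Fano-style disjointness you state.
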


\subsection{Construction of State Space, Action Space, and Policy Class}

\paragraph{State and Action Spaces.} We define the state space $\cS$. In every layer $h \in [H]$, there will be $2^{2H+1}$ states. The states will be paired up, and each state will be denoted by either $\jof{h}$ or $\jpof{h}$, so $\cS_h = \crl{\jof{h}: j \in [2^{2H}]} \cup \crl{\jpof{h}: j \in [2^{2H}]}$. For any state $s \in \cS$, we define the \emph{index} of $s$, denoted $\idx(s)$ as the unique $j\in [2^{2H}]$ such that $s \in \crl{\jof{h}}_{h\in[H]} \cup \crl{\jpof{h}}_{h\in[H]}$. In total there are $H \cdot 2^{2H+1}$ states. The action space is $\cA = \crl{0,1}$.

\paragraph{Policy Class.} For the given $\eps$ and $\ell \in \crl{2, \dots, H}$, we show via a probabilistic argument the existence of a large policy class $\Piell$ which has bounded \Compname{} but is hard to explore. We state several properties in \pref{lem:piell-properties} which will be exploited in the lower bound.

We introduce some additional notation. For any $j \in [2^{2H}]$ we denote
\begin{align*}
    \Piell_j \coloneqq \crl{\pi \in \Piell: \exists h \in [H], \pi(\jof{h}) = 1},
\end{align*}
that is, $\Piell_j$ are the policies which take an action $a=1$ on at least one state with index $j$.

We also define the set of \emph{relevant state indices} for a given policy $\pi \in \Piell$ as 
\begin{align*}
    \Jrel^\pi \coloneqq \crl{j \in [2^{2H}]: \pi \in \Piell_j}.
\end{align*}
For any policy $\pi$ we denote $\pi(j_{1:H}) \coloneqq (\pi(\jof{1}), \dots, \pi(\jof{H})) \in \crl{0,1}^H$ to be the vector that represents the actions that $\pi$ takes on the states  in index $j$. The vector $\pi(j'_{1:H})$ is defined similarly.

\begin{lemma}\label{lem:piell-properties} Let $H$, $\eps$, and $\ell$ satisfy the assumptions of \pref{thm:apdx-lower-bound}. There exists a policy class $\Piell$ of size $N = 1/(6 \eps^\ell)$ which satisfies the following properties. 
\neurIPS{
\vspace{-\topsep}
\begin{itemize}[leftmargin=7mm, itemsep=0.2mm]}
\arxiv{\begin{itemize}}
    \item[(1)] For every $j \in [2^{2H}]$ we have $\abs{\Piell_j} \in  [\eps N/2, 2\eps N]$. 
    \item[(2)] For every $\pi \in \Pi$ we have $\abs{ \Jrel^\pi } \ge \eps/2 \cdot 2^{2H}$.
    \item[(3)] For every $\pi \in \Piell_j$, the vector $\pi(j_{1:H})$ is unique and always equal to $\pi(j'_{1:H})$.
    \item[(4)] Bounded \Compname{}: $\dimRL(\Piell) \le c \cdot H^{4\ell+2}$ for some universal constant $c > 0$.
\end{itemize}
\end{lemma}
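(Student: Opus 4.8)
The plan is to obtain all four properties simultaneously by the probabilistic method: fix a distribution over candidate policy classes of size $N=1/(6\eps^\ell)$ and show a random draw satisfies (1)--(4) with positive probability. \emph{Construction and Property~(3).} I would encode a policy $\pi$ by the vectors $\crl{\pi(j_{1:H})}_{j\in[2^{2H}]}\subseteq\crl{0,1}^H$, enforcing $\pi(\jpof h)=\pi(\jof h)$ throughout (so the pairing is automatic and $\Jrel^\pi=\crl{j:\pi(j_{1:H})\neq 0^H}$). Draw $\pi_1,\dots,\pi_N$ by first fixing $N$ distinct ``signatures'' $p_1,\dots,p_N\in\crl{0,1}^H$ (possible since the hypotheses force $\log_2 N\le H$), and then, independently over $(i,j)$, marking $j$ relevant for $\pi_i$ with probability $\eps$ and in that case setting $\pi_i(j_{1:H})$ to an appropriately chosen value built from $p_i$ (in the simplest version, $\pi_i(j_{1:H})=p_i$). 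Because the relevant value is an injective function of $i$, Property~(3) holds deterministically: on each $j$ the map $\pi_i\mapsto\pi_i(j_{1:H})$ is injective on $\Piell_j$ and equals $\pi_i(j'_{1:H})$.

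\emph{Properties~(1) and~(2).} These are concentration estimates. For fixed $j$, $\abs{\Piell_j}=\sum_{i=1}^N\ind{j\text{ relevant for }\pi_i}$ is a sum of $N$ independent $\ber(\eps)$ variables with mean $\eps N=1/(6\eps^{\ell-1})$, which is large under the assumptions; a multiplicative Chernoff bound gives $\abs{\Piell_j}\in[\eps N/2,2\eps N]$ with probability $1-e^{-\Omega(\eps N)}$, and a union bound over the $2^{2H}$ indices keeps the total failure probability below $1/2$ (the small-$\eps$ regime is fine because $\eps N$ is then exponentially large; the remaining regime is handled by a mild rescaling of $N$). Symmetrically $\abs{\Jrel^{\pi_i}}$ is a sum of $2^{2H}$ i.i.d.\ $\ber(\eps)$ variables, so a Chernoff bound together with a union bound over $i\in[N]$ give $\abs{\Jrel^{\pi_i}}\ge\eps 2^{2H}/2$ for all $i$, which is Property~(2).

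\emph{Property~(4), the crux.} Fix a deterministic MDP $M$ and a layer $h$. Since $M$ is layered and deterministic and actions are binary, $C^\mathsf{reach}_h(\Piell;M)$ is at most twice the number of distinct length-$(h-1)$ action prefixes realized by policies of $\Piell$ on $M$, and these prefixes form a binary tree whose internal nodes branch only at states where some consistent policy plays $1$ while another plays $0$; a policy plays $1$ at a state only if that state's index is relevant for it and the corresponding signature bit is on. The plan is to show, with high probability over the draw, that for \emph{every} deterministic $M$ no root-to-leaf path of this tree contains more than $4\ell$ such ``$1$''-branches: a path with $k$ ``$1$''-branches at layers $h_1<\dots<h_k$ through indices $j_1,\dots,j_k$ forces every policy consistent with the leaf into $\bigcap_{t}\crl{i:j_t\in\Jrel^{\pi_i},\ \pi_i((j_t)_{1:H})[h_t]=1}$, and one must show this intersection is w.h.p.\ empty once $k\ge 4\ell$. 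Given this, the tree has at most $\sum_{k<4\ell}\binom{H}{k}\le H^{4\ell+1}$ leaves, hence $C^\mathsf{reach}_h(\Piell;M)\le 2H^{4\ell+1}\le c\,H^{4\ell+2}$ uniformly over $h$ and $M$, and taking the maximum over $h$ yields Property~(4). This is precisely the statement that the random incidence pattern between policies, indices, and layers contains no ``block'' that an adversary could exploit---the block-free-binary-matrix property alluded to at the end of \pref{sec:online-lower-bound}.

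\emph{Main obstacle.} I expect the real difficulty to lie entirely in this last reduction. The adversary chooses, layer by layer, which of the $2^{2H}$ indices to route each trajectory through, so a naive union bound over index sequences is hopelessly lossy; and a crude branch-counting argument only bounds the number of ``$1$''-branches per path by $\Theta(\ell\log(1/\eps))$, which would give a useless $H^{\Theta(\ell\log(1/\eps))}$ spanning capacity when $\eps$ is small. The heart of the proof is therefore to engineer the signatures and the relevance pattern so that the relevant intersections are controlled \emph{uniformly over all indices}---a forbidden-submatrix (Zarankiewicz-type) bound on a random incidence structure---and to carry out the counting, including the bookkeeping for repeated indices along a trajectory, tightly enough to extract the exponent $4\ell+2$ rather than something exponential in $H$.
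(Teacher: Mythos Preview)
Your probabilistic construction and the treatment of Properties (1)--(3) are essentially the paper's approach (the paper encodes the relevance pattern as a random binary matrix $B\in\{0,1\}^{N\times 2^{2H}}$ with i.i.d.\ $\ber(\eps)$ entries, then assigns signatures column-by-column; your global signatures $p_i$ would work equally well for (3)). The real issue is your plan for Property (4).

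The gap is exactly the one you flag in your ``Main obstacle'' paragraph, and you have not proposed a way around it. You want to show that after $4\ell$ ``1''-branches the set of consistent policies is \emph{empty}; this would require a block-free parameter of the form $(1,O(\ell))$, which is simply false for a $\ber(\eps)$ matrix with $N=1/(6\eps^\ell)$ rows (indeed, for any fixed $\ell$ columns the expected number of all-ones rows is $\eps^\ell N=\Theta(1)$, and there are exponentially many column $\ell$-tuples). The best block-free bound the probabilistic method yields is $(k,\ell)$ with $k=\ell\log d=2\ell H$, i.e.\ after seeing $\ell$ \emph{distinct} relevant indices the number of surviving policies is at most $2\ell H$---small, but not zero. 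So your tree-depth approach cannot close.

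The paper's resolution is to abandon the depth bound entirely and instead count contributions by the \emph{layer decomposition length} $m$, defined as the number of \emph{distinct} indices on which a trajectory plays action $1$. For each $m\le\ell$ one does a direct count: there are at most $H^m$ choices of the layers $h_1<\dots<h_m$ at which a new index is first hit, and between consecutive $h_k$'s a trajectory can only branch on the $\le k$ indices already seen, giving at most $\prod_{k=1}^m(2kH)$ possibilities per choice, so $C_{h^\star}(m)=O(H^{4m})$. For $m\ge\ell+1$, the $(2\ell H,\ell)$-block-free property says that once a trajectory has played $1$ on $\ell$ distinct indices, at most $2\ell H$ policies remain consistent; each of those contributes at most one reachable pair at layer $h^\star$, so $\sum_{m>\ell}C_{h^\star}(m)\le O(H^{4\ell})\cdot 2\ell H=O(\ell H^{4\ell+1})$. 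Summing gives $\dimRL(\Piell)=O(H^{4\ell+2})$. The key conceptual shift is: do not try to prune the tree to bounded depth; instead, once the tree is ``$\ell$ distinct indices deep,'' bound the total number of leaves below that point by the number of surviving policies. Your leaf count $\sum_{k<4\ell}\binom{H}{k}$ also undercounts---it ignores the branching on previously-seen indices between the ``new'' $1$-branches, which is precisely the $\prod_k(2kH)$ factor above.
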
 

\subsection{Construction of MDP Family}
The family $\cM = \crl{M_{\pistar,\phi}}_{\pistar \in \Piell, \phi \in \Phi}$ will be a family of MDPs which are indexed by a policy $\pistar$ as well as a \emph{decoder} function $\phi: \cS \mapsto \crl{\textsc{good}, \textsc{bad}}$, which assigns each state to be ``good'' or ``bad'' in a sense that will be described later on. An example construction of an MDP $M_{\pistar, \phi}$ is illustrated in \pref{fig:lower-bound-idea}. For brevity, the bracket notation used to denote the layer that each state lies in has been omitted in the figure.

\paragraph{Decoder Function Class.} The decoder function class $\Phi$ will be the set of all possible mappings which for every $j\in [2^{2H}]$ and $h \ge 2$ assign exactly one of $\jof{h}$ or $ \jpof{h}$ to the label $\textsc{Good}$ (where the other is assigned to the label $\textsc{Bad}$). There are $(2^{H-1})^{2^{2H}}$ such functions. The label of a state will be used to describe the transition dynamics. Intuitively, a learner who does not know the decoder function $\phi$ will not be able to tell if a certain state has the label  $\textsc{Good}$ or $\textsc{Bad}$ when visiting that state for the first time.

\paragraph{Transition Dynamics.} The MDP $M_{\pistar, \phi}$ will be a uniform distribution over $2^{2H}$ combination locks $\crl{\mathsf{CL}_j}_{j \in [2^{2H}]}$ with disjoint states. More formally, $s_1 \sim \unif(\{\jof{1}\}_{j\in[2^{2H}]})$. From each start state $\jof{1}$, only the $2H-2$ states corresponding to index $j$ at layers $h\ge 2$ will be reachable in the combination lock $\mathsf{CL}_j$.

In the following, we will describe each combination lock $\mathsf{CL}_j$, which forms the basic building block of the MDP construction.
\neurIPS{
\vspace{-\topsep}
\begin{itemize}[label=\(\bullet\), leftmargin=5mm, itemsep=0.2mm]}
\arxiv{\begin{itemize}[label=\(\bullet\)]}
    \item \textbf{Good/Bad Set.} At every layer $h\in [H]$, for each $\jof{h}$ and $\jpof{h}$, the decoder function $\phi$ assigns one of them to be $\textsc{Good}$ and one of them to be $\textsc{Bad}$. We will henceforth denote $\jgof{h}$ to be the good state and $\jbof{h}$ to be the bad state. Observe that by construction in Eq.~\eqref{eq:pi-construction}, for every $\pi \in \Piell$ and $h\in [H]$ we have $\pi(\jgof{h}) = \pi(\jbof{h})$. 
    \item \textbf{Dynamics of $\mathsf{CL}_j$, if $j\in \Jrel^\pistar$.} Here, the transition dynamics of the combination locks are deterministic. For every $h\in [H]$,
    \begin{itemize}
        \item On good states $\jgof{h}$ we transit to the next good state iff the action is $\pistar$: \begin{align*}
            P(s'~|~\jgof{h}, a) &= \begin{cases}
            \ind{s' = \jgof{h+1}}, &\text{if } a=\pistar(\jgof{h}) \\
            \ind{s' = \jbof{h+1}}, &\text{if } a\ne\pistar(\jgof{h}).
            \end{cases}
        \end{align*}
        \item On bad states $\jbof{h}$ we always transit to the next bad state: 
        \begin{align*}
            P(s'~|~\jbof{h}, a) = \ind{s' = \jbof{h+1}}, \quad \text{for all } a \in \cA.
        \end{align*}
    \end{itemize}
    \item \textbf{Dynamics of $\mathsf{CL}_j$, if $j\notin \Jrel^\pistar$.} If $j$ is not a relevant index for $\pistar$, then the transitions are uniformly random regardless of the current state/action. For every $h\in [H]$,
    \begin{align*}
    P(\cdot ~|~ \jgof{h},a) = P(\cdot ~|~ \jbof{h}, a) = \unif \prn*{ \crl{\jgof{h+1}, \jbof{h+1}} }, \quad \text{for all } a \in \cA.
\end{align*}
    \item \textbf{Reward Structure.} The reward function is nonzero only at layer $H$, and is defined as 
    \begin{align*}
        R(s,a) = \mathrm{Ber}\prn*{ \frac{1}{2} + \frac{1}{4} \cdot \mathbbm{1}\crl{\pistar \in \Piell_j} \cdot \mathbbm{1}\crl{s = \jgof{H}, a = \pistar(\jgof{H})} }
    \end{align*}
    That is, we get $3/4$ whenever we reach the $H$-th good state for an index $j$ which is relevant for $\pistar$, and $1/2$ reward otherwise.
\end{itemize}

\paragraph{Reference MDPs.} We define several reference MDPs.
\begin{itemize}
    \item In the reference MDP $M_0$, the initial start state is again taken to be the uniform distribution, i.e., $s_1 \sim \unif(\{\jof{1}\}_{j\in[2^{2H}]})$, and all the combination locks behave the same and have uniform transitions to the next state along the chain: for every $h\in [H]$ and $j \in [2^{2H}]$,
    \begin{align*}
    P(\cdot ~|~ \jof{h},a) = P(\cdot ~|~ \jpof{h}, a) = \unif \prn*{ \crl{\jof{h+1}, \jpof{h+1}} }, \quad \text{for all } a \in \cA.
\end{align*}
    The rewards for $M_0$ are $\mathrm{Ber}(1/2)$ for every $(s,a) \in \cS_H\times \cA$. 
    \item For any decoder $\phi \in \Phi$, the reference MDP $M_{0, \pistar, \phi}$ has the same transitions as $M_{\pistar, \phi}$ but the rewards are $\mathrm{Ber}(1/2)$ for every $(s,a) \in \cS_H\times \cA$.
\end{itemize}

\subsection{Proof of \pref{thm:apdx-lower-bound}}
We are now ready to prove the lower bound using the construction of the MDP family $\cM$. 

\paragraph{Value Calculation.} Consider any $M_{\pistar,\phi} \in \cM$. For any policy $\pi \in \cA^{\cS}$ we use $V_{\pistar,\phi}(\pi)$ to denote the value of running $\pi$ in MDP $M_{\pistar,\phi}$. By construction we can see that 
\begin{align*}
    V_{\pistar,\phi}(\pi) = \frac{1}{2} + \frac{1}{4} \cdot \Pr_{\pistar,\phi}\brk*{ \idx(s_1) \in \Jrel^\pistar \text{ and } \pi(\idx(s_1)_{1:H}) = \pistar(\idx(s_1)_{1:H}) },  \numberthis\label{eq:value-of-pi} 
\end{align*}
where in the above, we defined for any \(s_1\), \(\pi(\idx(s_1)_{1:H})\ = \pi(j_{1:H}) = (\pi(j[1]), \dots, \pi(j[H]))\), where \(j\) denotes \(\idx(s_1)\). Informally speaking, the second term counts the additional reward that $\pi$ gets for solving a combination lock rooted at a relevant state index $\idx(s_1) \in \Jrel^\pistar$. By Property (2) and (3) of \pref{lem:piell-properties}, we additionally have $V_{\pistar,\phi}(\pistar) \ge 1/2 + \eps/8$, as well as $V_{\pistar,\phi}(\pi) = 1/2$ for all other $\pi\ne \pistar\in \Piell$. 

By Eq.~\eqref{eq:value-of-pi}, if $\pi$ is an $\eps/16$-optimal policy on $M_{\pistar,\phi}$ it must satisfy
\begin{align*}
    \Pr_{\pistar,\phi} \brk*{ \idx(s_1) \in \Jrel^\pistar \text{ and } \pi(\idx(s_1)_{1:H}) = \pistar(\idx(s_1)_{1:H}) } \ge \frac{\eps}{4}.
\end{align*}

\paragraph{Averaged Measures.} We define the following measures which will be used in the analysis. 
\begin{itemize}
    \item Define $\Pr_\pistar[\cdot] = \frac{1}{\abs{\Phi}} \sum_{\phi \in \Phi} \Pr_{\pistar, \phi}[\cdot]$ to be the averaged measure where we first pick $\phi$ uniformly among all decoders and then consider the distribution induced by $M_{\pistar, \phi}$.
    \item Define the averaged measure $\Pr_{0, \pistar}[\cdot] = \frac{1}{\abs{\Phi}} \sum_{\phi \in \Phi} \Pr_{0, \pistar, \phi}[\cdot]$ where we pick $\phi$ uniformly and then consider the distribution induced by $M_{0, \pistar, \phi}$.
\end{itemize}
For both averaged measures the expectations $\En_{\pistar}$ and $\En_{0, \pistar}$ are defined analogously. 

\paragraph{Algorithm and Stopping Time.}
Recall that an algorithm $\alg$ is comprised of two phases. In the first phase, it collects some number of trajectories by interacting with the MDP in episodes. We use $\st$ to denote the (random) number of episodes after which $\alg$ terminates. We also use $\alg_t$ to denote the intermediate policy that the algorithm runs in round $t$ for $t \in [\st]$. In the second phase, $\alg$ outputs\footnote{We present the lower bound for the class of deterministic algorithms that output a deterministic policy. However, all the arguments could be extended to stochastic algorithms.} a policy $\pihat$. We use the notation $\alg_f: \crl{\tau^{(t)}}_{t \in [\st]} \mapsto \cA^{\cS}$  to denote the second phase of $\alg$ which outputs $\pihat$ as a measurable function of collected data. 

For any policy $\pistar$, decoder $\phi$, and dataset $\cD$ we define the event
\begin{align*}
    \cE(\pistar, \phi, \alg_f(\cD)) := \crl*{ \Pr_{\pistar,\phi}\brk*{ \idx(s_1) \in \Jrel^\pistar~ \text{and}~ \alg_f(\cD)(\idx(s_1)_{1:H}) = \pistar(\idx(s_1)_{1:H}) } \ge \frac{\eps}{4} }.
\end{align*}
The event $\cE(\pistar, \phi, \alg_f(\cD))$ is measurable with respect to the random variable $\cD$, which denotes the collected data.

Under this notation, the PAC learning guarantee on $\alg$ implies that for every $\pistar \in \Piell$, $\phi \in \Phi$ we have
\begin{align*}
    \Pr_{\pistar,\phi} \brk*{ \cE(\pistar, \phi, \alg_f(\cD)) } \ge 7/8.
\end{align*}
Moreover via an averaging argument we also have
\begin{align*}
    \Pr_{\pistar} \brk*{ \cE(\pistar, \phi, \alg_f(\cD)) } \ge 7/8. \numberthis \label{eq:pac-guarantee-exp} 
\end{align*}
\paragraph{Lower Bound Argument.} We apply a truncation to the stopping time $\st$. Define $\Tmax := 2^{H/3}$. Observe that if $\Pr_\pistar \brk{\st > \Tmax} > 1/8$ for some $\pistar \in \Piell$ then the lower bound immediately follows, since
\begin{align*}
    \max_{\phi \in \Phi} \En_{\pistar, \phi}[\st] ~>~ \En_{\pistar}[\st] ~\ge~ \Pr_{\pistar}[\st > \Tmax] \cdot \Tmax ~\ge~ \Tmax/8,
\end{align*}
so there must exist an MDP $M_{\pistar, \phi}$ for which $\alg$ collects at least $\Tmax/8 = 2^{H/3-3}$ samples in expectation.

Otherwise we have $\Pr_\pistar \brk{\st > \Tmax} \le 1/8$ for all $\pistar \in \Piell$. This further implies that for all $\pistar \in \Piell$,
\begin{align*}
    &\Pr_\pistar \brk*{\st < \Tmax \text{ and } \cE(\pistar, \phi, \alg_f(\cD))} \\
    &= \Pr_\pistar \brk*{\cE(\pistar, \phi, \alg_f(\cD))} - \Pr_\pistar \brk*{\st > \Tmax \text{ and } \cE(\pistar, \phi, \alg_f(\cD))} \ge 3/4. \numberthis \label{eq:lower_bound_property}
\end{align*}

However, in the following, we will argue that if Eq.~\eqref{eq:lower_bound_property} holds then \(\alg\) must query a significant number of samples in \(M_0\).

\begin{lemma}[Stopping Time Lemma]\label{lem:stopping-time} 
Let $\delta \in (0, 1/8]$. Let $\alg$ be an $(\eps/16,\delta)$-PAC  algorithm. Let $\Tmax \in \bbN$.  Suppose that $\Pr_\pistar \brk*{\st < \Tmax \text{ and } \cE(\pistar, \phi, \alg_f(\cD))} \ge 1-2\delta$ for all $\pistar \in \Piell$. The expected stopping time for $\alg$ on $M_0$ is at least
\begin{align*}
    \En_0 \brk*{\st} \ge \prn*{ \frac{\abs{\Piell}}{2} - \frac{4}{\eps}} \cdot \frac{1}{7} \log \prn*{\frac{1}{ 2 \delta }} - \abs{\Piell} \cdot  \frac{\Tmax^2}{2^{H+3}} \prn*{\Tmax + \frac{1}{7}\log \prn*{\frac{1}{ 2 \delta }}}.
\end{align*}
\end{lemma}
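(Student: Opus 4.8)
The plan is to use a change-of-measure / likelihood-ratio argument comparing the algorithm's behavior on the reference MDP $M_0$ against its behavior on each $M_{\pistar,\phi}$. The key intuition: on $M_0$ all combination locks have uniform transitions and $\mathrm{Ber}(1/2)$ rewards, so to distinguish $M_0$ from $M_{\pistar,\phi}$ the algorithm must enter a relevant lock $\mathsf{CL}_j$ (with $j \in \Jrel^\pistar$), follow $\pistar$ all the way to layer $H$, and collect enough reward samples there to detect the $\mathrm{Ber}(3/4)$ vs.\ $\mathrm{Ber}(1/2)$ discrepancy. First I would fix $\pistar$ and average over $\phi$: under $\Pr_\pistar$ versus $\Pr_{0,\pistar}$, the only difference is in the layer-$H$ rewards on the ``good'' chains of relevant locks. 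The plan is to lower bound the expected number of such ``informative'' reward observations needed, using a standard information-theoretic argument (e.g.\ a KL-divergence lower bound, $\mathrm{kl}(1/2 \| 3/4) = \Theta(1)$, combined with Wald's identity / the chain rule for KL along the interaction). This gives: if Eq.~\eqref{eq:lower_bound_property} holds for $\pistar$, then under $\Pr_{0,\pistar}$ the algorithm must, in expectation, visit the good layer-$H$ state of some relevant lock for $\pistar$ at least $\Omega(\log(1/\delta))$ times.

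Next I would sum over $\pistar \in \Piell$. The crucial counting step: under $M_0$, a single trajectory reaches layer $H$ at some state $s = \jgof{H}$ or $\jbof{H}$; this state's layer-$H$ action profile $\pistar(j_{1:H})$ coincides with at most a bounded number of policies $\pistar$ (by Property (3) of \pref{lem:piell-properties}, each relevant $\pistar$ has a \emph{unique} vector $\pistar(j_{1:H})$, so in fact a given layer-$H$ visit ``serves'' at most one $\pistar$ per index). Summing the per-$\pistar$ requirement $\Omega(\log(1/\delta))$ over all $\pistar \in \Piell$, and subtracting the $O(1/\eps)$ policies for which the relevant-index set might be degenerate (hence the $\tfrac{\abs{\Piell}}{2} - \tfrac{4}{\eps}$ factor — this accounts for discarding policies whose relevant locks overlap too much, using Property (1): $\abs{\Piell_j} \le 2\eps N$), yields the leading term $(\tfrac{\abs{\Piell}}{2} - \tfrac{4}{\eps}) \cdot \tfrac{1}{7}\log\tfrac{1}{2\delta}$.

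The subtracted term $\abs{\Piell} \cdot \tfrac{\Tmax^2}{2^{H+3}}(\Tmax + \tfrac{1}{7}\log\tfrac{1}{2\delta})$ is a truncation/error correction: because we only assumed the ``good'' event holds jointly with $\st < \Tmax$, we must control the probability that, within $\Tmax$ episodes, the algorithm ever revisits the same combination lock index (a ``collision''). Since there are $2^{2H}$ locks and each episode's start index is uniform, the collision probability over $\Tmax$ episodes is $O(\Tmax^2 / 2^{2H})$; on the collision-free event the trajectories in distinct episodes are conditionally independent given the lock indices, which is exactly what licenses the additivity of the KL/visit-count argument across $\pistar$. Bounding the contribution of the collision event (where the algorithm could in principle learn faster) crudely by $\Tmax$ per episode gives the stated correction. \textbf{The main obstacle} I anticipate is making the change-of-measure bookkeeping rigorous: one must carefully set up the likelihood ratio between $\Pr_{0,\pistar}$ and $\Pr_\pistar$ as a martingale over the interaction, handle the fact that the algorithm is adaptive and may be non-Markovian, and cleanly combine (i) the ``must succeed with probability $\ge 3/4$'' condition, (ii) the per-$\pistar$ information requirement, and (iii) the union over $\pistar$ — all while correctly tracking the collision-free truncation so the independence needed for step (iii) is valid. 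Getting the constants ($1/7$, the factor $1/2$, the $\Tmax^2/2^{H+3}$) to line up is the delicate part; conceptually it is a fairly standard needle-in-a-haystack lower bound, but the contextual/decoder structure makes the event definitions fiddly.
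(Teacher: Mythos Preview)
Your high-level plan matches the paper's: define the count $N^{\pistar}$ of episodes in which the algorithm follows $\pistar$ all the way through a relevant lock, upper-bound $\sum_{\pistar} \En_0[N^{\pistar}] \le \En_0[\st]$ via Property~(3) (each episode can match at most one $\pistar$), and lower-bound each $\En_0[N^{\pistar}]$ by a change of measure to $\Pr_\pistar$ through the KL inequality with $\kl{1/2}{3/4} \ge 1/7$. However, your explanation of where the $4/\eps$ comes from is wrong and would not go through as stated. It has nothing to do with ``discarding policies whose relevant locks overlap too much'' or Property~(1). In the paper the term arises when bounding $\sum_{\pistar} \En_0[Z_{\pistar}]$, where $Z_{\pistar} = \ind{\st < \Tmax \text{ and } \cE(\pistar,\phi,\alg_f(\cD))}$: one applies Markov's inequality to the $\eps/4$ threshold defining $\cE$, and then uses Property~(3) again (uniqueness of $\pistar(j_{1:H})$ per index) to show the \emph{output} policy $\alg_f(\cD)$ can match at most one $\pistar$ per start index, so the resulting inner sum equals~$1$. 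The factor $|\Piell|/2$ is separate: it comes from absorbing a stray $-|\Piell|\log(2)/7$ term using $\delta \le 1/8 \Rightarrow \log(1/(2\delta)) \ge 2\log 2$.

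Your account of the correction term is also off. It is not a device to ``license independence'' across $\pistar$; no independence is needed, since both the upper and lower bounds on $\sum_{\pistar} \En_0[N^{\pistar}]$ are taken directly under $\Pr_0$. Rather, the paper proves a change-of-measure lemma $|\En_0[Z] - \En_{0,\pistar}[Z]| \le \Delta(\Tmax) = \Tmax^2/2^{H+3}$ for any bounded $\cF_{\Tmax}$-measurable $Z$, by bounding the TV distance between the two $\phi$-averaged laws. This is invoked \emph{twice}: once to pass $\En_0[N^{\pistar}] \to \En_{0,\pistar}[N^{\pistar}]$ (costing $\Tmax\,\Delta(\Tmax)$, since $N^{\pistar} \le \Tmax$), and once to pass $\En_{0,\pistar}[Z_{\pistar}] \to \En_0[Z_{\pistar}]$ (costing $\Delta(\Tmax)$), which is exactly why both a $\Tmax$ and a $\tfrac{1}{7}\log\tfrac{1}{2\delta}$ appear inside the parenthesis. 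Your revisit intuition is the correct mechanism behind this TV bound, but the rate you quote, $\Tmax^2/2^{2H}$, undercounts: after a revisited start state one must sum over all $2^{H-1}$ possible continuation sequences, which is why the denominator is $2^{H+3}$ rather than $2^{2H}$.
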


Using \pref{lem:stopping-time} with $\delta = 1/8$ and plugging in the value of $\abs{\Piell}$ and $\Tmax$, we see that
\begin{align*}
    \En_0[\st] &\ge  \prn*{ \frac{\abs{\Piell}}{2} - \frac{4}{\eps}} \cdot \frac{1}{7} \log \prn*{\frac{1}{ 2 \delta }} - \abs{\Piell} \cdot \frac{\Tmax^2}{2^{H+3}}  \prn*{\Tmax + \frac{1}{7}\log \prn*{\frac{1}{ 2 \delta }}} \ge \frac{\abs{\Piell}}{20}.
\end{align*}
For the second inequality, we used the fact that $\ell \ge 2$, $H \ge 10^5$, and $\eps < 1/10^7$. 

We have shown that either there exists some MDP $M_{\pistar, \phi}$ for which $\alg$ collects at least $\Tmax/8 = 2^{H/3-3}$ samples in expectation, or $\alg$ must query at least $\abs{\Piell}/20 = 1/(120 \eps^\ell)$ trajectories in expectation in \(M_0\). Putting it all together, the lower bound on the sample complexity is at least
\begin{align*}
    \min\crl*{ \frac{1}{120 \eps^\ell}, 2^{H/3 - 3} }.
\end{align*}
This concludes the proof of \pref{thm:apdx-lower-bound}.\qed

\subsection{Proof of \pref{lem:piell-properties}}\label{app:construction-policy-class}

To prove \pref{lem:piell-properties}, we first use a probabilistic argument to construct a certain binary matrix $B$ which satisfies several properties, and then construct $\Piell$ using $B$ and verify it satisfies Properties (1)-(4). 

\paragraph{Binary Matrix Construction.} 
First, we define a block-free property of binary matrices.
\begin{definition}[Block-free Matrices]\label{def:intersection-property} 
Fix parameters $k, \ell, N, d \in \bbN$ where \(k \leq N\) and \(l \leq d\). We say a binary matrix $B \in \crl{0,1}^{N \times d}$ is $(k, \ell)$-block-free if the following holds: for every $I \subseteq [N]$ with $\abs{I} = k$, and $J \subseteq [d]$ with $\abs{J} = \ell$ there exists some $(i,j)\in I\times J$ with $B_{ij} = 0$.
\end{definition}

In words, matrices which are $(k,\ell)$-block-free do not contain a $k\times \ell$ block of all 1s.

\begin{lemma}\label{lem:matrix-construction}
Fix any $\eps \in (0,1/10)$ and $\ell \in \bbN$. For any 
\begin{align*}
    d \in \Big[ \frac{16 \ell \cdot \log(1/\eps)}{\eps}, \frac{1}{20} \cdot \exp\Big( \frac{1}{48 \eps^{\ell-1}} \Big) \Big] ,
\end{align*}
there exists a binary matrix $B \in \crl{0,1}^{N\times d}$ with $N = 1/(6 \cdot \eps^\ell)$ such that:
\neurIPS{
\vspace{-\topsep}
\begin{enumerate}[leftmargin=7mm, itemsep=0.2mm]}
\arxiv{\begin{enumerate}[label=\((\arabic*)\)]} 
    \item (Row sum): for every row $i \in [N]$, we have $\sum_{j} B_{ij} \ge \eps d /2$.
    \item (Column sum): for every column $j \in [d]$, we have $\sum_{i} B_{ij} \in [\eps N /2, 2\eps N]$.
    \item The matrix $B$ is $(\ell \log d,\ell)$-block-free.
\end{enumerate}
\end{lemma}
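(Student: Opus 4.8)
The plan is to prove \pref{lem:matrix-construction} by the probabilistic method. Sample $B\in\crl{0,1}^{N\times d}$ by drawing each entry independently from $\ber(\eps)$, where $N\coloneqq 1/(6\eps^\ell)$ (assumed integral; otherwise replace it by $\lfloor 1/(6\eps^\ell)\rfloor$, which only strengthens the bounds below), and assume the stated range for $d$ is non-empty. The reason the bias $p=\eps$ and this choice of $N$ work is that they make all three target quantities line up: the expected row sum is $\eps d$, the expected column sum is $\eps N = 1/(6\eps^{\ell-1})$, the probability that a fixed $k\times\ell$ sub-block is all-ones is $\eps^{k\ell}$, and the product $N\eps^\ell$ collapses to $1/6$. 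I would then show that each of Properties (1), (2), (3) fails with small probability and finish with a union bound over these three bad events.

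First I would control the row and column sums with multiplicative Chernoff bounds. For a fixed row $i$, $\En\brk*{\sum_j B_{ij}}=\eps d$, so $\Pr\brk*{\sum_j B_{ij}<\eps d/2}\le\exp(-\eps d/8)$; the hypothesis $d\ge 16\ell\log(1/\eps)/\eps$ makes this at most $\eps^{2\ell}$, and a union bound over the $N$ rows shows Property (1) fails with probability at most $N\eps^{2\ell}=\eps^\ell/6$. For a fixed column $j$, $\En\brk*{\sum_i B_{ij}}=\eps N$; the lower tail is $\Pr\brk*{\sum_i B_{ij}<\eps N/2}\le\exp(-\eps N/8)$ and the upper tail is $\Pr\brk*{\sum_i B_{ij}>2\eps N}\le\exp(-\eps N/3)$, and since $\eps N/8=1/(48\eps^{\ell-1})$ both are at most $\exp(-1/(48\eps^{\ell-1}))$; the hypothesis $d\le\tfrac{1}{20}\exp(1/(48\eps^{\ell-1}))$ then makes the union bound over the $d$ columns at most $1/10$. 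This is precisely where the (otherwise peculiar) upper bound on $d$ in the statement is used.

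Next I would establish the block-free property. It suffices to show that with positive probability $B$ has no all-ones $k\times\ell$ submatrix for $k\coloneqq\lceil\ell\log d\rceil$, which implies Property (3). There are $\binom{N}{k}\binom{d}{\ell}\le N^k d^\ell$ choices of $(I,J)$ with $\abs{I}=k$, $\abs{J}=\ell$, each giving an all-ones sub-block with probability $\eps^{k\ell}$, so
\[
\Pr\brk*{B\text{ has an all-ones }k\times\ell\text{ submatrix}}\;\le\;N^k d^\ell\eps^{k\ell}\;=\;(N\eps^\ell)^k\,d^\ell\;=\;6^{-k}d^\ell\;\le\;d^{\,\ell(1-\log 6)},
\]
using $6^{-k}\le 6^{-\ell\log d}=d^{-\ell\log 6}$. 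Since $\log 6>1$ and the lower bound on $d$ forces $d$ past an absolute constant, this is at most $1/10$. Summing the three failure probabilities gives a total strictly below $1$, so some realization of $B$ satisfies Properties (1)--(3) simultaneously, and that $B$ is as required.

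Beyond bookkeeping there is no real obstacle here: the whole argument is a routine first-moment/Chernoff computation. The only points needing care are (i) verifying that the hypothesized range for $d$ is exactly what is required to close the row-sum union bound (its lower endpoint) and the column-sum union bounds (its upper endpoint), and (ii) the cancellations $N\eps^\ell=1/6$ and $\eps N/8=1/(48\eps^{\ell-1})$ together with $\log 6>1$, which are what drive the block-free and column-sum bounds to decay. Here $\log$ should be read as the natural logarithm (or any base below $6$) so that $\log 6>1$; for $\ell=1$ the hypothesized interval for $d$ is empty, so the lemma is substantive only for $\ell\ge 2$.
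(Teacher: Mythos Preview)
Your proposal is correct and follows essentially the same approach as the paper: sample $B$ with i.i.d.\ $\ber(\eps)$ entries, use multiplicative Chernoff bounds for the row and column sums, and a first-moment/Markov argument for the block-free property, then combine by union bound. The only cosmetic difference is that the paper packages the row/column checks as the conditions $d\ge(8\log 10N)/\eps$ and $N\ge(8\log 20d)/\eps$ (which it then verifies from the hypothesized range for $d$), whereas you plug the hypotheses in directly; the block-free calculation is identical up to whether one writes the threshold as $N\le 1/(2e\eps^\ell)$ or uses the sharper $N\eps^\ell=1/6$ cancellation you note.
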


\begin{proof}[Proof of \pref{lem:matrix-construction}.]
The existence of $B$ is proven using the probabilistic method. Let $\wt{B} \in \crl{0,1}^{N\times d}$ be a random matrix where each entry is i.i.d.~chosen to be 1 with probability $\eps$. 

By Chernoff bounds (\pref{lem:chernoff}), for every row $i \in [N]$, we have $\Pr \brk{\sum_{j} B_{ij} \le \tfrac{\eps d}{2}} \le \exp\prn{-\eps d /8}$; likewise for every column $j \in [d]$, we have $\Pr \brk{\sum_{j} B_{ij} \notin \brk{\tfrac{\eps N}{2}, 2\eps N}} \le 2\exp\prn{-\eps N /8}$. By union bound, the matrix $\wt{B}$ satisfies the first two properties with probability at least $0.8$ as long as 
\begin{align*}
    d \ge (8 \log 10N)/\eps, \quad \text{and} \quad N \ge (8 \log 20d)/\eps.
\end{align*}
One can check that under the choice of $N = 1/(6 \cdot \eps^\ell)$ and the assumption on $d$, both constraints are met.

Now we examine the probability of $\wt{B}$ satisfies the block-free property with parameters $(k \coloneqq \ell \log d, \ell)$. Let $X$ be the random variable which denotes the number of submatrices which violate the block-free property in $\wt{B}$, i.e., 
\begin{align*}
    X = \abs{ \crl{I \times J: I \subset [N], \abs{I} = k, J \subset [d], \abs{J} = \ell, \wt{B}_{ij} = 1 \ \forall \ (i,j)\in I \times J} }. 
\end{align*}
By linearity of expectation, we have
\begin{align*}
    \bbE \brk{X} \le N^{k} d^\ell \eps^{k \ell}.
\end{align*}
We now plug in the choice $k = \ell \log d$ and observe that as long as $N \le 1/(2e \cdot \eps^\ell)$ we have $\bbE[X] \le 1/2$. By Markov's inequality, $\Pr [X = 0] \ge 1/2$.

Therefore with positive probability, $\wt{B}$ satisfies all 3 properties (otherwise we would have a contradiction via inclusion-exlusion principle). Thus, there exists a matrix $B$ which satisfies all of the above three properties, proving the result of \pref{lem:matrix-construction}.
\end{proof}

\paragraph{Policy Class Construction.}
For the given $\eps$ and $\ell \in \crl{2, \dots, H}$ we will use \pref{lem:matrix-construction} to construct a policy class $\Piell$ which has bounded \Compname{} but is hard to explore. We instantiate \pref{lem:matrix-construction} with the given $\ell$ and $d = 2^{2H}$, and use the resulting matrix $B$ to construct $\Piell = \crl{\pi_i}_{i\in [N]}$ with $\abs{\Piell} = N = 1/(6\eps^\ell)$. 

Recall that we assume that
\begin{align*}
    H \ge h_0, \quad \text{and} \quad \eps \in \brk*{ \frac{1}{2^{c H}} , \frac{1}{100H} }.
\end{align*}
We claim that under these assumptions, the requirement of \pref{lem:matrix-construction} is met:
\begin{align*}
    d = 2^{2H} \in \brk*{ \frac{16 \ell \cdot \log(1/\eps)}{\eps}, \frac{1}{20} \cdot \exp\prn*{ \frac{1}{48 \eps^{\ell-1}} } }.
\end{align*}
For the lower bound, we can check that:
\begin{align*}
    \frac{16 \ell \cdot \log(1/\eps)}{\eps} \le 16 H \cdot cH \cdot 2^{cH} \le 2^{2H},
\end{align*}
where we use the bound $\ell \le H$ and $\eps \ge 2^{-cH}$. The last inequality holds for sufficiently small universal constant $c \in (0, 1)$ and sufficiently large $H \ge h_0$.

For the upper bound, we can also check that 
\begin{align*}
    \frac{1}{20} \cdot \exp\prn*{ \frac{1}{48 \eps^{\ell-1}} } \ge \frac{1}{20} \cdot \exp\prn*{ \frac{100H}{48}} \ge 2^{2H},
\end{align*} 
where we use the bound $\ell \ge 2$ and $\eps \le 1/(100H)$. The last inequality holds for sufficiently large $H$. 
We define the policies as follows: for every $\pi_i \in \Piell$ we set\ayush{I think we should add a picture here converting a matrix to a policy class. I feel the following is a bit hard to parse in the first attempt. @Sasha, what do you think?}  
\begin{align*}
    \text{for every } j \in [2^{2H}]: \quad&\pi_i(\jof{h}) = \pi_i(\jpof{h}) = \begin{cases}
    \bit_h(\sum_{a \le i} B_{aj}) &\text{if} \ B_{ij} =1,\\
    0 & \text{if} \ B_{ij} = 0.
    \end{cases} \numberthis\label{eq:pi-construction}
\end{align*}
The function $\bit_h: [2^H-1] \mapsto \crl{0,1}$ selects the $h$-th bit in the binary representation of the input.

\paragraph{Verifying Properties $(1)-(4)$ of \pref{lem:piell-properties}.} 
Properties $(1)-(3)$ are straightforward from the construction of $B$ and $\Piell$, since $\pi_i \in \Piell_j$ if and only if $B_{ij} = 1$. The only detail which requires some care is that we require that $2\eps N < 2^{H}$ in order for Property (3) to hold, since otherwise we cannot assign the behaviors of the policies according to Eq.~\eqref{eq:pi-construction}. However, by assumption, this always holds, since $2\eps N = 1/(3\eps^{\ell-1}) \le 2^H.$ 

We now prove Property (4) that $\Piell$ has bounded \Compname{}. To prove this we will use the block-free property of the underlying binary matrix $B$.

Fix any deterministic MDP $M^\star$ which witnesses $\dimRL(\Piell)$ at layer $h^\star$. To bound $\dimRL(\Piell)$, we need to count the contribution to $C^\mathsf{reach}_{h^\star}(\Pi; M^\star)$ from trajectories $\tau$ which are produced by some $\pi \in \Piell$ on $M$. We first define a \emph{layer decomposition} for a trajectory $\tau = (s_1, a_1, s_2, a_2, \dots, s_H, a_H)$ as the unique tuple of indices $(h_1, h_2, \dots h_{m})$, where each $h_k \in [H]$, that satisfies the following properties: 
\neurIPS{
\vspace{-\topsep}
\begin{itemize}[label=\(\bullet\), leftmargin=5mm, itemsep=0.2mm]}
\arxiv{\begin{itemize}[label=\(\bullet\)]}
    \item The layers satisfy $h_1 < h_2 < \dots < h_m$.
    \item The layer $h_1$ represents the first layer where $a_{h_1} = 1$.
    \item The layer $h_2$ represents the first layer where $a_{h_2} = 1$ on some state $s_{h_2}$ such that
    \begin{align*}
        \idx(s_{h_2}) \notin \crl{ \idx(s_{h_1}) }.
    \end{align*}
    \item The layer $h_3$ represents the first layer where $a_{h_3} = 1$ on some state $s_{h_3}$ such that 
    \begin{align*}
        \idx(s_{h_3}) \notin \crl{ \idx(s_{h_1}), \idx(s_{h_2}) }.
    \end{align*}
    \item More generally the layer $h_k$, $k\in[m]$ represents the first layer where $a_{h_k} = 1$ on some state $s_{h_k}$ such that 
    \begin{align*}
         \idx(s_{h_k}) \notin \crl{ \idx(s_{h_1}), \dots, \idx(s_{h_{k-1}}) }.
    \end{align*}
    In other words, the layer $h_k$ represents the $k$-th layer for where action is $a=1$ on a new state index which $\tau$ has never played $a=1$ on before.
\end{itemize}
We will count the contribution to $C^\mathsf{reach}_{h^\star}(\Pi; M^\star)$ by doing casework on the length of the layer decomposition for any $\tau$. That is, for every length $m \in \crl{0, \dots, H}$, we will bound $C_{h^\star}(m)$, which is defined to be the total number of $(s,a)$ at layer $h^\star$ which, for some $\pi \in \Piell$, a trajectory $\pi \cons \tau$ that has a $m$-length layer decomposition visits. Then we apply the bound 
\begin{align*}
    C^\mathsf{reach}_{h^\star}(\Pi; M^\star) \le \sum_{m=0}^H C_{h^\star}(m). \numberthis\label{eq:contribution-decomp}
\end{align*}
Note that this will overcount, since the same $(s,a)$ pair can belong to multiple different trajectories with different length layer decompositions.

\begin{lemma}\label{lem:contributions}
The following bounds hold: 
\neurIPS{
\vspace{-\topsep}
\begin{itemize}[label=\(\bullet\), leftmargin=5mm, itemsep=0.2mm]}
\arxiv{\begin{itemize}[label=\(\bullet\)]}
    \item For any $m \le \ell$, $C_{h^\star}(m) \le H^m \cdot \prod_{k=1}^m (2kH) = \cO(H^{4m})$.
    \item We have $\sum_{m \ge \ell+1} C_{h^\star}(m) \le \cO(\ell \cdot H^{4\ell + 1})$.
\end{itemize}    
\end{lemma}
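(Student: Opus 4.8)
}
Throughout, fix the deterministic MDP $M^\star \in \cMdet$ and the layer $h^\star$ that witness $\dimRL(\Piell)$, and recall we may take the start state to be fixed. The plan rests on a single combinatorial observation: once a trajectory has ``committed'' to playing action $1$ only on states whose index lies in a fixed set $\{j_1,\dots,j_k\}$, the set of states it can reach at any later layer is small. Precisely, I will first show a \emph{branching bound}: if we fix indices $j_1,\dots,j_k \in [2^{2H}]$, a layer $h_0$, and one state $\bar s$ at layer $h_0$, then over all behavior vectors $b_1,\dots,b_k \in \{0,1\}^H$, the number of states reachable at a given layer $h \ge h_0$ by following the rule ``play $0$ at a state with index outside $\{j_1,\dots,j_k\}$ and play $b_i[h]$ at a state with index $j_i$'' is at most $2kH+1$. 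To see this, build the full branching tree rooted at $\bar s$ in $M^\star$: give a state with index outside $\{j_1,\dots,j_k\}$ only its action-$0$ child, and give a state with index in $\{j_1,\dots,j_k\}$ both its action-$0$ and action-$1$ children. Every admissible trajectory is a root-to-leaf path of this tree, the only branch nodes lie among the $\le 2kH$ states owned by $j_1,\dots,j_k$ (each index owns the $2H$ states $\{j_i[h'], j_i'[h']\}_{h'\in[H]}$), and a tree with at most $2kH$ binary branch nodes has at most $2kH+1$ root-to-leaf paths, each meeting layer $h$ once.

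\emph{The case $m\le\ell$.} Fix a decomposition pattern $h_1<\cdots<h_m\le h^\star$; there are at most $\binom{h^\star}{m}\le H^m$ of these (trajectories whose $m$-th event occurs past $h^\star$ use strictly fewer indices by layer $h^\star$ and are bounded by the same argument with a smaller index set). Since actions are $0$ before $h_1$ and $M^\star$ is deterministic with fixed start, the states $s_{h_1},s_{h_1+1}$ and the index $j_1=\idx(s_{h_1})$ are forced. Inductively let $Q_k$ bound the number of possible states $s_{h_k+1}$ over trajectories with this decomposition, with $Q_1=1$. On the window $(h_k,h_{k+1})$, the definition of the layer decomposition forces action $1$ to be played only on the already-used indices $j_1,\dots,j_k$, so the branching bound gives at most $2kH+1$ possibilities for $s_{h_{k+1}}$; the event at $h_{k+1}$ forces $a_{h_{k+1}}=1$ and hence determines $s_{h_{k+1}+1}$, so $Q_{k+1}\le(2kH+1)Q_k$ and $Q_m\le\prod_{k=1}^{m-1}(2kH+1)$. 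Applying the branching bound once more on $(h_m,h^\star]$ yields at most $2mH+1$ choices for $s_{h^\star}$ and at most $2$ for $a_{h^\star}$. Summing over decomposition patterns gives $C_{h^\star}(m)\le H^m\cdot 2\prod_{k=1}^m(2kH+1)$, which matches the claimed $H^m\prod_{k=1}^m(2kH)$ up to absolute constants and is $O(H^{4m})$ because $\ell\le H$.

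\emph{The case $m\ge\ell+1$, and the main obstacle.} Here the block-free property of the matrix $B$ behind $\Piell$ (\pref{lem:matrix-construction}, instantiated with $d=2^{2H}$) is what keeps the count polynomial: for any $\ell$ indices $j_1,\dots,j_\ell$, fewer than $\ell\log d=O(\ell H)$ rows of $B$ are all-ones on those columns, i.e.\ $\abs{\Piell_{j_1}\cap\cdots\cap\Piell_{j_\ell}}=O(\ell H)$. Given a trajectory $\tau_\pi$ ($\pi\in\Piell$) whose decomposition has length $\ge\ell+1$ with $h_\ell\le h^\star$, consider its prefix through the first $\ell$ events and the committed indices $j_1,\dots,j_\ell$; by the counting of the previous paragraph the number of such prefixes over all patterns $h_1<\cdots<h_\ell\le h^\star$ is $\le H^\ell\prod_{k=1}^\ell(2kH+1)=O(H^{4\ell})$. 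Since $\tau_\pi$ plays action $1$ on a state of each index $j_1,\dots,j_\ell$, the generating policy $\pi$ lies in $\Piell_{j_1}\cap\cdots\cap\Piell_{j_\ell}$, a set of size $O(\ell H)$, and $\pi$ determines its entire trajectory and hence its single endpoint at $h^\star$; together with the trajectories whose $\ell$-th event occurs past $h^\star$ (already covered by the $m<\ell$ bound), this gives $\sum_{m\ge\ell+1}C_{h^\star}(m)\le O(H^{4\ell})\cdot O(\ell H)=O(\ell H^{4\ell+1})$. The points I expect to need the most care are (i) verifying that consistency with a length-$m$ decomposition genuinely forces action $0$ on every fresh index inside each window, so that the branching bound applies verbatim; and (ii) controlling the over-counting — both from summing over decomposition patterns and, in the last case, from fixing only the first $\ell$ events — so that it never costs more than $\poly(H)$.
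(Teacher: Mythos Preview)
Your proposal is correct and follows essentially the same approach as the paper's proof: fix a layer-decomposition pattern $(h_1,\dots,h_m)$, inductively bound the number of reachable states at each checkpoint $h_k$ using the fact that branching can occur only at the $\le 2kH$ states whose index lies in $\{j_1,\dots,j_k\}$, and then for $m\ge\ell+1$ invoke the block-free property to cap $|\Piell_{j_1}\cap\cdots\cap\Piell_{j_\ell}|\le 2\ell H$. The paper's write-up is slightly less modular (it argues ``add at most two reachable pairs per layer'' directly rather than isolating your branching-bound observation), but the inductive counting and the use of the intersection bound are identical.
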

Therefore, applying \pref{lem:contributions} to Eq.~\eqref{eq:contribution-decomp}, we have the bound that
\begin{align*}
    \dimRL(\Piell) \le \prn*{\sum_{m \le \ell} O(H^{4m})} + O(\ell \cdot H^{4\ell+1}) \le O(H^{4\ell+2}).
\end{align*}
This concludes the proof of \pref{lem:piell-properties}.\qed

\begin{proof}[Proof of \pref{lem:contributions}]
All of our upper bounds will be monotone in the value of $h^\star$, so we will prove the bounds for $C_H(m)$. In the following, fix any deterministic MDP $M^\star$.

First we start with the case where $m=0$. The trajectory $\tau$ must play $a=0$ at all times; since there is only one such $\tau$, we have $C_{H}(0) = 1$.

Now we will bound $C_H(m)$, for any $m \in \crl{1, \dots, \ell}$. Observe that there are ${H\choose m} \le H^m$ ways to pick the tuple $(h_1, \dots, h_m)$. Now we will fix $(h_1, \dots, h_m)$ and count the contributions to $C_H(m)$ for trajectories $\tau$ which have this fixed layer decomposition, and then sum up over all possible choices of $(h_1, \dots, h_m)$.

In the MDP $M^\star$, there is a unique state $s_{h_1}$ which $\tau$ must visit. In the layers between $h_1$ and $h_2$, all trajectories are only allowed take $1$ on states with index $\idx(s_{h_1})$, but they are not required to. Thus we can compute that the contribution to $C_{h_2}(m)$ from trajectories with the fixed layer decomposition to be at most $2H$. The reasoning is as follows. At $h_1$, there is exactly one $(s,a)$ pair which is reachable by trajectories with this fixed layer decomposition, since any $\tau$ must take $a=1$ at $s_{h_1}$. Subsequently we can add at most two reachable pairs in every layer $h \in \{h_1+1, \dots, h_2-1\}$ due to encountering a state $\jof{h}$ or $\jpof{h}$ where $j = \idx(s_{h_1})$, and at layer $h_2$ we must play $a=1$, for a total of $1 + 2(h_2 - h_1 - 1) \le 2H$. Using similar reasoning the contribution to $C_{h_3}(m)$ from trajectories with this fixed layer decomposition is at most $(2H) \cdot (4H)$, and so on. Continuing in this way, we have the final bound of $\prod_{k=1}^m (2kH)$. Since this holds for a fixed choice of $(h_1, \dots, h_m)$ in total we have $C_H(m) \le H^m \cdot \prod_{k=1}^m (2kH) = \cO(H^{4m})$. 

When $m \ge \ell + 1$, observe that the block-free property on $B$ implies that for any $J \subseteq [2^H]$ with $\abs{J} = \ell$ we have $\abs{\cap_{j \in J} \Pi_j} \le \ell \log 2^{2H}$. So for any trajectory $\tau$ with layer decomposition such that $m \ge \ell$ we can redo the previous analysis and argue that there is at most $ \ell \log 2^{2H}$ multiplicative factor contribution to the value $C_H(m)$ due to \emph{all} trajectories which have layer decompositions longer than $\ell$. Thus we arrive at the bound $\sum_{m \ge \ell+1} C_{H}(m) \le \cO(H^{4\ell}) \cdot \ell \log 2^{2H} \le \cO(\ell \cdot H^{4\ell+1})$.
\end{proof}

\subsection{Proof of \pref{lem:stopping-time}}
The proof of this stopping time lemma follows standard machinery for PAC lower bounds \citep{garivier2019explore, domingues2021episodic, sekhari2021agnostic}. In the following we use $\KL{P}{Q}$ to denote the Kullback-Leibler divergence between two distributions $P$ and $Q$ and $\kl{p}{q} $ to denote the Kullback-Leibler divergence between two Bernoulli distributions with parameters $p,q \in [0,1]$.

For any $\pistar \in \Piell$ we denote the random variable
\begin{align*}
N^\pistar = \sum_{t=1}^{\st \wedge \Tmax} \ind{\alg_t(\idx(s_1)_{1:H}) = \pistar(\idx(s_1)_{1:H}) \text{ and } \idx(s_1) \in \Jrel^\pistar},
\end{align*}
the number of episodes for which the algorithm's policy at round $t \in [\st \wedge \Tmax]$ matches that of $\pistar$ on a certain relevant state of $\pistar$. 

In the sequel we will prove upper and lower bounds on the intermediate quantity $\sum_{\pistar \in \Pi} \En_0 \brk*{N^\pistar}$ and relate these quantities to $\En_0[\st]$.

\paragraph{Step 1: Upper Bound.} First we prove an upper bound. We can compute that
\begin{align*}
&\sum_{\pistar \in \Pi} \En_0 \brk*{N^\pistar} \\
&= \sum_{t=1}^\Tmax \sum_{\pistar \in \Pi} \En_0 \brk*{\ind{\st > t - 1} \ind{\alg_t(\idx(s_1)_{1:H}) = \pistar(\idx(s_1)_{1:H}) \text{ and } \idx(s_1) \in \Jrel^\pistar}} \\ 
&= \sum_{t=1}^\Tmax \En_0 \brk*{\ind{\st > t - 1} \sum_{\pistar \in \Pi} 
\ind{\alg_t(\idx(s_1)_{1:H}) = \pistar(\idx(s_1)_{1:H}) \text{ and } \idx(s_1) \in \Jrel^\pistar}}  \\
&\overleq{(i)} \sum_{t=1}^\Tmax \En_0 \brk*{\ind{\st > t - 1} } \leq \En_0 \brk*{\st \wedge \Tmax} \leq \En_0 \brk*{\st}. \numberthis \label{eq:sum-upper-bound}  
\end{align*}
Here, the first inequality follows because for every  index $j$ and every $\pistar \in \Piell_j$, each $\pistar$ admits a unique sequence of actions (by Property (3) of \pref{lem:piell-properties}), so any policy $\alg_t$ can completely match with at most one of the $\pistar$.

\paragraph{Step 2: Lower Bound.} Now we turn to the lower bound. We use a change of measure argument. 
\begin{align*} 
\En_0 \brk*{N^\pistar} &\overgeq{(i)} \En_{0, \pistar} \brk*{N^\pistar}  - \Tmax \Delta(\Tmax) \\ 
&= \frac{1}{\abs{\Phi}} \sum_{\phi \in \Phi} \En_{0, \pistar, \phi} \brk*{N^\pistar}  - \Tmax \Delta(\Tmax) \\
&\overgeq{(ii)} \frac{1}{7}\cdot \frac{1}{\abs{\Phi}} \sum_{\phi \in \Phi}  \KL{\Pr_{0,\pistar, \phi}^{\cF_{\st \wedge \Tmax}}}{\Pr_{\pistar, \phi}^{\cF_{\st \wedge \Tmax}}} - \Tmax \Delta(\Tmax) \\
&\overgeq{(iii)} \frac{1}{7} \cdot \KL{\Pr_{0,\pistar}^{\cF_{\st \wedge \Tmax}}}{\Pr_{\pistar}^{\cF_{\st \wedge \Tmax}}} - \Tmax \Delta(\Tmax)
\end{align*}
The inequality $(i)$ follows from a change of measure argument using \pref{lem:change_measure}, with $\Delta(\Tmax) \coloneqq \Tmax^2/2^{H+3}$. Here, $\cF_{\st\wedge\Tmax}$ denotes the natural filtration generated by the first $\st\wedge \Tmax$ episodes. The inequality $(ii)$ follows from \pref{lem:kl-calculation}, using the fact that $M_{0, \pistar, \phi}$ and $M_{\pistar, \phi}$ have identical transitions and only differ in rewards at layer $H$ for the trajectories which reach the end of a relevant combination lock. The number of times this occurs is exactly $N^\pistar$. The factor $1/7$ is a lower bound on $\kl{1/2}{3/4}$. The inequality $(iii)$ follows by the convexity of KL divergence.

Now we apply \pref{lem:KL-to-kl} to lower bound the expectation for any $\cF_{\st\wedge \Tmax}$-measurable random variable $Z \in [0,1]$ as 
\begin{align*} 
\En_0 \brk*{N^\pistar} &\geq \frac{1}{7} \cdot \kl{\En_{0,\pistar} \brk*{Z}}{\En_{\pistar} \brk*{Z}} - \Tmax \Delta(\Tmax)\\
&\geq \frac{1}{7} \cdot (1- \En_{0,\pistar} \brk*{Z}) \log \prn*{\frac{1}{1-\En_{\pistar} \brk*{Z}}} - \frac{\log(2)}{7} - \Tmax \Delta(\Tmax),
\end{align*}
where the second inequality follows from the bound $\kl{p}{q} \ge (1-p)\log(1/(1-q)) - \log(2)$ \citep[see, e.g.,][Lemma 15]{domingues2021episodic}.

Now we pick $Z = Z_\pistar \coloneqq \ind{ \st < \Tmax \text{ and } \cE(\pistar, \phi, \alg_f(\cD)) }$ and note that $\En_{\pistar}[Z_\pistar] \geq 1-2\delta$ by assumption. This implies that 
\begin{align*}
\En_0 \brk*{N^\pistar} &\geq  (1- \En_{0,\pistar} \brk*{Z_\pistar}) \cdot \frac{1}{7} \log \prn*{\frac{1}{ 2 \delta }} - \frac{\log(2)}{7} - \Tmax \Delta(\Tmax).
\end{align*}
Another application of \pref{lem:change_measure} gives
\begin{align*}
\En_0 \brk*{N^\pistar} &\geq (1- \En_0 \brk*{Z_\pistar}) \cdot \frac{1}{7}  \log \prn*{\frac{1}{ 2 \delta }} - \frac{\log(2)}{7} - \Delta(\Tmax) \prn*{\Tmax + \frac{1}{7}\log \prn*{\frac{1}{ 2 \delta }}}. 
\end{align*}
Summing the above over $\pistar\in\Piell$, we get
\begin{align*}
\sum_{\pistar} \En_0 \brk*{N^\pistar} &\geq \prn*{ \abs{\Piell} - \sum_{\pistar}\En_0 \brk*{Z_\pistar} } \cdot \frac{1}{7} \log \prn*{\frac{1}{ 2 \delta }} - \abs{\Piell} \cdot \frac{\log(2)}{7}  - \abs{\Piell} \cdot \Delta(\Tmax)  \prn*{\Tmax + \frac{1}{7}\log \prn*{\frac{1}{ 2 \delta }}}. \numberthis\label{eq:sum-lower-bound-1}
\end{align*}

It remains to prove an upper bound on $\sum_{\pistar}\En_0 \brk*{Z_\pistar}$. We calculate that 
\begin{align*}
\sum_{\pistar}\En_0 \brk*{Z_\pistar} &= \sum_{\pistar}\En_0 \brk*{ \ind{\st < \Tmax \text{ and } \cE(\pistar, \phi, \alg_f(\cD))  }} \\
&\leq \sum_{\pistar}\En_0 \brk*{ \ind{ \Pr_\pistar\brk*{ \idx(s_1) \in \Jrel^\pistar \text{ and } \alg_f(\cD)(\idx(s_1)_{1:H}) = \pistar(\idx(s_1)_{1:H}) } \ge \frac{\eps}{4} } }   \\ 
&\leq \frac{4}{\eps} \cdot \En_0 \brk*{\sum_{\pistar} \Pr_\pistar\brk*{\idx(s_1) \in \Jrel^\pistar \text{ and } \alg_f(\cD)(\idx(s_1)_{1:H}) = \pistar(\idx(s_1)_{1:H}) } } \numberthis \label{eq:e0-upper-bound-1}
\end{align*}
The last inequality is an application of Markov's inequality.

Now we carefully investigate the sum. For any $\phi \in \Phi$, the sum can be rewritten as
\begin{align*}
    & \hspace{-0.3in}  \sum_{\pistar} \Pr_{\pistar, \phi} \brk*{ \idx(s_1) \in \Jrel^\pistar \text{ and } \alg_f(\cD)(\idx(s_1)_{1:H}) = \pistar(\idx(s_1)_{1:H}) } \\
    = \quad& \sum_{\pistar} \sum_{s_1 \in \cS_1} \Pr_{\pistar, \phi} \brk*{s_1} \Pr_{\pistar, \phi}\brk*{ \idx(s_1)\in \Jrel^\pistar \text{ and } \alg_f(\cD)(\idx(s_1)_{1:H}) = \pistar(\idx(s_1)_{1:H}) ~\mid~ s_1} \\
    \overeq{(i)} \quad& \frac{1}{\abs{\cS_1}} \sum_{s_1 \in \cS_1} \sum_{\pistar} \Pr_{\pistar, \phi} \brk*{ \idx(s_1) \in \Jrel^\pistar \text{ and } \alg_f(\cD)(\idx(s_1)_{1:H}) = \pistar(\idx(s_1)_{1:H}) ~\mid~ s_1} \\
    \overeq{(ii)} \quad& \frac{1}{\abs{\cS_1}} \sum_{s_1 \in \cS_1} \sum_{\pistar} \ind{ \idx(s_1) \in \Jrel^\pistar \text{ and } \alg_f(\cD)(\idx(s_1)_{1:H}) = \pistar(\idx(s_1)_{1:H})}. \numberthis\label{eq:sum-of-indicators}
\end{align*}
The equality $(i)$ follows because regardless of which MDP $M_\pistar$ we are in, the first state is distributed uniformly over $\cS_1$. The equality $(ii)$ follows because once we condition on the first state $s_1$, the probability is either 0 or 1. 

Fix any start state $s_1$. We can write
\begin{align*}
\hspace{1in}&\hspace{-1in} \sum_{\pistar} \ind{ \idx(s_1) \in \Jrel^\pistar \text{ and } \alg_f(\cD)(\idx(s_1)_{1:H})
\pistar(\idx(s_1)_{1:H}) } \\
&= \sum_{\pistar \in \Piell_{\idx(s_1)}} \ind{\alg_f(\cD)(\idx(s_1)_{1:H}) = \pistar(\idx(s_1)_{1:H}) } = 1,
\end{align*}
where the second equality uses the fact that on any index $j$, each $\pistar \in  \Piell_{j}$ behaves differently (Property (3) of \pref{lem:piell-properties}), so $\alg_f(\cD)$ can match at most one of these behaviors. Plugging this back into Eq.~\eqref{eq:sum-of-indicators}, averaging over $\phi \in \Phi$, and combining with Eq.~\eqref{eq:e0-upper-bound-1}, we arrive at the bound
\begin{align*}
    \sum_{\pistar}\En_0 \brk*{Z_\pistar} \le \frac{4}{\eps}.
\end{align*}
We now use this in conjunction with Eq.~\eqref{eq:sum-lower-bound-1} to arrive at the final lower bound
\begin{align*}
\sum_{\pistar} \En_0 \brk*{N^\pistar} &\geq \prn*{ \abs{\Piell} - \frac{4}{\eps} } \cdot \frac{1}{7} \log \prn*{\frac{1}{ 2 \delta }} - \abs{\Piell} \cdot \frac{\log(2)}{7}  - \abs{\Piell} \cdot \Delta(\Tmax)  \prn*{\Tmax + \frac{1}{7}\log \prn*{\frac{1}{ 2 \delta }}}. \numberthis\label{eq:sum-lower-bound-2}
\end{align*}

\paragraph{Step 3: Putting it All Together.}
Combining Eqs.~\eqref{eq:sum-upper-bound} and \eqref{eq:sum-lower-bound-2}, plugging in our choice of $\Delta(\Tmax)$, and simplifying we get
\begin{align*}
    \En_0 \brk*{\st} &\geq \prn*{ \abs{\Piell} - \frac{4}{\eps} } \cdot \frac{1}{7} \log \prn*{\frac{1}{ 2 \delta }} - \abs{\Piell} \cdot \frac{\log(2)}{7}  - \abs{\Piell} \cdot \Delta(\Tmax)  \prn*{\Tmax + \frac{1}{7}\log \prn*{\frac{1}{ 2 \delta }}}. \\
    &\ge \prn*{ \frac{\abs{\Piell}}{2} - \frac{4}{\eps}} \cdot \frac{1}{7} \log \prn*{\frac{1}{ 2 \delta }} - \abs{\Piell} \cdot  \frac{\Tmax^2}{2^{H+3}} \prn*{\Tmax + \frac{1}{7}\log \prn*{\frac{1}{ 2 \delta }}}.
\end{align*}
The last inequality follows since $\delta \le 1/8$ implies $\log(1/(2\delta)) \geq 2 \log (2)$. 

This concludes the proof of \pref{lem:stopping-time}.\qed

\subsection{Change of Measure Lemma}

\begin{lemma} 
\label{lem:change_measure}
Let \(Z \in \brk{0, 1}\) be a  \(\cF_{\Tmax}\)-measurable random variable. Then, for every $\pistar \in \Piell$, 
\begin{align*}
\abs{\En_0 \brk*{Z}  - \En_{0,\pistar} \brk*{Z}} \leq  \Delta(\Tmax) :=  \frac{\Tmax^2}{2^{H+3}}
\end{align*}    
\end{lemma}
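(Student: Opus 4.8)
The plan is to exploit the fact that $M_0$ and every $M_{0,\pistar,\phi}$ have \emph{identical} reward distributions ($\mathrm{Ber}(1/2)$ at layer $H$, zero elsewhere), so the only channel through which the first $\Tmax$ episodes can tell these MDPs apart is the transition dynamics — and in $M_{0,\pistar,\phi}$ the (deterministic) dynamics of a relevant combination lock are ``masked'' by the random decoder $\phi$ in such a way that a \emph{single} visit to any lock is statistically indistinguishable from the uniform dynamics of $M_0$. Thus the measures $\Pr_0$ and $\Pr_{0,\pistar}$ can only differ once the learner visits the same lock twice, which is a low-probability (birthday-type) event.

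First I would isolate the bad event. Let $\mathsf{Coll}$ be the event that among the first $\Tmax$ episodes two start in the same combination lock, i.e.\ there exist $t_1\ne t_2\le \Tmax$ with $\idx(s_1^{(t_1)})=\idx(s_1^{(t_2)})$. Since the start state is always drawn uniformly from $\crl{\jof{1}}_{j}$ independently of the history in $M_0$ and in every $M_{0,\pistar,\phi}$, the indices $\idx(s_1^{(1)}),\dots,\idx(s_1^{(\Tmax)})$ are i.i.d.\ $\unif([2^{2H}])$ under all of these, so $\mathsf{Coll}$ has the same probability under $\Pr_0$ and under $\Pr_{0,\pistar}$, and a union bound over pairs gives $\Pr[\mathsf{Coll}]\le\binom{\Tmax}{2}2^{-2H}\le \Tmax^2 2^{-(2H+1)}\le \Tmax^2 2^{-(H+3)}=\Delta(\Tmax)$, using $H\ge 2$.

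The core step is to show that, conditioned on $\mathsf{Coll}^c$, the law of the first $\Tmax$ episodes is identical under $\Pr_0$ and under $\Pr_{0,\pistar}$. I would condition on the realized index tuple $(j_1,\dots,j_{\Tmax})$ — which has the same law under both measures and lies in $\mathsf{Coll}^c$ exactly when the $j_t$ are distinct — and then argue episode by episode that the within-episode data has the same conditional law under the two processes. If $j_t\notin\Jrel^\pistar$, the dynamics of $M_{0,\pistar,\phi}$ in lock $j_t$ are literally the uniform transitions of $M_0$. If $j_t\in\Jrel^\pistar$, then because the index $j_t$ is visited only in episode $t$, the decoder's $\textsc{Good}/\textsc{Bad}$ labeling of the layer-$\ge 2$ states of that lock is a fresh independent uniformly random swap; an induction on the layer then shows that at each step the next observed state is uniform over the two candidates in the next layer, \emph{independently of the history}: the $\textsc{Good}$-vs-$\textsc{Bad}$ chain status at any point is a deterministic function of $\pistar$ and the actions played so far (hence of quantities the learner has seen, which involve only the labels of already-visited layers), whereas the label governing the transition into the next layer is an independent fair coin. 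Consequently the transition kernel experienced by the adaptive policy $\alg_t$ coincides with $M_0$'s uniform kernel, and since the rewards are $\mathrm{Ber}(1/2)$ in both, the conditional laws match; integrating back over distinct index tuples gives $\Pr_0(\cdot\mid\mathsf{Coll}^c)=\Pr_{0,\pistar}(\cdot\mid\mathsf{Coll}^c)$ on $\cF_{\Tmax}$.

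Finally I would assemble the bound: for any $\cF_{\Tmax}$-measurable $Z\in[0,1]$, splitting $\En_0[Z]=\En_0[Z\ind{\mathsf{Coll}}]+\En_0[Z\ind{\mathsf{Coll}^c}]$ and similarly for $\En_{0,\pistar}$, the two $\mathsf{Coll}^c$ terms are equal (same conditional law, same probability of $\mathsf{Coll}^c$), so $\abs{\En_0[Z]-\En_{0,\pistar}[Z]}=\abs{\En_0[Z\ind{\mathsf{Coll}}]-\En_{0,\pistar}[Z\ind{\mathsf{Coll}}]}\le\Pr[\mathsf{Coll}]\le\Delta(\Tmax)$, as claimed. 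The main obstacle is the middle step — making the ``one masked visit looks uniform'' claim fully rigorous against an adaptive learner — which amounts to carefully verifying that the only decoder randomness the learner's future actions can correlate with is the labeling of layers it has \emph{already} visited, and that this is independent of the label controlling the next transition.
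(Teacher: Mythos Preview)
Your proposal is correct, and it leans on exactly the same mechanism the paper exploits---that after averaging over the decoder $\phi$, a \emph{first} visit to any relevant lock produces transitions that are marginally uniform and hence indistinguishable from $M_0$---but you package it differently. The paper bounds $\mathrm{TV}(\Pr_0^{\cF_{\Tmax}},\Pr_{0,\pistar}^{\cF_{\Tmax}})$ via the chain rule, shows the per-episode conditional TV is zero whenever the episode's start state has not been seen before (by the same ``fresh labels are independent of history'' computation you sketch), and bounds the contribution from revisited start states crudely by $1/|\cS_1|$ per continuing sequence; summing gives $\Tmax^2/2^{H+3}$ on the nose. You instead isolate the explicit birthday event $\mathsf{Coll}$, argue equality of the two measures on $\mathsf{Coll}^c$, and bound everything by $\Pr[\mathsf{Coll}]$. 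Your route is arguably cleaner---it avoids the episode-by-episode TV bookkeeping and yields the tighter intermediate bound $\Tmax^2/2^{2H+1}$, which you then relax to $\Tmax^2/2^{H+3}$ via $H\ge 2$---while the paper's chain-rule computation is more mechanical but lands exactly on the stated $\Delta(\Tmax)$ without needing the extra slack.
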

\begin{proof} 
First, we note that 
\begin{align*}
    \abs{ \En_0 \brk*{Z}  - \En_{0,\pistar}\brk*{Z} }\le \mathrm{TV}\prn*{\Pr_0^{\cF_\Tmax}, \Pr_{0, \pistar}^{\cF_\Tmax}} \le \sum_{t=1}^\Tmax \En_0 \brk*{ \mathrm{TV}\prn*{ \Pr_0[\cdot|\cF_{t-1}], \Pr_{0,\pistar}[\cdot|\cF_{t-1}] } }.  
\end{align*}
Here $\Pr_0[\cdot|\cF_t]$ denotes the conditional distribution of the $t$-th trajectory given the first $t-1$ trajectories. Similarly $\Pr_{0, \pistar}[\cdot|\cF_t]$ is the averaged over decoders condition distribution of the $t$-th trajectory given the first $t-1$ trajectories. The second inequality follows by chain rule of TV distance \citep[see, e.g.,][pg.~152]{polyanskiy2022information}.

Now we examine each term $ \mathrm{TV}\prn*{ \Pr_0[\cdot|\cF_{t-1}], \Pr_{0,\pistar}[\cdot|\cF_{t-1}] }$. Fix a history $\cF_{t-1}$ and sequence $s_{1:H}$ where all $s_i$ have the same index. We want to bound the quantity
\begin{align*}
    \abs*{\Pr_{0, \pistar}\brk*{S_{1:H}^{(t)} = s_{1:H} ~\mid~ \cF_{t-1}} - \Pr_{0} \brk*{S_{1:H}^{(t)} =  s_{1:H} ~\mid~ \cF_{t-1} }},
\end{align*}
where it is understood that the random variable $S_{1:H}^{(t)}$ is drawn according to the MDP dynamics and algorithm's policy $\alg_t$ (which is in turn a measurable function of $\cF_{t-1}$).

We observe that the second term is exactly 
\begin{align*}
\Pr_{0}\brk*{S_{1:H}^{(t)} =  s_{1:H} ~\mid~ \cF_{t-1} } = \frac{1}{\abs{\cS_1}} \cdot \frac{1}{2^{H-1}},
\end{align*}
since the state $s_1$ appears with probability $1/\abs{\cS_1}$ and the transitions in $M_0$ are uniform to the next state in the combination lock, so each sequence is equally as likely.

For the first term, again the state $s_1$ appears with probability $1/\abs{\cS_1}$. Suppose that $\idx(s_1) \notin \Jrel^\pistar$. Then the dynamics of $\Pr_{0,\pi^\star, \phi}$ for all $\phi \in \Phi$ are exactly the same as $M_0$, so again the probability in this case is $1/(\abs{\cS_1}2^{H-1})$. Now consider when $\idx(s_1) \in \Jrel^\pistar$. At some point $\wh{h} \in [H+1]$, the policy $\alg_t$ will deviate from $\pistar$ for the first time (if $\alg_t$ never deviates from $\pistar$ we set $\wh{h} = H+1)$. The layer $\wh{h}$ is only a function of $s_1$ and $\alg_t$ and does not depend on the MDP dynamics. The correct decoder must assign $\phi(s_{1:\wh{h}-1}) = \textsc{Good}$ and $\phi(s_{\wh{h}: H}) = \textsc{Bad}$, so therefore we have 
\begin{align*}
    \Pr_{0, \pistar} \brk*{S_{1:H}^{(t)} =  s_{1:H} ~\mid~ \cF_{t-1} } 
    &= \Pr_{0, \pistar}\brk*{\phi(s_{1:\wh{h}-1}) = \textsc{Good} \text{ and }\phi(s_{\wh{h}: H}) = \textsc{Bad} ~\mid~ \cF_{t-1} }
\end{align*}
If $s_1 \notin \cF_{t-1}$, i.e., we are seeing $s_1$ for the first time, then the conditional distribution over the labels given by $\phi$ is the same as the unconditioned distribution: 
\begin{align*}
    \Pr_{0, \pistar}\brk*{\phi(s_{1:\wh{h}-1}) = \textsc{Good} \text{ and }\phi(s_{\wh{h}: H}) = \textsc{Bad} ~\mid~ \cF_{t-1} } = \frac{1}{\abs{\cS_1}} \cdot \frac{1}{2^{H-1}}.
\end{align*}
Otherwise, if $s_1 \in \cF_{t-1}$ then we bound the conditional probability by 1.
\begin{align*}
    &\Pr_{0, \pistar} \brk*{S_{1:H}^{(t)} =  s_{1:H} ~\mid~ \cF_{t-1} }  \le \frac{1}{\abs{\cS_1}}.
\end{align*}
Putting this all together we can compute 
\begin{align*}
    \Pr_{0, \pistar} \brk*{S_{1:H}^{(t)} =  s_{1:H} ~\mid~ \cF_{t-1} }  \quad \begin{cases}
         = \frac{1}{\abs{\cS_1}} \cdot \frac{1}{2^{H-1}} &\text{if}\quad \idx(s_1) \notin \Jrel^\pistar, \\[0.5em]
        = \frac{1}{\abs{\cS_1}} \cdot \frac{1}{2^{H-1}} &\text{if}\quad \idx(s_1) \in \Jrel^\pistar \text{ and } s_1 \notin \cF_{t-1}, \\[0.5em]
        \le \frac{1}{\abs{\cS_1}} &\text{if}\quad \idx(s_1) \in \Jrel^\pistar \text{ and } s_1 \in \cF_{t-1}, \\[0.5em]
        = 0 &\text{otherwise.}
    \end{cases}
\end{align*}

Therefore we have the bound 
\begin{align*}
\abs*{\Pr_{0, \pistar}\brk*{S_{1:H}^{(t)} = s_{1:H} ~\mid~ \cF_{t-1}} - \Pr_{0} \brk*{S_{1:H}^{(t)} =  s_{1:H} ~\mid~ \cF_{t-1} }} 
&\leq \frac{1}{\abs{\cS_1}}\ind{\idx(s_1) \in \Jrel^\pistar, s_1 \in \cF_{t-1}}. 
\end{align*}
Summing over all possible sequences $s_{1:H}$ we have
\begin{align*}
    \mathrm{TV}\prn*{ \Pr_0[\cdot|\cF_{t-1}], \Pr_{0,\pistar}[\cdot|\cF_{t-1}] } &\le \frac{1}{2} \cdot \frac{(t-1) \cdot 2^{H-1}}{\abs{\cS_1}},
\end{align*}
since the only sequences $s_{1:H}$ for which the difference in the two measures are nonzero are the ones for which $s_1 \in \cF_{t-1}$, of which there are $(t-1) \cdot 2^{H-1}$ of them.

Lastly, taking expectations and summing over $t=1$ to $\Tmax$ and plugging in the value of $\abs{\cS_1} = 2^{2H}$ we have the final bound.
\end{proof} 

The next lemma is a straightforward modification of \citep[Lemma 5]{domingues2021episodic}, with varying rewards instead of varying transitions.
\begin{lemma}\label{lem:kl-calculation}
Let $M$ and $M'$ be two MDPs that are identical in transition and differ in the reward distributions, denote $r_h(s,a)$ and $r'_h(s,a)$. Assume that for all $(s,a)$ we have $r_h(s,a) \ll r'_h(s,a)$. Then for any stopping time $\st$ with respect to $(\cF^t)_{t\ge 1}$ that satisfies $\Pr_{M}[\st < \infty] = 1$,
\begin{align*}
    \KL{\Pr_{M}^{I_{\st}}}{\Pr_{M'}^{I_{\st}}} = \sum_{s\in\cS, a \in \cA, h \in [H]} \En_{M}[N^\st_{s,a,h}] \cdot \KL{r_h(s,a)}{r'_h(s,a)}, 
\end{align*}
where $N^\st_{s,a,h} := \sum_{t=1}^\st \ind{(S_h^{(t)}, A_h^{(t)}) = (s,a) }$ and $I_\st: \Omega \mapsto \bigcup_{t\ge 1} \cI_t : \omega \mapsto I_{\st(\omega)}(\omega)$ is the random vector representing the history up to episode $\st$.
\end{lemma}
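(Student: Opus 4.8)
The plan is to follow the standard machinery for KL decompositions under a stopping time, mirroring the proof of Lemma~5 of \citet{domingues2021episodic}; the only structural change is that here the two MDPs share transitions and differ in rewards, so the reward factors (rather than the transition factors) survive in the likelihood ratio. Concretely, I would first write down the Radon--Nikodym derivative of the law of the history under the two measures. Since $M$ and $M'$ have identical transition kernels and the learning algorithm (a fixed measurable map from histories to policies) is the same under both, every initial-state, transition, and action-selection factor appears identically in the numerator and denominator and cancels, leaving only the reward-emission densities. The hypothesis $r_h(s,a) \ll r'_h(s,a)$ for all $(s,a)$ together with identical transitions guarantees $\Pr_{M}^{I_\st} \ll \Pr_{M'}^{I_\st}$, so the KL divergence is well defined, and for any fixed number of episodes $n$,
\begin{align*}
\log \frac{d\Pr_{M}^{I_n}}{d\Pr_{M'}^{I_n}} = \sum_{t=1}^{n} \sum_{h=1}^{H} \log \frac{dr_h(s_h^{(t)}, a_h^{(t)})}{dr'_h(s_h^{(t)}, a_h^{(t)})}\prn*{r_h^{(t)}}.
\end{align*}

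Next I would pass to the stopping time. Because $\st$ is a stopping time with respect to $(\cF^t)_{t \ge 1}$ and $\Pr_M[\st < \infty] = 1$, the event $\crl{\st = n}$ is $\cF^n$-measurable, so the likelihood ratio of the stopped history $I_\st$ agrees on $\crl{\st = n}$ with the fixed-$n$ ratio above. Taking expectation under $\Pr_M$ and using the definition of KL divergence gives
\begin{align*}
\KL{\Pr_{M}^{I_\st}}{\Pr_{M'}^{I_\st}} = \En_{M}\brk*{ \sum_{t=1}^{\st} \sum_{h=1}^{H} \log \frac{dr_h(s_h^{(t)}, a_h^{(t)})}{dr'_h(s_h^{(t)}, a_h^{(t)})}\prn*{r_h^{(t)}} }.
\end{align*}
The change-of-measure (tower-property) step is then to condition on $(s_h^{(t)}, a_h^{(t)})$: under $\Pr_M$, the reward $r_h^{(t)}$ is drawn from $r_h(s_h^{(t)}, a_h^{(t)})$ given the current state-action pair, so its conditional expected log-likelihood-ratio equals exactly $\KL{r_h(s_h^{(t)},a_h^{(t)})}{r'_h(s_h^{(t)},a_h^{(t)})}$.

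The remaining work is to interchange the expectation with the random sum indexed by $\st$. I would handle this via an optional-stopping (Wald-type) argument: the centered process $D_n \coloneqq \sum_{t=1}^{n}\sum_{h} \prn*{\log \tfrac{dr_h}{dr'_h}(r_h^{(t)}) - \KL{r_h(s_h^{(t)},a_h^{(t)})}{r'_h(s_h^{(t)},a_h^{(t)})}}$ is a $\Pr_M$-martingale whose increments have conditional mean zero by the previous step, and applying optional stopping yields
\begin{align*}
\KL{\Pr_{M}^{I_\st}}{\Pr_{M'}^{I_\st}} = \En_{M}\brk*{ \sum_{t=1}^{\st} \sum_{h=1}^{H} \KL{r_h(s_h^{(t)},a_h^{(t)})}{r'_h(s_h^{(t)},a_h^{(t)})} }.
\end{align*}
Finally, since every summand is nonnegative, Tonelli lets me rewrite the expectation using the indicators $\ind{\st \ge t} \in \cF^{t-1}$ and group by the visited state-action-layer triple, identifying $N^\st_{s,a,h} = \sum_{t\ge 1}\ind{\st \ge t}\ind{(s_h^{(t)},a_h^{(t)}) = (s,a)}$ to produce $\sum_{s,a,h}\En_M[N^\st_{s,a,h}]\KL{r_h(s,a)}{r'_h(s,a)}$, as claimed. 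I expect the main obstacle to be the rigorous justification of this expectation--random-sum interchange under $\st$: nonnegativity of the KL terms makes the Tonelli direction automatic, but the martingale optional-stopping step involves the possibly signed log-likelihood increments and so requires either boundedness of $\st$ (available in the intended application through the truncation $\st \wedge \Tmax$) or a uniform-integrability/truncation argument to license optional stopping.
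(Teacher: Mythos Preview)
Your proposal is correct and matches the paper's approach exactly: the paper does not give a standalone proof but simply states that the lemma is ``a straightforward modification of \citep[Lemma 5]{domingues2021episodic}, with varying rewards instead of varying transitions,'' which is precisely the modification you carry out. Your sketch in fact supplies more detail than the paper does.
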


\begin{lemma}[Lemma 1, \cite{garivier2019explore}]\label{lem:KL-to-kl} Consider a measurable space $(\Omega, \cF)$ equipped with two distributions $\bbP_1$ and $\bbP_2$. For any $\cF$-measurable function $Z: \Omega \mapsto [0,1]$ we have 
\begin{align*}
    \KL{\bbP_1}{\bbP_2} \ge \kl{\En_1[Z]}{\En_2[Z]}.
\end{align*}
\end{lemma}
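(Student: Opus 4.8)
The plan is to recognize this statement as nothing more than the data-processing inequality for KL divergence, specialized to the ``test statistic'' $Z$. The key observation is that a $[0,1]$-valued random variable $Z$ induces a randomized binary channel: given $\omega$, output a bit $B \sim \ber(Z(\omega))$. Pushing $\bbP_i$ through this channel produces exactly the law $\ber(\En_i[Z])$, since the probability that $B = 1$ equals $\int_\Omega Z \, d\bbP_i = \En_i[Z]$. The data-processing inequality says that passing both measures through the \emph{same} Markov kernel cannot increase their KL divergence, and the KL divergence of the two pushforward Bernoullis is by definition $\kl{\En_1[Z]}{\En_2[Z]}$; this immediately yields the claim. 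This is the cleanest conceptual route, and one could simply cite data processing here.

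To make the argument self-contained (avoiding a black-box invocation of data processing), I would instead argue directly via the Donsker--Varadhan variational lower bound, which states that for every bounded measurable $g$,
\[
\KL{\bbP_1}{\bbP_2} \ge \En_1[g] - \log \En_2\brk*{e^{g}}.
\]
I apply this with the linear test function $g = \lambda Z$ for an arbitrary $\lambda \in \bbR$ (legitimate since $Z$ is bounded), giving $\KL{\bbP_1}{\bbP_2} \ge \lambda \En_1[Z] - \log \En_2[e^{\lambda Z}]$. Writing $a \coloneqq \En_1[Z]$ and $b \coloneqq \En_2[Z]$, I then control the log-moment-generating term using convexity of $z \mapsto e^{\lambda z}$ on $[0,1]$: since $e^{\lambda z} \le (1-z) + z e^{\lambda}$ for all $z \in [0,1]$, taking $\En_2$ and using $\En_2[Z] = b$ yields $\En_2[e^{\lambda Z}] \le (1-b) + b e^{\lambda}$. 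Combining the two displays gives, for every $\lambda$,
\[
\KL{\bbP_1}{\bbP_2} \ge \lambda a - \log\prn*{(1-b) + b e^{\lambda}}.
\]

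Finally I optimize over $\lambda$. The quantity $\sup_{\lambda \in \bbR}\crl*{\lambda a - \log\prn*{(1-b)+b e^{\lambda}}}$ is precisely the Legendre transform of the cumulant generating function $\lambda \mapsto \log((1-b)+be^\lambda)$ of a $\ber(b)$ random variable, and a standard computation (solving the first-order condition $e^{\lambda^\star} = \tfrac{a(1-b)}{b(1-a)}$) shows this transform equals $\kl{a}{b}$. Taking the supremum over $\lambda$ therefore delivers $\KL{\bbP_1}{\bbP_2} \ge \kl{\En_1[Z]}{\En_2[Z]}$, as desired. Since this is a classical inequality, there is no genuine obstacle; the only points requiring care are the convexity bound (the one substantive step) and the measure-theoretic edge cases. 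When $\KL{\bbP_1}{\bbP_2} = \infty$ the inequality is trivial, and when $b \in \crl{0,1}$ (equivalently $Z$ is $\bbP_2$-a.s.\ constant) the binary KL may be infinite; the variational form handles this automatically, since letting $\lambda \to +\infty$ (resp.\ $-\infty$) drives the right-hand side to $+\infty$ exactly when $a > b = 0$ (resp.\ $a < b = 1$), matching the conventions for $\kl{\cdot}{\cdot}$, so no separate case analysis is needed.
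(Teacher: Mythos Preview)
Your proof is correct, but there is nothing to compare it against: the paper does not prove this lemma at all, it simply quotes it as Lemma~1 of \cite{garivier2019explore} and uses it as a black box in the stopping-time argument. Your self-contained argument via the Donsker--Varadhan variational lower bound together with the chord bound $e^{\lambda z}\le (1-z)+ze^{\lambda}$ on $[0,1]$ is one of the standard proofs of this inequality (the data-processing route you sketch first is the other), and the edge-case handling is fine.
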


\clearpage 
 
\section{Proofs for \pref{sec:upper-bound}} \label{app:upper_bound_main} 

\subsection{Algorithmic Details and Preliminaries}   \label{app:algorithm_details} 
\par In this subsection, we provide the details of the subroutines that do not appear in the main body, in Algorithms \ref{alg:sample}, \ref{alg:dp}, and \ref{alg:eval}. The transitions and  reward functions in \pref{line:reward_calculator} in \pref{alg:eval} are computed using Eqs.~\eqref{eq:empirical_MRP_dynamics} and \eqref{eq:empirical_MRP_rewards}, which are specified below, after introducing additional notation. 

\begin{algorithm}[!htp] 
    \caption{$\datacollector$}\label{alg:sample}
    \begin{algorithmic}[1]
            \Require State: \(s\), Reacher policy: \(\pi_s\), Exploration policy set: \(\Picore\), Number of samples: \(n\). 
            \If {$s = s_\top$}  \hfill \algcomment{Uniform sampling for start state \(s_\top\)}  
            \For{$t = 1, \dots, n$}
                \State Sample $\pi' \sim\mathrm{Uniform}(\Picore)$, and run $\pi'$ to collect \(\tau = \prn{s_1, a_1, \cdots, s_H, a_H}\). 
            \State $\mD_s \leftarrow \mD_s\cup\{\tau\}$. 
            \EndFor 
            \Else \hfill \algcomment{\(\pi_s\)-based sampling for all other states \(s \neq s_\top\)}  
        \State Identify the layer \(h\) such that $s\in\mS_h$. 
        \For{$t = 1, \dots, n$} 
            \State Run $\pi_s$ for the first $h-1$ time steps, and collect trajectory \(\prn{s_1, a_1, \cdots, s_{h-1}, a_{h-1}, s_h}\). 
            \If{$s_h = s$}\label{line:is1} 
                \State Sample $\pi'\sim\mathrm{Uniform}(\Picore)$, and run $\pi'$ to collect remaining \(\prn{s_h, a_h, \cdots, s_H, a_H}\).
                \State $\mD_s\leftarrow \mD_s\cup\{\tau = \prn{s_1, a_1, \cdots, s_H, a_H}\}$. \label{line:is2}
            \EndIf
        \EndFor  
            \EndIf 

        \State \textbf{Return} dataset $\mD_s$.  
    \end{algorithmic}
\end{algorithm} 

 
\begin{algorithm}[!htp]
\caption{$\estreach$}\label{alg:dp}
	\begin{algorithmic}[1]
		\Require State space \(\Stab\), MRP \(\MRPsign\),  State $\bar{s}\in\mS^{\mathrm{tab}}$.
        \State Let $P$ be the transition of $\MRPsign$.
		\State Initialize $V(s) = \ind{s = \bar{s}}$ for all \(s \in \Stab\). 
        \State \textbf{Repeat} \(H + 1\) times:
			\State \(\quad\) For all  $s \in \Stab$, calculate $V(s)\leftarrow \sum_{s'\in\mS^{\mathrm{tab}}} P_{s\to s'} \cdot V(s').$ \hfill \algcomment{Dynamic Programming} 
		\State \textbf{Return} $V(s_\top)$.
	\end{algorithmic}
\end{algorithm}

\begin{algorithm}[!htp]
	\caption{$\evaluate$}\label{alg:eval}
	\begin{algorithmic}[1] 
            \Require Policy set $\Picore$, Reachable states \(\reachablestates\), Datasets $\{\mD_s\}_{s \in \reachablestates}$, Policy $\pi$ to be evaluated. 
            \State Compute \(\SHP{\pi} \leftarrow \cS_\pi^+ \cap \reachablestates\) and \(\Stab = \SHP{\pi} \cup \crl{s_\bot}\). 

		\For{$s, s'$ in $\Stab$}  \hfill \algcomment{Compute transitions and rewards on \(\Stab\)}
            \State Let \(h, h'\) be such that \(s \in \cS_h\) and \(s' \in \cS_{h'}\) 
            \If{\(h < h'\)}
           \State Calculate $\widehat{P}_{s\to s'}^\pi, \widehat{r}_{s\to s'}^\pi$ according to \eqref{eq:empirical_MRP_dynamics} and \eqref{eq:empirical_MRP_rewards};\label{line:reward_calculator} 
           \Else 
            \State Set $\widehat{P}_{s\to s'}^\pi \leftarrow 0$, $\widehat{r}_{s\to s'}^\pi \leftarrow 0$.  
           \EndIf
            \EndFor 
		\State Set $\widehat{V}(s) = 0$ for all \(s \in \Stab\).  
            \State \textbf{Repeat} for \(H+1\) times: \hfill \algcomment{Evaluate \(\pi\) by dynamic programming} 
            \State \quad For all $s \in \Stab$, calculate $\widehat{V}(s)\leftarrow \sum_{\Stab} \widehat{P}_{s\to s'}^\pi \cdot \left(\widehat{r}_{s\to s'}^\pi + \widehat{V}(s')\right).$ 
		\State \textbf{Return} $\widehat{V}(s_\top)$.  
	\end{algorithmic}
\end{algorithm}




We recall the definition of petals and sunflowers given in the main body (in Definitions \ref{def:petal_policy} and \ref{def:core_policy}). In the rest of this section, we assume that $\Pi$ is a $(K, D)$-sunflower with $\Picore$ and $\mS_\pi$ for any $\pi\in \Pi$.

\begin{definition}[Petals and Sunflowers (Definitions~\ref{def:petal_policy} and \ref{def:core_policy} in the main body)]  
For a policy set \(\bar{\Pi}\), and states \(\bar{\cS} \subseteq \cS\), a policy \(\pi\) is said to be a \(\bar{\cS}\)-\textit{petal} on \(\bar{\Pi}\) if for all \(h \leq h' \leq H\), and partial trajectories $\tau = (s_h, a_h, \cdots, s_{h'}, a_{h'})$ that are consistent with $\pi$: either \(\tau\) is also consistent with some \(\pi' \in \bar{\Pi}\), or there exists \(i \in (h, h']\) s.t.~$s_i\in \bar{\mS}$.

    A policy class $\Pi$ is said to be a \((K, D)\)-\coreset{} if there exists a set \(\Picore\) of Markovian policies with $|\Picore|\le K$ such that for every policy $\pi\in \Pi$ there exists a set $\mS_\pi \subseteq \mS$, of size at most \(D\), so that \(\pi\) is an \(S_\pi\)-petal on \(\Picore\). 
\end{definition}

\paragraph{Additional notation.} Recall that we assumed that the state space \(\cS = \cS_1 \times \dots \cS_H\) is layered. Thus, given a state \(s\), we can infer the layer \(h\) such that \(s \in \cS_h\). By definition \(s_\top\) belongs to the layer \(h = 0\) and \(s_\bot\) belongs to the layer \(h = H\). In the following, we define additional notation: 

\begin{enumerate}[label=\((\alph*)\)]  

\item \textit{Sets \(\event{s}{s'}{\bar \cS}\)}: For any set \(\bar \cS\), and states \(s, s' \in \cS\), we define \(\event{s}{s'}{\bar \cS}\) as the set of all the trajectories that go from \(s\) to \(s'\) without passing through any state in \(\bar \cS\) in between. 

More formally, let state \(s\) be at layer \(h\), and \(s'\) be at layer \(h'\). Then, \(\event{s}{s'}{\bar \cS}\) denotes the set of all the trajectories \(\tau = (s_1, a_1, \cdots, s_H, a_H)\) that satisfy all of the following:  
\begin{enumerate}[label=\(\bullet\)] 
\item \(s_h = s\), where \(s_h\) is the state at timestep \(h\) in \(\tau\). 
\item \(s_{h'} = s'\), where \(s_{h'}\) is the state at timestep \(h'\) in \(\tau\). 
\item For all \(h < \wt{h} < h'\), the state \(s_{\wt{h}}\), at time step \(\wt h\) in \(\tau\), does not lie in the set \(\bar \cS\).  
\end{enumerate}

\noindent 
Note that when \(h' \leq h\), we define 
 \(\event{s}{s'}{\bar \cS} = \emptyset\). Additionally, we define  \(\event{s_\top}{s}{\bar \cS}\) as the set of all trajectories that go to \(s'\) (from a start state) without going through any state in \(\bar \cS\) in between. Finally, we define   \(\event{s}{s_\bot}{\bar \cS}\) as the set of all the trajectories that go from \(s\) at time step \(h\) to the end of the episode without passing through any state in \(\bar \cS\) in between. 

Furthermore, we use the shorthand $\setall{\pi}{s}{s'} \coloneqq \event{s}{s'}{\cS_\pi}$ to denote the set of all the trajectories that go from \(s\) to \(s'\) without passing though any leaf state \(\cS_\pi\). 




\item Using the above notation, for any \(s \in \cS\) and set \(\bar \cS \subseteq \cS\), we define \(\bard{\pi}{s}{\bar{\cS}}\) as the probability of reaching \(s\) (from a start state)  without passing through any state in \(\bar \cS\) in between, i.e.~
\begin{equation}\label{eq:dbar}\begin{aligned} 
\bard{\pi}{s}{\bar{\cS}} &= \bbP^{\pi}\brk*{\tau~\text{reaches}~s~\text{without passing through any state in \(\bar{\cS}\) before reaching \(s\)}} \\
&= \bbP^{\pi}\brk*{\tau~\in \event{s_\top}{s}{\bar \cS}}.  
\end{aligned}\end{equation}

\end{enumerate}

\medskip

We next recall the notation of Markov Reward Process and formally define both the population versions of policy-specific MRPs. 

\paragraph{Markov Reward Process (MRP).} A Markov reward process \(\MRPsign = \mathrm{MRP}(\cS, P, R, H, s_\top, s_\bot)\) is defined over the state space \(\cS\) with start state \(s_\top\) and end state \(s_\bot\),  for trajectory length \(H + 2\). Without loss of generality, we assume that \(\crl{s_\top, s_\bot} \in \cS\). The transition kernel is denoted by \(P: \cS \times \cS \to [0,1] \), such that for any $s \in \cS$, $\sum_{s'} P_{s \to s'} = 1$; the reward kernel is denoted \(R: \cS \times \cS \to \Delta([0,1])\). Throughout, we use the notation $\rightarrow$ to signify that the transitions and rewards are defined along the edges of the MRP. 

A trajectory in $\MRPsign$ is of the form \(\tau = (s_\top, s_1, \cdots, s_{H}, s_\bot)\), where \(s_h \in \cS\) for all \(h \in [H]\). Furthermore, from any state \(s \in \cS\), the MRP transitions\footnote{Our definition of Markov Reward Processes (MRP) deviates from MDPs that we considered in the paper, in the sense that we do not assume that the state space \(\cS\) is layered in an MRP. This variation is only adapted to simplify the proofs and the notation in the rest of the paper.} to another state \(s' \in \cS\) with probability \(P_{s \rightarrow s'}\), and obtains the rewards \(r_{s \rightarrow s'} \sim R_{s \rightarrow s'}\). Thus,  
\begin{align*}
\bbP^{\MRPsign}[\tau] =  P_{s_\top \rightarrow s_1} \cdot \prn*{\prod_{h=1}^{H-1} P_{s_h \rightarrow s_{h+1}} } \cdot P_{s_H \rightarrow s_\bot},  
\end{align*} 
and the rewards   
\begin{align*} 
R^{\MRPsign}(\tau) = r_{s_{\top} \rightarrow s_1} + \sum_{h=1}^H r_{s_{h} \rightarrow s_{h+1}} + r_{s_{H} \rightarrow s_\bot}. 
\end{align*} 
Furthermore, in all the MRPs that we consider in the paper, we have \(P_{s_\bot \rightarrow s_\bot} = 1\) and \(r_{s_\bot \rightarrow s_\bot} = 0\). 

\paragraph{Policy-Specific Markov Reward Processes.} 
A key technical tool in our analysis will be policy-specific MRPs that depend on the set \(\reachablestates\) of the states that we have explored so far. Recall that for any policy \(\pi\), \(\cS_\pi^+ = \cS_\pi \cup \crl{s_\top, s_\bot}\), \(\SHP{\pi} = \cS_\pi^+ \cap \reachablestates\) and \(\SRem{\pi} = \cS_\pi^+  \setminus (\SHP{\pi} \cup \crl{s_\bot})\). We define the expected and the empirical versions of policy-specific MRPs below; see \pref{fig:MRP} for an illustration. 
\begin{enumerate}[label=\((\alph*)\)]
    \item \textbf{Expected Version of Policy-Specific MRP.} We define \(\MRPsign^\pi_{\reachablestates} = \mathrm{MRP}(\cS_\pi^+, P^\pi, r^\pi, H, s_\top, s_\bot)\) where  \ayush{Can I make all \(r^\pi\) to \(R^\pi\)? \\ The notation is inconsistent over here from the main body. wdyt?} 

\begin{enumerate}[label=\(\bullet\)]
    \item \textit{Transition Kernel \(P^\pi\):} For any \(s \in \SHP{\pi}\) and \(s' \in \cS_\pi^+\), we have 
    \begin{align*}
P_{s\to s'}^\pi =  \EE^\pi\left[\ind{\tau\in \setall{\pi}{s}{s'}}\big|s_h = s\right], \numberthis \label{eq:policy_MRP_dynamics} 
\end{align*}
where the expectation above is w.r.t.~the  trajectories drawn using \(\pi\) in the underlying MDP, and \(h\) denotes the time step such that \(s \in \cS_h\) (again, in the underying MDP). Thus, the transition $P_{s\to s'}^\pi$ denotes the probability of taking policy $\pi$ from $s$ and directly transiting to $s'$ without visiting any other states in $\mS_\pi$. Furthermore, \(P^\pi_{s \rightarrow s'} = \ind{s' = s_\bot}\) for all \(s \in \SRem{\pi}\cup \{s_\bot\}\). 

\item \textit{Reward Kernel \(r^\pi\):} 
 For any \(s \in \SHP{\pi}\) and \(s' \in \cS_\pi^+\), we have 
\begin{align*}
r_{s\to s'}^\pi & \coloneqq \EE^\pi\brk*{R(\tau_{h:h'})\ind{\tau\in \setall{\pi}{s}{s'}}\big|s_h = s}, \numberthis \label{eq:policy_MRP_reward} 
\end{align*} 
where \(R(\tau_{h:h'})\) denotes the reward for the partial trajectory \(\tau_{h:h'}\) in the underlying MDP. The reward $r_{s\to s'}^\pi$ denotes the expectation of rewards collected by taking policy $\pi$ from $s$ and directly transiting to $s'$ without visiting any other states in $\mS_\pi$. Furthermore, \(r^\pi_{s \rightarrow s'} = 0\) for all \(s \in \SRem{\pi}\cup \{s_\bot\}\). 
\end{enumerate}
Throughout the analysis, we use $\Pr^\MRPsign[\cdot] \coloneqq \Pr^{\MRPsign^{\pi}_{\reachablestates}}[\cdot]$ and $\En^\MRPsign[\cdot] \coloneqq \En^{\MRPsign^{\pi}_{\reachablestates}}[\cdot]$ as a shorthand, whenever clear from the context.


\medskip

\item \textbf{Empirical Version of Policy-Specific MRPs.}  Since the learner only has sampling access to the underlying MDP, it can not directly construct the MRP \(\MRPsign^\pi_{\reachablestates}\). Instead, in  \pref{alg:main}, the learner constructs an empirical estimate for \(\MRPsign^\pi_{\reachablestates}\), defined as   \(\widehat \MRPsign^\pi_{\reachablestates} = \mathrm{MRP}(\cS_\pi^+, \widehat P^\pi, \widehat r^\pi, H, s_\top, s_\bot)\) where  


\begin{enumerate}[label=\(\bullet\)]
    \item \textit{Transition Kernel \(\widehat P^\pi\):} For any \(s \in \SHP{\pi}\) and \(s' \in \cS_\pi^+\), we have 
\begin{align*} 
	\widehat{P}_{s\to s'}^\pi = \frac{|\Picore|}{|\mD_s|} \sum_{\tau\in \mD_s}\frac{\ind{\pi\cons\tau_{h:h'}}}{\sum_{\pi'\in \Picore}\ind{\pi' \cons\tau_{h:h'}}}\ind{\tau\in \setall{\pi}{s}{s'} }, \numberthis \label{eq:empirical_MRP_dynamics}
\end{align*}
	where \(\Picore\) denotes the core of the sunflower corresponding to \(\Pi\) and \(\cD_s\) denotes a dataset of trajectories collected via \(\datacollector(s, \pi_s, \Picore, n_2)\). Furthermore, \(\widehat{P}^\pi_{s \rightarrow s'} = \ind{s' = s_\bot}\) for all \(s \in \SRem{\pi}\cup \{s_\bot\}\). 
    \item \textit{Reward Kernel \(\widehat r^\pi\):} 
 For any \(s \in \SHP{\pi}\) and \(s' \in \cS_\pi^+\), we have 
\begin{align*} 
	\widehat{r}_{s\to s'}^\pi = \frac{|\Picore|}{|\mD_s|} \sum_{\tau\in \mD_s}\frac{\ind{\pi\cons\tau_{h:h'}}}{\sum_{\pi'\in \Picore}\ind{\pi' \cons\tau_{h:h'}}}\ind{\tau\in \setall{\pi}{s}{s'} }R(\tau_{h:h'}), \numberthis \label{eq:empirical_MRP_rewards} 
\end{align*}
where \(\Picore\) denotes the core of the sunflower corresponding to \(\Pi\), \(\cD_s\) denotes a dataset of trajectories collected via \(\datacollector(s, \pi_s, \Picore, n_2)\), and  $R(\tau_{h:h'}) = \sum_{i=h}^{h'-1} r_i$. Furthermore, \(\wh r^\pi_{s \rightarrow s'} = 0\) for all \(s \in \SRem{\pi}\).
\end{enumerate}
The above approximates the MRP given by \pref{eq:emp-transitions} and \pref{eq:emp-rewards} in the main body. 
\end{enumerate}

\begin{figure}[!t]
    \centering
\includegraphics[scale=0.4, trim={0cm 12cm 41cm 0cm}, clip]{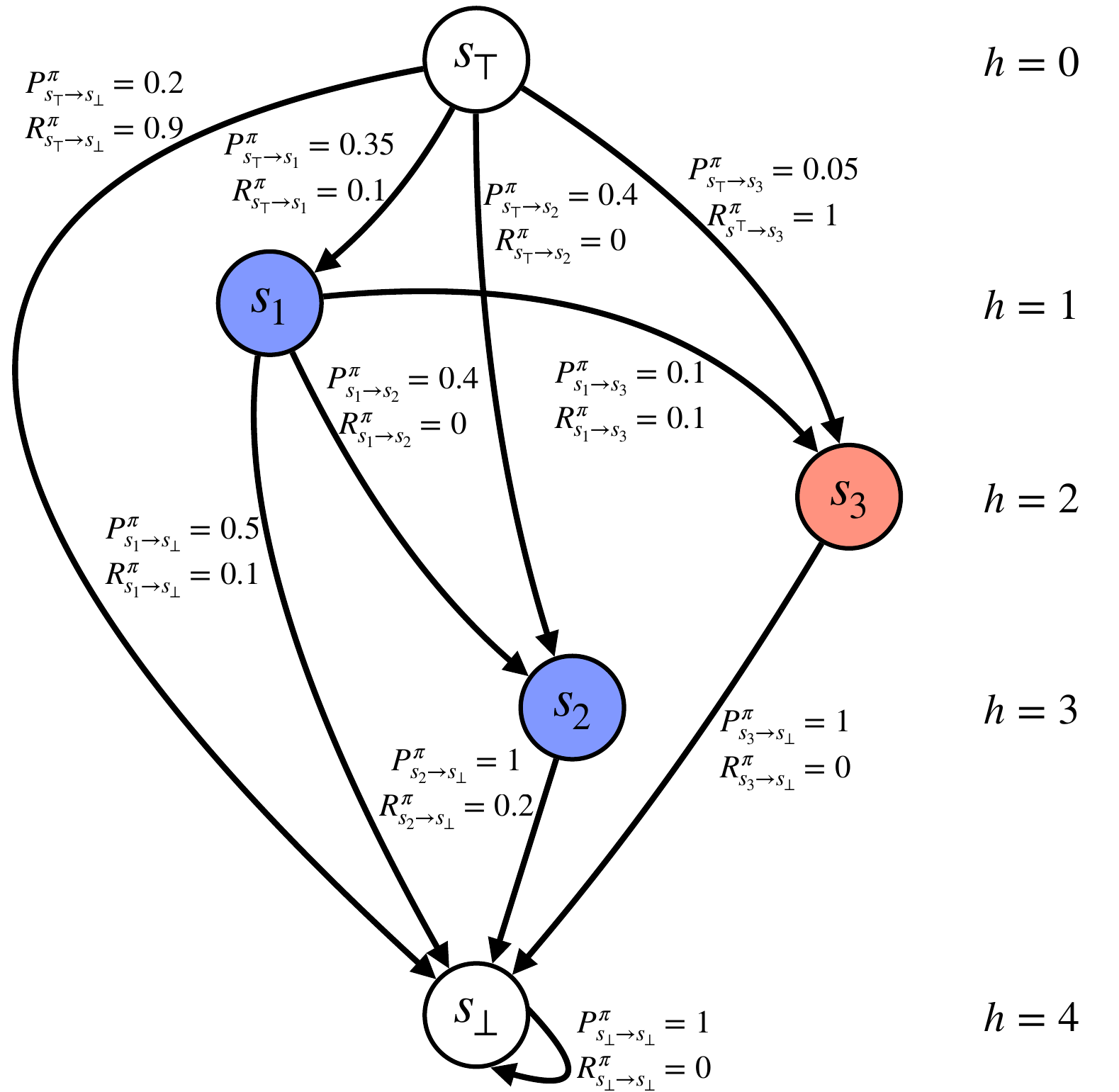}
    \caption{Illustration of an MRP $\MRPsign^\pi_{\reachablestates}$ with $\cS_\pi = \crl{s_1, s_2, s_3}$ and $\reachablestates = \crl{s_1, s_2}$. In the original MDP $M$, $s_1 \in \cS_1$, $s_2 \in \cS_3$, and $s_3 \in \cS_2$. The edges are labeled with the values of $P^\pi_{s \to s'}$ and $R^\pi_{s \to s'}$. Notice that (1) there are no edges from $s_2 \to s_3$ or $s_3 \to s_2$ because trajectories cannot go from later layers to earlier ones; (2) since $s_3 \notin \reachablestates$, there is no edge from $s_3 \to s_2$, and instead we have $P^\pi_{s_3 \to s_\bot} = 1$ and $R^\pi_{s_3 \to s_\bot} = 0$; (3) $s_\bot$ is an absorbing state with no rewards.} 
    \label{fig:MRP}
\end{figure}

\paragraph{Properties of Trajectories in the Policy-Specific MRPs.} We state several properties of trajectories in the policy-specific MRPs, which will be used in the proofs. Let \(\tau = (s_\top, s_1, \cdots, s_{H}, s_\bot)\) denote a trajectory from either $\MRPsign^\pi_{\reachablestates}$ or $\widehat \MRPsign^\pi_{\reachablestates}$.  
\begin{enumerate}[label=\((\arabic*)\)] 
    \item For some $k \leq H$ we have $s_1, \cdots, s_k \in \cS_\pi$ and $s_{k+1} = \cdots = s_H = s_\bot$ (if $k = H$ we say the second condition is trivially met). 
    \item Each state in $s_1, \cdots, s_k$ is unique.
    \item Letting $\mathsf{h}(s)$ denote the layer that a state $s \in \cS_\pi$ is in, we have $\mathsf{h}(s_1) < \cdots < \mathsf{h}(s_k)$.
    \item Either (a) $s_1, \cdots, s_k \in \SHP{\pi}$, or (b) $s_1, \cdots, s_{k-1} \in \SHP{\pi}$ and $s_k \in \SRem{\pi}$.
\end{enumerate}
\ayush{Add an informal description here!} 

\paragraph{Parameters Used in \pref{alg:main}.}  Here, we list all the parameters that are used in \pref{alg:main} and its subroutines: 
\begin{align*}
n_1 &= C_1 \frac{(D+1)^4K^2\log(|\Pi|(D+1)/\delta)}{\epsilon^2}, \\
 n_2 &= C_2 \frac{D^3(D+1)^2K^2\log(|\Pi| (D+1)^2/\delta)} {\epsilon^3}, \numberthis \label{eq:alg_parameter}
\end{align*}
where $C_1, C_2 > 0$ are absolute numerical constants, which will be specified later in the proofs. 

\subsection{Supporting Technical Results}
We start by stating the following variant of the classical  simulation lemma \citep{kearns2002near, agarwal2019reinforcement, foster2021statistical}.
\ayush{Make a list here on how the parts are connected.} 

\begin{lemma}[Simulation lemma {\citep[Lemma F.3]{foster2021statistical}}]\label{lem:sim}
Let \(\MRPsign = (\cS, P, r, H, s_\top, s_\bot)\) be a markov reward process. Then, the empirical version \(\widehat \MRPsign = (\cS, \widehat P, \widehat r, H, s_\top, s_\bot)\) corresponding to \(\MRPsign\) satisfies: 
	$$|V_{\mathrm{MRP}} - \widehat{V}_{\mathrm{MRP}}|\le \sum_{s\in \mS} d_{\MRPsign}(s)\cdot \left(\sum_{s'\in\mS} |P_{s\to s'} - \widehat{P}_{s\to s'}| + \left|r_{s\to s'} - \widehat{r}_{s\to s'}\right|\right),$$
	where $d_{\MRPsign}(s)$ is the probability of reaching $s$ under $\MRPsign$, and \(V_{\mathrm{MRP}}\) and \(\widehat V_{\mathrm{MRP}}\) denotes the value of \(s_\top\) under \(\MRPsign\) and \(\widehat{\MRPsign}\) respectively. 
\end{lemma}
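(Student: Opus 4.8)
The statement to prove is the simulation lemma (Lemma F.3 from Foster et al., restated for MRPs). Let me sketch a proof plan.

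\textbf{Proof plan for the Simulation Lemma.}

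The plan is to decompose the difference in values layer-by-layer (or rather, state-by-state along the MRP) using a standard telescoping argument. Since the MRP $\MRPsign$ is not layered but trajectories have bounded length $H+2$ and visit distinct states (as noted in the ``Properties of Trajectories'' paragraph), I can still set up a clean recursion. First I would write $V_{\MRPsign}(s) = \sum_{s'} P_{s\to s'}(r_{s\to s'} + V_{\MRPsign}(s'))$ and likewise $\widehat V_{\widehat\MRPsign}(s) = \sum_{s'} \widehat P_{s\to s'}(\widehat r_{s\to s'} + \widehat V_{\widehat\MRPsign}(s'))$, with the boundary condition $V_{\MRPsign}(s_\bot) = \widehat V_{\widehat\MRPsign}(s_\bot) = 0$ and the absorbing property $P_{s_\bot\to s_\bot} = \widehat P_{s_\bot \to s_\bot} = 1$, $r_{s_\bot\to s_\bot} = \widehat r_{s_\bot\to s_\bot}=0$.

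Next I would bound the per-state error $\Delta(s) := |V_{\MRPsign}(s) - \widehat V_{\widehat\MRPsign}(s)|$. Adding and subtracting $\sum_{s'} P_{s\to s'}(\widehat r_{s\to s'} + \widehat V_{\widehat\MRPsign}(s'))$, the triangle inequality gives
\[
\Delta(s) \le \sum_{s'} \big|P_{s\to s'} - \widehat P_{s\to s'}\big|\cdot \big(\widehat r_{s\to s'} + \widehat V_{\widehat\MRPsign}(s')\big) + \sum_{s'} P_{s\to s'}\big|r_{s\to s'} - \widehat r_{s\to s'}\big| + \sum_{s'} P_{s\to s'}\,\Delta(s').
\]
Here I use that all rewards lie in $[0,1]$ and cumulative rewards are bounded by $1$, so $\widehat r_{s\to s'} + \widehat V_{\widehat\MRPsign}(s') \le 1$ (this uses the fact that the cumulative reward along any trajectory is bounded by $1$, which is assumed for the underlying MDP and inherited by the policy-specific MRP construction; alternatively one can just note $\widehat r + \widehat V$ is a sum of a subset of one-step rewards, hence at most $1$). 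Thus the first term is at most $\sum_{s'}|P_{s\to s'}-\widehat P_{s\to s'}|$ and the second is at most $\sum_{s'}|r_{s\to s'}-\widehat r_{s\to s'}|$ (bounding $P_{s\to s'}\le 1$), yielding
\[
\Delta(s) \le \sum_{s'}\Big(|P_{s\to s'}-\widehat P_{s\to s'}| + |r_{s\to s'}-\widehat r_{s\to s'}|\Big) + \sum_{s'} P_{s\to s'}\,\Delta(s').
\]

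Finally I would unroll this recursion starting from $s=s_\top$. Since any trajectory in $\MRPsign$ has length at most $H+2$ and visits each non-$s_\bot$ state at most once, iterating the inequality $H+2$ times and using $\Delta(s_\bot)=0$ gives $\Delta(s_\top) \le \sum_{s} d_{\MRPsign}(s)\big(\sum_{s'}|P_{s\to s'}-\widehat P_{s\to s'}| + |r_{s\to s'}-\widehat r_{s\to s'}|\big)$, where $d_{\MRPsign}(s)$ is the probability of $\MRPsign$ visiting $s$ — this is exactly because unrolling the $\sum_{s'} P_{s\to s'}\Delta(s')$ term accumulates, for each state $s$, the total probability mass of reaching it under the true dynamics $P$, which is $d_{\MRPsign}(s)$. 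Since $V_{\mathrm{MRP}} = V_{\MRPsign}(s_\top)$ and $\widehat V_{\mathrm{MRP}} = \widehat V_{\widehat\MRPsign}(s_\top)$, this is the claimed bound.

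\textbf{Main obstacle.} The only delicate point is formalizing the unrolling on a non-layered state space: one must argue that the recursion terminates and that the accumulated coefficient on the error-at-$s$ term is precisely $d_{\MRPsign}(s)$ rather than something larger. This is handled by invoking the trajectory structure (Properties (1)–(4) of trajectories in policy-specific MRPs): every trajectory has at most $H$ ``real'' states before absorbing into $s_\bot$, the states it visits are distinct and strictly increasing in layer, so the ``depth'' of the recursion is at most $H+1$, and at each depth the weight on $\Delta(s)$ is the visitation probability under $P$ restricted to first-visit paths, which sums to exactly $d_{\MRPsign}(s)$. Everything else is routine. (Since Lemma F.3 of Foster et al.\ is cited, one could also simply invoke it directly after observing the MRP here fits their framework, but the above self-contained argument is short enough to include.)
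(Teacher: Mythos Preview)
The paper does not give its own proof of this lemma; it simply cites it as Lemma~F.3 of \citet{foster2021statistical} (and the classical simulation-lemma line of work). Your telescoping/unrolling argument is exactly the standard proof of such statements and is correct.

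One small remark: in your decomposition you bound the multiplier $\widehat r_{s\to s'} + \widehat V_{\widehat\MRPsign}(s')$ by $1$. For the abstract lemma as stated this needs either that the empirical MRP also has cumulative reward bounded by $1$, or one uses the symmetric decomposition with $r_{s\to s'} + V_{\MRPsign}(s')$ as the multiplier (which is bounded by $1$ by definition of the true MRP) at the cost of having $\widehat P$ appear in the recursion and hence $d_{\widehat\MRPsign}$ in the final bound. In the paper's applications the policy-specific empirical MRPs do satisfy the needed boundedness (and in any case the paper only invokes the lemma with the crude bound $d_{\MRPsign}(s)\le 1$), so this is not a real obstacle --- but it is worth being explicit about which boundedness assumption you are using.
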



The following technical lemma shows that for any policy \(\pi\), the empirical version of policy-specific MRP closely approximates its expected version. 
\begin{lemma}\label{lem:dynamics_error} 
Let \pref{alg:main} be run with the parameters given in Eq.~\pref{eq:alg_parameter}, and consider any iteration of the while loop in \pref{line:while_loop} with the instantaneous set  \(\reachablestates\). Further, suppose that $|\mD_s|\ge \tfrac{\epsilon n_2}{24D}$ for all $s\in \reachablestates$. Then, with probability at least $1 - \delta$, the following hold: 
\begin{enumerate}[label=\((\alph*)\)] 
\item For all \(\pi \in \Pi\), \(s \in \SHP{\pi}\) and \(s' \in \cS_\pi \cup \crl{s_\bot}\),
\begin{align*}
	\max\crl*{|P_{s\to s'}^\pi - \widehat{P}_{s\to s'}^\pi|, \left|r_{s\to s'}^\pi - \widehat{r}_{s\to s'}^\pi\right|} &\leq  \frac{\epsilon}{12D(D+1)}. 
\end{align*}
\item For all \(\pi \in \Pi\) and \(s' \in \cS_\pi \cup \crl{s_\bot}\),
\begin{align*}
		\max \crl{|P_{s_\top\to s'}^\pi - \widehat{P}_{s_\top\to s'}^\pi|, |r^\pi_{s_{\top} \to s'} - \widehat{r}^\pi_{s_{\top} \to s'}|} \le \frac{\epsilon}{12(D+1)^2}.  
\end{align*} 
\end{enumerate}
\end{lemma}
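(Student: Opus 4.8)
\textbf{Proof plan for Lemma~\ref{lem:dynamics_error}.}

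The plan is to view each empirical quantity $\widehat{P}^\pi_{s\to s'}$ and $\widehat{r}^\pi_{s\to s'}$ as an average over the trajectories in $\cD_s$ of a bounded importance-weighted random variable, and then to apply a Bernstein/Hoeffding-type concentration inequality together with a union bound over $\pi\in\Pi$, $s,s'\in\cS_\pi^+$. First I would fix $\pi\in\Pi$ and $s\in\SHP{\pi}$, let $h$ be the layer of $s$, and recall that $\cD_s$ was collected by $\datacollector(s,\pi_s,\Picore,n_2)$, which runs $\pi_s$ to reach $s$ and then (conditioned on reaching $s$) samples a fresh $\pi'\sim\unif(\Picore)$ and rolls out. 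Thus, conditioned on reaching $s$, the suffix $\tau_{h:H}$ is distributed as a mixture $\tfrac{1}{|\Picore|}\sum_{\pi'\in\Picore}\bbP^{\pi'}[\,\cdot\mid s_h=s]$. The key identity to establish is the unbiasedness of the importance-weighted summand: for a trajectory $\tau$ drawn from this mixture,
\[
\En\!\left[\frac{|\Picore|\,\ind{\pi\cons\tau_{h:h'}}}{\sum_{\pi'\in\Picore}\ind{\pi'\cons\tau_{h:h'}}}\ind{\tau\in\setall{\pi}{s}{s'}}\,\Big|\,s_h=s\right] = P^\pi_{s\to s'},
\]
and analogously for the reward version (with an extra $R(\tau_{h:h'})$ factor). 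This holds because the event $\{\pi\cons\tau_{h:h'}\}\cap\setall{\pi}{s}{s'}$ is measurable with respect to $\tau_{h:h'}$, on this event $\pi$ and hence at least one $\pi'\in\Picore$ is consistent with $\tau_{h:h'}$ (this requires the petal property: since $\setall{\pi}{s}{s'}$ avoids $\cS_\pi$ and $\pi\cons\tau_{h:h'}$, Definition~\ref{def:petal_policy} gives some $\pi'\in\Picore$ with $\pi'\cons\tau_{h:h'}$), and the standard importance-sampling cancellation over the uniform mixture converts the mixture expectation back into $\bbP^\pi[\,\tau\in\setall{\pi}{s}{s'}\mid s_h=s]=P^\pi_{s\to s'}$.

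Next I would bound the range and/or the second moment of the summand. Since $R(\tau_{h:h'})\in[0,1]$ and the indicator $\ind{\pi\cons\tau_{h:h'}}$ forces the denominator $\sum_{\pi'\in\Picore}\ind{\pi'\cons\tau_{h:h'}}\ge 1$, the summand lies in $[0,|\Picore|]\subseteq[0,K]$, so each term is bounded by $K$. For the variance, a cleaner route is to note the second moment is at most $K^2 \cdot\bbP^{\text{mix}}[\pi\cons\tau_{h:h'},\ \tau\in\setall{\pi}{s}{s'}\mid s_h=s]\le K\cdot\bbP^{\pi}[\cdot]\le K$ after the same IS cancellation (one factor of $K$ cancels because the consistency indicator of $\pi$ means the mixture probability of that event is at least $\tfrac1K$ times $\bbP^\pi$ of it). So Bernstein gives, for $m:=|\cD_s|$ samples, a deviation of order $\sqrt{K\log(1/\delta')/m}+K\log(1/\delta')/m$. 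With $m\ge \tfrac{\epsilon n_2}{24 D}$ and $n_2 = C_2\tfrac{D^3(D+1)^2K^2\log(|\Pi|(D+1)^2/\delta)}{\epsilon^3}$, one checks that $m\gtrsim \tfrac{D^2(D+1)^2 K^2\log(|\Pi|(D+1)^2/\delta)}{\epsilon^2}$, which makes both terms at most $\tfrac{\epsilon}{12 D(D+1)}$ after absorbing constants into $C_2$; the second (linear-in-$1/m$) term is even smaller. Then a union bound over the at most $|\Pi|\cdot(D+1)^2$ triples $(\pi,s,s')$ with $s\in\SHP{\pi}$, $s'\in\cS_\pi\cup\{s_\bot\}$ (since $|\cS_\pi|\le D$) gives part~(a) with failure probability $\le\delta$. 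Part~(b) is identical but with $s=s_\top$: here $\cD_\top$ was collected directly by $\unif(\Picore)$ rollouts with $|\cD_\top|=n_1 = C_1\tfrac{(D+1)^4K^2\log(|\Pi|(D+1)/\delta)}{\epsilon^2}$, which is large enough that the same Bernstein bound yields deviation $\le\tfrac{\epsilon}{12(D+1)^2}$; the union bound is over $|\Pi|\cdot(D+1)$ pairs $(\pi,s')$.

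The main obstacle I expect is not the concentration step but getting the unbiasedness identity exactly right in the face of the non-Markovian data-collection process: one must verify carefully that conditioning on the event $\{s_h=s\}$ (reached via $\pi_s$) followed by a fresh $\unif(\Picore)$ rollout indeed produces the clean mixture law $\tfrac1K\sum_{\pi'}\bbP^{\pi'}[\cdot\mid s_h=s]$ on the suffix, and that the indicator $\ind{\tau\in\setall{\pi}{s}{s'}}$ — which references ``no other state of $\cS_\pi$ in between'' — only depends on the suffix $\tau_{h:h'}$, so that the petal property of $\pi$ relative to $\Picore$ can be invoked to guarantee the denominator is nonzero whenever the numerator is. A secondary bookkeeping point is the exact dependence on $D$: the claimed bounds have an extra factor of $D$ (resp.\ $(D+1)$) relative to the naive $\sqrt{\log/m}$ rate, so one needs $n_2$ to carry the $D^3$ and $n_1$ the $(D+1)^4$ factors precisely as in \eqref{eq:alg_parameter}; I would track these constants only schematically and fold them into $C_1,C_2$.
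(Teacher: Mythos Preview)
Your proposal is correct and follows essentially the same approach as the paper: establish unbiasedness of the importance-weighted summand via the petal property (so that the denominator is nonzero whenever the numerator is), observe the summand is bounded in $[0,K]$, apply concentration plus a union bound over $(\pi,s,s')$, and plug in the sizes $|\cD_s|\ge \epsilon n_2/(24D)$ and $|\cD_\top|=n_1$. The only minor difference is that the paper uses Hoeffding directly (the range bound $K$ alone suffices to hit the target accuracies with the stated $n_1,n_2$), so your Bernstein variance computation is unnecessary extra work.
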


In the sequel, we define the event that the conclusion of \pref{lem:dynamics_error} holds as $\cE_\mathrm{est}$. 

\begin{proof} Fix any \(\pi \in \Pi\). We first prove the bound for $s\in \SHP{\pi}$. Let \(s\) be at layer \(h\). 
Fix any policy \(\pi \in \Pi\), and consider any state \(s' \in \cS_\pi \cup \crl{s_{\bot}}\), where \(s'\) is at layer \(h'\). 
Note that since \(\Pi\) is a \((K, D)\)-sunflower, with its core \(\Picore\) and petals \({\crl{\cS_\pi}}_{\pi \in \Pi}\), we must have that any trajectory \(\tau \in \setall{\pi}{s}{s'}\) is also consistent with at least one \(\pi_e \in \Picore\). Furthermore, for any such \(\pi_e\), we have 
	\begin{align*}
		\Pr^{\pi_e}\brk*{\tau_{h:h'} \mid s_h=s} &= \prod_{i=h}^{h'-1} P\brk*{s_{i+1} \mid s_i, \pi_e(s_i), s_h = s} \\ 
		&= \prod_{i=h}^{h'-1} P\brk*{s_{i+1} \mid s_i, \pi(s_i), s_h = s} = \Pr^{\pi} \brk*{\tau_{h:h'} \mid s_h=s},  \numberthis \label{eq:pi-pie} 
	\end{align*} 
	where the second line holds because both \(\pi \cons \tau_{h:h'}\) and \(\pi_e \cons \tau_{h:h'}\). Next, recall from Eq.~\eqref{eq:policy_MRP_dynamics}, that 
\begin{align*}
P_{s\to s'}^\pi &= \EE^\pi \brk*{\ind{\tau \in \setall{\pi}{s}{s'}} \mid s_h = s }.  \numberthis \label{eq:original_MRP_estimate} 
\end{align*}	 

Furthermore, from Eq.~\eqref{eq:empirical_MRP_dynamics}, recall that the empirical estimate \(\widehat{P}_{s\to s'}^\pi \) of \({P}_{s\to s'}^\pi \) is given by :
\begin{align*}
\widehat{P}_{s\to s'}^\pi = \frac{1}{|\mD_s|}\sum_{\tau\in \mD_s}\frac{\ind{ \tau \in \setall{\pi}{s}{s'} }}{\frac{1}{|\Picore|}\sum_{\pi_e\in \Picore}\ind{\pi_e\cons\tau_{h:h'}}},  \numberthis \label{eq:empirical_estimate} 
\end{align*}
where the dataset \(\cD_s\) consists of i.i.d.~samples, and is collected in  lines \ref{line:is1}-\ref{line:is2} in \pref{alg:sample} ($\datacollector$), by first running the policy \(\pi_s\) for \(h\) timesteps and if the trajectory reaches \(s\), then executing \(\pi_e \sim \unif(\Picore)\) for the remaining time steps (otherwise this trajectory is rejected). Let the law of this process be \(q\). We thus note that,

\begin{align*}
\hspace{0.4in} &\hspace{-0.4in} \En_{\tau\sim q} \brk*{\widehat{P}_{s\to s'}^\pi} \\  
&= \En_{\tau \sim q} \brk*{\frac{\ind{\tau \in \setall{\pi}{s}{s'}}}{\frac{1}{|\Picore|}\sum_{\pi_e\in \Picore}\ind{\pi_e\cons\tau_{h:h'}}} \mid s_h = s}    \\ 
&= \sum_{\asedit{\tau \in \setall{\pi}{s}{s'}}} \Pr_q \brk*{\tau_{h:h'} \mid s_h = s} \cdot \frac{1}{\frac{1}{|\Picore|}\sum_{\pi_e\in \Picore} \ind{\pi_e\cons\tau_{h:h'}}}  \\
&\overeq{\proman{1}}  \sum_{\tau \in \setall{\pi}{s}{s'}} \frac{1}{|\Picore|}\sum_{\pi'_e\in\Picore}\asedit{\ind{\tau \in \setall{\pi'_e}{s}{s'}}} \Pr^{\pi'_e} \brk*{\tau_{h:h'}  \mid s_h=s} \cdot \frac{1}{\frac{1}{|\Picore|}\sum_{\pi_e\in \Picore} \ind{\pi_e\cons\tau_{h:h'}}}  \\ 
&\overeq{\proman{2}}  \sum_{\tau \in \setall{\pi}{s}{s'}} \frac{1}{|\Picore|}\sum_{\pi'_e\in\Picore}\Pr^{\pi} \brk*{\tau_{h:h'} \mid s_h=s}  \cdot \frac{\ind{\pi'_e \cons \tau_{h:h'}}}{\frac{1}{|\Picore|}\sum_{\pi_e\in \Picore}  \ind{\pi_e\cons\tau_{h:h'}}}  \\ 
&= \sum_{\tau \in \setall{\pi}{s}{s'}} \Pr^{\pi}\brk*{\tau_{h:h'} \mid s_h=s}   \\  
&\overeq{\proman{3}}  \EE^\pi \brk*{\ind{\tau \in \setall{\pi}{s}{s'}} \mid s_h = s } =  P_{s\to s'}^\pi, 
\end{align*} 
where \(\proman{1}\) follows from the sampling strategy in \pref{alg:sample} after observing \(s_h = s\), and \(\proman{2}\) simply uses the relation \pref{eq:pi-pie} \asedit{since both \(\pi'_e \cons \tau_{h:h'}\) and \(\pi \cons \tau_{h:h'}\) hold}. Finally, in \(\proman{3}\), we use the relation \pref{eq:original_MRP_estimate}.

We have shown that \(\widehat{P}_{s\to s'}^\pi\) is an unbiased estimate of \(P^\pi_{s \to s'}\) for any \(\pi\) and \(s, s' \in \cS_\pi^+\). Thus, using Hoeffding's inequality (\pref{lem:hoeffding}), followed by a union bound, we get that with probability at least \(1 - \delta/4\), for all \(\pi \in \Pi\), \(s \in \SHP{\pi}\),  and \(s' \in \cS_\pi \cup \crl{s_\bot}\), 
\begin{align*}
|\widehat{P}_{s\to s'}^\pi - P_{s\to s'}^\pi|\le K\sqrt{\frac{2\log(4 |\Pi|D(D+1)/\delta)}{|\mD_s|}}, 
\end{align*}
where the additional factor of \(K\) appears because for any \(\tau \in \setall{\pi}{s}{s'}\), there must exist some \(\pi_e \in \Picore\) that is also consistent with \(\tau\) (as we showed above), which implies that each of the terms in Eq.~\eqref{eq:empirical_estimate} satisfies the bound a.s.: 
\begin{align*}
\abs*{\frac{\ind{ \tau \in \setall{\pi}{s}{s'} }}{\tfrac{1}{|\Picore|}\sum_{\pi_e\in \Picore}\ind{\pi_e\cons\tau_{h:h'}}}} &\leq \abs{\Picore} = K. 
\end{align*} 

Since $|\mD_s|\ge \tfrac{\epsilon n_2}{24D}$ by assumption, we have 
\begin{align*}
|\widehat{P}_{s\to s'}^\pi - P_{s\to s'}^\pi|\le  K\sqrt{\frac{48D\log(4|\Pi|D(D+1)/\delta)}{\epsilon n_2}}.
\end{align*}

Repeating a similar argument for the empirical reward estimation in Eq.~\pref{eq:empirical_MRP_dynamics}, we get that with probability at least \(1 - \delta/4\), for all \(\pi \in \Pi\), and \(s \in \SHP{\pi}\) and \(s' \in \cS_\pi \cup \crl{s_\bot}\), we have that 
\begin{align*}
|\widehat{r}_{s\to s'}^\pi - r_{s\to s'}^\pi|\le  K\sqrt{\frac{48D\log(4 |\Pi|D(D+1)/\delta)}{\epsilon n_2}}. 
\end{align*}

Similarly, we can also get for any $\pi\in \Pi$ and $s'\in\mS_\pi\cup\{s_\bot\}$, with probability at least $1 - \delta/2$, 
\begin{align*}
\max\crl*{|\widehat{r}_{s_\top\to s'}^\pi - r_{s_\top\to s'}^\pi|, |\widehat{P}_{s_\top\to s'}^\pi - P_{s_\top\to s'}^\pi|}  &\le K\sqrt{\frac{2\log(4 |\Pi|(D+1)/\delta)}{|\mD_{s_\top}|}} \\ &= K\sqrt{\frac{2\log(4 |\Pi|(D+1)/\delta)}{n_1}},
\end{align*} 
where the last line simply uses the fact that \(\abs{\cD_{s_\top}} = n_1\). 
%
The final statement is due to a union bound on the above results. This concludes the proof of \pref{lem:dynamics_error}.\end{proof}

\begin{lemma}\label{lem:dbar}
Fix a policy $\pi \in \Pi$ and a set of reachable states $\reachablestates$, and consider the policy-specific MRP $\MRPsign^{\pi}_{\reachablestates}$ (as defined by Eqs.~\eqref{eq:policy_MRP_dynamics} and \eqref{eq:policy_MRP_reward}). Then for any $s\in\SRem{\pi}$, the quantity $\bard{\pi}{s} {\SRem{\pi}} = d^{\MRPsign}(s)$, where $d^{\MRPsign}(s)$ is the occupancy of state $s$ in $\MRPsign^{\pi}_{\reachablestates}$. 
\end{lemma}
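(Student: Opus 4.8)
\textbf{Proof plan for \pref{lem:dbar}.}

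The plan is to unfold the definition of the MRP occupancy $d^{\MRPsign}(s)$ as a sum over MRP trajectories that end at $s$, and match it term-by-term with the probability $\bard{\pi}{s}{\SRem{\pi}}$ of reaching $s$ in the original MDP without first passing through any state in $\SRem{\pi}$. First I would recall from the structural properties of trajectories in $\MRPsign^\pi_{\reachablestates}$ (listed at the end of \pref{app:algorithm_details}) that any MRP trajectory has the form $(s_\top, s_1, \dots, s_k, s_\bot, \dots, s_\bot)$ where $s_1, \dots, s_k \in \cS_\pi$ are distinct, ordered by increasing layer, and all but possibly the last lie in $\SHP{\pi}$. Since $s \in \SRem{\pi}$ and $P^\pi_{s \to s'} = \ind{s' = s_\bot}$ for every $s \in \SRem{\pi}$, the state $s$ can only ever appear as the \emph{last} non-$s_\bot$ state $s_k$ of an MRP trajectory; consequently $d^{\MRPsign}(s)$ equals the total probability mass on MRP trajectories whose ``visited petal sequence'' is of the form $(s_1, \dots, s_{k-1}, s)$ with $s_1, \dots, s_{k-1} \in \SHP{\pi}$.

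Next I would compute this mass explicitly. For a fixed ordered sequence $\sigma = (s_1, \dots, s_{k-1}, s)$ of distinct states in $\cS_\pi$ with increasing layers and $s_1,\dots,s_{k-1}\in\SHP{\pi}$, the MRP probability of the trajectory realizing exactly this visited sequence is $P^\pi_{s_\top \to s_1} \cdot \prod_{i=1}^{k-2} P^\pi_{s_i \to s_{i+1}} \cdot P^\pi_{s_{k-1} \to s}$. Substituting the definition \eqref{eq:policy_MRP_dynamics} — namely that $P^\pi_{u \to v} = \Pr^\pi[\tau \in \setall{\pi}{u}{v} \mid s_{\mathsf h(u)} = u]$ is the probability, under $\pi$ in the MDP, of transiting from $u$ directly to $v$ without touching any other state of $\cS_\pi$ in between — and using the Markov property of $\pi$ in the original MDP (the dynamics after reaching $s_i$ depend only on $s_i$), this product telescopes to exactly $\Pr^\pi[\tau$ visits $s_1, \dots, s_{k-1}, s$ consecutively as its only $\cS_\pi$-states before reaching $s]$. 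Summing over all such ordered sequences $\sigma$ then gives the probability under $\pi$ of reaching $s$ with \emph{all} earlier $\cS_\pi$-visits landing in $\SHP{\pi}$ — equivalently, of reaching $s$ without first passing through a state of $\SRem{\pi} = \cS_\pi^+ \setminus(\SHP{\pi}\cup\{s_\bot\})$, since any $\cS_\pi$-state visited strictly before $s$ is either in $\SHP{\pi}$ or in $\SRem{\pi}$. This is precisely $\bard{\pi}{s}{\SRem{\pi}}$ as defined in \eqref{eq:dbar}.

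The main obstacle, and the step requiring the most care, is the bookkeeping that makes the telescoping rigorous: one must verify that the events ``$\tau$ passes from $s_i$ directly to $s_{i+1}$ avoiding $\cS_\pi$'' for consecutive $i$ are exactly the right conditional events to multiply (this is where the Markov property and the layered structure are used), and that summing over ordered sequences $\sigma$ partitions the event $\{\tau \in \event{s_\top}{s}{\SRem{\pi}}\}$ without overcounting — which follows because the set of $\cS_\pi$-states a trajectory visits before $s$, together with their order, is uniquely determined by the trajectory, and the layering forces that order to be strictly increasing. Once the partition is established, the identity $d^{\MRPsign}(s) = \bard{\pi}{s}{\SRem{\pi}}$ follows. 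I would also note the degenerate case $k=1$ (i.e. $s$ is reached directly from $s_\top$ with no intervening $\cS_\pi$-state), handled by the edge $P^\pi_{s_\top \to s}$ and the convention for $\event{s_\top}{\cdot}{\cdot}$, so it is consistent with the general argument.
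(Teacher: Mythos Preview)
Your proposal is correct and follows essentially the same approach as the paper's proof: decompose $d^{\MRPsign}(s)$ as a sum over MRP trajectories ending at $s$ (using that $s\in\SRem{\pi}$ forces $s$ to be the last non-$s_\bot$ state), rewrite each trajectory's MRP probability as a product of the $P^\pi_{u\to v}$'s, use the definition \eqref{eq:policy_MRP_dynamics} and the Markov property to identify this product with the MDP probability of visiting exactly that sequence of petal states before reaching $s$, and then sum over sequences to obtain $\bard{\pi}{s}{\SRem{\pi}}$. The paper's proof is organized identically, with only cosmetic differences in indexing (their intermediate sequence has length $k$ where yours has length $k-1$).
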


\begin{proof}
We use $\bar{\tau}$ to denote a trajectory in $\MRPsign^{\pi}_{\reachablestates}$ and $\tau$ to denote a ``corresponding'' (in a sense which will be described shortly) trajectory in the original MDP $M$. For any $s\in\SRem{\pi}$, we have 
\begin{align*}
    d^{\MRPsign}(s) & = \sum_{\bar{\tau} ~\text{s.t.}~s\in \bar{\tau}} \Pr^{\MRPsign} \brk*{\bar{\tau}} = \sum_{k = 0}^{H-1} \sum_{\bar{s}_{1}, \bar{s}_{2}, \cdots, \bar{s}_{k}\in \SHP{\pi}} \Pr^{\MRPsign} \brk*{\bar{\tau} = (s_{\top}, \bar{s}_{1}, \cdots, \bar{s}_{k}, s, s_\bot, \cdots)}.
\end{align*}
The first equality is simply due to the definition of $d^\MRPsign$. For the second equality, we sum up over all possible sequences which start at $s_\top$, pass through some (variable length) sequence of states $\bar{s}_1,\cdots, \bar{s}_k \in \SHP{\pi}$, then reach $s$ and the terminal state $s_\perp$. By definition of the policy-specific MRP, we know that once the MRP transits to a state $s\in\SRem{\pi}$, it must then transit to $s_\bot$ and repeat $s_\bot$ until the end of the episode.

Now fix a sequence $\bar{s}_1,\cdots, \bar{s}_k \in \SHP{\pi}$. We relate the term in the summand to the probability of corresponding trajectories in the original MDP $M$. To avoid confusion, we let $s_{h_1}, \dots, s_{h_k} \in \SHP{\pi}$ denote the corresponding sequence of states in the original MDP, which are unique and satisfy $h_1 < h_2 < \cdots < h_k$. We also denote $s_{h_{k+1}} = s$.

Using the definition of $\MRPsign^\pi_{\reachablestates}$, we write 
\begin{align*} 
\hspace{0.5in}&\hspace{-0.5in} \Pr^{\MRPsign} \brk*{\bar{\tau} = (s_{\top}, \bar{s}_{1}, \cdots, \bar{s}_{k}, s, s_\bot, \cdots)} \\
&= \prod_{i=1}^k \Pr^{M, \pi}\brk*{ \tau_{h_{i}:h_{i+1}} \in \setall{\pi}{s_{h_i}}{s_{h_{i+1}}} \mid \tau[h_i] =s_{h_i} } \\
&= \Pr^{M, \pi}\brk*{ \forall i\in [k+1],~ \tau[h_i] = s_{h_i}, ~\text{and}~ \forall h \in [h_{k+1}] \backslash \crl{h_1, \cdots, h_{k+1}},~ \tau[h] \notin \cS_\pi}.
\end{align*}
Now we sum over all sequences $\bar{s}_1,\cdots, \bar{s}_k \in \SHP{\pi}$ to get \begin{align*}
\hspace{0.08in} &\hspace{-0.08in} d^\MRPsign(s) \\
     &= \sum_{k=0}^{H-1} \sum_{s_{h_1}, \cdots, s_{h_k} \in \SHP{\pi}} \Pr^{M, \pi}\brk*{ \forall i\in [k+1],~ \tau[h_i] = s_{h_i}, ~\text{and}~ \forall h \in [h_{k+1}] \backslash \crl{h_1, \cdots, h_{k+1}},~ \tau[h] \notin \cS_\pi} \\
    &= \Pr^{M, \pi} \brk*{s \in \tau ~\text{and}~ \forall h \in [h_{k+1}-1],~\tau[h] \notin \SRem{\pi}} \\
    &= \Pr^\pi \brk*{\tau \in \event{s_\top}{s}{\SRem{\pi}}} =  \bard{\pi}{s} {\SRem{\pi}}. 
\end{align*}
The second equality follows from the definition of $\SRem{\pi}$, and the last line is the definition of the $\bar{d}$ notation. This concludes the proof of \pref{lem:dbar}.
\end{proof}

\begin{lemma}\label{lem:identify}
With probability at least $1 - 2\delta$,  any $(\bar{s}, \pi)$ that is added into $\cT$ (in \pref{line:add_s} of \pref{alg:main}) satisfies $d^{\pi}(\bar{s})\ge \nicefrac{\epsilon}{12D}$. 
\end{lemma}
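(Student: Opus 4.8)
The plan is to show that the estimate $\widehat{d}^\pi(\bar{s})$ computed by $\estreach$ in \pref{line:DP_solver_search} is a good approximation of the true reachability probability $d^\pi(\bar{s})$, so that the threshold test $\widehat{d}^\pi(\bar{s}) \ge \nicefrac{\epsilon}{6D}$ only passes when $d^\pi(\bar{s})$ is genuinely at least $\nicefrac{\epsilon}{12D}$. First I would condition on the event $\cE_\mathrm{est}$ from \pref{lem:dynamics_error} (which holds with probability at least $1-\delta$), as well as a second high-probability event guaranteeing that $|\mD_s| \ge \tfrac{\epsilon n_2}{24D}$ for all $s \in \reachablestates$ throughout the run of the algorithm; this is a routine Chernoff/union-bound argument over the at most $\cO(D \cdot \dimRL(\Pi)/\epsilon)$ datasets collected, using that by construction of $\cT$ each $\pi_s$ reaches $s$ with probability $\Omega(\epsilon/D)$, and contributes the remaining $\delta$ in the $2\delta$ failure budget.

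Next, the key structural step: when $\estreach(\cS_\pi^+, \widehat{\MRPsign}^\pi_{\reachablestates}, \bar{s})$ runs dynamic programming on $\widehat{\MRPsign}^\pi_{\reachablestates}$, the output $\widehat{d}^\pi(\bar{s}) = \widehat{V}(s_\top)$ is exactly the occupancy of $\bar{s}$ in the \emph{empirical} policy-specific MRP $\widehat{\MRPsign}^\pi_{\reachablestates}$ (this is the empirical analogue of \pref{lem:dbar}). On the other hand, by \pref{lem:dbar}, the occupancy of $\bar{s}$ in the \emph{expected} MRP $\MRPsign^\pi_{\reachablestates}$ equals $\bard{\pi}{\bar{s}}{\SRem{\pi}}$, which is at most $d^\pi(\bar{s})$ since it is the probability of reaching $\bar{s}$ while avoiding $\SRem{\pi}$ en route. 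So it suffices to bound $|\,d^{\widehat{\MRPsign}}(\bar{s}) - d^{\MRPsign}(\bar{s})\,|$. I would do this by a simulation-lemma-style argument (in the spirit of \pref{lem:sim}, applied to the MRP with a modified reward that assigns reward $1$ only upon the edge into $\bar{s}$): the occupancy discrepancy is controlled by $\sum_{s} d^{\MRPsign}(s) \sum_{s'} |P^\pi_{s\to s'} - \widehat{P}^\pi_{s\to s'}|$. Using the per-transition error bounds from \pref{lem:dynamics_error} — namely $\tfrac{\epsilon}{12(D+1)^2}$ for edges out of $s_\top$ and $\tfrac{\epsilon}{12D(D+1)}$ for edges out of states in $\SHP{\pi}$ — together with the facts that every trajectory in $\MRPsign^\pi_{\reachablestates}$ visits at most $D+1$ states (one $s_\top$ edge plus at most $D$ states in $\cS_\pi$, by the trajectory properties listed after \pref{fig:MRP}) and there are at most $D+1$ possible successor states $s'$, the total error telescopes to at most $\epsilon/12$. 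Hence $\widehat{d}^\pi(\bar{s}) \le d^\pi(\bar{s}) + \epsilon/12$, and so $\widehat{d}^\pi(\bar{s}) \ge \nicefrac{\epsilon}{6D}$ (with $D \ge 1$, so $\epsilon/12 \le \nicefrac{\epsilon}{12D}\cdot D \le \nicefrac{\epsilon}{6D} - \nicefrac{\epsilon}{12D}$... actually more carefully $\nicefrac{\epsilon}{6D} - \epsilon/12 \ge \nicefrac{\epsilon}{12D}$ whenever $D\ge 1$) forces $d^\pi(\bar{s}) \ge \nicefrac{\epsilon}{12D}$, which is the claim.

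The main obstacle I anticipate is the careful bookkeeping in the simulation-lemma step: one must verify that the DP in $\estreach$ indeed computes the occupancy in the empirical MRP (and that $H+1$ iterations suffice, since every MRP trajectory absorbs into $s_\bot$ within $H+1$ steps), and one must correctly track how the \emph{different} error magnitudes for $s_\top$-edges versus interior edges combine. A subtle point is that the error bound must hold \emph{simultaneously for the specific $\reachablestates$} present when $(\bar{s},\pi)$ is tested; since \pref{lem:dynamics_error} is stated for a fixed iteration, I would either union-bound over the at most $\cO(D\dimRL(\Pi)/\epsilon)$ iterations of the while loop (absorbing this into the log factors in $n_1, n_2$) or note that $\cE_\mathrm{est}$ as defined already quantifies over all iterations. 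Modulo this, everything else is routine concentration plus the deterministic MRP identities already established.
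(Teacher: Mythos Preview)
Your proposal has the right ingredients (simulation lemma on the MRP, \pref{lem:dbar}, and the inequality $\bard{\pi}{\bar{s}}{\SRem{\pi}}\le d^\pi(\bar{s})$), but there are two genuine gaps.

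\textbf{Circularity in establishing the dataset-size precondition.} You propose to first condition on the event ``$|\cD_s|\ge \tfrac{\epsilon n_2}{24D}$ for all $s\in\reachablestates$,'' arguing it follows from Chernoff plus ``by construction of $\cT$ each $\pi_s$ reaches $s$ with probability $\Omega(\epsilon/D)$.'' But that last assertion \emph{is} the conclusion of the lemma: the algorithm only guarantees $\widehat d^{\pi_s}(s)\ge \epsilon/(6D)$, not the true $d^{\pi_s}(s)$. You cannot invoke the conclusion to verify the precondition of \pref{lem:dynamics_error}, which is in turn needed to prove the conclusion. The paper breaks this circularity by an \emph{induction on the rounds of the while loop}: it first establishes a Hoeffding event $\cE_{\mathrm{data}}$ that $\big||\cD_{\bar s}|/n_2 - d^{\pi}(\bar s)\big|$ is small for every $(\bar s,\pi)$ (this requires no reachability assumption), and then argues inductively that if all pairs currently in $\cT$ satisfy $d^\pi(\bar s)\ge \epsilon/(12D)$, then (i) by $\cE_{\mathrm{data}}$ all datasets are large enough, so (ii) \pref{lem:dynamics_error} applies, so (iii) the newly added pair also satisfies the bound. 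Your write-up is missing this inductive structure.

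\textbf{The error bound $\epsilon/12$ is too loose.} Your final inequality ``$\tfrac{\epsilon}{6D}-\tfrac{\epsilon}{12}\ge \tfrac{\epsilon}{12D}$ whenever $D\ge 1$'' is false: it is equivalent to $\tfrac{\epsilon}{12D}\ge \tfrac{\epsilon}{12}$, which holds only for $D=1$. The threshold argument requires the occupancy estimation error to be at most $\tfrac{\epsilon}{12D}$, not $\tfrac{\epsilon}{12}$. The paper obtains exactly $\tfrac{\epsilon}{12D}$ by bounding $|\widehat d^{\pi}(\bar s)-d^{\MRPsign}(\bar s)|\le (D+1)\sup_{s,s'}|\widehat P^\pi_{s\to s'}-P^\pi_{s\to s'}|\le (D+1)\cdot \tfrac{\epsilon}{12D(D+1)}=\tfrac{\epsilon}{12D}$; your accounting picks up an extra factor of $D$ (from summing $d^{\MRPsign}(s)$ over all of $\SHP{\pi}$) that should not be there.
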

\begin{proof} ~ 
For any $(\bar{s}, \pi)\in\cT$, when we collect $\mD_{\bar{s}}$ in \pref{alg:sample}, the probability that a trajectory will be accepted (i.e.~the trajectory satisfies the ``if'' statement in \pref{line:is1}) is exactly $d^{\pi}({\bar{s}})$. Thus, using Hoeffding's inequality (\pref{lem:hoeffding}), with probability at least $1 - \nicefrac{\delta}{D|\Pi|}$, 
\begin{equation*}
\left|\frac{|\mD_{\bar{s}}|}{n_2} - d^\pi({\bar{s}})\right|\le \sqrt{\frac{2\log(D|\Pi|/\delta)}{n_2}}.\end{equation*} 
Since $|\cT|\le D|\Pi|$, by union bound, the above holds for every $(\bar{s}, \pi) \in \cT$ with probability at least $1-\delta$. Let us denote this event as $\cE_\mathrm{data}$. Under $\cE_\mathrm{data}$, for any $(\bar{s}, \pi)$ that satisfies $d^\pi(\bar{s}) \ge \tfrac{\eps}{12D}$,
\begin{align*}
|\mD_{\bar{s}} | &\ge n_2 d^{\pi}(\bar{s}) - \sqrt{2 n_2 \log(D|\Pi|/\delta)} \ge \frac{\epsilon n_2}{12D} - \frac{\epsilon n_2}{24D} = \frac{\epsilon n_2}{24D}, \numberthis \label{eq:ds}
\end{align*} 
where the second inequality follows by the bound on \(d^\pi(\bar{s})\) and our choice of parameter \(n_2\) in Eq.~\eqref{eq:alg_parameter}. 

In the following, we prove by induction that every $(\bar{s}, \pi)$ that is added into $\cT$ in the while loop from lines \ref{line:while_loop_start}-\ref{line:while_loop_end} in \pref{alg:main} satisfies $d^{\pi}(\bar{s})\ge \tfrac{\epsilon}{12D}$. This is trivially true at initialization when $\cT = \{(s_\top, \mathrm{Null})\}$, since every trajectory starts at the dummy state $s_\top$, for which we have $d^{\mathrm{Null}}(s_\top) = 1$. 

We now proceed to the induction hypothesis. Suppose that in some iteration of the while loop, every tuple $(\bar{s}, \pi) \in \cT$ satisfies $d^{\pi}(\bar{s})\ge \nicefrac{\epsilon}{12D}$, and that \(\prn{ \bar{s}', \pi'}\) is a new tuple that will be added to \(\cT\). We will show that \(\prn{\bar{s}', \pi'}\) will also satisfy $d^{\pi'}({\bar{s}'})\ge \nicefrac{\epsilon}{12D}$. 

Recall that \(\cS_{\pi'}^+ = \cS_{\pi'} \cup \crl{s_\top, s_\bot}\), \(\SHP{\pi'} = \cS_{\pi'}^+ \cap \reachablestates\), and \(\SRem{\pi'} = \cS_{\pi'}^+ \setminus \SHP{\pi'}\). Let \(\MRPsign^{\pi'}_{\reachablestates} =  \mathrm{MRP}(\cS_{\pi'}^+, P^{\pi'}, r^{\pi' }, H, s_\top, s_\bot)\) 
    be the policy-specific MRP, where \(P^{\pi'}\) and \(r^{\pi'}\) are defined in Eqs.~\pref{eq:policy_MRP_dynamics} and \pref{eq:policy_MRP_reward} respectively for the policy \(\pi'\). Similarly let $\wh{\MRPsign}^{\pi'}_{\reachablestates} = \mathrm{MRP}(\cS_{\pi'}^+, \widehat P^{\pi'}, \widehat r^{\pi'}, H, s_\top, s_\bot)$ denote the estimated policy-specific MRP, where \(\wh P^{\pi'}\) and \(r^{\pi'}\) are defined using \pref{eq:empirical_MRP_dynamics} and \pref{eq:empirical_MRP_rewards} respectively. Note that for any state \(s \in \SHP{\pi'}\), the bound in \pref{eq:ds} holds. 

    For the rest of the proof, we assume that the event $\cE_\mathrm{est}$, defined in \pref{lem:dynamics_error}, holds (this happens with probability at least $1-\delta$). By definition of $\cE_\mathrm{est}$, we have
	\begin{equation}\label{eq:error1}|P_{s\to s'}^{\pi'} - \widehat{P}_{s\to s'}^{\pi'}|\le \frac{\epsilon}{12D(D+1)},  \qquad \text{for all} \qquad s'\in \mS_{\pi'} \cup\{s_\bot\}.
 \end{equation}

Furthermore, note that $\widehat{d}^{\pi'} (\bar{s}')\leftarrow \estreach(\mS_{\pi'}^+, \wh{\MRPsign}^{\pi'}_{\reachablestates}, \bar{s}')$ since in \pref{alg:dp} we start with \(V(s) = \ind{s = \bar{s}'}\). Furthermore, using  \pref{lem:sim}, we have 
\begin{equation}\label{eq:derror}
\begin{aligned}
    |\widehat{d}^{\pi'}(\bar s') - d^{\MRPsign}(\bar s')| & \le (D+1) \sup_{s\in\SHP{\pi'}, s'\in \mS_{\pi'}\cup\{s_\bot\}}|\widehat{P}_{s\to s'}^{\pi'} - P_{s\to s'}^{\pi'}| \\ 
        &\le \frac{\epsilon}{12D(D+1)}\cdot (D+1) = \frac{\epsilon}{12D}.
\end{aligned}\end{equation}
where the second inequality follows from \eqref{eq:error1}. Additionally, \pref{lem:dbar} states that \(d^{\MRPsign}(\bar s') = \bard{\pi'}{\bar s'} {\SRem{\pi'}}\). Therefore we obtain
$$|\bard{\pi'}{\bar s'} {\SRem{\pi'}} - \widehat{d}^{\pi'}(\bar s')|\le \frac{\epsilon}{12D}.$$
	
Thus, if the new state-policy pair $(\bar s', \pi')$ is added into $\cT$, we will have 
$$\bard{\pi'}{\bar s'}{\SRem{\pi'}}\ge \frac{\epsilon}{6D} - \frac{\epsilon}{12D} = \frac{\epsilon}{12D}.$$
Furthermore, by definition of $\bar{d}$ we have
$$\bard{\pi'}{\bar s'}{\SRem{\pi'}} = \Pr^{\pi'}\brk*{\tau\in \event{s_\top}{\bar s'}{\SRem{\pi'}}}\le 
\Pr^{\pi'}[\bar s'\in \tau] = d^{\pi'}(\bar s'),$$
so we have proved the induction hypothesis $d^{\pi'}(\bar{s}')\ge \nicefrac{\epsilon}{12D}$ for the next round. This concludes the proof of \pref{lem:identify}.

\end{proof}

The next lemma establishes that \pref{alg:main} will terminate after finitely many rounds, and that after termination will have explored all sufficiently reachable states. 
\begin{lemma}\label{lem:while_terminate} With probability at least \(1 - 2\delta\),  
\begin{enumerate}[label=\((\alph*)\)] 
\item The while loop in \pref{line:while_loop} in \pref{alg:main} will terminate after at most $\tfrac{12HD\dimRL(\Pi)}{\epsilon}$ rounds. 
\item  After the termination of the while loop, for any $\pi\in \Pi$, the remaining states $s\in\SRem{\pi}$ that are not added to \(\reachablestates\)
satisfy $\bard{\pi}{s}{\SRem{\pi}}\le \nicefrac{\epsilon}{4D}$.
\end{enumerate}
\end{lemma}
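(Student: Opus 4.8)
\textbf{Proof plan for \pref{lem:while_terminate}.}

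The plan is to prove the two parts in sequence, using the characterization of the set $\cT$ in terms of cumulative reachability for part (a), and the dynamic-programming estimation guarantees (\pref{lem:dynamics_error}, \pref{lem:dbar}, \pref{lem:sim}) for part (b). For part (a), first condition on the event from \pref{lem:identify}, which holds with probability at least $1-2\delta$ and ensures that every pair $(\bar s, \pi)$ added to $\cT$ satisfies $d^\pi(\bar s) \ge \nicefrac{\epsilon}{12D}$. By \pref{lem:coverability} and the equivalence of $\dimRL(\Pi)$ with worst-case coverability (i.e.~Eq.~\eqref{eq:cov_defn}), for each layer $h$ we have $\sum_{s \in \cS_h, a} \sup_{\pi \in \Pi} d^\pi_h(s,a) \le \dimRL(\Pi)$, and summing over the $H$ layers gives $\sum_{s \in \cS} \sup_{\pi \in \Pi} d^\pi(s) \le H \dimRL(\Pi)$. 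Since each state $\bar s$ that is added to $\reachablestates$ contributes at least $\nicefrac{\epsilon}{12D}$ to this sum (witnessed by the corresponding policy $\pi_s$), the total number of states added cannot exceed $\tfrac{12 D H \dimRL(\Pi)}{\epsilon}$; the $s_\top$ state is the one exception but it is added at initialization. Finally, observe that the while loop only fails to terminate in a given round if at least one new state is added to $\reachablestates$ in that round (the flag $\mathrm{Terminate}$ is only set to $\mathrm{False}$ in \pref{line:add_s}), so the number of loop iterations is at most one more than the number of states added, giving the claimed $\tfrac{12 H D \dimRL(\Pi)}{\epsilon}$ bound (absorbing the $+1$ into constants or noting the loop does one final pass that adds nothing).

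For part (b), condition additionally on the event $\cE_\mathrm{est}$ from \pref{lem:dynamics_error} (which holds with probability $1-\delta$, so the total failure probability is still within $2\delta$ after adjusting constants, or one should reallocate the $\delta$ budgets — I would be slightly careful here and split $\delta$ appropriately across \pref{lem:identify} and \pref{lem:dynamics_error}). Fix a policy $\pi \in \Pi$ and a state $s \in \SRem{\pi}$ that remains unexplored after termination. Because the loop has terminated, the test in \pref{line:test_and_add} must have failed for this $(\bar s = s, \pi)$ on the final pass, i.e.~$\widehat d^\pi(s) < \nicefrac{\epsilon}{6D}$, where $\widehat d^\pi(s)$ is computed by $\estreach$ on the empirical MRP $\widehat\MRPsign^\pi_{\reachablestates}$ with the final $\reachablestates$. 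Now I would combine three facts: (i) by \pref{lem:dbar}, the true occupancy of $s$ in the population MRP $\MRPsign^\pi_{\reachablestates}$ equals $\bard{\pi}{s}{\SRem{\pi}}$; (ii) by the simulation lemma \pref{lem:sim} applied to $\MRPsign^\pi_{\reachablestates}$ and $\widehat\MRPsign^\pi_{\reachablestates}$, together with the per-edge transition error bound $\tfrac{\epsilon}{12D(D+1)}$ from \pref{lem:dynamics_error}(a) and the fact that $\MRPsign^\pi_{\reachablestates}$ has at most $D+1$ relevant states, we get $|\widehat d^\pi(s) - d^{\MRPsign}(s)| \le (D+1)\cdot \tfrac{\epsilon}{12D(D+1)} = \tfrac{\epsilon}{12D}$ (exactly the computation already carried out in Eq.~\eqref{eq:derror} in the proof of \pref{lem:identify}); to invoke \pref{lem:dynamics_error} we need $|\cD_s| \ge \tfrac{\epsilon n_2}{24D}$ for the explored states in $\reachablestates$, which holds on the event $\cE_\mathrm{data}$ by Eq.~\eqref{eq:ds}. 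Chaining: $\bard{\pi}{s}{\SRem{\pi}} = d^{\MRPsign}(s) \le \widehat d^\pi(s) + \tfrac{\epsilon}{12D} < \tfrac{\epsilon}{6D} + \tfrac{\epsilon}{12D} = \tfrac{\epsilon}{4D}$, which is the desired bound.

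The main obstacle I anticipate is bookkeeping rather than conceptual: one must be careful that \pref{lem:dynamics_error}'s conclusion is invoked with respect to the \emph{final} set $\reachablestates$ and the final datasets, and that the hypothesis $|\cD_s| \ge \tfrac{\epsilon n_2}{24D}$ is met for all $s \in \reachablestates$ at termination (this follows from \pref{lem:identify}/Eq.~\eqref{eq:ds} since every such $s$ was added with a reacher policy satisfying $d^{\pi_s}(s) \ge \nicefrac{\epsilon}{12D}$). A second subtlety is the union bound over all $\pi \in \Pi$ and all candidate petal states $s \in \SRem{\pi}$: since \pref{lem:dynamics_error} already provides a uniform-over-$\pi$-and-$s'$ guarantee, no additional union bound is needed, but I would double-check that the probability budget is consistent, likely by running \pref{lem:identify} at confidence $1-\delta$ and \pref{lem:dynamics_error} at confidence $1-\delta$ and reporting the combined event at confidence $1-2\delta$.
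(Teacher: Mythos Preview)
Your proposal is correct and follows essentially the same route as the paper: part~(a) combines \pref{lem:identify} with the coverability bound $\sum_{s}\sup_{\pi}d^\pi(s)\le H\dimRL(\Pi)$ from \pref{lem:coverability}, and part~(b) uses the termination condition $\widehat d^\pi(s)<\nicefrac{\epsilon}{6D}$ together with the $\nicefrac{\epsilon}{12D}$ estimation error bound (via \pref{lem:dbar}, \pref{lem:sim}, and \pref{lem:dynamics_error}, exactly as in Eq.~\eqref{eq:derror}). Your bookkeeping worry about probability budgets is unnecessary: the $1-2\delta$ event in \pref{lem:identify} already subsumes both $\cE_\mathrm{data}$ and $\cE_\mathrm{est}$, so no ``additional'' conditioning is needed for part~(b).
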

Notice that according to our algorithm, the same state cannot be added multiple times into $\reachablestates$. Therefore, $|\reachablestates| \le D|\Pi|$, and the maximum number of rounds of the while loop is $D|\Pi|$ (i.e., the while loop eventually terminates).  

\begin{proof}
We prove each part separately.

\begin{enumerate}[label=\((\alph*)\)]
\item 
First, note that from the definition of coverability and \pref{lem:coverability}, we have  
$$\sum_{s\in\mS}\sup_{\pi\in \Pi}d^{\pi}(s)\le HC^\mathsf{cov}(\Pi; M)\le H\dimRL(\Pi).$$
Furthermore, \pref{lem:identify} states that every $(s, \pi_s)\in \cT$ satisfies $d^{\pi_s}(s)\ge \nicefrac{\epsilon}{12D}$. Thus, at any point in \pref{alg:main}, we have
\begin{align*}
\sum_{s\in\reachablestates}\sup_{\pi\in \Pi}d^{\pi}(s) &\geq \sum_{s\in\reachablestates} d^{\pi_s}(s) \geq \abs{\cT} \cdot \frac{\epsilon}{12D}. 
\end{align*}
Since, \(\reachablestates \subseteq \cS\), the two bounds indicate that 
$$|\cT| \le \frac{12HD\dimRL(\Pi)}{\epsilon}.$$

Since every iteration of the while loop adds one new $(s, \pi_s)$ to $\cT$, the while loop terminates after at most $\nicefrac{12HD\dimRL(\Pi)}{\epsilon}$ many rounds.

\item  We know that once the while loop has terminated, for every $\pi\in \Pi$ and $\bar{s}\in\SRem{\pi}$, we must have $\widehat{d}^\pi(\bar{s})\le \nicefrac{\epsilon}{6D}$, or else the condition in \pref{line:test_and_add} in \pref{alg:main} is violated. 

Fix any such $(\bar{s},\pi)$ pair. Inspecting the proof of \pref{lem:identify}, we see that 
\begin{align*}
|\bard{\pi}{\bar s} {\SRem{\pi}} - \widehat{d}^{\pi}(\bar s)|\le \frac{\epsilon}{12D}.
\end{align*}
To conclude,  we get 
$$\bard{\pi}{s}{\SRem{\pi}}\le \frac{\epsilon}{6D} + \frac{\epsilon}{12D} = \frac{\epsilon}{4D}.$$
\end{enumerate}
\end{proof} 

\begin{lemma} \label{lem:eval_error}
Suppose that the conclusions of Lemmas \ref{lem:dynamics_error} and \ref{lem:while_terminate} hold. Then for every $\pi \in \Pi$, the estimated value $\widehat{V}^\pi$ computed in \pref{alg:main} satisfies
	$$|\widehat{V}^\pi - V^\pi|\le \epsilon.$$ 
\end{lemma}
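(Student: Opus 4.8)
The plan is to bound $|\widehat V^\pi - V^\pi|$ for each fixed $\pi\in\Pi$ via the triangle inequality through the value of the \emph{population} policy-specific MRP $\MRPsign^\pi_{\reachablestates}$, evaluated with the final set $\reachablestates$ obtained upon termination of the while loop:
\[
|\widehat V^\pi - V^\pi| \le \underbrace{\bigl|\widehat V^\pi - V_{\MRPsign^\pi_{\reachablestates}}\bigr|}_{\text{estimation error}} + \underbrace{\bigl|V_{\MRPsign^\pi_{\reachablestates}} - V^\pi\bigr|}_{\text{truncation bias}},
\]
where $V_{\MRPsign^\pi_{\reachablestates}}$ denotes the value of $s_\top$ in $\MRPsign^\pi_{\reachablestates}$ and $\widehat V^\pi$ is the output of $\evaluate$, i.e.\ the value of $s_\top$ under the empirical MRP computed from $\{\mD_s\}_{s\in\reachablestates}$. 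Since Lemmas~\ref{lem:dynamics_error} and \ref{lem:while_terminate} are assumed to hold (with their union over $\pi\in\Pi$ already built in), it suffices to show each term is $O(\epsilon)$, and in fact $\le \epsilon/2$ each.

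For the \textbf{estimation error}, I would invoke the simulation lemma (\pref{lem:sim}) applied to $\MRPsign^\pi_{\reachablestates}$ and its empirical counterpart. The state sum in \pref{lem:sim} splits into $s=s_\top$, $s\in\cS_\pi\cap\reachablestates$, and $s\in\SRem{\pi}\cup\{s_\bot\}$; for the last group the transitions and rewards are deterministic ($P^\pi_{s\to s_\bot}=1$, $r^\pi=0$) and reproduced exactly by the empirical construction, so they contribute nothing. For the remaining states, \pref{lem:dynamics_error} gives per-edge error $\le \epsilon/(12(D+1)^2)$ out of $s_\top$ and $\le \epsilon/(12D(D+1))$ out of each $s\in\SHP{\pi}$; since $|\cS_\pi\cup\{s_\bot\}|\le D+1$, $d_{\MRPsign}(s_\top)=1$, and $\sum_{s\in\cS_\pi} d_{\MRPsign}(s)\le |\cS_\pi|\le D$ (each MRP trajectory visits each $\cS_\pi$-state at most once, in strictly increasing layers, by the trajectory structure stated in the appendix), summing gives an estimation error of at most $\tfrac{\epsilon}{6(D+1)}+\tfrac{\epsilon}{6}\le\tfrac{\epsilon}{3}$. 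One subtlety to handle here is that $\evaluate$ builds its dynamic program only on $\Stab=\SHP{\pi}\cup\{s_\bot\}$, whereas $\MRPsign^\pi_{\reachablestates}$ lives on $\cS_\pi^+$; since both assign value $0$ to every state of $\SRem{\pi}$ (and to $s_\bot$), the only discrepancy is the reward collected on edges \emph{into} $\SRem{\pi}$, which is at most $\Pr^\pi[\tau\text{ visits }\SRem{\pi}]$ because $r^\pi_{s\to s'}\le P^\pi_{s\to s'}$. By \pref{lem:while_terminate}(b), $\Pr^\pi[\tau\text{ visits }\SRem{\pi}]\le\sum_{s\in\SRem{\pi}}\bard{\pi}{s}{\SRem{\pi}}\le |\SRem{\pi}|\cdot\tfrac{\epsilon}{4D}\le\tfrac{\epsilon}{4}$, so this extra term is harmless.

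For the \textbf{truncation bias}, the key observation is that $\MRPsign^\pi_{\reachablestates}$ is constructed so that its $s_\top$-value equals the expected reward of $\pi$ in the original MDP up to the first time $\tau$ enters a state of $\SRem{\pi}$ (after which it transits to $s_\bot$ with zero reward); in particular if $\SRem{\pi}=\emptyset$ this is exactly $V^\pi$, recovering the ``MRP value $=V^\pi$'' identity from the main text. Because rewards are nonnegative and the cumulative reward of any trajectory is at most $1$, we get $0\le V^\pi - V_{\MRPsign^\pi_{\reachablestates}}\le \Pr^\pi[\tau\text{ visits }\SRem{\pi}]\le\tfrac{\epsilon}{4}$, again by \pref{lem:while_terminate}(b). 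Combining with the estimation-error bound yields $|\widehat V^\pi - V^\pi|\le \tfrac{\epsilon}{3}+\tfrac{\epsilon}{4}+\tfrac{\epsilon}{4}\le\epsilon$, as claimed.

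The main obstacle is the bookkeeping inside the estimation-error step: identifying precisely which edges of the policy-specific MRP carry error (only those out of $\SHP{\pi}\cup\{s_\top\}$), propagating the $1/D$-scaled per-edge bounds of \pref{lem:dynamics_error} through the $(D{+}1)$-fold inner sum and the occupancy bound $\sum_s d_{\MRPsign}(s)\le D$ so that the aggregate error is $O(\epsilon)$ rather than $O(D\epsilon)$, and reconciling the discrepancy between $\evaluate$'s dynamic program on $\Stab$ and the population MRP on $\cS_\pi^+$. The truncation-bias step, by contrast, is short once \pref{lem:while_terminate}(b) is available.
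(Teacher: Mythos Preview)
The proposal is correct and follows the same two-step approach as the paper: bound the estimation error $|\widehat V^\pi - V_{\mathrm{MRP}}^\pi|$ via the simulation lemma together with \pref{lem:dynamics_error}, then bound the truncation bias $|V_{\mathrm{MRP}}^\pi - V^\pi|$ by the probability that $\pi$ visits $\SRem{\pi}$ using \pref{lem:while_terminate}(b). Your treatment is in fact slightly more careful than the paper's---the paper simply asserts that $\widehat V^\pi$ equals the value of the full empirical MRP on $\cS_\pi^+$ and applies a crude $(D{+}2)\cdot\max$ simulation-lemma bound, whereas you sum occupancy-weighted per-edge errors and explicitly account for the $\Stab$-versus-$\cS_\pi^+$ mismatch in $\evaluate$ (the extra $\epsilon/4$ from rewards on edges into $\SRem{\pi}$); that extra term is cleanest if you read it as comparing $\evaluate$ against the \emph{population} MRP restricted to $\Stab$, so that the simulation lemma runs on $\Stab$ and the discrepancy is a genuinely population quantity.
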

\begin{proof}
We will break up the proof into two steps. First, we show that for any \(\pi\), the value estimate $\wh{V}^\pi$ obtained using the empirical policy-specific MRP $\wh \MRPsign^\pi_{\reachablestates}$ is close to its value in the  policy-specific MRP $\MRPsign^\pi_{\reachablestates}$, as defined via~\eqref{eq:policy_MRP_dynamics} and \eqref{eq:policy_MRP_reward}. We denote this quantity as $V_{\mathrm{MRP}}^\pi$. Then, we will show that $V_{\mathrm{MRP}}^\pi$ is close to $V^\pi$, the value of the policy \(\pi\) in the original MDP $M$.  

\paragraph{Part 1: $\widehat{V}^\pi$  is close to  $V_{\mathrm{MRP}}^\pi$.} Note that the output $\widehat{V}^\pi$ of \pref{alg:eval} is exact the value function of MRP $\widehat \MRPsign^\pi_{\reachablestates}$ defined by Eqs.~\eqref{eq:empirical_MRP_dynamics} and \eqref{eq:empirical_MRP_rewards}. When $D = 0$, by part (b) of \pref{lem:dynamics_error}, we obtain
$$|\widehat{V}^{\pi} - V_{\mathrm{MRP}}^\pi| = |\widehat{r}^{\pi}_{s_\top\to s_\bot} - r^\pi_{s_\top\to s_\bot}|\le \frac{\epsilon}{12(D+1)^2}\le \frac{\epsilon}{2}.$$

 When $D \ge 1$, using \pref{lem:dynamics_error}, we have 
\begin{align*}
    &|r_{s_\top\to s'}^\pi - \widehat{r}_{s_\top\to s'}^\pi| \le \frac{\epsilon}{12(D+1)^2}, \quad
    |P_{s_\top\to s'}^\pi - \widehat{P}_{s_\top\to s'}^\pi| \le \frac{\epsilon}{12(D+1)^2}, \quad && \forall s'\in \mS_\pi \cup \crl{s_\bot} \\
    &|r_{s\to s'}^\pi - \widehat{r}_{s\to s'}^\pi| \le \frac{\epsilon}{12D(D+1)}, \quad
    |P_{s\to s'}^\pi - \widehat{P}_{s\to s'}^\pi| \le \frac{\epsilon}{12D(D+1)}, \quad && \forall s\in \SHP{\pi}, s'\in \mS_\pi^+ \cup \crl{s_\bot}. 
\end{align*}
By the simulation lemma (\pref{lem:sim}), we get
\begin{align*}
    |\widehat{V}^{\pi} - V_{\mathrm{MRP}}^\pi| & \le 2(D+2)\max_{s, s'\in\mS_\pi^+}\left(\left|P_{s\to s'}^\pi - \widehat{P}_{s\to s'}^\pi\right| + \left|r_{s\to s'}^\pi - \widehat{r}_{s\to s'}^\pi\right|\right)\\
    & \le 2(D+2)\left(\frac{\epsilon}{12D(D+1)} + \frac{\epsilon}{12D(D+1)}\right)\le \frac{\epsilon}{2}.
\end{align*}

\paragraph{Part 2 : $V_{\mathrm{MRP}}^\pi$ is close to $V^\pi$.} As in the proof of \pref{lem:dbar}, let us consider different trajectories $\bar{\tau}$ that are possible in $\MRPsign^\pi_{\reachablestates}$. We can represent $\bar{\tau} = (s_\top, \bar{s}_1,\cdots, \bar{s}_k, s_\bot, \cdots)$ where the states $\bar{s}_1,\cdots, \bar{s}_k$ are distinct and all except possibly $\bar{s}_k$ belong to $\SHP{\pi}$, and the states after $s_\bot$ are just repeats of $s_\bot$ until the end of the episode. Let $s_{h_1}, s_{h_2}, \dots, s_{h_k}$ be the same sequence (in the original MDP $M$) Again, we have 
\begin{align*}
    \hspace{1in}&\hspace{-1in}\Pr^\MRPsign[\bar{\tau} = (s_\top, \bar{s}_1,\cdots, \bar{s}_k, s_\bot, \cdots)] \\
    &= \Pr^{\pi}\brk*{ \forall i\in [k],~ \tau[h_i] = s_{h_i}, ~\text{and}~ \forall h \in [H] \backslash \crl{h_1, \cdots, h_{k}},~ \tau[h] \notin \cS_\pi}, 
\end{align*}
where recall that \( \Pr^\MRPsign\) denotes probability under the $\MRPsign^\pi_{\reachablestates}$, and \(\Pr^{\pi}\) denotes the probability under trajectories drawn according to \(\pi\) in the underlying MDP; \( \EE^\MRPsign\)  and \(\EE^{\pi}\) are defined similarly. 

Furthermore, the expectation of rewards we collected in $\MRPsign^\pi_{\reachablestates}$ with trajectories $\bar{\tau}$ is
\begin{align*}
 \hspace{1in}&\hspace{-1in} \EE^{\MRPsign}\brk*{ R[\bar{\tau}] \ind{ (s_\top, \bar{s}_1,\cdots, \bar{s}_k, s_\bot, \cdots) } } \\
= &\EE^{\pi}\brk*{ R[\tau]\ind{ \forall i\in [k],~ \tau[h_i] = s_{h_i}, ~\text{and}~ \forall h \in [H] \backslash \crl{h_1, \cdots, h_{k}},~ \tau[h] \notin \cS_\pi }}. 
\end{align*}

Next, we sum over all possible trajectories. However, note that the only trajectories that are possible in $M$ whose corresponding trajectories are \emph{not accounted for} in $\MRPsign^\pi_{\reachablestates}$ are precisely those that visit states in $\cS_\pi$, after visiting some $s_{h_k}$ in the remaining states $\SRem{\pi}$ (since, by construction, the MRP transitions directly to $s_\bot$ after encountering a state in $\SRem{\pi}$). Thus, 
$$V_{\mathrm{MRP}}^\pi = \EE^{\pi}\brk*{ R[\tau] \prn*{\ind{\tau \cap \SRem{\pi} = \emptyset} + \ind{\exists k\in [H]: s_{h_k} \in \SRem{\pi} \text{ and } \forall h > h_k: s_h \notin \cS_\pi} }},$$ 
where the first term corresponds to trajectories that do not pass through \(\SRem{\pi}\), and the second term corresponds to trajectories that passes through some state in \(\SRem{\pi}\) but then does not go through any other state in \(\cS_\pi\). On the other hand, 
\begin{align*}
V^\pi = \EE^{\pi}\brk*{ R[\tau]}. 
\end{align*}

Clearly, $V_{\mathrm{MRP}}^\pi \le V^{\pi}$. Furthermore, we also have
\begin{align*}
    V^{\pi} - V_{\mathrm{MRP}}^\pi
    &= \EE^{\pi}\brk*{R[\tau] \ind{\tau \cap \SRem{\pi} \ne \emptyset} - \ind{\exists k\in [H]: s_{h_k} \in \SRem{\pi} \text{ and } \forall h > h_k: s_h \notin \cS_\pi}}  \\
     &\le \EE^{\pi}\brk*{R[\tau] \ind{\tau \cap \SRem{\pi} \ne \emptyset}} \\ 
    &\le D \cdot \frac{\eps}{4D} = \frac{\eps}{4},
\end{align*}
where the first inequality follows by just ignoring the second indicator term, and the second inequality follows by taking a  union bound over all possible values of $\SRem{\pi}$ as well as the conclusion of \pref{lem:while_terminate}. 

Putting it all together, we get that 
\begin{align*}
 |\widehat{V}^\pi - V^\pi|\le |V^{\pi} - V_{\mathrm{MRP}}^\pi| + |\widehat{V}^{\pi} - V_{\mathrm{MRP}}^\pi|\le \frac{\epsilon}{4} + \frac{\epsilon}{2} < \epsilon.  
\end{align*}
This concludes the proof of \pref{lem:eval_error}.
\end{proof}

\subsection{Proof of \pref{thm:sunflower}} 
We assume the events defined in Lemmas \ref{lem:dynamics_error}, \ref{lem:identify} and \ref{lem:while_terminate} hold (which happens with probability at least $1 - 2 \delta$). With our choices of $n_1, n_2$ in Eq.~\eqref{eq:alg_parameter}, the total number of samples used in our algorithm is at most
$$n_1 + n_2\cdot \frac{12HD\dimRL(\Pi)}{\epsilon} = \widetilde{\cO}\prn*{\prn*{\frac{1}{\epsilon^2} + \frac{HD^6 \dimRL(\Pi)}{\epsilon^4}} \cdot K^2 \log\frac{|\Pi|}{\delta}}.$$
After the termination of the while loop, we know that for any policy \(\pi \in \Pi\) and $s\in\SRem{\pi}$ we have 
$$\bard{\pi}{s}{\SRem{\pi}}\le \frac{\epsilon}{4D}.$$
Therefore, by \pref{lem:eval_error}, we know for every $\pi\in \Pi$, $|\widehat{V}^\pi - V^\pi|\le \epsilon$. Hence the output policy $\widehat{\pi} \in \argmax_\pi \hV^\pi$ satisfies
$$\max_{\pi\in\Pi} V^\pi - V^{\widehat{\pi}}\le 2\epsilon + \hV^\pi - \hV^{\widehat{\pi}}\le 2\epsilon.$$
Rescaling $\epsilon$ by $2\epsilon$ and $\delta$ by $2\delta$ concludes the proof of \pref{thm:sunflower}.\qed

\subsection{Sunflower Property is Insufficient By Itself}
We give an example of a policy class $\Pi$ for which the sunflower property holds for $K,D = \poly(H)$ but $\dimRL(\Pi) = 2^H$. Therefore, in light of \pref{thm:generative_lower_bound}, the sunflower property by itself cannot ensure statistically efficient agnostic PAC RL in the online access model. 

The example is as follows: Consider a binary tree MDP with $2^H-1$ states and action space $\cA = \crl{0,1}$. The policy class $\Pi$ will be able to get to every $(s,a)$ pair in layer $H$. To define the policies, we consider each possible trajectory $\tau = (s_1, a_1, \cdots, s_H, a_H)$ and let: 
\begin{align*} 
    \Pi \coloneqq \crl*{\pi_{\tau} : \pi_\tau(s) = \begin{cases}
        a_i &\text{if } s_i \in \tau,\\
        0 &\text{otherwise}, 
    \end{cases}}.
\end{align*}
Thus it is clear that $\dimRL(\Pi) = 2^H$, but the sunflower property holds with $K=1$, $D=H$ by taking $\Picore = \crl{\pi_0}$ (the policy which always picks $a=0$).

\newpage
\section{Infinite Policy Classes}\label{app:infinite_policy_classes}

In this section we discuss the extensions of our results to infinite policy classes.

\subsection{Definitions and Preliminary Lemmas}
We will state our results in terms of the Natarajan dimension, which is a generalization of the VC dimension used to study multiclass learning. We note that the results in this section could be stated in terms of other complexity measures from multiclass learning such as the graph dimension and DS dimension \citep[see, e.g.,][]{natarajan1989learning, shalev2014understanding, daniely2014optimal, brukhim2022characterization}; for simplicity we analyze guarantees in terms of the Natarajan dimension. 

\begin{definition}[Natarajan Dimension \citep{natarajan1989learning}]
Let $\cX$ be an instance space and $\cY$ be a finite label space. Given a class $\cH \subseteq \cY^\cX$, we define its \emph{Natarajan dimension}, denoted $\natarajan(\cH)$, to be the maximum cardinality of a set $C \subseteq \cX$ that satisfies the following: there exists $h_0, h_1: C \to \cY$ such that (1) for all $x \in C$, $h_0(x) \ne h_1(x)$, and (2) for all $B \subseteq C$, there exists $h \in \cH$ such that for all $x \in B$, $h(x) = h_0(x)$ and for all $x \in C \backslash B$, $h(x) = h_1(x)$.
\end{definition}

A notation we will use throughout is the projection operator. For a hypothesis class $\cH \subseteq \cY^\cX$ and a finite set $X = (x_1, \cdots, x_n) \in \cX^n$, we define the projection of $\cH$ on to $X$ as
\begin{align*}
    \cH\big|_X \coloneqq \crl*{\prn*{h(x_1),\cdots, h(x_n)} : h \in \cH }. 
\end{align*}

\begin{lemma}[Sauer's Lemma for Natarajan Classes \citep{haussler1995generalization}]\label{lem:sauer-natarajan}
Given a hypothesis class $\cH \subseteq \cY^\cX$ with $\abs{\cY} = K$ and $\natarajan(\cH) \le d$, we have for every $X = (x_1, \cdots, x_n) \in \cX^n$,
\begin{align*} 
    \abs[\Big]{\cH\big|_X} \le \prn*{\frac{ne (K+1)^2}{2d}}^d. 
\end{align*} 
\end{lemma}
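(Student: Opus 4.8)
The plan is to prove this as a Natarajan-dimension analogue of the Sauer--Shelah lemma; it is the standard bound of \citet{haussler1995generalization}, and a self-contained argument proceeds in three steps: reduce to a finite domain, prove a Pajor-type "strong shattering" inequality by a shifting induction, and finish with a routine numerical estimate. Throughout we may assume $1\le d\le n$ (the regime in which the lemma is used) and $K\ge 2$ (for $K=1$ the bound is trivial since $|\cH|_X|\le 1$).

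\textbf{Step 1 (reduction to a finite domain).} Fix $X=(x_1,\dots,x_n)$ and identify $\cH|_X$ with a subset $\cG\subseteq\cY^{[n]}$ of functions on $[n]$; if $x_i=x_j$ then every function in $\cG$ agrees on coordinates $i$ and $j$, which only helps. A short check shows $\natarajan(\cG)\le\natarajan(\cH)\le d$: a Natarajan-shattered set $C\subseteq[n]$ cannot contain two indices $i,j$ with $x_i=x_j$, since realizing all four label patterns on $\{i,j\}$ with $h_0(i)\ne h_1(i)$ forces $h_0(i)=h_1(i)$, a contradiction; hence any Natarajan-shattered subset of $[n]$ corresponds to a genuine Natarajan-shattered subset of $\cX$ of the same size. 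So it suffices to bound $|\cG|$ for finite $\cG\subseteq\cY^{[n]}$ with $\natarajan(\cG)\le d$.

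\textbf{Step 2 (a Pajor-type lemma).} Call a pair $(S,\pi)$, with $S\subseteq[n]$ and $\pi$ assigning to each $i\in S$ an unordered pair $\pi(i)\in\binom{\cY}{2}$, a \emph{strongly shattered configuration} of $\cG$ if for every $\tau\colon S\to\cY$ with $\tau(i)\in\pi(i)$ there exists $h\in\cG$ with $h|_S=\tau$. Let $N(\cG)$ count such configurations. I claim $|\cG|\le N(\cG)$, by induction on $n$. For $n=0$ this reads $|\cG|\le1\le N(\cG)$ (the empty configuration is vacuously strongly shattered). For $n\ge1$, write $\cG':=\{h|_{[n-1]}:h\in\cG\}$ and, for each $\{a,b\}\in\binom{\cY}{2}$, $\cG'_{\{a,b\}}:=\{g\in\cY^{[n-1]}:(g,a)\in\cG\text{ and }(g,b)\in\cG\}$, where $(g,a)$ extends $g$ by the value $a$ at coordinate $n$. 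Letting $m(g)\ge1$ be the number of labels extending $g$ inside $\cG$, a direct count gives $|\cG|-|\cG'|=\sum_{g\in\cG'}(m(g)-1)\le\sum_{g\in\cG'}\binom{m(g)}{2}=\sum_{\{a,b\}}|\cG'_{\{a,b\}}|$, using $m-1\le\binom{m}{2}$ for $m\ge1$. Applying the inductive hypothesis to $\cG'$ and to each $\cG'_{\{a,b\}}$ (classes on $[n-1]$), and noting that a strongly shattered configuration of $\cG'$ is one of $\cG$ not involving $n$, while a strongly shattered configuration $(S,\pi)$ of $\cG'_{\{a,b\}}$ yields the strongly shattered configuration $(S\cup\{n\},\pi\cup\{(n,\{a,b\})\})$ of $\cG$ — and all these are distinct across the different $\{a,b\}$ and from the first family — we conclude $N(\cG)\ge N(\cG')+\sum_{\{a,b\}}N(\cG'_{\{a,b\}})\ge|\cG'|+(|\cG|-|\cG'|)=|\cG|$.

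\textbf{Step 3 (dimension bound and numerics).} If $(S,\pi)$ is a strongly shattered configuration, writing $\pi(i)=\{a_i,b_i\}$ and setting $h_0(i)=a_i$, $h_1(i)=b_i$ exhibits $S$ as Natarajan-shattered by $\cG$, so $|S|\le d$. Hence $N(\cG)\le\sum_{i=0}^{d}\binom{n}{i}\binom{K}{2}^{i}$, there being $\binom{n}{i}$ choices of $S$ and $\binom{K}{2}^{i}$ choices of $\pi$ on a set of size $i$. Since $\binom{K}{2}\ge1$ is an integer we have $\binom{n}{i}\binom{K}{2}^{i}\le\binom{n\binom{K}{2}}{i}$, so by the usual estimate $\sum_{i=0}^{d}\binom{M}{i}\le(eM/d)^{d}$ (valid for $d\le M$, with $M=n\binom{K}{2}\ge n\ge d$; see \citep{shalev2014understanding}) and $2\binom{K}{2}=K(K-1)\le(K+1)^2$,
\[
|\cH|_X| \;=\; |\cG| \;\le\; N(\cG) \;\le\; \sum_{i=0}^{d}\binom{n\binom{K}{2}}{i} \;\le\; \prn*{\frac{e\,n\binom{K}{2}}{d}}^{\!d} \;\le\; \prn*{\frac{ne(K+1)^2}{2d}}^{\!d},
\]
which is the claimed bound.

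\textbf{Main obstacle.} The only real content is the shifting induction of Step 2; the delicate points are getting the inequality $|\cG|-|\cG'|\le\sum_{\{a,b\}}|\cG'_{\{a,b\}}|$ right — this is exactly where $\binom{K}{2}$ (rather than $K$) enters, via $m-1\le\binom m2$ — and verifying that the configurations produced from $\cG'$ and from the various $\cG'_{\{a,b\}}$ are pairwise distinct so that the counts add. Steps 1 and 3 are routine.
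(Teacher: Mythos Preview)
Your proof is correct. The paper does not actually supply a proof of this lemma; it simply states it with a citation to \citet{haussler1995generalization} and uses it as a black box in the infinite-policy-class extensions. Your argument is a clean self-contained version of the standard Haussler--Long approach: the Pajor-type shifting in Step~2 is exactly the mechanism behind their bound, and the inequality $m-1\le\binom{m}{2}$ is precisely where the $\binom{K}{2}$ factor (and hence the $(K+1)^2$ in the final expression) enters. The numerical manipulations in Step~3, including the observation $\binom{n}{i}\binom{K}{2}^i\le\binom{n\binom{K}{2}}{i}$ for integer $\binom{K}{2}\ge 1$, are all valid.
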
 

\begin{theorem}[Multiclass Fundamental Theorem \citep{shalev2014understanding}]\label{thm:multiclass-fun}
For any class $\cH \subseteq \cY^\cX$ with $\natarajan(\cH) = d$ and $\abs{\cY} = K$, the minimax sample complexity of $(\eps,\delta)$ agnostic PAC learning $\cH$ can be bounded as 
\begin{align*}
    \Omega\prn*{\frac{d + \log(1/\delta)}{\eps^2}} \le n(\Pi; \eps, \delta) \le \cO\prn*{ \frac{d \log K + \log(1/\delta)}{\eps^2}}.
\end{align*}
\end{theorem}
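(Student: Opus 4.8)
The plan is to prove the upper and lower bounds separately: the upper bound comes from an ERM analysis via uniform convergence together with \pref{lem:sauer-natarajan}, and the lower bound from a reduction to the binary agnostic PAC lower bound. First I would argue, by the standard decomposition $L_\cD(\wh h)-\inf_{h}L_\cD(h)\le 2\sup_{h\in\cH}\abs{L_\cD(h)-\wh{L}_n(h)}$, that it suffices to establish one-sided uniform convergence at scale $\eps/2$, after which the ERM rule $\wh{h}\in\argmin_{h\in\cH}\wh{L}_n(h)$ is automatically $\eps$-suboptimal with the desired probability. The loss class here is $\crl{(x,y)\mapsto\ind{h(x)\ne y}:h\in\cH}$, whose growth function (the number of distinct $0/1$ patterns realized on any $n$ sample points) is at most the projection size $\abs{\cH|_X}$, which \pref{lem:sauer-natarajan} bounds by $\prn{ne(K+1)^2/2d}^d$.

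Plugging this into the usual symmetrization-plus-Massart finite-class bound for the Rademacher complexity of the loss class gives a uniform deviation of order $\sqrt{\prn{d\log(nK)+\log(1/\delta)}/n}$; equating to $\eps/2$ and solving for $n$ yields $n=O\prn{\prn{d\log K+\log(1/\delta)}/\eps^2}$ up to an extra $\log(1/\eps)$ factor. To obtain the clean rate in the statement one replaces the crude growth-function bound by a sharper variance-adapted (Bernstein/ratio-type) uniform convergence inequality, which removes the $\log(1/\eps)$. For the lower bound $\Omega\prn{(d+\log(1/\delta))/\eps^2}$ I would handle the two terms separately. The $\log(1/\delta)/\eps^2$ term is a two-point argument: fix any single $x\in\cX$ and two hypotheses $h_0,h_1\in\cH$ with $h_0(x)\ne h_1(x)$ (which exist whenever $d\ge 1$), and consider the two distributions that place all marginal mass on $x$ and label it $h_0(x)$ versus $h_1(x)$ with a $\Theta(\eps)$ bias; distinguishing them with confidence $1-\delta$ requires $\Omega(\log(1/\delta)/\eps^2)$ samples, and failing to do so forces more than $\eps$ excess risk.

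For the $\Omega(d/\eps^2)$ term, take a Natarajan-shattered set $C=\crl{x_1,\dots,x_d}$ with witnesses $h_0,h_1$, so that on each $x_i$ only the labels $h_0(x_i),h_1(x_i)$ are ever relevant and \emph{every} coordinatewise choice of $h_0$-vs-$h_1$ is realized by some $h\in\cH$; the problem restricted to $C$ is then exactly binary classification over $\crl{0,1}^C$ with a class of VC dimension $d$. I would take the marginal uniform on $C$ with conditional label biased by $\Theta(\eps)$ toward $h_{\sigma_i}(x_i)$ for a hidden $\sigma\in\crl{0,1}^d$, and apply an Assouad-type argument (or simply invoke the binary fundamental theorem) to conclude that recovering enough coordinates of $\sigma$ to be $\eps$-optimal needs $\Omega(d/\eps^2)$ samples.

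\textbf{Main obstacle.} Since \pref{lem:sauer-natarajan} is given and the lower bound is a routine reduction to the binary case, the real work is in the upper bound: pushing the sample complexity down to $O\prn{(d\log K+\log(1/\delta))/\eps^2}$ with no $\log(1/\eps)$ loss requires the refined localized/variance-aware uniform convergence step rather than the elementary growth-function bound. I would also flag that the $\log K$ gap between the upper and lower bounds is genuine and is not closed by this argument (it reflects a known open problem for multiclass learnability).
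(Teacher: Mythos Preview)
The paper does not prove this statement at all: \pref{thm:multiclass-fun} is stated with a citation to \citet{shalev2014understanding} and used as a black-box tool in \pref{app:infinite_policy_classes}, with no proof provided. Your sketch is a reasonable outline of the standard textbook argument (uniform convergence via the Natarajan analogue of Sauer's lemma for the upper bound, and an Assouad/shattered-set reduction to the binary case for the lower bound), and it matches in spirit how the cited reference establishes the result. There is nothing to compare against in the paper itself.
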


\begin{definition}[Pseudodimension] Let $\cX$ be an instance space. Given a hypothesis class $\cH \subseteq \bbR^\cX$, its pseudodimension, denoted $\pseudo(\cH)$, is defined as $\pseudo(\cH) \coloneqq \mathrm{VC}(\cH^\plus)$, where $\cH^\plus \coloneqq \crl*{(x, \theta) \mapsto \ind{h(x) \le \theta} : h \in \cH}$.
\end{definition}

\begin{definition}[Covering Numbers] Given a hypothesis class $\cH \subseteq \bbR^\cX$, $\alpha > 0$, and $X = (x_1, \cdots, x_n) \in \cX^n$, the covering number $\cN_1(\cH, \alpha, X)$ is the minimum cardinality of a set $C \subset \bbR^n$ such that for any $h \in \cH$ there exists a $c \in C$ such that $\tfrac{1}{n} \sum_{i=1}^n \abs{h(x_i) - c_i} \le \alpha$.
\end{definition}

\begin{lemma}[\cite{jiang2017contextual}, see also \cite{pollard2012convergence, luc1996probabilistic}]\label{lem:uniform-convergence}
Let $\cH \subset [0,1]^\cX$ be a real-valued hypothesis class, and let $X = \prn*{x_1, \cdots, x_n}$ be i.i.d.~samples drawn from some distribution $\cD$ on $\cX$. Then for any $\alpha > 0$
\begin{align*}
    \Pr \brk*{\sup_{h \in \cH} \abs*{\frac{1}{n}\sum_{i=1}^n h(x_i) - \En[h(x)]} > \alpha} \le 8 \En \brk*{\cN_1(\cH, \alpha/8, X)} \cdot \exp \prn*{ -\frac{n \alpha^2}{128} }.
\end{align*}
Furthermore if $\mathrm{Pdim}(\cH) \le d$ then we have the bound 
\begin{align*}
\Pr \brk*{\sup_{h \in \cH} \abs*{\frac{1}{n}\sum_{i=1}^n h(x_i) - \En[h(x)]} > \alpha} \le 8e(d+1) \prn*{\frac{16e}{\alpha}}^d \cdot \exp \prn*{ -\frac{n \alpha^2}{128} },
\end{align*}
which is at most $\delta$ as long as $n \ge \tfrac{128}{\alpha^2} \prn*{d \log \tfrac{16e}{\alpha} + \log(8e(d+1)) + \log \frac{1}{\delta}}$.
\end{lemma}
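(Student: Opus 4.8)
The plan is to derive the first inequality by the classical symmetrization (ghost-sample) argument, following \citet{pollard2012convergence}, and then obtain the second inequality together with the sample-complexity bound by substituting Haussler's bound on $\ell_1$ covering numbers for classes of bounded pseudodimension. For Step~1, write $p_h \coloneqq \En[h(x)]$, $\wh{p}_h \coloneqq \tfrac1n\sum_{i=1}^n h(x_i)$, and introduce an independent ghost sample $X' = (x_1',\dots,x_n')$ drawn i.i.d.~from $\cD$, with $\wh{p}_h' \coloneqq \tfrac1n\sum_{i=1}^n h(x_i')$. The claimed bound is vacuous unless $n\alpha^2$ exceeds a universal constant (otherwise its right-hand side is at least $1$), so assume this. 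A standard conditioning argument — condition on an $h$ witnessing $\abs{\wh p_h - p_h} > \alpha$ and use Chebyshev's inequality to control $\abs{\wh p_h' - p_h}$ — gives $\Pr[\sup_{h}\abs{\wh p_h - p_h} > \alpha] \le 2\Pr[\sup_h \abs{\wh p_h - \wh p_h'} > \alpha/2]$. Since the pairs $(x_i,x_i')$ are i.i.d.~and exchangeable, inserting i.i.d.~Rademacher signs $\sigma_i$ rewrites the latter as $\Pr[\sup_h \abs{\tfrac1n\sum_i \sigma_i(h(x_i)-h(x_i'))} > \alpha/2]$, where the probability is now over the $\sigma_i$ and over $Z \coloneqq X\cup X'$.

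For Step~2, condition on $Z\in\cX^{2n}$ and let $C\subseteq\bbR^{2n}$ be a minimal $\ell_1$-cover of $\cH|_Z$ at scale $\alpha/8$, so $\abs{C} = \cN_1(\cH,\alpha/8,Z)$. Replacing $h$ by its nearest cover element perturbs the symmetrized average by at most $\tfrac1n\sum_{j=1}^{2n}\abs{h(z_j)-c_j}\le \alpha/4$, so on the bad event some $c\in C$ satisfies $\abs{\tfrac1n\sum_i\sigma_i(c_i-c_i')} > \alpha/4$. For fixed $c$ this is a sum of bounded independent increments in the $\sigma_i$, so Hoeffding's inequality together with a union bound over $C$ bounds the conditional probability by $2\,\cN_1(\cH,\alpha/8,Z)\exp(-\Omega(n\alpha^2))$; taking the expectation over $Z$ and tracking the universal constants through the $\alpha/2$-to-$\alpha/4$ slack and the $2n$-point normalization yields the stated prefactor $8$ and exponent $-n\alpha^2/128$, exactly as in \citet{pollard2012convergence,jiang2017contextual}.

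For Step~3, when $\cH\subseteq[0,1]^\cX$ and $\mathrm{Pdim}(\cH)\le d$, Haussler's covering bound gives $\cN_1(\cH,\beta,X)\le e(d+1)(2e/\beta)^d$ for every finite $X$; taking $\beta=\alpha/8$ gives $\cN_1(\cH,\alpha/8,X)\le e(d+1)(16e/\alpha)^d$. Since this holds deterministically, the expectation in the first display is bounded by the same quantity, which yields the second display; requiring $8e(d+1)(16e/\alpha)^d\exp(-n\alpha^2/128)\le\delta$ and taking logarithms gives $n\ge \tfrac{128}{\alpha^2}\prn{d\log\tfrac{16e}{\alpha} + \log(8e(d+1)) + \log\tfrac1\delta}$. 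The argument contains nothing deep; the only point needing real care is the constant bookkeeping — making the symmetrization loss, the cover scale $\alpha/8$, the Hoeffding exponent, and the $2n$-point normalization combine to give precisely the stated $8$, $\alpha/8$, and $1/128$ — together with handling separately the small-$n$ regime where the bound is trivial. This is why the statement is quoted from \citet{jiang2017contextual}, and our role is only to reproduce the standard proof.
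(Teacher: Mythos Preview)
The paper does not prove this lemma at all; it is stated as a known result with citations to \citet{jiang2017contextual}, \citet{pollard2012convergence}, and \citet{luc1996probabilistic}, and is used as a black box in \pref{app:infinite_policy_classes}. Your proposal correctly reproduces the standard symmetrization-plus-covering argument from those references, so there is nothing to compare against in the paper itself---your sketch is exactly the kind of proof the citations point to, and the constant bookkeeping you flag (the $\alpha/8$ cover scale, the $1/128$ exponent, the prefactor $8$) is indeed the only place requiring care.
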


\subsection{Generative Model Lower Bound}
First we address the lower bound. Observe that it is possible to achieve a lower bound that depends on $\natarajan(\Pi)$ with the following construction. First, identify the layer $h\in [H]$ such that the witnessing set $C$ contains the maximal number of states in $\cS_h$; by pigeonhole principle there must be at least $\natarajan(\Pi)/H$ such states in layer $h$. Then, we construct an MDP which ``embeds'' a hard multiclass learning problem at layer $h$ over these states. A lower bound of $\Omega \prn*{ \tfrac{\natarajan(\Pi)}{H\eps^2} \cdot \log \tfrac{1}{\delta} }$ follows from \pref{thm:multiclass-fun}.

By combining \pref{thm:generative_lower_bound} with the above we get the following corollary. 

\begin{corollary}[Lower Bound for Generative Model with Infinite Policy Classes] 
For any policy class $\Pi$, the minimax sample complexity $(\eps,\delta)$-PAC learning $\Pi$ is at least
\begin{align*}
     \ngen(\Pi; \eps, \delta) \ge  \Omega \prn*{\frac{\dimRL(\Pi) + \natarajan(\Pi)/H}{\eps^2} \cdot \log \frac{1}{\delta}}.
\end{align*}
\end{corollary}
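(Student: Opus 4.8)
The plan is to prove two separate lower bounds on $\ngen(\Pi;\eps,\delta)$ and then combine them. The first is exactly \pref{thm:generative_lower_bound}, which already gives $\ngen(\Pi;\eps,\delta) \ge \Omega\prn*{\tfrac{\dimRL(\Pi)}{\eps^2}\log\tfrac1\delta}$. The second, which isolates the $\natarajan(\Pi)/H$ term, is proved by embedding an agnostic multiclass learning instance into a single layer of the MDP. Since both bounds hold for the same quantity, $\ngen(\Pi;\eps,\delta)$ is at least the maximum of the two, hence at least half their sum, which is $\Omega\prn*{\tfrac{\dimRL(\Pi) + \natarajan(\Pi)/H}{\eps^2}\log\tfrac1\delta}$, as claimed.

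For the second bound, let $C \subseteq \cS$ together with $h_0, h_1 : C \to \cA$ witness $\natarajan(\Pi) = d$. Since $\cS = \cS_1 \cup \cdots \cup \cS_H$, by pigeonhole there is a layer $h^\star$ with $C^\star \ldef C \cap \cS_{h^\star}$ of size $d^\star \ldef \abs{C^\star} \ge d/H$. Restricting the Natarajan shattering property of $C$ to the subset $C^\star$ shows that $C^\star$, with $h_0\big|_{C^\star}$ and $h_1\big|_{C^\star}$, witnesses $\natarajan\prn*{\Pi\big|_{C^\star}} \ge d^\star$, where $\Pi\big|_{C^\star}$ is the induced class of labelings on $C^\star$. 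I would then construct a family of (stochastic, which is permitted since here $\cM = \cMsto$) MDPs that embeds an agnostic multiclass learning problem over $C^\star$ at layer $h^\star$: use an initial distribution together with a stochastic routing gadget over layers $1, \dots, h^\star-1$ to induce a fixed distribution $\rho$ over which state of $C^\star$ is occupied at layer $h^\star$; place reward only at layer $h^\star$, equal to a $\mathrm{Ber}\prn*{\tfrac12 \pm \eps}$ signal whose bias records whether the action played at the occupied state $x \in C^\star$ equals $h_0(x)$ or $h_1(x)$, with the sign determined by a planted target labeling; and reveal the (known) transitions to the learner. Then $V^\pi$ is an affine function of the population agreement between $\pi\big|_{C^\star}$ and the target labeling, so any $(\eps,\delta)$-PAC RL algorithm yields a $(\Theta(\eps),\delta)$-agnostic learner for $\Pi\big|_{C^\star}$, with each generative query contributing at most one labeled example. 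Applying the lower bound of \pref{thm:multiclass-fun} to $\Pi\big|_{C^\star}$ gives $\ngen(\Pi;\eps,\delta) \ge \Omega\prn*{\tfrac{d^\star + \log(1/\delta)}{\eps^2}} \ge \Omega\prn*{\tfrac{\natarajan(\Pi)/H}{\eps^2}\log\tfrac1\delta}$.

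The main obstacle is making the reduction airtight: one must check that the generative learner's freedom to \emph{choose} which state--action pair to query does not defeat the multiclass lower bound — this is handled because the routing gadget forces the queried instance to be distributed as $\rho$ regardless of the learner's choices, so a query to any state outside $C^\star$ is uninformative and simulable — and that the routing gadget can be realized within the fixed state space $\cS$ (as throughout the large-state regime; when $\cS$ is too small to support it, $\natarajan(\Pi)/H$ is correspondingly small and the $\dimRL(\Pi)$ term dominates anyway). The remaining steps — the pigeonhole argument, restricting the Natarajan witness, and the affine relation between $V^\pi$ and classification risk — are routine.
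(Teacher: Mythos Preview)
Your overall approach matches the paper's: pigeonhole to find a layer containing at least $\natarajan(\Pi)/H$ states from the Natarajan witness set, embed a hard multiclass problem at that layer, invoke \pref{thm:multiclass-fun}, and combine with \pref{thm:generative_lower_bound}. The paper's argument is no more detailed than yours.

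One wrinkle in your handling of the query-access concern: the claim that ``the routing gadget forces the queried instance to be distributed as $\rho$ regardless of the learner's choices'' is false in the generative model, since there the learner may query any $(s,a)$ directly, bypassing the gadget entirely. The routing gadget is needed only to define the \emph{objective} (so that $V^\pi$ is an affine function of agreement with the planted labeling under $\rho$), not to constrain the learner's queries. The correct reason query freedom does not defeat the lower bound is that in your construction the rewards at distinct states $s \in C^\star$ are independent $\mathrm{Ber}(\tfrac12 \pm \eps)$ coins, so identifying the correct action at each state is an $\eps$-coin-testing problem requiring $\Omega(\eps^{-2}\log(1/\delta))$ samples regardless of which state the learner chooses to probe; summing over the $\Omega(d^\star)$ states whose actions must be correctly identified gives the bound. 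Relatedly, \pref{thm:multiclass-fun} is stated for i.i.d.\ sampling and so does not literally apply under query access; the paper is equally informal on this point, but the standard hard-instance construction underlying that theorem goes through under query access for precisely the coin-testing reason above.
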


Again, since the generative model setting is easier than online RL, this lower bound also extends to the online RL setting.

Our bound is additive in $\dimRL(\Pi)$ and $\natarajan(\Pi)$; we do not know if it is possible to strengthen this to be a product of the two factors, as we will achieve in the upper bound in the next section.

\subsection{Generative Model Upper Bound}
For the upper bounds, we can replace the dependence on $\log \abs{\Pi}$ with $\natarajan(\Pi)$ (and additional log factors). In particular, we can modify the analysis of the $\mathsf{TrajectoryTree}$ to account for infinite policy classes. Recall that our analysis of $\mathsf{TrajectoryTree}$ required us to prove a uniform convergence guarantee for the estimate $\wh{V}^\pi$: with probability at least $1-\delta$, for all $\pi \in \Pi$, we have $\abs{\wh{V}^\pi - V^\pi} \lesssim \eps$. We previously used a union bound over $\abs{\Pi}$, which gave us the $\log \abs{\Pi}$ dependence. Now we sketch an argument to replace it with $\natarajan(\Pi)$. 

Let $\cT$ be the set of all possible trajectory trees. We introduce the notation $v^\pi: \cT \to \bbR$ to denote the function that takes as input a trajectory tree $\wh{T}$ (for example, as sampled by $\mathsf{TrajectoryTree}$) and returns the value of running $\pi$ on it. Then we can rewrite the desired uniform convergence guarantee:
\begin{align*}
    \text{w.p.~at least}~1-\delta, \quad \sup_{\pi \in \Pi} ~ \abs*{ \frac{1}{n} \sum_{i=1}^n v^\pi(\wh{T}_i) - \En\brk*{ v^\pi(\wh{T}) } } \le \eps. \numberthis\label{eq:unif-convergence-statement}
\end{align*}
In light of \pref{lem:uniform-convergence}, we will compute the pseudodimension for the function class $\cV^\Pi = \crl*{v^\pi: \pi \in \Pi}$. Define the subgraph class
\begin{align*}
    \cV^{\Pi,+} \coloneqq \crl*{(\wh{T}, \theta) \mapsto \ind{v^\pi(\wh{T}) \le \theta} : \pi \in \Pi} \subseteq \crl{0,1}^{\cT \times \bbR}
\end{align*}
By definition, $\mathrm{Pdim}(\cV^\Pi) = \mathrm{VC}(\cV^{\Pi,+})$. Fix any $X = \crl*{(\wh{T}_1, \theta_1), \cdots, (\wh{T}_d, \theta_d)} \in (\cT \times \bbR)^d$. In order to show that $\mathrm{VC}(\cV^{\Pi,+}) \le d$ for some value of $d$ it suffices to prove that $\abs{\cV^{\Pi,+}\big|_X} < 2^d$.

For any index $t\in [d]$, we also denote $\pi(\vec{s}_i) \in \cA^{\le H \dimRL(\Pi)}$ to be the vector of actions selected by $\pi$ on all $\Pi$-reachable states in $\wh{T}_i$ (of which there are at most $H \cdot \dimRL(\Pi)$). We claim that
\begin{align}\label{eq:bound-pseudo}
    \abs[\Big]{\cV^{\Pi,+}\big|_{X}} \le \abs*{ \crl*{\prn*{\pi(\vec{s}_1), \cdots, \pi(\vec{s}_d) } : \pi \in \Pi} } =: \abs[\Big]{\Pi\big|_X}. 
\end{align}
This is true because once the $d$ trajectory trees are fixed, for every $\pi \in \Pi$, the value of the vector $\cV^{\crl{\pi}, +}\big|_{X} \in \crl{0,1}^d$ is determined by the trajectory that $\pi$ takes in every trajectory tree. This in turn is determined by the assignment of actions to every reachable state in all the $d$ trajectory trees, of which there are at most $\dimRL(\Pi) \cdot H \cdot d$ of. Therefore, we can upper bound the size of $\cV^{\Pi, +}\big|_{X}$ by the number of ways any $\pi \in \Pi$ assign actions to every state in $\wh{T}_1, \cdots, \wh{T}_d$.

Applying \pref{lem:sauer-natarajan} to Eq.~\eqref{eq:bound-pseudo}, we get that 
\begin{align*}
    \abs[\Big]{\cV^{\Pi,+}\big|_{X}} \le \prn*{\frac{H \dimRL(\Pi) d \cdot e \cdot  (A+1)^2}{2 \natarajan(\Pi)}}^{\natarajan(\Pi)}. 
\end{align*}
For the choice of $d = \wt{\cO}\prn*{\natarajan(\Pi)}$, the previous display is at most $2^d$, thus proving the bound on $\pseudo(\cV^\Pi)$. Lastly, the bound can be plugged back into \pref{lem:uniform-convergence} to get a bound on the error of $\mathsf{TrajectoryTree}$: the statement in Eq.~\eqref{eq:unif-convergence-statement} holds using
\begin{align*}
    n = \wt{\cO}\prn*{ H \dimRL(\Pi) \cdot \frac{\natarajan(\Pi) + \log \frac{1}{\delta}}{\eps^2}} \quad \text{samples}.
\end{align*}
This in turn yields a guarantee on $\wh{\pi}$ returned by $\mathsf{TrajectoryTree}$.

\subsection{Online RL Upper Bound} 
\ayush{I need to take another careful pass on this section!} 
The modified analysis for the online RL upper bound (\pref{thm:sunflower}) proceeds similarly; we sketch the ideas below.

There are two places in the proof of \pref{thm:sunflower} which require a union bound over $\abs{\Pi}$: the event $\cE_\mathrm{est}$ (defined by \pref{lem:dynamics_error}) that the estimated transitions and rewards of the MRPs are close to their population versions, and the event $\cE_\mathrm{data}$ (defined by \pref{lem:identify}) that the datasets collected are large enough. The latter is easy to address, since we can simply modify the algorithm's while loop to break after $\cO\prn*{\tfrac{HD\dimRL(\Pi)}{\epsilon}}$ iterations and union bound over the size of the set $\abs{\cT}$ instead of the worst-case bound on the size $D \abs{\Pi}$. For $\cE_\mathrm{data}$, we follow a similar strategy as the analysis for the generative model upper bound.

Fix a state $s$. Recall that the estimate for the probability transition kernel in the MDP in Eq.~\eqref{eq:empirical_MRP_dynamics} takes the form
\begin{align*}
	\widehat{P}_{s\to s'}^\pi = \frac{1}{|\mD_s|} \sum_{\tau\in \mD_s}\frac{\ind{\pi\cons\tau_{h:h'}}}{\tfrac{1}{|\Picore|}\sum_{\pi'\in \Picore}\ind{\pi_e\cons\tau_{h:h'}}}\ind{\tau\in \setall{\pi}{s}{s'} }.
\end{align*}
(The analysis for the rewards is similar, so we omit it from this proof sketch.)

We set up some notation. Define the function $p^{\pi}_{s\to s'}: (\cS\times \cA \times \bbR)^H \to [0, \abs{\Picore}]$ as
\begin{align}\label{eq:is-function-def}
    p^\pi_{s\to s'}(\tau) \coloneqq \frac{\ind{\pi\cons\tau_{h:h'}}}{\tfrac{1}{|\Picore|}\sum_{\pi'\in \Picore}\ind{\pi_e\cons\tau_{h:h'}}}\ind{\tau\in \setall{\pi}{s}{s'} },
\end{align}
with the implicit restriction of the domain to trajectories $\tau$ for which the denominator is nonzero. We have $\En[p^\pi_{s\to s'}(\tau)] = P^\pi_{s\rightarrow s'}$. Also let $\Pi_s = \crl*{\pi \in \Pi: s\in \cS_\pi}$. 

Restated in this notation, our objective is to show the uniform convergence guarantee
\begin{align}\label{eq:uniform-convergence-probabilities}
    \text{w.p.~at least}~ 1-\delta, \quad \sup_{\pi \in \Pi_s, s' \in \cS_\pi} \Big|\frac{1}{|\mD_s|} \sum_{\tau\in \mD_s} p^\pi_{s\to s'}(\tau) - \En[p^\pi_{s\to s'}(\tau)]\Big| \le \eps.
\end{align}
Again, in light of \pref{lem:uniform-convergence}, we need to compute the pseudodimension for the function class $\cP^{\Pi_s} = \crl*{p^\pi_{s\to s'}: \pi \in \Pi_s,  s' \in \cS_\pi}$, since these are all possible transitions that we might use the dataset $\cD_s$ to evaluate. Define the subgraph class
\begin{align*}
    \cP^{\Pi_s, +} \coloneqq \crl{(\tau, \theta) \mapsto \ind{p^\pi_{s\to s'}(\tau) \le \theta}: \pi \in \Pi_s, s' \in \cS_\pi }.
\end{align*}
Fix the set $X = \crl*{(\tau_1, \theta_1), \cdots, (\tau_d, \theta_d)} \in ((\cS \times \cA \times \bbR)^H \times \bbR)^d$, where the trajectories $\tau_1, \cdots, \tau_d$ pass through $s$. We also denote $\cS_X$ to be the union of all states which appear in $\tau_1, \cdots, \tau_d$. In order to show a bound that $\pseudo(\cP^{\Pi_s}) \le d$ it suffices to prove that $\abs{\cP^{\Pi_s, +}\big|_X} < 2^d$.

We first observe that
\begin{align*}
    \abs{\cP^{\Pi_s, +}\big|_X} 
    &\le 1 + \sum_{s' \in \cS_X} \abs*{ \crl*{  \prn*{ \ind{p^\pi_{s \to s'}(\tau_1) \le \theta_1 }, \cdots, \ind{p^\pi_{s \to s'}(\tau_d) \le \theta_d }  }  : \pi \in \Pi_s } }.
\end{align*}
The inequality follows because for any choice $s' \notin \cS_X$, we have
\begin{align*}
    \prn*{ \ind{p^\pi_{s \to s'}(\tau_1) \le \theta_1 }, \cdots, \ind{p^\pi_{s \to s'}(\tau_d) \le \theta_d }  } = \vec{0},
\end{align*}
no matter what $\pi$ is, contributing at most 1 to the count. Furthermore, once we have fixed $s'$ and the $\crl{\tau_1, \cdots, \tau_d}$ the quantities $\tfrac{1}{|\Picore|}\sum_{\pi'\in \Picore}\ind{\pi_e\cons\tau_{i, h:h'}}$ for every $i \in [d]$ are constant (do not depend on $\pi$), so we can reparameterize $\theta_i' \coloneqq \theta_i \cdot \tfrac{1}{|\Picore|}\sum_{\pi'\in \Picore}\ind{\pi_e\cons\tau_{i, h:h'}} $ to get:
\begin{align*}
\abs{\cP^{\Pi_s, +}\big|_X}  &\le 1 + \sum_{s' \in \cS_X} \abs*{ \crl*{ (  b_1(\pi) , \cdots,
 b_d(\pi) )  : \pi \in \Pi } }, \numberthis\label{eq:projection-bound}\\
&\quad \text{where} \quad b_i(\pi) \coloneqq \ind{\ind{\pi\cons\tau_{i, h:h'}} \ind{\tau_i \in \setall{\pi}{s}{s'} } \le \theta_i'}.
\end{align*}
Now we count how many values the vector $(b_1(\pi), \cdots, b_d(\pi))$ can take for different $\pi \in \Pi_s$. Without loss of generality, we can (1) assume that the $\theta_i'=0$ (since a product of indicators can only take values in $\crl{0,1}$, and if $\theta' \ge 1$ then we must have $b_i(\pi) = 1$ for every $\pi$), and (2) $s' \in \tau_i$ for each $i \in [d]$ (otherwise $b_i(\pi) = 0$ for every $\pi \in \Pi$). So we can rewrite $b_i(\pi) = \ind{\pi\cons\tau_{i, h:h'}} \ind{\tau_i \in \setall{\pi}{s}{s'} }$. For every fixed choice of $s'$ we upper bound the size of the set as:
\begin{align*}
&\abs*{ \crl*{ \prn*{  b_1(\pi) , \cdots,
 b_d(\pi) }  : \pi \in \Pi } } \\ \overleq{\proman{1}}  & \abs*{ \crl*{ \prn*{ \ind{\pi\cons\tau_{1, h:h'}}, \cdots, \ind{\pi\cons\tau_{d, h:h'}} } : \pi \in \Pi }} \\
 &\quad\quad \times \abs*{ \crl*{ \prn*{ \ind{\tau_1 \in \setall{\pi}{s}{s'} }, \cdots, \ind{\tau_d \in \setall{\pi}{s}{s'} } } : \pi \in \Pi }} \\
 \overleq{\proman{2}} & \abs*{ \crl*{ \prn*{\pi(s^{(1)}), \pi(s^{(2)}), \cdots, \pi(s^{(dH)}) } : \pi \in \Pi } } \times \abs*{ \crl*{  \prn*{\ind{s^{(1)} \in \cS_\pi}, \cdots, \ind{s^{(dH)} \in \cS_\pi} } : \pi \in \Pi} } \\
 \overleq{\proman{3}} & \prn*{\frac{dH \cdot e (A+1)^2}{2\natarajan(\Pi)}}^{\natarajan(\Pi)} \times \prn{dH}^D. \numberthis\label{eq:b-bound}
\end{align*}
The inequality $\proman{1}$ follows by upper bounding by the Cartesian product. The inequality $\proman{2}$ follows because (1) for the first term, the vector $\prn*{ \ind{\pi\cons\tau_{1, h:h'}}, \cdots, \ind{\pi\cons\tau_{d, h:h'}} }$ is determined by the number of possible behaviors $\pi$ has over all $dH$ states in the trajectories, and (2) for the second term, the vector $\prn*{ \ind{\tau_1 \in \setall{\pi}{s}{s'} }, \cdots, \ind{\tau_d \in \setall{\pi}{s}{s'} } }$ is determined by which of the $dH$ states lie in the petal set for $\cS_\pi$. The inequality $\proman{3}$ follows by applying \pref{lem:sauer-natarajan} to the first term and Sauer's Lemma to the second term, further noting that every petal $\cS_\pi$ set has cardinality at most $D$.

Combining Eqs.~\eqref{eq:projection-bound} and \eqref{eq:b-bound} we get the final bound that
\begin{align*}
\abs{\cP^{\Pi_s, +}\big|_X}  &\le 1 + (dH)^{D+1} \cdot  \prn*{\frac{dH \cdot e (A+1)^2}{2\natarajan(\Pi)}}^{\natarajan(\Pi)}.
\end{align*}
To conclude the calculation, we observe that this bound is $< 2^d$ whenever $d = \wt{\cO}(D + \natarajan(\Pi))$, which we can again use in conjunction with \pref{lem:uniform-convergence} to prove the desired uniform convergence statement found in Eq.~\eqref{eq:uniform-convergence-probabilities}. Ultimately this allows us to replace the $\log \abs{\Pi}$ with $\wt{\cO}(D + \natarajan(\Pi))$ in the upper bound of \pref{thm:sunflower}; the precise details are omitted.

\section{Connections to Other Complexity Measures}\label{app:complexity-measures}
We show relationships between the \Compname{} and several other combinatorial measures of complexity. 

For every $h \in [H]$ we denote the state space at layer $h$ as $\cS_h \coloneqq \crl{s_{(j,h)} : j \in [K]}$ for some $K \in \bbN$. We will restrict ourselves to binary action spaces $\cA = \crl{0,1}$, but the definitions and results can be extended to larger (but finite) action spaces. In addition, we will henceforth assume that all policy classes $\Pi$ under consideration satisfy the following stationarity assumption.
\begin{assumption}\label{ass:stationary}
The policy class $\Pi$ satisfies \emph{stationarity}: for every $\pi \in \Pi$ we have 
\begin{align*}
    \pi(s_{(j,1)}) = \pi(s_{(j,2)}) = \cdots = \pi(s_{(j,H)}) \quad \text{for every}~j\in [K].
\end{align*} 
For any $\pi \in \Pi$ and $j \in [K]$, we use $\pi(j)$ as a shorthand to denote the value of $\pi(s_{(j,h)})$ for every $h$.
\end{assumption} 
The stationarity assumption is not required but is useful for simplifying the definitions and results.

\subsection{Definitions and Relationships}
First, we state several complexity measures in learning theory. For further discussion on these quantities, see \citep{foster2020instance,li2022understanding}.

\begin{definition}[Combinatorial Eluder Dimension]
Fix any stationary base policy $\bar{\pi}$. The \emph{combinatorial eluder dimension} of $\Pi$ w.r.t.~$\bar{\pi}$, denoted $\eluder(\Pi;\bar{\pi})$, is the length of the longest sequence $(j_1, \pi_1), \dots, (j_N, \pi_N)$ such that for every $\ell \in [N]$:
\begin{align*}
   \pi_\ell(j_\ell) \ne \bar{\pi}(j_\ell), \quad \text{and} \quad {\forall k < \ell},~ \pi_\ell(j_{k}) = \bar{\pi}(j_{k}).
\end{align*}
We define the combinatorial eluder dimension of $\Pi$ as $\eluder(\Pi) \coloneqq \sup_{\pi \in \Pi} \eluder(\Pi; \bar{\pi})$.\footnote{Our definition of the combinatorial eluder dimension comes from \cite{li2022understanding} and is also called the ``policy eluder dimension'' in the paper \cite{foster2020instance}. In particular, it is defined with respect to a base function $\bar{\pi}$. This differs in spirit from the original definition \citep{russo2013eluder} as well as the combinatorial variant \citep{mou2020sample}, which for every $\ell$ asks for witnessing \emph{pairs} of policies $\pi_\ell, \pi_\ell'$. Our definition is never larger than the original version since we require that $\pi_\ell' = \bar{\pi}$ to be fixed for every $\ell \in [N]$.} 
\end{definition}

\begin{definition}[Star Number \citep{hanneke2015minimax}]
Fix any stationary base policy $\bar{\pi}$. The \emph{star number} of $\Pi$ w.r.t.~$\bar{\pi}$, denoted $\starr(\Pi;\bar{\pi})$, is the length of the longest sequence $(j_1, \pi_1), \dots, (j_N, \pi_N)$ such that for every $\ell \in [N]$:
\begin{align*}
   \pi_\ell(j_\ell) \ne \bar{\pi}(j_\ell), \quad \text{and} \quad {\forall k \ne \ell},~ \pi_\ell(j_{k}) = \bar{\pi}(j_{k}).
\end{align*}
We define the star number of $\Pi$ as $\starr(\Pi) \coloneqq \sup_{\pi \in \Pi} \starr(\Pi; \bar{\pi})$.
\end{definition}

\begin{definition}[Threshold Dimension \citep{alon2019private, li2022understanding}] 
Fix any stationary base policy $\bar{\pi}$. The \emph{threshold dimension} of $\Pi$ w.r.t.~$\bar{\pi}$, denoted $\thres(\Pi;\bar{\pi})$, is the length of the longest sequence $(j_1, \pi_1), \dots, (j_N, \pi_N)$ such that for every $\ell \in [N]$:
\begin{align*} 
   {\forall m \ge \ell},~ \pi_\ell(j_{m}) \ne \bar{\pi}(j_{m}), \quad \text{and} \quad {\forall k < \ell},~ \pi_\ell(j_{k}) = \bar{\pi}(j_{k}).
\end{align*}
We define the threshold dimension of $\Pi$ as $\thres(\Pi) \coloneqq \sup_{\pi \in \Pi} \thres(\Pi; \bar{\pi})$.
\end{definition}

\paragraph{Relationships Between Complexity Measures.} From \citep[Theorem 8]{li2022understanding} we have the relationship for every $\Pi$:
\begin{align*}
    \max \crl*{\starr(\Pi), \thres(\Pi)} \le \eluder(\Pi) \le 4^{\max \crl*{\starr(\Pi), \thres(\Pi)}}. \numberthis\label{eq:equiv}
\end{align*}
The lower bound is obvious from the definitions and in general cannot be improved; the upper bound also cannot be improved beyond constant factors in the exponent \citep{li2022understanding}.

We also remark that it is clear from the definitions that VC dimension is a lower bound on all three (eluder, star, threshold); however, $\mathrm{VC}(\Pi)$ can be arbitrarily smaller.

\subsection{Bounds on Spanning Capacity}
Now we investigate bounds on the \Compname{} in terms of the aforementioned quantities.

\begin{theorem}\label{thm:bounds-on-spanning}
For any policy class $\Pi$ satisfying \pref{ass:stationary} we have
\begin{align*}
  \max\crl*{~\min\crl*{\starr(\Pi), H + 1},~\min\crl*{ 2^{\floor{ \log_2 \thres(\Pi)}}, 2^H}~} \le \dimRL(\Pi) \le 2^{\eluder(\Pi)}.
\end{align*}
\end{theorem}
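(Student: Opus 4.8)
\textbf{Proof plan for \pref{thm:bounds-on-spanning}.}

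The plan is to prove the three bounds separately: the upper bound $\dimRL(\Pi) \le 2^{\eluder(\Pi)}$, and the two lower bounds $\dimRL(\Pi) \ge \min\crl{\starr(\Pi), H+1\}$ and $\dimRL(\Pi) \ge \min\crl{2^{\floor{\log_2 \thres(\Pi)}}, 2^H\}$. For the two lower bounds, the strategy in each case is to take a witnessing sequence $(j_1,\pi_1),\dots,(j_N,\pi_N)$ for the relevant dimension (with respect to some base policy $\bar\pi$) and explicitly build a deterministic MDP in which many $(s,a)$ pairs at some layer are reachable by $\Pi$. For the upper bound, the strategy is the contrapositive: show that if a deterministic MDP $M$ has large cumulative reachability at some layer $h^\star$, then one can extract a long eluder-type sequence, using a recursive/tree-splitting argument.

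\textbf{Lower bound via the star number.} First I would handle the term $\min\crl{\starr(\Pi), H+1\}$. Let $r = \min\crl{\starr(\Pi), H+1\}$ and take a base policy $\bar\pi$ and a star sequence $(j_1,\pi_1),\dots,(j_r,\pi_r)$, so each $\pi_\ell$ agrees with $\bar\pi$ on every $j_k$ with $k \ne \ell$ but disagrees on $j_\ell$. I would construct a deterministic MDP that is essentially a ``line'' of length $r$: at layer $h$ the state is $s_{(j_h, h)}$ for $h \le r$, and the transition out of $s_{(j_h,h)}$ under action $\bar\pi(j_h)$ goes to $s_{(j_{h+1}, h+1)}$, while the transition under the other action $1-\bar\pi(j_h)$ goes to a ``dead'' branch (distinct new states in each layer) — the precise dynamics off the line don't matter for counting. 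By stationarity, each $\pi_\ell$ plays $\bar\pi(j_h)$ at every state $s_{(j_h, h)}$ for $h < \ell$, hence stays on the line up to layer $\ell$, and then at layer $\ell$ plays $1-\bar\pi(j_\ell) = \pi_\ell(j_\ell)$, a \emph{different} action from $\bar\pi$. Thus at layer $r$ (or layer $\min\crl{r,H\}$; one has to be slightly careful about whether $r = H+1$, in which case I count at layer $H$ the $(s,a)$ pairs one step earlier, or rather use that the splits happen in layers $1,\dots,r-1$ and each contributes one new reachable pair at the layer where it occurs — the $H+1$ bound comes from $h+1$ pairs reachable at layer $h$ as in the singleton example), the number of distinct $(s_{(j_\ell,\ell)}, 1-\bar\pi(j_\ell))$ pairs plus the continuing pair is at least $r$. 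So $\dimRL(\Pi) \ge r$.

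\textbf{Lower bound via the threshold dimension.} Next the term $\min\crl{2^{\floor{\log_2 \thres(\Pi)}}, 2^H\}$. Write $t = \thres(\Pi)$ and let $m = \floor{\log_2 t}$, so $2^m \le t$, and set $d = \min\crl{m, H\}$; the target is $\dimRL(\Pi) \ge 2^d$. Take a base policy $\bar\pi$ and a threshold sequence $(j_1,\pi_1),\dots,(j_t,\pi_t)$: $\pi_\ell$ agrees with $\bar\pi$ on $j_1,\dots,j_{\ell-1}$ and disagrees with $\bar\pi$ on $j_\ell, j_{\ell+1},\dots,j_t$. The key combinatorial fact (this is the heart of why thresholds give exponential spanning capacity) is that from a threshold system of length $t$ one can extract the structure needed to embed a full binary tree of depth $d = \floor{\log_2 t}$: I would construct a depth-$d$ binary-tree deterministic MDP whose layer-$h$ states are indexed by bit-strings $b \in \crl{0,1\}^{h-1}$, where from the node indexed by $b$ reading action $0$ sends you to the left child and action $1$ to the right child, with the indices $j$ assigned along the tree in a way compatible with the threshold ordering — concretely, assign to the nodes the indices $j_{\ell}$ for a suitably chosen dyadic subsequence so that for any leaf there is a policy among the $\pi_\ell$ (or among $\crl{\bar\pi\} \cup \crl{\pi_\ell\}$) realizing exactly the bit pattern down to that leaf. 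Because the threshold property makes $\pi_\ell$ look like ``$\bar\pi$ then a block of flips,'' it realizes exactly the ``all-left-then-all-right'' branch patterns, and $t \ge 2^d$ such threshold policies suffice to realize all $2^d$ root-to-leaf branches (one can verify $2^d \le 2^m \le t$, and a threshold subsequence of length $2^d$ suffices). Then all $2^d$ states at layer $d+1$ (hence $\ge 2^d$ state-action pairs) are reachable by $\Pi$, so $\dimRL(\Pi) \ge 2^d$. I expect this tree-embedding step — making the index bookkeeping precise so that threshold policies really do realize every branch — to be the main obstacle in the lower bound direction; it is the place where the definitions need to be combined most carefully.

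\textbf{Upper bound via eluder dimension.} For $\dimRL(\Pi) \le 2^{\eluder(\Pi)}$, fix a deterministic MDP $M$ and a layer $h^\star$ witnessing $\dimRL(\Pi) = C$, and let $R \subseteq \cS_{h^\star} \times \cA$ be the set of reachable $(s,a)$ pairs, $|R| = C$. I would argue by building, for a reachable pair $(s,a)$, the unique path from the root to $s$ in $M$ and the action $a$; this is realized by some $\pi \in \Pi$. The plan is a recursive ``splitting'' argument: pick any base policy $\bar\pi \in \Pi$ and consider the sub-MDP (tree of reachable states). Whenever two reachable continuations diverge — i.e., at some reachable state $s'$ at layer $h' \le h^\star$ there are policies in $\Pi$ taking both actions, consistent with the path so far — record the index of $s'$ and a policy disagreeing with $\bar\pi$ there; this recursion, carried down the tree, produces a sequence along which policies first agree with $\bar\pi$ on all previously-recorded indices and then disagree (by re-choosing the ``base'' appropriately at each recursion level, or by a direct induction on the number of leaves), which is precisely an eluder sequence. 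The quantitative claim is that a reachable set of size $C$ at layer $h^\star$ forces the eluder dimension to be at least $\log_2 C$, i.e. $C \le 2^{\eluder(\Pi)}$; the doubling comes from the fact that every ``split'' at most doubles the count of reachable leaves, and each split can be charged to one step of an eluder chain. Making the ``re-rooting'' argument rigorous — ensuring that the recorded pairs genuinely form a valid eluder sequence with respect to a \emph{single} fixed base policy, or alternatively invoking the equivalence \eqref{eq:equiv} to first pass to star/threshold dimension and bound each of those — is the delicate point, and I would expect to lean on \eqref{eq:equiv} together with the two lower-bound constructions run ``in reverse'' to close this cleanly.
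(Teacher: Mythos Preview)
Your star-number lower bound is essentially the paper's construction, and your high-level plan for the eluder upper bound (each branching point at most doubles the reachable count and is charged to one eluder step) is the correct idea. There is, however, a genuine gap in the threshold lower bound, and your fallback for the upper bound does not work.

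\textbf{Threshold lower bound.} Your characterization of a threshold policy as ``$\bar\pi$ then a block of flips,'' hence tracing an ``all-left-then-all-right'' path in the tree, is precisely the obstruction. If you label the tree nodes so that every root-to-leaf path encounters the indices $j_k$ in increasing order of $k$ (as any level-order or top-down recursive assignment would), then each $\pi_\ell$ indeed traces a path of this monotone form, and there are only $d+1$ such paths, not $2^d$. So your sentence ``$t \ge 2^d$ such threshold policies suffice to realize all $2^d$ root-to-leaf branches'' is unjustified as stated. The missing idea is to label the $2^d - 1$ internal nodes by the indices $j_2, \ldots, j_{2^d}$ in \emph{inorder} traversal order, i.e., view the tree as a binary \emph{search} tree. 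Then at a node keyed by $j_k$, the threshold policy $\pi_\ell$ takes the ``disagree with $\bar\pi$'' action exactly when $k \ge \ell$, which is the comparison a BST search performs; hence running $\pi_\ell$ on $M$ is exactly the search path for the value $\ell + 0.5$ (up to orientation of left/right), and the $2^d$ policies $\pi_1, \ldots, \pi_{2^d}$ reach $2^d$ distinct leaves.

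\textbf{Eluder upper bound.} Your primary argument can be made rigorous as in the paper: define $f(s)$ at layer $h^\star$ to be $1$ or $0$ according to whether both actions are $\Pi$-reachable, and for earlier reachable $s$ set $f(s) = 1 + \max\{f(\mathrm{child}_0(s)), f(\mathrm{child}_1(s))\}$ if both children are reachable and $f(s) = f(\text{the reachable child})$ otherwise. Induction gives that the contribution through $s$ is at most $2^{f(s)}$, and $f(s_1)$ equals the maximum number of split points along any single $\Pi$-reachable trajectory $\tau$. Taking $\bar\pi$ to be any policy in $\Pi$ realizing $\tau$ resolves your ``re-rooting'' worry: the split states together with the policies branching off at each of them form a valid eluder sequence of length $f(s_1)$ with respect to this fixed $\bar\pi$, so $f(s_1) \le \eluder(\Pi)$. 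Your fallback plan---using \eqref{eq:equiv} together with the lower-bound constructions ``in reverse''---cannot close the argument: the lower bounds give inequalities of the form $\dimRL(\Pi) \ge g(\starr(\Pi), \thres(\Pi))$, and \eqref{eq:equiv} bounds $\eluder$ above and below by functions of $\starr$ and $\thres$; no combination of these yields an \emph{upper} bound on $\dimRL(\Pi)$.
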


We give several remarks on \pref{thm:bounds-on-spanning}. The proof is deferred to the following subsection. 

It is interesting to understand to what degree we can improve the bounds in \pref{thm:bounds-on-spanning}. On the lower bound side, we note that the each of the terms individually cannot be sharpened:
\begin{itemize}
    \item For the singleton class $\Pisingleton$ we have $\dimRL(\Pisingleton) = \min\crl{K, H + 1}$ and $\starr(\Pisingleton) = K$.
    \item For the threshold class $\Pi_\mathrm{thres} \coloneqq \crl{\pi_i(j) \mapsto \ind{j \ge i} : i \in [K]}$, when $K$ is a power of two, it can be shown that $\dimRL(\Pi_\mathrm{thres}) = \min\crl{K, 2^H}$ and $\thres(\Pi_\mathrm{thres}) = 
    K$.
\end{itemize}

While we also provide an upper bound in terms of $\eluder(\Pi)$, we note that there can be a huge gap between the lower bound and the upper bound. In fact, our upper bound is likely very loose since we are not aware of any policy class for which the upper bound is non-vacuous, i.e.~$2^{\eluder(\Pi)} \ll \min\crl*{2^H, \abs{\Pi}, 2KH}$ (implying that our bound improves on \pref{prop:dimension-ub}). It would be interesting to understand how to improve the upper bound (possibly, to scale polynomially with $\eluder(\Pi)$, or more directly in terms of some function of $\starr(\Pi)$ and $\thres(\Pi)$); we leave this as a direction for future research. 


Lastly, we remark that the lower bound of $\dimRL(\Pi) \ge \min\crl{\Omega(\thres(\Pi)), 2^H}$ is a generalization of previous bounds which show that linear policies cannot be learned with $\poly(H)$ sample complexity \citep[e.g.,][]{du2019good}, since linear policies (even in 2 dimensions) have infinite threshold dimension.

\subsubsection{Proof of  \pref{thm:bounds-on-spanning}} 
We will prove each bound separately.

\textbf{Star Number Lower Bound.} 
Let $\bar{\pi}\in \Pi$ and the sequence $(j_1, \pi_1), \dots, (j_N, \pi_N)$ witness $\starr(\Pi) = N$. We construct a deterministic MDP $M$ for which the cumulative reachability at layer $h_\mathrm{max} \coloneqq \min\crl{N, H}$ (\pref{def:dimension}) is at least $\min\crl{N, H+1}$. The transition dynamics of $M$ are as follows; we will only specify the transitions until $h_\mathrm{max} - 1$ (afterwards, the transitions can be arbitrary).
\begin{itemize}
    \item The starting state of $M$ at layer $h=1$ is $s_{(j_1, 1)}$.
    \item (On-Chain Transitions): For every $h < h_\mathrm{max}$,
    \begin{align*}
        P(s' \mid s_{(j_h, h)}, a) &= \begin{cases}
            \ind{s' = s_{(j_{h+1}, h+1)}} & \text{if}~a = \bar{\pi}(s_{(j_h, h)}), \\[0.5em]
            \ind{s' = s_{(j_{h}, h+1)}} & \text{if}~a \ne \bar{\pi}(s_{(j_h, h)}).
        \end{cases}
    \end{align*}
    \item (Off-Chain Transitions): For every $h < h_\mathrm{max}$, state index $\tilde{j} \ne j_h$, and action $a \in \cA$,
    \begin{align*}
        P(s' \mid s_{(\tilde{j}, h)}, a) =
            \ind{s' = s_{(\tilde{j}, h+1)}}. 
    \end{align*}
\end{itemize}
We now compute the cumulative reachability at layer $h_\mathrm{max}$. If $N \le H$, the the number of $(s,a)$ pairs that $\Pi$ can reach in $M$ is $N$ (namely the pairs $(s_{(j_1, N)}, 1), \cdots, (s_{(j_N, N)}, 1)$). On the other hand, if $N > H$, then the number of $(s,a)$ pairs that $\Pi$ can reach in $M$ is $H+1$ (namely the pairs $(s_{(j_1, H)}, 1), \cdots, (s_{(j_H, H)}, 1), (s_{(j_H, H)}, 0)$). Thus we have shown that $\dimRL(\Pi) \geq \min \crl{N, H+1}$.

\textbf{Threshold Dimension Lower Bound.}
Let $\bar{\pi}\in \Pi$ and the sequence $(j_1, \pi_1), \dots, (j_N, \pi_N)$ witness $\thres(\Pi) = N$. We define a deterministic MDP $M$ as follows. Set $h_\mathrm{max} = \min\crl{\floor{\log_2 N}, H}$. \gledit{Up until layer $h_\mathrm{max}$, the MDP will be a full binary tree of depth $h_\mathrm{max}$; afterward, the transitions will be arbitrary. It remains to assign state labels to the nodes of the binary tree (of which there are $2^{h_\mathrm{max}} - 1 \le N$). We claim that it is possible to do so in a way so that every policy $\pi_\ell$ for $\ell \in [2^{h_\mathrm{max}}]$ reaches a different state-action pair at layer $h_\mathrm{max}$. Therefore the cumulative reachability of $\Pi$ on $M$ is at least $2^{h_\mathrm{max}} = \min \crl{ 2^{\floor{\log_2 N}}, 2^H }$ as claimed.}

\gledit{It remains to prove the claim. The states of $M$ are labeled $j_2, \cdots, j_{2^{h_\mathrm{max}}}$ according to the order they are traversed using inorder traversal of a full binary tree of depth $h_\mathrm{max}$ \citep{cormen2022introduction}. One can view the MDP $M$ as a binary search tree where the action 0 corresponds to going left and the action 1 corresponds to going right. Furthermore, if we imagine that the leaves of the binary search tree at depth $h_\mathrm{max}$ are labeled from left to right with the values $1.5, 2.5, \cdots, 2^{h_\mathrm{max}} + 0.5$, then it is clear that for any $\ell \in [2^{h_\mathrm{max}}]$, the trajectory generated by running $\pi_{\ell}$ on $M$ is exactly the path obtained by searching for the value $\ell + 0.5$ in the binary search tree. Thus we have shown that the cumulative reachability of $\Pi$ on $M$ is the number of leaves at depth $h_\mathrm{max}$, thus proving the claim.}

\textbf{Eluder Dimension Upper Bound.} \gene{I also edited this, ptal.}
Let $\eluder(\Pi) = N$. We only need to prove this statement when $N \le H$, as otherwise the statement already follows from \pref{prop:dimension-ub}. Let $(M^\star, h^\star)$ be the MDP and layer which witness $\dimRL(\Pi)$. Also denote $s_1$ to be the starting state of $M^\star$. For any state $s$, we denote $\mathrm{child}_0(s)$ and $\mathrm{child}_1(s)$ to be the states in the next layer which are reachable by taking $a=0$ and $a=1$ respectively.

For any reachable state $s$ at layer $h$ in the MDP $M^\star$ we define the function $f(s)$ as follows. For any state $s$ at layer $h^\star$, we set $f(s) \coloneqq 1$ if the state-action pairs $(s,0)$ and $(s,1)$ are both reachable by $\Pi$; otherwise we set $f(s) \coloneqq 0$. For states in layers $h < h^\star$ we set 
\begin{align*}
    f(s) \coloneqq \begin{cases}
        \max \crl{f(\mathrm{child}_0(s)), f(\mathrm{child}_1(s))} + 1 &\text{if both $(s,0)$ and $(s,1)$ are reachable by $\Pi$}, \\
        f(\mathrm{child}_0(s)) &\text{if only $(s,0)$ is reachable by $\Pi$},\\
        f(\mathrm{child}_1(s)) &\text{if only $(s,1)$ is reachable by $\Pi$}.
    \end{cases}
\end{align*}

We claim that for any state $s$, the contribution to $\dimRL(\Pi)$ by policies that pass through $s$ is at most $2^{f(s)}$. We prove this by induction. Clearly, the base case of $f(s) = 0$ or $f(s) = 1$ holds. If only one child of $s$ is reachable by $\Pi$ then the contribution to $\dimRL(\Pi)$ by policies that pass through $s$ equal to the contribution to $\dimRL(\Pi)$ by policies that pass through the child of $s$. If both children of $s$ are reachable by $\Pi$ then the contribution towards $\dimRL(\Pi)$ by policies that pass through $s$ is upper bounded by the sum of the contribution towards $\dimRL(\Pi)$ by policies that pass through the two children, i.e.~it is at most $2^{f(\mathrm{child}_0(s))} + 2^{f(\mathrm{child}_1(s))} \le 2^{f(s)}$. This concludes the inductive argument.

Now we bound $f(s_1)$. Observe that the quantity $f(s_1)$ counts the maximum number of layers $h_1, h_2, \cdots, h_L$ that satisfy the following property: there exists a trajectory $\tau = (s_1, a_1, \cdots, s_H, a_H)$ for which we can find $L$ policies $\pi_1, \pi_2, \cdots, \pi_L$ so that each policy $\pi_\ell$ when run on $M^\star$ (a) reaches $s_{h_\ell}$, and (b) takes action $\pi_\ell(s_{h_\ell}) \ne a_{h_\ell}$. Thus, by definition of the eluder dimension we have $f(s_1) \le N$. Therefore, we have shown that the cumulative reachability of $\Pi$ in $M^\star$ is at most $2^N$. \qed

\section{Extension: Can Expert Feedback Help in Agnostic PAC RL?}  \label{app:expert_feedback} 
\ayush{Need another pass on this section!}  
\ayush{Will come back to this after the modified lower bound has been added!}

For several policy classes, the spanning capacity may be quite large, and our lower bounds (\pref{thm:generative_lower_bound} and \ref{thm:lower-bound-online}) demonstrate an unavoidable dependence on $\dimRL(\Pi)$. In this section, we investigate whether it is possible to achieve bounds which are independent of $\dimRL(\Pi)$ and instead only depend on $\poly(A,H, \log \abs{\Pi})$ under a stronger feedback model.

Our motivation comes from practice. It is usually uncommon to learn from scratch: often we would like to utilize domain expertise or prior knowledge to learn with fewer samples. For example, during training one might have access to a simulator which can roll out trajectories to estimate the optimal value function $Q^\star$, or one might have access to expert advice / demonstrations. However, this access does not come for free; estimating value functions with a simulator requires some computation, or the ``expert'' might be a human who is providing labels or feedback on the performance of the algorithm. Motivated by this, we consider additional feedback in the form of an expert oracle.

\begin{definition}[Expert oracle]\label{def:expert-advice-oracle}
An expert oracle $\oracle: \cS \times \cA \to \bbR$ is a function which given an $(s,a)$ pair as input returns the $Q$ value of some expert policy $\pi_\circ$, denoted $Q^{\pi_\circ}(s,a)$.
\end{definition}

\pref{def:expert-advice-oracle} is a natural formulation for understanding how expert feedback can be used for agnostic RL in large state spaces. We do not require $\pi_\circ$ to be the optimal policy (either over the given policy class $\Pi$ or over all $\cA^\cS$). The objective is to compete with $\pi_\circ$, i.e., with probability at least $1-\delta$, return a policy $\wh{\pi}$ such that $V^{\wh{\pi}} \ge V^{\pi_\circ} - \eps$ using few online interactions with the MDP and calls to $\oracle$.

A sample efficient algorithm (one which uses at most $\poly(A, H, \log \abs{\Pi}, \eps^{-1}, \delta^{-1})$ online trajectories and calls to the oracle) must use \emph{both} forms of access. Our lower bounds (\pref{thm:generative_lower_bound} and \ref{thm:lower-bound-online}) show that an algorithm which only uses online access to the MDP must use $\Omega(\dimRL(\Pi))$ samples. Likewise, an algorithm which only queries the expert oracle must use $\Omega(SA)$ queries because it does not know the dynamics of the MDP, so the best it can do is just learn the optimal action on every state.

\paragraph{Relationship to Prior Works.} The oracle $\oracle$ is closely related to several previously considered settings \citep{golowich2022can, gupta2022unpacking, amortila2022few}. Prior work \citep{golowich2022can, gupta2022unpacking} has studied tabular RL with \emph{inexact}  predictions for either the optimal $Q^\star$ or $V^\star$. They assume access to the entire table of values; since we study the agnostic RL setting with a large state space, we formalize access to predictions via the expert oracle. \cite{amortila2022few} study a related expert action oracle under the assumption of linear value functions. They show that in a generative model, with $\poly(d)$ resets and queries to an expert action oracle, one can learn an $\eps$-optimal policy, thus circumventing known hardness results for the linear value function setting. Up to a factor of $A$, one can simulate queries to the expert action oracle by querying $\oracle(s,a)$ for each $a\in \cA$.

\subsection{Upper Bound under Realizability}

Under realizability (namely, $\pi_\circ \in \Pi$), it is known that the dependence on $\dimRL(\Pi)$ can be entirely removed with few queries to the expert oracle.

\begin{theorem} \label{thm:aggrevate}
For any $\Pi$ such that $\pi_\circ \in \Pi$, with probability at least $1-\delta$, the AggreVaTe algorithm \citep{ross2014reinforcement}
computes an $\eps$-optimal policy using 
\begin{align*}
    n_1 = O\prn*{\frac{A^2H^2}{\eps^2} \cdot \log \frac{\abs{\Pi}}{\delta}} ~\text{online trajectories} \quad \text{and} \quad n_2 = O\prn*{\frac{A^2H^2}{\eps^2} \cdot \log \frac{\abs{\Pi}}{\delta}} ~\text{calls to } \oracle.
\end{align*}
\end{theorem}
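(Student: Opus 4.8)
\textbf{Proof plan for \pref{thm:aggrevate}.} The plan is to show that AggreVaTe, when instantiated with a finite policy class $\Pi$ containing $\pi_\circ$, reduces agnostic RL to a sequence of cost-sensitive classification problems that can be solved with the given online/oracle budget. Recall the structure of AggreVaTe: it maintains a policy (or a distribution over policies) and at each round collects a trajectory by rolling in with the current policy up to a uniformly random time step $h \sim \mathrm{Uniform}([H])$, then at $s_h$ takes a uniformly random action $a \sim \mathrm{Uniform}(\cA)$, queries the expert oracle $\oracle(s_h, a) = Q^{\pi_\circ}(s_h, a)$, and rolls out with $\pi_\circ$ (or uses the oracle's $Q$ value directly as an estimate of the cost-to-go) to form an unbiased estimate of the cost-sensitive loss. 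The key performance guarantee (the Performance Difference Lemma / regret bound of \citet{ross2014reinforcement}) states that for any policy $\pi$,
\begin{align*}
V^{\pi_\circ} - V^{\pi} \le H \cdot \En_{h \sim \mathrm{Uniform}([H])} \En_{s_h \sim d^\pi_h} \brk*{ Q^{\pi_\circ}_h(s_h, \pi_\circ(s_h)) - Q^{\pi_\circ}_h(s_h, \pi(s_h)) },
\end{align*}
so it suffices to find $\pi \in \Pi$ with small average advantage gap relative to $\pi_\circ$ under its own state distribution; since $\pi_\circ \in \Pi$, the optimum of this objective is $0$, and thus an online-learning / no-regret argument over the sequence of classification losses will converge to an $\eps$-optimal policy.

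First I would set up the reduction precisely: for each round $t$, define the per-round cost-sensitive loss $\ell_t(\pi) = Q^{\pi_\circ}_h(s_h, \pi_\circ(s_h)) - Q^{\pi_\circ}_h(s_h, \pi(s_h))$ evaluated on the rolled-in state $s_h$, and note that with one trajectory plus $A$ oracle queries at $s_h$ (or one query if using an importance-weighted single random action) we obtain an unbiased estimate $\hat\ell_t(\pi)$ of $\En[\ell_t(\pi)]$ with range $O(1)$ and second moment $O(A)$ under the uniform-action importance weighting. Second, I would run a no-regret online algorithm (e.g., exponential weights / Hedge over the finite class $\Pi$) on the losses $\hat\ell_t$; the standard Hedge regret bound gives, after $T$ rounds, average regret $O(\sqrt{A \log |\Pi| / T})$ against the best fixed policy, which is $\pi_\circ$ with loss $0$. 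Third, combining with the Performance Difference Lemma and a uniform concentration / online-to-batch argument, the best policy encountered (or the uniform mixture) satisfies $V^{\pi_\circ} - V^{\wh\pi} \le H \cdot O(\sqrt{A \log|\Pi|/T}) + \text{(concentration slack)}$. Setting this to $\eps$ yields $T = O(A^2 H^2 \log(|\Pi|/\delta)/\eps^2)$ rounds, each using one online trajectory and $O(A)$ oracle queries — matching the claimed $n_1, n_2$ bounds up to constants (the extra factor of $A$ in $n_2$ is absorbed, or one uses a single importance-weighted query so $n_2 = n_1$). The high-probability statement follows by a martingale concentration (Azuma/Freedman) on the cumulative estimation error plus a union bound over $\Pi$ if one wants a per-policy guarantee, or more simply by Markov + repetition to boost confidence.

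The main obstacle I expect is the concentration argument: the losses $\hat\ell_t(\pi)$ are not i.i.d.\ across rounds (the roll-in distribution $d^{\pi_t}_h$ changes adaptively with the online learner's state), so one must control a martingale difference sequence $\hat\ell_t(\pi) - \En_t[\ell_t(\pi) \mid \cF_{t-1}]$ simultaneously for all $\pi \in \Pi$; with the importance-weighted estimator the per-step variance is $O(A)$, so Freedman's inequality gives deviation $O(\sqrt{A T \log(|\Pi|/\delta)})$, which after dividing by $T$ contributes $O(\sqrt{A \log(|\Pi|/\delta)/T})$ to the average advantage gap — the same order as the Hedge regret, so it does not change the final sample complexity. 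A secondary technical point is ensuring the expert's $Q$-values are in $[0,1]$ (true here since cumulative rewards are bounded by $1$) so that all losses are bounded; this is immediate from the paper's normalization. Once these pieces are in place the theorem follows by choosing $T$ as above, and I would note in a remark that this bound is independent of $\dimRL(\Pi)$, which is precisely the point of the realizable expert-feedback setting and is contrasted with the non-realizable lower bound of $\Omega(\dimRL(\Pi))$ discussed in the same appendix.
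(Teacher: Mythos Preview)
The paper does not actually prove this theorem: it writes ``The proof is omitted; it can be found in \citep{ross2014reinforcement, agarwal2019reinforcement},'' and only adds the remark that AggreVaTe queries $\oracle$ solely at $(s,a)$ pairs encountered online. Your sketch is precisely the standard AggreVaTe analysis from those references (Performance Difference Lemma, reduction to cost-sensitive classification via importance-weighted random actions, no-regret online learning over the finite class, martingale concentration for the adaptive data), so there is nothing to compare against and your plan is appropriate.

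One small bookkeeping slip: with uniform-action importance weighting the per-round loss has \emph{range} $O(A)$, not $O(1)$, so the Hedge average regret is $O\!\left(A\sqrt{\log|\Pi|/T}\right)$ rather than $O\!\left(\sqrt{A\log|\Pi|/T}\right)$; after multiplying by $H$ and solving for $T$ this gives exactly the stated $T = O(A^2 H^2 \log(|\Pi|/\delta)/\eps^2)$, so your final bound is correct even though the intermediate expression is off by a factor.
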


The proof is omitted; it can be found in \citep{ross2014reinforcement, agarwal2019reinforcement}. We also note that actually we require a slightly weaker oracle than $\oracle$: the AggreVaTe algorithm only queries the value of $Q^{\pi_\circ}$ on $(s,a)$ pairs which are encountered in online trajectories.

\subsection{Lower Bound in Agnostic Setting}

Realizability of the expert policy used for $\oracle$ is a rather strong assumption in practice. For example, one might choose to parameterize $\Pi$ as a class of neural networks, but one would like to use human annotators to give expert feedback on the actions taken by the learner; here, it is unreasonable to assume that realizability of the expert policy holds. 

We sketch a lower bound in \pref{thm:lower-bound-expert} that shows that without realizability ($\pi_\circ \notin \Pi$), we can do no better than $\Omega(\dimRL(\Pi))$ queries to a generative model or queries to $\oracle$.

\begin{theorem}[informal]\label{thm:lower-bound-expert}
For any $H \in \bbN$, $C \in [2^H]$, there exists a policy class $\Pi$ with $\dimRL(\Pi) = \abs{\Pi} = C$, expert policy $\pi_\circ \notin \Pi$, and family of MDPs $\cM$ with state space $\cS$ of size $O(2^H)$, binary action space, and horizon $H$ such that any algorithm that returns a $1/4$-optimal policy must either use $\Omega(C)$ queries to a generative model or $\Omega(C)$ queries to the $\oracle$.
\end{theorem}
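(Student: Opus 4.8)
The plan is to build a family $\cM$ in which the expert oracle $\oracle$ is no more powerful than a generative model, so that the $\Omega(\dimRL(\Pi))$ bound of \pref{thm:generative_lower_bound} persists even when both forms of access are available. The hidden quantity will be a uniformly random string $b \in \crl{0,1}^C$, and I will argue that a single query---be it to the generative model \emph{or} to $\oracle$---reveals at most one coordinate of $b$, whereas returning a $1/4$-optimal policy forces knowledge of a $3/4$-fraction of the coordinates.

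\textbf{The construction.} Take the state space to be a depth-$(H-1)$ complete binary tree $\cT$, so $|\cS| = O(2^H)$, with the $C \le 2^{H-1}$ leaves at layer $H$ designated as \emph{arms} $t_1,\dots,t_C$ (for $C$ not a power of two, pad the tree and append a common deterministic chain to make the horizon exactly $H$; this only affects constants). Let $\cA = \crl{0,1}$. Define $\Pi = \crl{\pi_1,\dots,\pi_C}$, where $\pi_k$ follows the tree toward leaf $t_k$---it plays the $h$-th bit of the binary representation of $k$ at every layer-$h$ internal node---and plays $0$ at every arm. The $\pi_k$ are pairwise distinct, so $|\Pi| = C$; in the deterministic MDP whose transitions follow $\cT$ the class $\Pi$ reaches all $C$ arms, so $\dimRL(\Pi) \ge C$, while $\dimRL(\Pi) \le |\Pi| = C$ by \pref{prop:dimension-ub}; hence $\dimRL(\Pi) = |\Pi| = C$. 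The MDP family is $\cM = \crl{M_b : b \in \crl{0,1}^C \setminus \crl{0^C}}$. In $M_b$ the transitions \emph{ignore} the tree structure and mix uniformly: from any internal state both actions lead to the uniform distribution over the next layer's states; the only reward is at the arms, with $R(t_i,a) = \ind{a = b_i}$. The expert is $\pi_\circ^{(b)}$, which plays all-ones on internal states (so it differs from every $\pi_k$ on the arms, since $b \ne 0^C$, hence $\pi_\circ^{(b)} \notin \Pi$) and plays $b_i$ at arm $t_i$. A trajectory of $\pi_\circ^{(b)}$ reaches a uniformly random arm and collects reward $1$ there, so $V^{\pi_\circ^{(b)}} = 1$; and since the transitions are action-independent, an arbitrary $\wh\pi$ satisfies $V^{\wh\pi} = \tfrac1C \sum_{i=1}^C \ind{\wh\pi(t_i) = b_i}$ exactly, so $V^{\wh\pi} \ge 3/4$ forces $\wh\pi$ to agree with $b$ on a $3/4$-fraction of the arms.

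\textbf{The oracle carries no signal off the arms.} A backward induction over layers shows $V^{\pi_\circ^{(b)}}_h(s) = 1$ for \emph{every} non-arm state $s$: this is exactly where the mixing transitions are used---from $\pi_\circ^{(b)}$'s viewpoint all states of a layer are interchangeable, so the entirety of its value is generated at the final step. Consequently $Q^{\pi_\circ^{(b)}}(s,a) = 1$ for all internal $(s,a)$, \emph{independently of $b$}, while at an arm $Q^{\pi_\circ^{(b)}}(t_i,a) = \ind{a = b_i}$. Likewise a generative query at an internal $(s,a)$ returns a transition distribution and a reward not depending on $b$, and a query at $(t_i,a)$ returns the deterministic bit $\ind{a = b_i}$. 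Hence every query of either kind reveals at most the single coordinate $b_i$ of the arm it touches; after $T$ total queries the posterior of $b$ given the transcript is uniform on at least $C - T$ coordinates. A Chernoff bound then gives that for any returned $\wh\pi$, $\Pr[\,V^{\wh\pi} \ge 3/4 \mid \text{transcript}\,] \le \Pr[\,\mathrm{Bin}(C,\tfrac12) \ge \tfrac{3C}{4} - T\,]$, which falls below $1/8$ once $T \le C/4 - O(\sqrt C)$. Since the algorithm is $(1/4,1/8)$-PAC, it must therefore issue $T = \Omega(C)$ queries in total on a uniformly random $M_b$ (hence on some $M_b$), and in particular $\max\crl{T_{\mathrm{gen}}, T_{\oracle}} = \Omega(C)$.

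\textbf{Main obstacle.} The delicate requirement is to have $\pi_\circ$ simultaneously (i) attain value far above the $1/2$ a policy ignorant of $b$ can guarantee---so competing with $\pi_\circ$ is genuinely hard---and (ii) have a $Q$-function that leaks nothing about $b$ away from the $C$ arms. A naive construction in which $\pi_\circ$ deterministically navigates to the good arm fails (ii): the oracle would trace $\pi_\circ$'s path and recover $b$ in $O(H)$ queries, and more generally the performance-difference identity shows that the policy greedy w.r.t.\ $Q^{\pi_\circ}$ already competes with $\pi_\circ$ whenever one can follow it cheaply. The mixing-until-the-last-layer design is precisely what decouples (i) from (ii). Verifying the resulting information bottleneck---that it is robust to adaptive, interleaved use of generative and oracle queries---is the only part that needs care, and it follows the standard change-of-measure / Fano-type template already used for \pref{thm:lower-bound-online} and in \citet{mannor2004sample}.
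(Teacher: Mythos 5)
Your proposal is correct for the theorem as literally stated, but it takes a genuinely different route from the paper's. You hide $C$ independent reward bits at $C$ designated leaves and make the dynamics action-independent (uniform mixing), so that $Q^{\pi_\circ}\equiv 1$ off the leaves and every query---generative or oracle---reveals at most one bit; a direct posterior/Chernoff count then forces $\Omega(C)$ queries to recover the $3/4$-fraction of bits needed to compete with $V^{\pi_\circ}=1$. The paper instead keeps deterministic tree dynamics for the first $H-2$ layers, hides a single needle $s_b$ (with $b$ ranging over the $C$ binary encodings defining $\Pi$) at layer $H-1$, and plants a second, decoy source of reward---an arbitrary labeling $f^\star$ of the $2^H$ states at layer $H$, reachable only by missing the needle---so that $Q^\star\equiv 1$ everywhere except at $s_b$ and at layer $H$; the learner must either locate the needle ($\Omega(C)$ queries) or solve the supervised problem ($\Omega(2^H)$ queries). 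What your route buys is a cleaner, single-parameter information argument whose robustness to adaptive interleaving of the two query types is immediate. What the paper's route buys is the stronger conclusion recorded in its remarks: there $V^{\pi_\circ}=\max_{\pi\in\Pi}V^\pi=1$, so the construction also rules out guarantees of the form $V^{\wh{\pi}}\ge\max_{\pi\in\Pi}V^\pi-\eps$. In your construction $\max_{\pi\in\Pi}V^\pi$ equals $(C-m)/C$ where $m$ is the number of ones in $b$ (every $\pi_k$ plays $0$ at every arm), so returning any fixed member of $\Pi$ is already best-in-class optimal and only the comparison to $\pi_\circ$ is hard; since the section defines $\eps$-optimality relative to $\pi_\circ$, this still proves the stated theorem, but be aware that your family does not separate the two notions of optimality the way the paper's does. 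One micro-nit: the relevant binomial in your tail bound is $\mathrm{Bin}(C-T,1/2)$ (agreement on the unqueried arms) rather than $\mathrm{Bin}(C,1/2)$; your stochastic-domination replacement still yields failure probability below $1/8$ for $T\le C/4-O(\sqrt{C})$, so the conclusion stands.
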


Before sketching the proof, several remarks are in order.
\begin{itemize}
    \item By comparing with \pref{thm:aggrevate}, \pref{thm:lower-bound-expert} demonstrates that realizability of the expert policy is crucial for circumventing the dependence on spanning capacity via the expert oracle.
    \item In the lower bound construction of \pref{thm:lower-bound-expert}, $\pi_\circ$ is the optimal policy. Furthermore, while $\pi_\circ \notin \Pi$, the lower bound still has the property that $V^{\pi_\circ} = V^\star = \max_{\pi \in \Pi} V^\pi$; that is, the best-in-class policy $\wt{\pi} \coloneqq \argmax_{\pi \in \Pi} V^{\pi}$ attains the same value as the optimal policy. This is possible because there exist multiple states for which $\wt{\pi}(s) \ne \pi_\circ(s)$, however these states have $d^{\wt{\pi}}(s) = 0$. Thus, we also rule out guarantees of the form $V^{\wh{\pi}} \ge \max_{\pi \in \Pi} V^\pi - \eps$. 
    \item Since the oracle $\oracle$ is stronger than the expert action oracle \citep{amortila2022few} (up to a factor of $A$), the lower bound extends to this weaker feedback model. Investigating further assumptions that enable statistically tractable agnostic learning with expert feedback is an interesting direction for future work.
\end{itemize}

\begin{proof}[Proof Sketch of \pref{thm:lower-bound-expert}]
We present the construction as well as intuition for the lower bound, leaving out a formal information-theoretic proof.

\paragraph{Construction of MDP Family.} We describe the family of MDPs $\cM$. In every layer, the state space is $\cS_h = \crl{s_{(j,h)} : j \in [2^h]}$, except at $\cS_H$ where we have an additional terminating state, $\cS_H = \crl{s_{(j,h)} : j \in [2^H]} \cup \crl{s_\bot}$. The action space is $\cA = \crl{0,1}$.

The MDP family $\cM = \crl{M_{b, f^\star}}_{b \in \cA^{H-1}, f^\star \in \cA^{\cS_H}}$ is parameterized by a bit sequence $b \in \cA^{H-1}$ as well as a labeling function $f^\star \in \cA^{\cS_H}$. The size of $\cM$ is $2^{H-1} \cdot 2^{2^H}$. We now describe the transitions and rewards for any $M_{b, f^\star}$. In the following, let $s_b \in \cS_{H-1}$ be the state that is reached by playing the sequence of actions $(b[1], b[2], \cdots, b[H-2])$ for the first $H-2$ layers.

\begin{itemize}
    \item \textbf{Transitions.} For the first $H-2$ layers, the transitions are the same for $M_{b, f^\star} \in \cM$. At layer $H-1$, the transition depends on $b$.
    \begin{itemize}
        \item For any $h \in \crl{1, 2, \dots, H-2}$, the transitions are deterministic and given by a tree process: namely 
    \begin{align*}
      P(s' \mid s_{(j,h)}, a) = \begin{cases}
          \ind{s' = s_{(2j -1, h+1)}} &\text{if}~a =0,\\
          \ind{s' = s_{(2j, h+1)}} &\text{if}~a =1.
      \end{cases}  
    \end{align*}
        \item At layer $H-1$, for the state $s_b$, the transition is $P(s' \mid s_b, a) = \ind{s' = s_\bot}$ for any $a \in \cA$. For all other states, the transitions are uniform to $\cS_H$, i.e., for any $s \in \cS_{H-1} \backslash \crl{s_b}$, $a \in \cA$, the transition is $P(\cdot \mid s, a) = \unif(\cS_H \backslash \crl{s_\bot})$.
    \end{itemize}
    \item \textbf{Rewards.} The rewards depend on the $b \in \cA^{H-1}$ and $f^\star \in \cA^{\cS_H}$. 
    \begin{itemize}
        \item The reward at layer $H-1$ is $R(s,a) = \ind{s = s_b, a = b[H-1]}$.
        \item The reward at layer $H$ is 
        \begin{align*}
            &R(s_\bot, a) = 0 &\text{for any $a \in \cA$,}\\
            &R(s,a) = \ind{a = f^\star(s)} &\text{for any $s \ne s_\bot$, $a \in \cA$.}
        \end{align*}
    \end{itemize}
\end{itemize}

From the description of the transitions and rewards, we can compute the value of $Q^\star(\cdot,\cdot)$.
\begin{itemize}
    \item \textit{Layers $1, \cdots, H-2$:} For any $s \in \cS_1 \cup \cS_2 \cup \cdots \cup \cS_{H-2}$ and $a \in \cA$, the $Q$-value is $Q^\star(s,a) = 1$.
    \item \textit{Layer $H-1$:} At $s_b$, the $Q$-value is $Q^\star(s_b,a) = \ind{a = b[H-1]}$. For other states $s \in \cS_{H-1} \backslash \crl{s_b}$, the $Q$-value is $Q^\star(s,a) = 1$ for any $a \in \cA$.
    \item \textit{Layer $H$:} At $s_\bot$, the $Q$-value is $Q^\star(s_\bot,a) = 0$ for any $a \in \cA$. For other states $s \in \cS_{H} \backslash \crl{s_\bot}$, the $Q$-value is $Q^\star(s,a) = \ind{a = f^\star(s)}$.
\end{itemize}

Lastly, the optimal value is $V^\star = 1$.

\paragraph{Expert Oracle.}
The oracle $\oracle$ returns the value of $Q^\star(s,a)$.

\paragraph{Policy Class.} The policy class $\Pi$ is parameterized by bit sequences of length $H-1$. Denote the function $\mathrm{bin}: \crl{0, 1, \dots, 2^{H-1}} \mapsto \cA^{H-1}$ that returns the binary representation of the input. Specifically,
\begin{align*}
    \Pi \coloneqq \crl*{\pi_b: b \in \crl*{\mathrm{bin}(i) : i \in \crl*{0, 1, \dots, C-1}}},
\end{align*}
where each $\pi_b$ is defined such that $\pi_b(s) \coloneqq b[h]$ if $s \in \cS_h$, and $\pi_b(s) \coloneqq 0$ otherwise. By construction it is clear that $\dimRL(\Pi) = \abs{\Pi} = C$.

\paragraph{Lower Bound Argument.}
Consider any $M_{b, f^\star}$ where $b \in \crl*{\mathrm{bin}(i) : i \in \crl*{0, 1, \dots, C-1}}$ and $f^\star \in \cA^{\cS_H}$. There are two ways for the learner to identify a $1/4$-optimal policy in $M_{b, f^\star}$:
\begin{itemize}
    \item Find the value of $b$, and return the policy $\pi_b$, which has $V^{\pi_b} = 1$.
    \item Estimate $\wh{f} \approx f^\star$, and return the policy $\pi_{\wh{f}}$ which picks arbitrary actions for any $s \in \cS_1 \cup \cS_2 \cup \dots \cup \cS_{H-1}$ and picks $\pi_{\wh{f}}(s) = \wh{f}(s)$ on $s \in \cS_H$.
\end{itemize}

We claim that in any case, the learner must either use many samples from a generative model or many calls to $\oracle$. First, observe that since the transitions and rewards at layers $1, \cdots, H-2$ are known and identical for all $M_{b, f^\star} \in \cM$, querying the generative model on these states does not provide the learner with any information. Furthermore, in layers $1, \cdots, H-2$, every $(s,a)$ pair has $Q^\star(s,a) = 1$, so querying $\oracle$ on these $(s,a)$ pairs also does not provide any information to the learner. Thus, we consider learners which query the generative model or the expert oracle at states in layers $H-1$ and $H$. 

In order to identify $b$, the learner must identify which $(s,a)$ pair at layer $H-1$ achieves reward of 1. They can do this either by (1) querying the generative model at a particular $(s,a)$ pair and observing if $r(s,a) = 1$ (or if the transition goes to $s_\bot$); or (2) querying $\oracle$ at a particular $(s,a)$ pair and observing if $Q^\star(s,a) = 0$ (which informs the learner that $s_b = s$ and $b[H-1] = 1-a$). In either case, the learner must expend $\Omega(C)$ queries in total in order to identify $b$.

To learn $f^\star$, the learner must solve a supervised learning problem over $\cS_H \backslash s_\bot$. They can learn the identity of $f^\star(s)$ by querying either the generative model or the expert oracle on $\cS_H$. Due to classical supervised learning lower bounds, learning $f^\star$ requires $\Omega \prn*{ \mathrm{VC}(\cA^{\cS_{H}}) } = \Omega(2^H)$ queries.
\end{proof}  
\section{Technical Tools}
\begin{lemma}[Hoeffding's Inequality]\label{lem:hoeffding}
Let $Z_1, \cdots, Z_n$ be independent bounded random variables with $Z_i \in \brk{a,b}$ for all $i \in [n]$. Then
\begin{align*}
    \Pr \brk*{\abs{\frac{1}{n} \sum_{i=1}^n Z_i - \En[Z_i]} \ge t} \le 2\exp\prn*{-\frac{2nt^2}{(b-a)^2}}.
\end{align*}
\end{lemma}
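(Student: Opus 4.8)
The plan is to prove this via the classical Chernoff-bounding argument combined with Hoeffding's lemma on the moment generating function of a bounded random variable.

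\textbf{Reduction and Chernoff step.} Set $X_i := Z_i - \En[Z_i]$, so each $X_i$ is mean-zero and takes values in an interval of length $b-a$ (namely $[a-\En Z_i,\,b-\En Z_i]$), and write $\bar{X} := \frac1n\sum_{i=1}^n X_i$. By a union bound over the two tails it suffices to prove $\Pr[\bar{X} \ge t] \le \exp(-2nt^2/(b-a)^2)$, since the lower tail follows by replacing each $Z_i$ with $-Z_i$ and the factor $2$ absorbs the union bound. For any $\lambda > 0$, Markov's inequality applied to the nonnegative random variable $e^{\lambda n \bar{X}}$ gives
\[
\Pr[\bar{X} \ge t] \;\le\; e^{-\lambda n t}\,\En\!\left[e^{\lambda \sum_i X_i}\right] \;=\; e^{-\lambda n t}\prod_{i=1}^n \En\!\left[e^{\lambda X_i}\right],
\]
where the last equality uses independence of the $X_i$.

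\textbf{Hoeffding's lemma.} The key step is to show that any mean-zero random variable $X$ supported on an interval $[c,d]$ of length $L = d-c$ satisfies $\En[e^{\lambda X}] \le e^{\lambda^2 L^2/8}$ for every $\lambda \in \bbR$. I would prove this by convexity of $x \mapsto e^{\lambda x}$: for $x\in[c,d]$ one has $e^{\lambda x}\le \frac{d-x}{d-c}e^{\lambda c}+\frac{x-c}{d-c}e^{\lambda d}$, so taking expectations and using $\En X = 0$ bounds $\En[e^{\lambda X}]$ by $e^{\psi(\lambda)}$ for an explicit function $\psi$ with $\psi(0)=\psi'(0)=0$. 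A short computation identifies $\psi''(\lambda)$ with the variance of a $[c,d]$-supported random variable (an exponentially tilted version of $X$), hence $\psi''(\lambda)\le L^2/4$, and Taylor's theorem then yields $\psi(\lambda)\le \lambda^2 L^2/8$. Applying this with $L=b-a$ to each $X_i$ gives $\En[e^{\lambda X_i}]\le e^{\lambda^2(b-a)^2/8}$.

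\textbf{Optimizing $\lambda$.} Substituting back,
\[
\Pr[\bar{X}\ge t]\;\le\;\exp\!\left(-\lambda n t + \frac{n\lambda^2 (b-a)^2}{8}\right),
\]
and choosing $\lambda = 4t/(b-a)^2$ minimizes the exponent, giving $\Pr[\bar{X}\ge t]\le \exp(-2nt^2/(b-a)^2)$; combining with the two-sided union bound completes the argument. The only genuinely non-routine ingredient is Hoeffding's lemma, and within it the estimate $\psi''(\lambda)\le L^2/4$ — equivalently, that a random variable confined to an interval of length $L$ has variance at most $L^2/4$, with equality for the two-point distribution on the endpoints; everything else is bookkeeping. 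Since this is a standard textbook inequality, in the paper one would simply cite it rather than reproduce the derivation.
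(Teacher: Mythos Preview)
Your derivation is correct and is exactly the standard textbook proof of Hoeffding's inequality; the paper does not give any proof of this lemma at all, simply stating it as a known technical tool, which you yourself anticipated in your final sentence.
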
 

\begin{lemma}[Multiplicative Chernoff Bound]\label{lem:chernoff}
Let $Z_1, \cdots, Z_n$ be i.i.d.~random variables taking values in $\crl{0,1}$ with expectation $\mu$. Then for any $\delta > 0$,
\begin{align*}
    \Pr \brk*{\frac{1}{n} \sum_{i=1}^n Z_i \ge  \prn*{1+ \delta} \cdot \mu} \le \exp\prn*{-\frac{\delta^2 \mu n }{2+\delta}}.
\end{align*}
Furthermore for any $\delta \in (0,1)$,
\begin{align*}
    \Pr \brk*{\frac{1}{n} \sum_{i=1}^n Z_i \le  \prn*{1- \delta} \cdot \mu} \le \exp\prn*{-\frac{\delta^2 \mu n }{2}}.
\end{align*}
\end{lemma}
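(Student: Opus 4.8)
The plan is to prove both tail bounds by the standard Chernoff method: apply Markov's inequality to an exponential transform of $S_n := \sum_{i=1}^n Z_i$, use independence to factor the moment generating function, bound the Bernoulli MGF, and optimize the free parameter. Note $\En[S_n] = n\mu$. For the upper tail, fix $t > 0$; then
\begin{align*}
\Pr\brk*{\tfrac{1}{n}\sum_{i=1}^n Z_i \ge (1+\delta)\mu} = \Pr\brk*{e^{t S_n} \ge e^{t(1+\delta)n\mu}} \le e^{-t(1+\delta)n\mu}\prod_{i=1}^n \En\brk*{e^{tZ_i}},
\end{align*}
by Markov and independence. Since each $Z_i$ is $\crl{0,1}$-valued with mean $\mu$, we have $\En[e^{tZ_i}] = 1 + \mu(e^t-1) \le \exp(\mu(e^t - 1))$ using $1 + x \le e^x$. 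Substituting and choosing $t = \ln(1+\delta) > 0$ to optimize the resulting exponent gives
\begin{align*}
\Pr\brk*{\tfrac{1}{n}\sum_{i=1}^n Z_i \ge (1+\delta)\mu} \le \prn*{\frac{e^\delta}{(1+\delta)^{1+\delta}}}^{n\mu} = \exp\prn*{n\mu\prn*{\delta - (1+\delta)\ln(1+\delta)}}.
\end{align*}

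For the lower tail the computation is symmetric: for $t > 0$,
\begin{align*}
\Pr\brk*{\tfrac{1}{n}\sum_{i=1}^n Z_i \le (1-\delta)\mu} = \Pr\brk*{e^{-tS_n} \ge e^{-t(1-\delta)n\mu}} \le e^{t(1-\delta)n\mu}\exp\prn*{n\mu(e^{-t}-1)},
\end{align*}
and taking $t = -\ln(1-\delta) > 0$ (legitimate since $\delta \in (0,1)$) yields the bound $\exp\prn*{n\mu\prn*{-\delta - (1-\delta)\ln(1-\delta)}}$.

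It then remains only to simplify these exponents into the stated forms, which reduces to the two scalar inequalities $\delta - (1+\delta)\ln(1+\delta) \le -\delta^2/(2+\delta)$ for $\delta > 0$, and $-\delta - (1-\delta)\ln(1-\delta) \le -\delta^2/2$ for $\delta \in (0,1)$. Each is a routine one-variable estimate: for the first, set $g(\delta) := (1+\delta)\ln(1+\delta) - \delta - \delta^2/(2+\delta)$, check $g(0) = 0$, and verify $g'(\delta) \ge 0$ on $[0,\infty)$ (after one further differentiation this amounts to showing a nonnegative rational expression with denominator $(2+\delta)^2$); for the second, expand $-\delta - (1-\delta)\ln(1-\delta) = -\sum_{k\ge 2}\delta^k/(k(k-1))$ and drop all but the $k=2$ term. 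The argument is entirely standard, so I do not foresee any genuine obstacle — the only mild bookkeeping is carrying these elementary inequalities through, and one could equally cite a standard reference for them.
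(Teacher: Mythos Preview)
Your argument is correct and is exactly the standard Chernoff-method derivation of these bounds. The paper itself does not prove this lemma: it is listed in the ``Technical Tools'' appendix as a standard fact without proof, so there is no paper-side argument to compare against. Your sketch of the two scalar inequalities is also fine (the upper-tail one indeed reduces, after two differentiations, to $(2+\delta)^3 \ge 8(1+\delta)$, which factors as $\delta(4+6\delta+\delta^2)\ge 0$).
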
 

\end{document}